\documentclass{article}
\usepackage[utf8]{inputenc}

 \usepackage[preprint]{neurips_2026}

\usepackage[utf8]{inputenc} 
\usepackage[T1]{fontenc}    
\usepackage{hyperref}       
\usepackage{url}            
\usepackage{booktabs}       
\usepackage{amsfonts}       
\usepackage{nicefrac}       
\usepackage{microtype}      
\usepackage{xcolor}         
\usepackage{graphicx}
\usepackage{subcaption}
\usepackage{wrapfig}
\usepackage{lipsum}
\usepackage{amsthm}
\usepackage{amsmath}
\usepackage{enumitem}
\newtheorem{assumption}{Assumption}
\newtheorem{proposition}{Proposition}

\newtheorem{theorem}{Theorem}

\usepackage{tabularx}
\usepackage{makecell}
\usepackage{tocloft}
\usepackage{titletoc}
\usepackage{soul}
\usepackage{pifont}
\usepackage{multirow}
\usepackage{hyperref}
\usepackage{algorithm}
\usepackage{algpseudocode}
\usepackage{booktabs}
\usepackage{array}
\usepackage{longtable}
\usepackage{makecell}
\usepackage[most]{tcolorbox}

\newcommand{\yt}[2]{\href{https://www.youtube.com/channel/#1}{\texttt{#2}}}
\usepackage{todonotes}
\definecolor{ocr}{HTML}{00C8FF}
\definecolor{ocr}{HTML}{009900}
\definecolor{leeColor}{rgb}{0.6, 0.2, 1.0}
\definecolor{delvinColor}{rgb}{1.0, 0.6, 0.4}
\definecolor{nafisColor}{rgb}{0.5, 0.2, 0.8}
\definecolor{mahjabinColor}{rgb}{0.2, 0.6, 0.9}

\title{One Model Is Not a Crowd: 
Multi-LLM and Aspect-Conditioned Diverse Comment Generation}

\author{%
  Nafis Irtiza Tripto$^1$\thanks{Work completed while the author was a PhD student at Pennsylvania State University.} \quad
  Delvin Ce Zhang$^2$ \quad
  Mahjabin Nahar$^1$ \quad
  Dongwon Lee$^1$ \\
  $^1$College of Information Sciences and Technology, Pennsylvania State University, PA, USA \\
  $^2$University of Sheffield, Sheffield, UK \\
  \texttt{\{nit5154, mahjabin.n, dongwon\}@psu.edu}, \texttt{delvin.ce.zhang@sheffield.ac.uk}
}

\begin{document}

\maketitle

\begin{abstract}
Human communication on the internet is shaped by diverse perspectives, most visibly expressed in online comment spaces. As large language model (LLM)- based AI agents begin to inhabit these spaces, a key question arises: whether synthetic comment threads can capture the diversity inherent in human discourse. This concern is increasingly important, as the growing presence of homogenized AI-generated content risks reducing diversity over time, potentially leading to model collapse and degrading the richness of digital communication. Inspired by the plurality of human crowds and the aspect-driven nature of discourse, we hypothesize that comment diversity is better approximated by combining multiple LLMs with aspect-conditioned generation. We formalize and evaluate this approach using models from different providers and introduce a framework that characterizes diversity across semantic, linguistic, and socio-pragmatic features along three axes: dispersion, coverage, and alignment. Using this framework, we conduct a large-scale study on over 2 million YouTube comments across multiple domains. Our results reveal that multi-LLM and aspect-conditioned generation better align with human comment distributions and such data remains viable under pretraining-style curation and is effective for downstream  tasks. Yet, human diversity remains unmatched. Overall, our findings provide a practical foundation for generating more diverse and socially grounded discourse in AI-mediated environments.

\end{abstract}

\section{Introduction}
Human communication is fundamentally pluralistic. The diversity of opinions from individuals is a defining feature of human discourse \citep{mcluhan1994understanding}. Specially, with the rise of the 
internet as the new “digital 
campfire” \citep{arnd2015sherry}, this pluralism has become visible on a remarkable scale. It is more evident in online comment spaces, where people respond in ways that range from supportive to oppositional, humorous to aggressive, sarcastic to insightful, or entirely tangential. This diversity, integral to online communication, plays a crucial role in increasing engagement and credibility \citep{hong2018will}. 

Simultaneously, LLM-based AI agents 
are becoming participants in this online discourse. Multi-agent simulation frameworks \citep{yang2024oasis, zhang2025socioverse} and emerging AI-driven social platforms, such as Moltbook \citep{moltbook2026}, demonstrate that LLM-powered agents can simulate large-scale social interactions \citep{holtz2026anatomy}. Social media platforms, such as Meta, are actively exploring the integration of AI “users” into real platforms \citep{rollingstone_meta_ai_users_2023}. 
Thus, the possibility of AI-generated comments in online discourse  is not speculative; it is imminent.

\begin{figure}[h]
    \centering
    \includegraphics[width=0.93\linewidth]{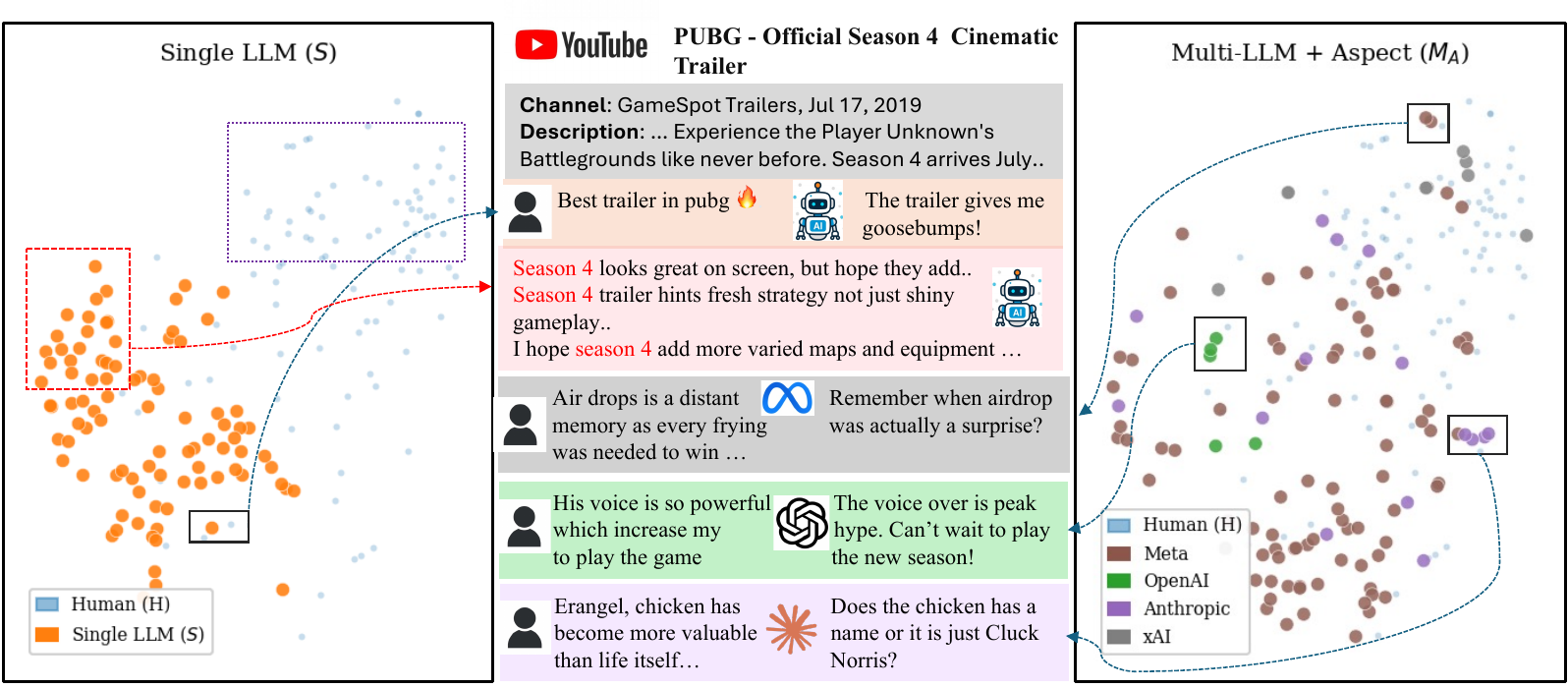}
    
    \caption{We visualize comments from a YouTube video (\textit{PUBG Season 4 Cinematic Trailer}) using 2D t-SNE \citep{van2008tsne} projections of text embeddings, comparing single-LLM generations (\textbf{S}) and multi-LLM aspect-conditioned generations ($M_A$) against human ($H$) comments (blue dots). While single-LLM outputs capture some common themes (e.g., trailer quality), they miss several semantic regions (\textcolor{violet}{violet rectangle}) and exhibit repetition (\textcolor{red}{red rectangle}). In contrast, ($M_A$) generation covers both dominant patterns (e.g., gameplay features, voice quality) and niche topics (e.g., “\textit{winner winner chicken dinner}” PUBG reference), yielding a broader and more structured distribution.
    }
    \label{fig:introduction_figure}
\end{figure}

However, this raises a fundamental question: {\bf Can LLM-generated comments truly capture the diversity that defines human discussion?} 
LLMs have a well-documented tendency to converge, producing averaged responses \citep{guo2025benchmarking, kirk2024understanding, omahony2024attributing}. While LLM diversity has been studied in persona generation \citep{ge2024scaling}, storytelling \citep{Chakrabarty2024art, yang-etal-2022-re3}, 
and dialogue settings \citep{jiangartificial, zhang2025noveltybench}, diversity in social media ecosystems, where multiple entities interact around shared content, as humans do, remains underexplored.

This problem is particularly urgent for two reasons. {\em First}, foundation models have already consumed much of the publicly available 
internet text \citep{abdalla2025future, liu2025datasets, qin2025scaling}, making social media one of the fastest-growing and  most consequential sources of new data \citep{longpre2025bridging, villalobos2024position}. If synthetic comments flood these platforms, they will inevitably circle back as future training signal, compounding any diversity failures over successive model generations \citep{gerstgrasser2024model, seddik2024how,  shumailov2024ai}. 
{\em Second}, as AI agents populate these platforms, the diversity (or lack thereof) in their outputs will shape future online discourse, with homogenized content risking bias amplification, suppression of minority perspectives,  reduced digital richness, and future implications for model scaling
\citep{anderson2024homogenization,fang2024bias,gerstgrasser2024model,kearney2025echoes}.

In particular, we hypothesize that {\em a single LLM is structurally ill-equipped to capture the full range of human pluralism}. This is motivated by two observations. First, human communication is inherently diverse, reflecting the plurality of individuals in a crowd \citep{littlejohn2010theories, wilson2002anthropology}. Second, humans rarely comment without intent, as comments are shaped by underlying aspects such as topic, stance, emotion, humor, or critique \citep{jill2025ways, kramer2021feel, matook2022user, northup2022personality}. 
To capture these effects, we propose two complementary mechanisms. {\em First}, we employ \textbf{model swarms}, ensembles of LLMs from different providers \citep{feng2025one, feng2025model}, each reflecting distinct training distributions, to approximate variability across individuals. {\em Second}, we introduce \textbf{aspect-conditioned generation}, which guides comment production along broad social dimensions rather than generating responses blindly. Figure~\ref{fig:introduction_figure} shows a representative example.

We organize our investigation around two core research questions:

 \textbf{RQ1:} {\em Can LLMs replicate human-level diversity in social media comment spaces?} \\
 \textbf{RQ2:} {\em Can synthetic comments serve as high-quality data for pre-training or fine-tuning LLMs?}

To answer these questions, we focus on YouTube as a content-centric platform where users primarily react to shared stimuli rather than pre-existing social relationships, providing a controlled setting for discourse analysis. We conduct a large-scale empirical study spanning three domains, 46 channels, nearly 7500 videos, 35 LLMs across nine providers, and over 2 million human comments. To move beyond narrow definitions of diversity, we introduce a \textbf{comprehensive evaluation framework} that measures diversity through \textbf{dispersion}, \textbf{coverage}, and \textbf{alignment} across \textit{semantic}, \textit{linguistic}, and \textit{socio-pragmatic} feature spaces. Our findings show that human diversity in YouTube comment discourse remains difficult to  fully replicate, even with model swarms, but combining multiple LLMs with aspect-conditioned generation substantially improves diversity. We further show that such synthetic data remains robust under pre-training curation and is effective for fine-tuning and instruction-tuning, particularly in low-resource settings. Overall, our work provides a unified framework for generating and evaluating socially grounded synthetic comment discourse and highlights the importance of preserving diversity as AI systems become increasingly integrated into online spaces.

\section{Related work}
We situate our work within three related areas: LLMs in social media environments, diversity in LLM generation, and 
their role in scaling language models.
\paragraph{LLM-based agents in social media environments.}
LLMs have demonstrated strong capabilities in persona simulation \citep{ge2024scaling, park2024generative} and role-playing \citep{chen2024persona, tseng2024two, zhou2025personaeval}, motivating their use as synthetic users in social media settings \citep{cau2025language, gao2025does, yang2024oasis, zhang2025socioverse}. Prior work has combined LLM agents with agent-based modeling to study online interactions \citep{cau2025language}, while scalable simulators such as OASIS \citep{yang2024oasis} and SocioVerse \citep{zhang2025socioverse} model dynamic environments. More recently, fully synthetic platforms like Moltbook demonstrate that LLM agents can emulate social interactions without human participation \citep{holtz2026anatomy}. However, whether such agents can replicate the behavioral and content diversity of human discourse remains largely unexplored.
\paragraph{Diversity issues in LLM.}
Diversity in LLM generation has become a central concern \citep{guo-etal-2024-curious, guo2025benchmarking, lu2024creativity_index_paper}, particularly due to model collapse \citep{gerstgrasser2024model, shumailov2024ai}, where outputs grow increasingly homogeneous. Prior work addresses this through training interventions \citep{chung2025modifying, ismayilzada-etal-2025-creative, zhou2024sotopia}, decoding strategies \citep{minh2025turning, lanchantin2025diversepreferenceoptimization}, and prompting methods \citep{Holtzman2020The, yang-etal-2022-re3, zhang2025verbalized}. However, most studies evaluate diversity in controlled settings such as persona generation \citep{ge2024scaling}, storytelling \citep{yang-etal-2022-re3,Chakrabarty2024art, zhang2025verbalized}, or multi-turn dialogue \citep{zhang2025noveltybench, jiangartificial}, focusing on variation across individual responses. In contrast, social media discourse is inherently collective, where diversity emerges across a set of responses. Moreover, existing evaluations \citep{jiangartificial, guo-etal-2024-curious, zhang2025noveltybench, zhang2025verbalized} often rely on limited metrics, primarily capturing semantic or lexical variation only.

\paragraph{The role of synthetic data in scaling language models.}
Recent work highlights the growing role of synthetic data in scaling language models \citep{kang-etal-2025-demystifying, qin2025scaling}, especially as high-quality human-authored data becomes saturated \citep{abdalla2025future, liu2025datasets}. While LLMs can generate large volumes of data \citep{11080380}, concerns such as the curse of recursion suggest potential degradation and model collapse if diversity and quality are not preserved \citep{shumailov2023curse, seddik2024how, chen2024diversity}. This has led to efforts in constructing high-quality synthetic datasets \citep{benallal2024cosmopedia, finephrase}. At the same time, social media is emerging as a major source of organic text \citep{longpre2025bridging, villalobos2024position}. However, it remains unclear whether LLM-generated comments, increasingly present in online ecosystems \citep{cau2025language, yang2024oasis, zhang2025socioverse}, can serve as effective data for scaling language models.

Therefore, our work addresses these gaps by providing a unified evaluation of synthetic social media comments,  analyzing their collective diversity and utility for scaling language models.

\section{The pluralism problem: why one model is not enough}
\label{sec_mathematical_proof}

Let $v$ denote a content item (e.g., a video) and  $\mathcal{C}$ the space of 
possible comments, with a feature mapping $\phi:\mathcal{C}\to\mathbb{R}^d$.
A comment ($C\in\mathcal{C}$) is modeled as a random variable  $C\sim P(\cdot\mid v)$.

\textbf{Human comments:}
Human-generated comments for video $v$ are drawn from an \emph{unknown} reference
distribution $P_H(\cdot\mid v)$.  Let $H = \{h_1,\dots,h_n\}$ be an observed sample.

\textbf{Single-LLM comments:}
A single language model $M$ induces $C\sim P_M(\cdot\mid v)$.

\textbf{Multi-LLM mixture:}
Given $K$ models $\{M_k\}_{k=1}^K$ with weights $\boldsymbol\pi = (\pi_1,\dots,\pi_K)$,
$\pi_k\ge 0$, $\sum_k \pi_k=1$, define the \emph{model mixture}:
$  P_{\mathrm{mix}}(\cdot\mid v)
    \;=\;
    \sum_{k=1}^K \pi_k\, P_{M_k}(\cdot\mid v).$

\textbf{Aspect conditioning:}
Let $\mathcal{A}=\{a_1,\dots,a_A\}$ be a finite set of discourse aspects for a content $v$.
Given aspect weights $\boldsymbol\lambda = (\lambda_1,\dots,\lambda_A)$,
$\lambda_a\ge 0$, $\sum_a \lambda_a=1$, an \emph{aspect-conditioned} model
$M_k$ generates from
$  P_{M_k,\mathcal{A}}(\cdot\mid v)
    \;=\;
    \sum_{a\in\mathcal{A}} \lambda_a\, P_{M_k}(\cdot\mid v,a). $
The full \emph{multi-LLM, multi-aspect} distribution combines both mechanisms, which is a mixture over $K\!\times\!A$ component distributions.
\begin{equation*}
    P_{\mathrm{multi}}(\cdot\mid v)
    \;=\;
    \sum_{k=1}^K \sum_{a\in\mathcal{A}} \omega_{k,a}\, P_{M_k}(\cdot\mid v,a),
    \qquad \omega_{k,a} = \pi_k\lambda_a,
    \label{eq:pmulti}
\end{equation*}

\textbf{Generalised kernel diversity:}
Let $\kappa:\mathcal{C}\times\mathcal{C}\to[0,1]$ be a symmetric, bounded
\emph{dissimilarity kernel}.  We define the \emph{expected diversity} of a
distribution $P$ as $  \delta_\kappa(P)
    \;=\;
    \mathbb{E}_{C,\,C'\,\overset{\mathrm{iid}}{\sim}\, P}\!\left[\kappa(C,C')\right].$
Given a finite sample $\{c_1,\dots,c_n\}\overset{\mathrm{iid}}{\sim} P$,
the estimator
$
    \widehat\delta_\kappa
    \;=\;
    \frac{1}{n(n-1)}\sum_{i\neq j}\kappa(c_i,c_j) $
is unbiased for $\delta_\kappa(P)$ by a standard U-statistic
argument~\citep{hoeffding1992class}.
While our study considers a multi-axis set of metrics for diversity, we illustrate our formulation using semantic dispersion.
We define it via clipped cosine dissimilarity between embeddings: $ \kappa_{\cos}(c,c')
    \;=\;
    1 - \max\!\left(0,\,\cos(\phi(c),\phi(c'))\right).$, following prior works \citep{cann2023using, guo2025benchmarking, lu2024creativity_index_paper, zhang2025verbalized}, and write $\delta(\cdot)\equiv\delta_{\kappa_{\cos}}(\cdot)$ for brevity.
The same formulation extends to other metrics. Diversity gains from model mixtures then depend on whether LLMs occupy distinct regions in $\phi(\mathcal{C})$,
which we formalize as a single, testable Assumption \ref{ass:separation} of \textbf{Model Separation}.

\begin{assumption}[]
\label{ass:separation}
There exist constants $\bar\rho\in[0,1)$ and $\epsilon>0$ such that for all
model pairs $k\neq\ell$:
\begin{enumerate}[label=(\roman*)]
    \item \emph{Low cross-model similarity:}
    $\;\mathbb{E}_{C\sim P_{M_k},\,C'\sim P_{M_\ell}}
        \!\left[\max(0,\cos(\phi(C),\phi(C')))\right] \;\le\; \bar\rho.$
    \item \emph{Higher within-model similarity:}
    $\;\mathbb{E}_{C,C'\sim P_{M_k}}
        \!\left[\max(0,\cos(\phi(C),\phi(C')))\right] \;\ge\; \bar\rho + \epsilon.$
\end{enumerate}
\end{assumption}

We support Assumption~\ref{ass:separation} through a proof-of-concept study (Appendix \ref{subsec_empirical_validation}), where multiple models generate responses to identical prompts across a subset of videos and generic queries, consistently showing lower cross-model similarity and higher within-model similarity. This observation is also aligned with prior work \citep{jiangartificial, west2025base, zhang2024forcing}, which motivates the use of heterogeneous model swarms \citep{feng2025one, feng2025model}.
Building on this assumption, we derive a mixture-based formulation of diversity
and prove two key results: Theorem~\ref{thm:multi_llm} shows that multi-LLM mixtures improve
expected diversity over the mixture-weighted average of their component models,
while Theorem~\ref{thm:full} extends this intuition to multi-LLM, aspect-conditioned mixtures
under additional aspect-separation assumptions. These results provide theoretical
motivation for using model heterogeneity and aspect conditioning as structural
sources of diversity, with full details in Appendix~\ref{appendix_sec_theory}.

\section{From human crowds to model swarms: dataset and  generation}
\label{sec_method}
In this section, we describe the construction of our dataset, generation pipeline, and feature extraction (Figure \ref{fig:methodology} highlights the overview of our methodology).

\begin{figure}[h]
    \centering
    \includegraphics[width=0.9\linewidth]{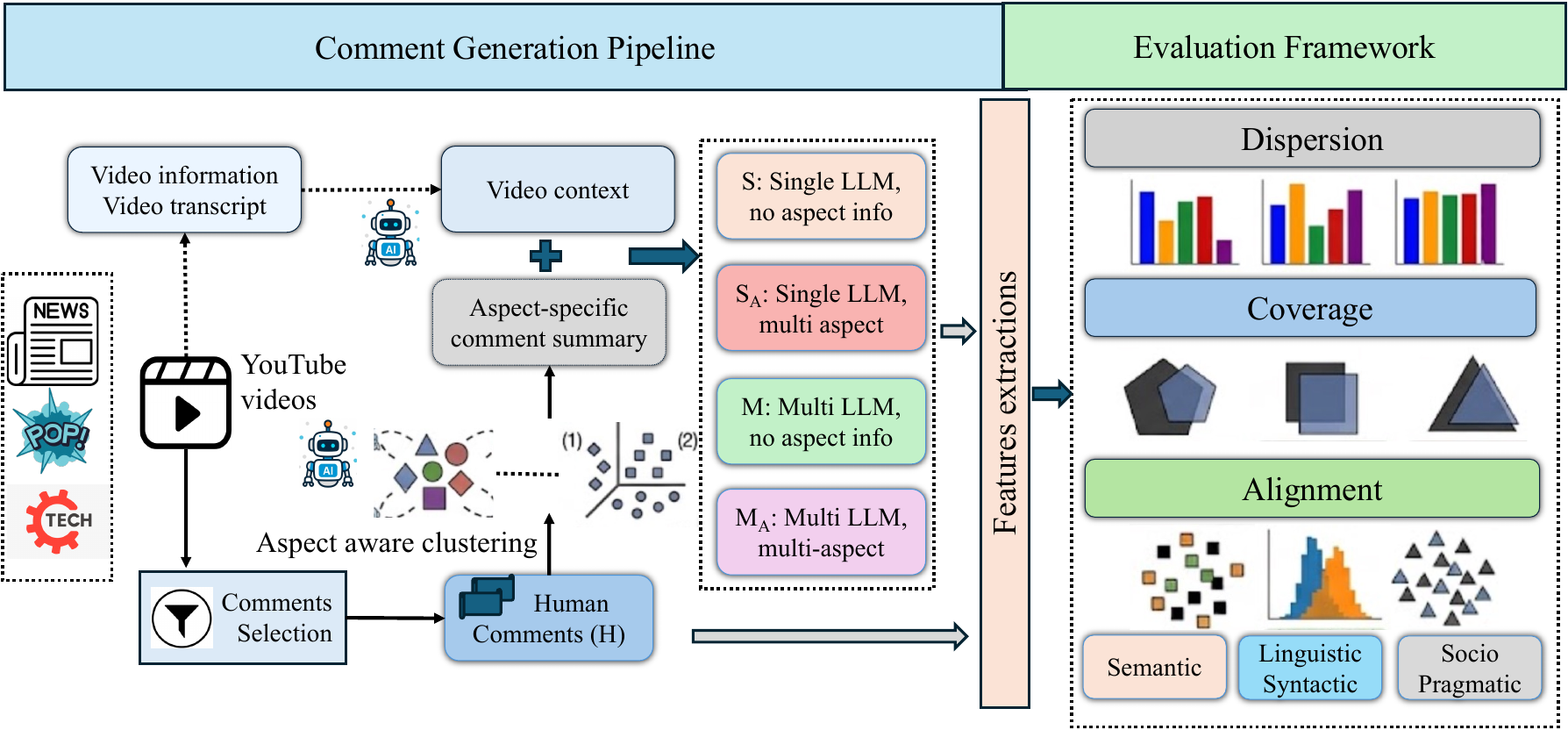}
    \caption{Overview of our framework. We collect YouTube videos across multiple domains, construct video context from metadata and transcripts, and derive aspect information from human comments via clustering. These are used to guide comment generation under different settings, followed by feature extraction and a comprehensive evaluation across \textit{dispersion}, \textit{coverage}, and \textit{alignment}.}
    \label{fig:methodology}
\end{figure}

\paragraph{Human comment corpus creation.}
Human comments are influenced by both content and social relationships \citep{choi2016social, garcia2017understanding, nasim2013commenting, stavros2014understanding}. Since modeling full social dynamics requires complex longitudinal simulation \citep{yang2024oasis, zhang2025socioverse}, we focus on \textbf{content-driven discourse}, where comments primarily respond to shared stimuli. This motivates our choice of platform. Unlike Reddit, Facebook, or Twitter, which are shaped by community norms \citep{ocal2021reasoning, thukral2018analyzing, yu2024characterizing}, social ties \citep{lapides2015news, maiz2016factors, nasim2013commenting, stavros2014understanding}, or network effects \citep{garcia2017understanding, khaund2021social, naaman2011hip} respectively, \textbf{YouTube} offers a more content-centric environment \citep{choi2016social, arthurs2018researching, schultes2013leave, ma2022m}. To further reduce confounding factors, we restrict our analysis to first-level comments, excluding replies. While this abstraction simplifies real-world interaction, it allows us to study a controlled distribution of content-driven comments $P_H(\cdot\mid v)$ that is suitable for comparison with LLM outputs.

\begin{wraptable}{r}{0.4\textwidth}
\caption{Human dataset summary.}
\label{tab:dataset_summary}
\centering
\footnotesize
\setlength{\tabcolsep}{5pt}
    \resizebox{0.4\columnwidth}{!}{
\begin{tabular}{lccccc}
\toprule
\textbf{Domain} & \textbf{\# Ch.} & \textbf{\# Vid.} & \textbf{\# Comm.} & \textbf{\# Words} & \textbf{Asp./Vid.} \\
\midrule
News         & 14 & 3,857 & 518.7K & 10.27M & $8.15 \pm 4.19$ \\
Pop & 22 & 2,203 & 326.5K & 4.56M  & $8.21 \pm 4.14$ \\
Tech         & 10 & 1,514 & 269.0K & 5.34M  & $9.30 \pm 4.38$ \\
\bottomrule
\end{tabular}
}
\end{wraptable}

We further select domains where content can be reasonably approximated through text, namely \textbf{news}, \textbf{pop culture}, and \textbf{tech}, ensuring accessibility to LLMs without relying on visual input. To avoid contamination from AI-generated content \citep{goldstein2023coming, heaton2025chatgpt, sun2025we} and major global disruptions \citep{li2022youtube}, we focus on a pre-LLM time frame \textbf{from mid-2018 to mid-2019}. 
Within this setting, we select English videos across multiple channels 
based on engagement, duration, and description availability. 
We collect up to 400 top-ranked comments per video,
to capture salient responses, 
which reflect the most visible \& high-engagement portion of user interaction, while acknowledging that this may not fully represent the entire spectrum of discourse. We then apply filtering steps to ensure quality and consistency, including removing duplicates, restricting comments to a fixed time window, and excluding excessively long, non-English,URL-only, spam, or toxic comments. Table \ref{tab:dataset_summary} summarizes the resulting dataset. Further details on domain \& timeline rationales and video/comment selection criteria are provided in the Appendix \ref{subsec_domain_rationale} \& \ref{subsec_human_dataset}.

\paragraph{LLM comment generation.}
We hypothesize that diversity arises from both heterogeneity across models and variation in discourse aspects.
 Let $\{M_k\}_{k=1}^K$ be the available LLMs, $\mathcal{A}$ the aspect set. To unravel their individual and combined effects, we study four primary generation settings.

\textbf{Setting \textbf{(S)} : single-LLM, no aspects.}
$M_k$ generates all comments
:
$    \mathcal{G}^{(\mathrm{S})}(\cdot \mid v)
    \;=\;
    P_{M_k}(\cdot \mid v).
    \label{eq:setting_S}$

\textbf{Setting (\textbf{S\textsubscript{A}}): single-LLM, with aspects:}
$M_k$ generates comments conditioned on each  aspect
$a \in \mathcal{A}$:
$   \mathcal{G}^{(\mathrm{S_A})}(\cdot \mid v)
    \;=\;
    \sum_{a \in \mathcal{A}} \lambda_a\, P_{M_k}(\cdot \mid v, a),
    \label{eq:setting_SA}$
where $\lambda_a \ge 0$ and $\sum_{a}\lambda_a = 1$ for aspect weights.

\textbf{Setting (\textbf{M}): multi-LLM, no aspects.}
Multiple models from $\{M_k\}_{k=1}^K$, each generate comments without aspect
conditioning:
$    \mathcal{G}^{(\mathrm{M})}(\cdot \mid v)
    \;=\;
    \sum_{k=1}^{K} \pi_k\, P_{M_k}(\cdot \mid v),
    \label{eq:setting_M}$, 
where $\pi_k \ge 0$ and $\sum_k \pi_k = 1$.

\textbf{Setting (\textbf{M\textsubscript{A}}): Multi-LLM, with aspects.}
Each model $M_k$ is assigned to \textbf{exactly one aspect} via a mapping
$\sigma: \{1,\dots,K\} \to \mathcal{A}$ (we ensure $K > A$) and generates comments on aspect $\sigma(k)$ only.
Let $\omega_k$  are the per-model weights ($\sum_{k=1}^K \omega_k = 1$ and $\omega_k \ge 0$) and aspect coverage induced by $\sigma$ is: $    \lambda_a
    \;=\;
    \sum_{k \in \mathcal{K}_a} \omega_k,
    \quad a \in \mathcal{A},$
so $\sum_a \lambda_a = 1$, our overall generation is:
$    \mathcal{G}^{(\mathrm{M_A})}(\cdot \mid v) =
    \sum_{a \in \mathcal{A}}\; \sum_{k \in \mathcal{K}_a}
    \omega_{k}\, P_{M_k}(\cdot \mid v, a)
    $. Algorithms (\ref{alg:setting_s_generation}-\ref{alg:setting_ma_generation}) provide details for each setting (Appendix \ref{subsec_llm_generation}).

We consider 35 LLMs from 9 providers, covering both proprietary and open-source models from OpenAI, Anthropic, Google, Meta, DeepSeek, Qwen, Mistral, xAI, and Microsoft (Table \ref{tab:model_list}).
For each video, we use up to $K \approx 15$ models in multi-LLM settings. To ensure fair comparison, we match the length distribution of generated comments to human comments \citep{hu2024explaining, munoz2024contrasting, tripto2025beyond}. In the single-LLM setting ($S$), we generate comments across predefined length bins using repeated runs, while in the multi-LLM setting ($M$), these bins are distributed across models. We condition the generation on video metadata and use  \textbf{temperature = 1} to encourage diversity. We also include an \textbf{ablation} ($S_D$) that varies decoding strategies to test whether diversity can be achieved from a single model.

For aspect-conditioned settings ($S_A$, $M_A$), we incorporate \textit{aspect} information to better reflect the structure of real-world discourse \citep{alafwan2023comments, alsayat2016social}. An aspect represents a coherent mode of discussion, capturing either topical focus or socio-pragmatic intent such as sentiment or humor \citep{sari2018topic, tai2020online, singh2021pragmatics, hasan2020socio}. For example, in a \textit{OnePlus 7 Pro review video}, one aspect may focus on technical comparisons, while another reflects short, community-specific reactions. We extract aspects from human comments using clustering: short comments are grouped by socio-pragmatic signals (\texttt{KModes} clustering \citep{chaturvedi2001k}), and longer comments by semantic similarity ((\texttt{BERTopic} \citep{grootendorst2022bertopic}). Each cluster is summarized into an aspect description that guides generation (Appendix \ref{appendix_aspect_generation} for aspect  and \ref{subsec_llm_generation} for LLM generation details).

Finally, to test generalization, we perform an \textbf{ablation} on newer videos where aspects are not derived from human comments but generated by a LLM based on video context (Appendix \ref{appendix_subsec_ablation}). This allows us to evaluate whether aspect-guided generation remains effective in more realistic, unseen settings.

\paragraph{Features extraction.}
We extract diverse  features from both human and LLM-generated comments to enable a comprehensive, multi-dimensional evaluation. We capture both \textbf{per-comment characteristics} and \textbf{aggregate video-level properties} computed over all comments for a given video and setting. 
First, we consider \textbf{semantic features} from dense text embeddings using \texttt{EmbeddingGemma } \citep{vera2025embeddinggemma} (a leading model in the \texttt{MMTEB} benchmark \citep{enevoldsen2025mmteb} at the time of our study) and perform KMeans clustering on video-level to represent distinct semantic themes within a video.
Second, we extract \textbf{linguistic} (lexical \& syntactic) features to characterize how comments are written. These include lexical diversity (type-token ratio \citep{herdan1960type}), readability metrics \citep{kincaid1975derivation}, sentence and word lengths, 
self-BLEU \citep{zhu2018texygen}, and POS-tag distributions using spaCy \citep{honnibal2020spacy} and TextDescriptive \citep{hansen2023textdescriptives}. 
Finally, we incorporate \textbf{socio-pragmatic} features, reflecting the interactional nature of comments. These include categorical attributes such as \textit{sentiment} \citep{singh2021efficient}, \textit{emotion} \citep{hartmann2022emotionenglish}, \textit{formality} \citep{babakov2023formality}, \textit{humor} \citep{baranov-etal-2023-humor}, \textit{sarcasm} \citep{romero2020t5sarcasm}, \textit{metaphor} \citep{wachowiak2022metaphor}, \textit{toxicity} \citep{vidgen2021ltoxicity}, and \textit{bias} \citep{fang2024bias}. Together, these features provide a multi-dimensional analysis of comment diversity across content, form, and intent (Appendix \ref{subsec_feature_extractions} for details).

\section{Can LLMs replicate human discourse diversity?}
\label{sec_rq1_result}

In this section, we address \textbf{RQ1} by evaluating whether LLM generation settings can replicate human discourse diversity. We present a comprehensive evaluation framework, summarize key findings across feature sets, and complement them with an overall quality assessment.


\begin{table}[h]
\caption{Primary evaluation metrics across dispersion, coverage, and alignment for each feature set. * denotes metrics bounded in [0--1]; $\uparrow$ higher is better, $\downarrow$ lower is better.}
\label{tab_evaluation_framework}
\centering
\small
\resizebox{\columnwidth}{!}{%
\begin{tabular}{@{}lllll@{}}
\toprule
\textbf{Axis} & \textbf{Definition} & \textbf{Semantic} & \textbf{Lexical} & \textbf{Pragmatic} \\ \midrule

\textbf{Dispersion} &
\begin{tabular}[c]{@{}l@{}}
Intra-set variation\\ within comments
\end{tabular}
&
\begin{tabular}[c]{@{}l@{}}
Clipped pairwise cosine\\ dissimilarity (*, $\uparrow$) \\ \citep{cann2023using, guo2025benchmarking, zhang2025verbalized}\\
$\mathcal{D}_{\text{sem}} = \mathbb{E}_{i,j}[1 - \widetilde{\cos}(c_i,c_j)]$
\end{tabular}
&
\begin{tabular}[c]{@{}l@{}}
N-gram diversity ($\uparrow$)\citep{padmakumar2024does}\\ 
Self-repetition score ($\downarrow$)\citep{salkar2022self_repitiion_score}\\ 
POS Compression ratio ($\downarrow$)\citep{shaib-etal-2025-standardizing}\\ 
Homogenization Score ($\uparrow$)\citep{lin2004rouge}
\end{tabular}
&
\begin{tabular}[c]{@{}l@{}}
Simpson diversity \\ index (*, $\uparrow$) \citep{simpson1949measurement}\\
$\mathcal{D}_{\text{prag}} = 1 - \sum_k \bar{P}_k^2$\\
$\bar{P}_k = \frac{1}{N}\sum_j p_{j,k}$
\end{tabular}
\\ \midrule

\textbf{Coverage} &
\begin{tabular}[c]{@{}l@{}}
Fraction of human\\ patterns recovered
\end{tabular}
&
\begin{tabular}[c]{@{}l@{}}
Weighted manifold \\ recall (*, $\uparrow$) \citep{kynkaanniemi2019manifold_precision}\\
$\mathcal{C}_{\text{sem}}= \frac{\sum_{k \in \mathcal{K}(H)} \mathbf{1}[\mathcal{K}_k \cap M \neq \emptyset] \cdot w_k}{\sum_{k \in \mathcal{K}(H)} w_k}$
\end{tabular}
&
\begin{tabular}[c]{@{}l@{}}
Weighted n-gram \& POS\\
coverage (*, $\uparrow$)\\
$\mathcal{C}_{\text{ling}} = \frac{\sum_{g \in \mathcal{P}(H)\cap \mathcal{P}(M)} f_H(g)}{\sum_{g \in \mathcal{P}(H)} f_H(g)}$
\end{tabular}
&
\begin{tabular}[c]{@{}l@{}}
Categorical coverage \\ ratio (*, $\uparrow$)\\
$\mathcal{C}_{\text{prag}} = \frac{|\mathcal{K}(H) \cap \mathcal{K}(M)|}{|\mathcal{K}(H)|}$
\end{tabular}
\\ \midrule

\textbf{Alignment} &
\begin{tabular}[c]{@{}l@{}}
Distribution similarity\\ between $H$ and $M$
\end{tabular}
&
\begin{tabular}[c]{@{}l@{}}
Maximum Mean \\ Discrepancy (RBF kernel) ($\downarrow$)
\end{tabular}
&
\begin{tabular}[c]{@{}l@{}}
Avg. JSD score from ($\downarrow$)\\
$\mathrm{JSD}_{\text{POS}}, \mathrm{JSD}_{\text{ngram}}, \mathrm{JSD}_{\text{length}}$
\end{tabular}
&
\begin{tabular}[c]{@{}l@{}}
Avg. category JSD ($\downarrow$)\\
$\mathcal{A}_{\text{prag}} = \frac{1}{|\mathcal{K}|} \sum_{k \in \mathcal{K}} \mathrm{JSD}_k$
\end{tabular}
\\ \bottomrule
\end{tabular}%
}
\end{table}

\paragraph{Evaluation framework.}
Inspired by prior work on synthetic data evaluation \citep{zamzmi2025scorecard_llm}, we adopt a multi-axis framework that captures three complementary dimensions: \textbf{dispersion}, measuring variation within generated comments; \textbf{coverage}, assessing how well generated comments capture the range of themes present in human discourse; and \textbf{alignment}, evaluating how closely the overall distribution matches human comments.
We compute these axes across three feature spaces to provide a more fine-grained analysis. While \textbf{semantic} features capture topical and discourse-level patterns, \textbf{linguistic} features reflect surface-level text properties, and \textbf{socio-pragmatic} features serve as proxies for human behavior in online interactions.
Table \ref{tab_evaluation_framework} summarizes the primary metrics used for each axis and feature space, with detailed formulations and rationales provided in Appendix \ref{app_evaluation_metric}.

\paragraph{Diversity across dispersion, coverage, and alignment.}

\begin{figure}[h]
    \centering
     \begin{subfigure}{0.3\textwidth}
        \centering
        \includegraphics[width=\linewidth]{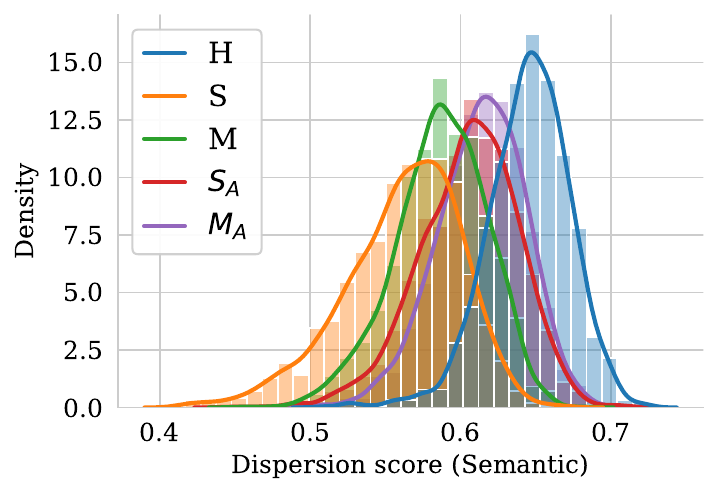}
    \end{subfigure}%
    \begin{subfigure}{0.3\textwidth}
        \centering
        \includegraphics[width=\linewidth]{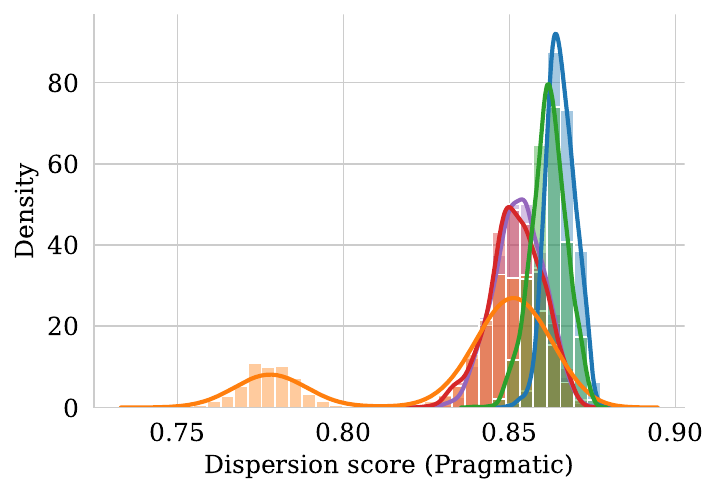}
    \end{subfigure}%
    \begin{subfigure}{0.3\textwidth}
        \centering
        \includegraphics[width=\linewidth]{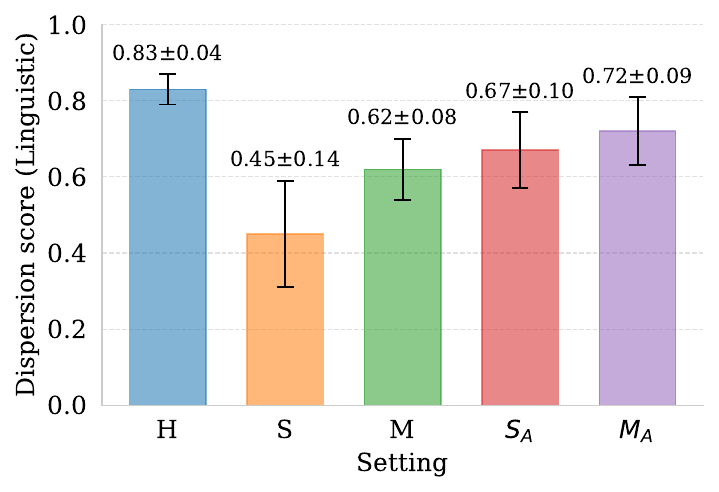}
    \end{subfigure}
    \caption{Dispersion across features. \textbf{Left}: semantic and \textbf{middle}: pragmatic dispersion distribution across videos respectively. \textbf{Right}: normalized linguistic dispersion aggregated across metrics.}
    \label{fig_main_findings_1}
\end{figure}
As shown in Figure~\ref{fig_main_findings_1}, human comment sets consistently exhibit the highest dispersion across all feature spaces. Among LLM settings, \textbf{multi-LLM with aspect conditioning ($M_A$)} performs best overall, while \textbf{single-LLM ($S$)} shows the lowest performance. For semantic and linguistic features, we observe a consistent ordering $H > M_A \approx S_A > M > S_D > S$, indicating that aspect conditioning improves topical and lexical variation, though gains over $S_A$ remain modest. In contrast, for socio-pragmatic features, the ordering shifts to $H > M > M_A > S_A > S$, suggesting that combining multiple models increases behavioral spread, while aspect conditioning produces more grounded  expressions. The ordering  is based on paired per-video tests at \(p<0.05\), with full results reported in Appendix~\ref{appendix_sec_detailed_rq1}.
These results highlight that diversity is not uniform across feature spaces and depends on how generation is structured. 

We next examine how well-generated comments capture and align with human discourse (Figure \ref{fig_main_findings_2}). Coverage trends largely mirror dispersion for semantic features, with $M_A$ and $S_A$ achieving the highest scores. However, in linguistic coverage, differences across settings are not statistically significant, suggesting that LLMs already capture dominant lexical and syntactic patterns. In the pragmatic dimension, $M$ achieves higher dispersion but weaker alignment, while $M_A$ shows improved alignment despite slightly lower dispersion. This reflects a key trade-off: combining multiple models broadens outputs due to differing biases, whereas aspect conditioning targets pragmatic categories observed in human comments, producing distributions more aligned with human proportions. Overall, model heterogeneity and guided generation improve coverage and alignment, but human discourse remains more nuanced and difficult to replicate fully.

\begin{figure}[h]
    \centering
     \begin{subfigure}{0.3\textwidth}
        \centering
        \includegraphics[width=\linewidth]{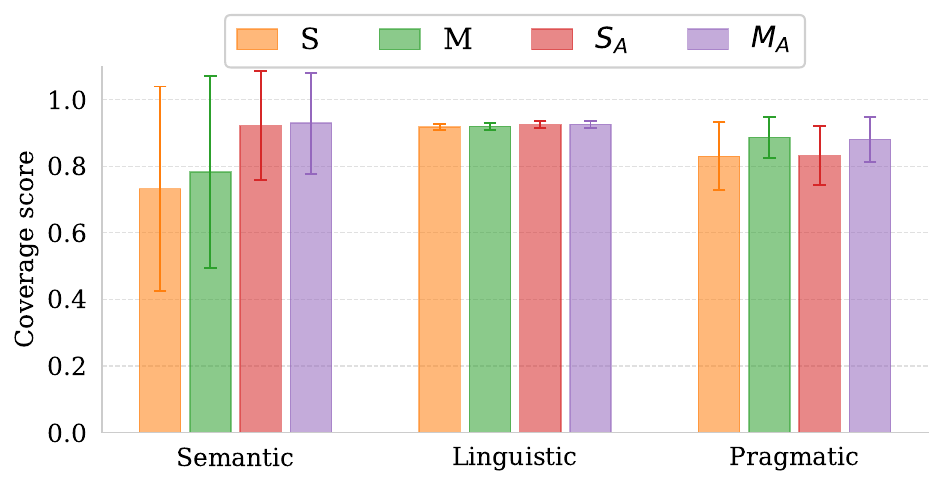}
    \end{subfigure}%
    \begin{subfigure}{0.3\textwidth}
        \centering
        \includegraphics[width=\linewidth]{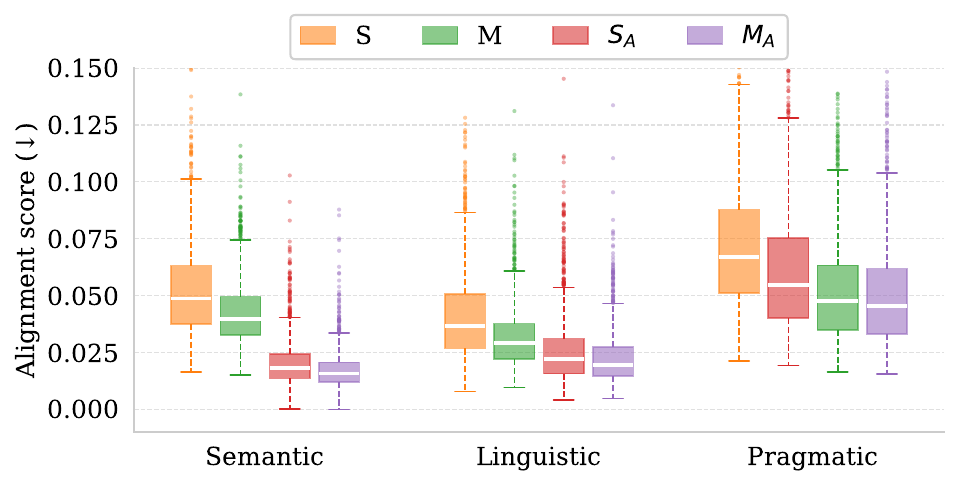}
    \end{subfigure}%
     \begin{subfigure}{0.22\textwidth}
        \centering
        \includegraphics[width=\linewidth]{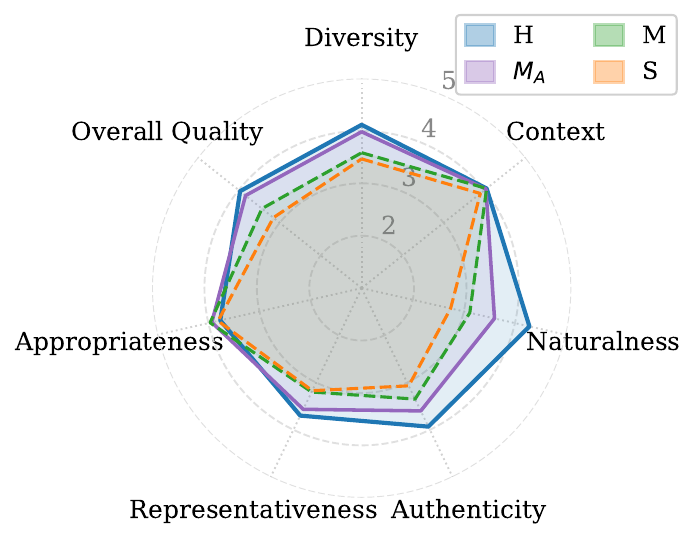}
    \end{subfigure}
    \caption{Coverage, alignment, and perceived quality across settings. \textbf{Left:} mean coverage scores with std.
\textbf{Middle:} alignment scores (lower is better).
\textbf{Right:} Overall comment quality and realism.}
    \label{fig_main_findings_2}
\end{figure}



\paragraph{Overall comment quality and realism.}
Beyond metric-based evaluation, we assess how comments are perceived as a \textbf{coherent comment thread}, reflecting real user experience. Following prior work \citep{kumpel2015user}, we present annotators with a small set of representative comments (10 per video) from each setting, corresponding to what users typically see on first view. We conduct a focused human study on 30 videos, where annotators rate each set on a Likert scale (1–5) across diversity, context, naturalness, authenticity, representativeness, appropriateness, and overall quality.
Results (Figure \ref{fig_main_findings_2} - right ) show that \textbf{human comments remain strongest in naturalness and authenticity}, highlighting the difficulty of replicating human expression. Multi-LLM with aspect conditioning ($M_A$) closely matches human performance on diversity, context, and appropriateness, while single-LLM ($S$) consistently underperforms. To extend this analysis at scale, we use an LLM-as-judge framework over the full dataset with multiple models and observe consistent trends (Appendix \ref{appendix_subsec_overall_comment_quality}). 

Overall, our holistic evaluation shows that while structured generation strategies improve perceived quality, fully matching human diversity and realism remains an open challenge.

\section{Synthetic data utility}
Data quality is critical for language model training, where both filtering \citep{longpre2024pretrainer, li2024datacomp, wang2025ultra} and diversity \citep{anonymous2024beyond, tirumala2023d4} play key roles. Thus, we address \textbf{RQ2} by evaluating whether LLM-generated comments can serve as useful training data. Since full-scale pre-training is infeasible at our scale \citep{hoffmann2022training, li2024datacomp}, we use standard data curation pipelines as a proxy to assess their retention relative to human data, and further evaluate their effectiveness in fine-tuning and instruction-tuning settings.


\paragraph{Pre-training data curation and quality analysis}
To assess suitability for pre-training, we adopt a curation pipeline inspired by \texttt{IBM GneissWeb} \citep{gohari2025gneissweb} and \texttt{NVIDIA NeMo Curator} \citep{su2025nemotron}. Since individual comments are short, we first aggregate them into 256–512 token documents. We then apply standard filtering steps, including deduplication \citep{lee-etal-2022-deduplicating}, readability and tokenization checks, perplexity filtering \citep{anknerperplexed}, and semantic deduplication \citep{abbas2023semdedup}, following prior thresholds \citep{gohari2025gneissweb, anknerperplexed, su2025nemotron}. Finally, we evaluate \textbf{document quality} and \textbf{educational value} using pretrained classifiers \citep{penedo2024fineweb, su2025nemotron}, providing a practical proxy of real-world data curation (Appendix \ref{appendix_pretraining_data}).


\begin{figure}[h]
    \centering
     \begin{subfigure}{0.3\textwidth}
        \centering
        \includegraphics[width=\linewidth]{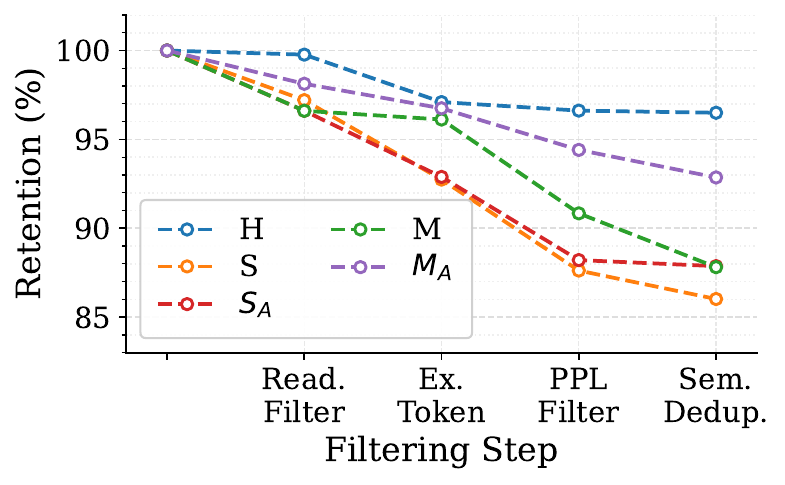}
    \end{subfigure}%
    \begin{subfigure}{0.25\textwidth}
        \centering
        \includegraphics[width=\linewidth]{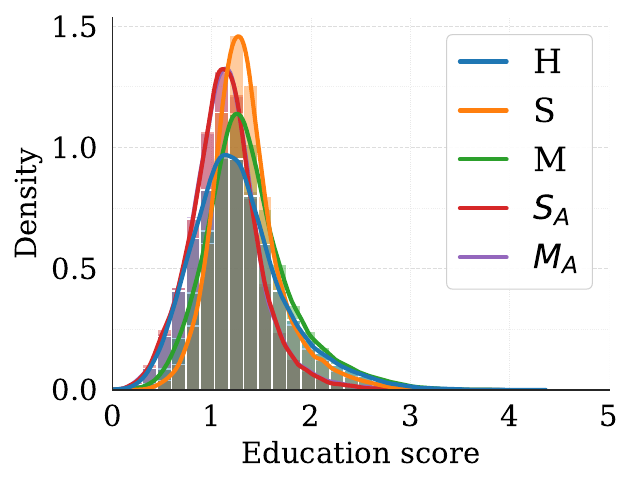}
    \end{subfigure}%
    \begin{subfigure}{0.25\textwidth}
        \centering
        \includegraphics[width=\linewidth]{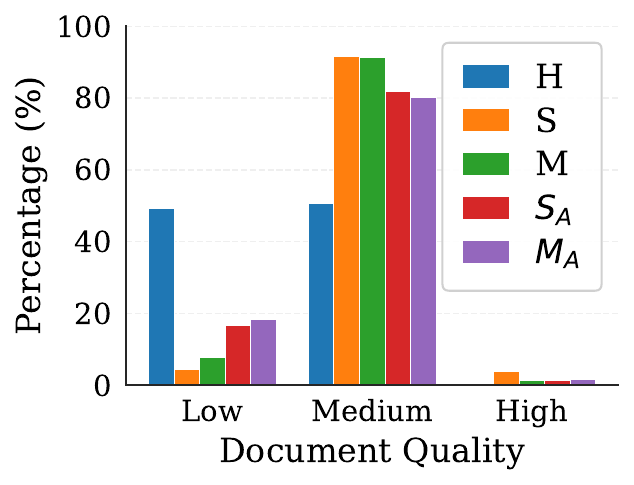}
    \end{subfigure}
    \caption{Pre-training data curation and quality. \textbf{Left:} \% of data retained after each step. \textbf{Middle:} distribution of educational value and \textbf{right:} \% of document quality of retained samples respectively.}
    \label{fig_rq2_pretraining_results}
\end{figure}

Figure \ref{fig_rq2_pretraining_results} summarizes the pre-training curation results. Human (H) and multi-LLM multi-aspect ($M_A$) settings retain the most data, indicating that higher diversity improves retention. Educational value distributions are similar across all settings, suggesting social media content generally lies at the lower end and synthetic data does not degrade this signal. For document quality, LLM settings ($S$ and $M$) concentrate in the medium tier, while human data shows a broader spread. Overall, diversity aids retention, but standard filters favor structured, knowledge-rich text, underrepresenting social discourse, while LLM outputs introduce more such patterns.


\begin{figure}[ht]
  \centering
  \begin{minipage}{0.45\textwidth}
    \centering
    \footnotesize
     \resizebox{\columnwidth}{!}{
    \begin{tabular}{@{}lccccc@{}}
\toprule
\multicolumn{1}{c}{\textbf{Setting}} & \textbf{Macro F1} & \textbf{Fear} & \textbf{Anger} & \textbf{Disgust} & \textbf{Surprise} \\ \midrule
$H$                          & 0.64              & 0.46          & 0.55           & \textbf{0.55}    & \textbf{0.71}     \\
$S$                          & 0.63              & 0.55          & 0.49           & 0.52             & 0.66              \\
$M$                        & 0.62              & 0.51          & 0.51           & 0.47             & 0.69              \\
\textbf{$S_A$}                        & 0.64              & 0.54          & 0.54           & 0.53             & 0.70              \\
\textbf{$M_A$}                        & 0.64              & 0.56          & \textbf{0.57}  & 0.53             & 0.69              \\
\textbf{$H_{aug}$}                      & \textbf{0.66}     & \textbf{0.61} & 0.55           & \textbf{0.55}    & 0.69              \\ \bottomrule
\end{tabular}
}
   
  \end{minipage}
  \hfill 
  \begin{minipage}{0.22\textwidth}
    \centering
    \includegraphics[width=\textwidth]{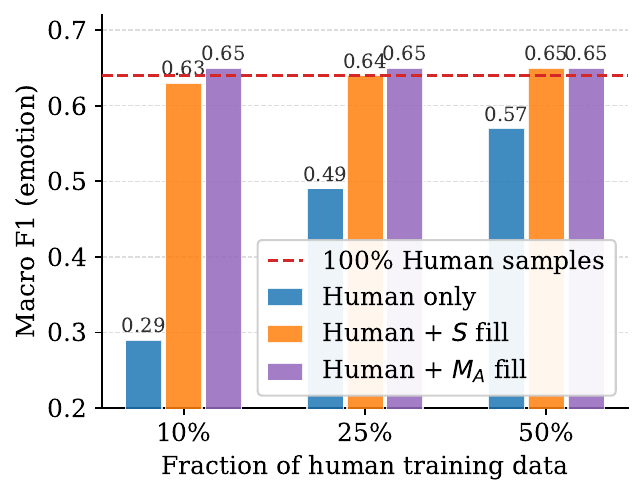}
  \end{minipage}
   \hfill 
  \begin{minipage}{0.22\textwidth}
    \centering
    \includegraphics[width=\textwidth]{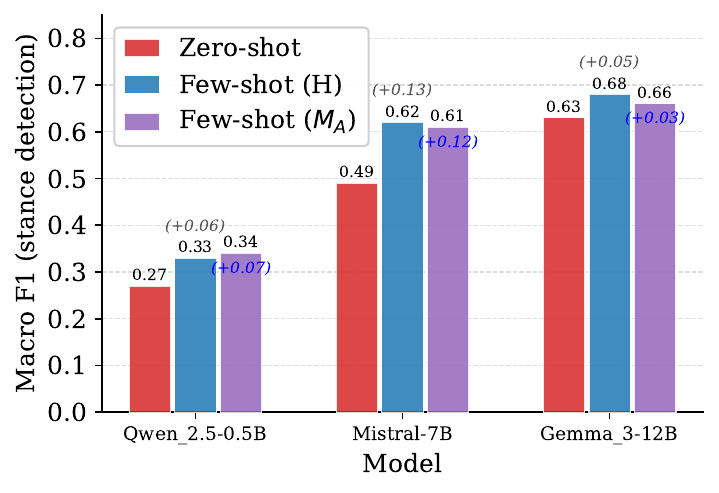}
  \end{minipage}
  \caption{Fine-tuning and instruction-tuning performance. \textbf{Left:} F1 score comparisons for  different training portions. \textbf{Middle:} impact of augmenting human data with LLM-generated samples. \textbf{Right:} instruction-tuning performance across zero-shot, few-shot (human), and few-shot ($M_A$) settings.}
  \label{fig_finetuning_instruction_tuning}
\end{figure}

\paragraph{Fine-tuning on synthetic comment data}
We next evaluate whether LLM-generated comments can serve as effective \textbf{fine-tuning data} using multi-class emotion classification (Ekman’s \citep{ekman2014expression} six emotions plus neutral), a task well aligned with social media expression. We train a \texttt{BERT-base} classifier \citep{sun2019fine} on human ($H$) and LLM-generated data ($S$, $M$, $S_A$, $M_A$), and \textbf{evaluate on a fixed human test set}. Results (Figure \ref{fig_finetuning_instruction_tuning}) show that synthetic data achieves comparable performance to human-trained models, often improving minority classes. The hybrid setting ($H_{aug}$) performs best, highlighting the value of LLM data for augmentation. Importantly, under data scarcity, combining limited human data with LLM-generated samples yields performance close to training on full human data (see Appendix \ref{appendix_finetuning_data} for data preparation, model training \& detailed results).

\paragraph{Instruction tuning}
We further evaluate LLM-generated comments for instruction tuning using a \textbf{stance detection task}, where a comment is classified as \textit{conservative}, \textit{liberal}, or \textit{neutral} given video context. We use comments from three news channels and obtain labels via multiple LLM annotators with majority voting. We compare zero-shot performance with few-shot prompting using human or LLM-generated examples ($M_A$) across instruction-tuned models of different sizes. Results (Figure \ref{fig_finetuning_instruction_tuning}) show that few-shot prompting with LLM-generated comments consistently improves over zero-shot and achieves performance comparable to human examples (Appendix \ref{appendix_instruction_data} for details). Overall, these results show that synthetic social media data is a practical and scalable resource for pre-training curation, fine-tuning, and instruction tuning, especially under limited or imbalanced human data.

\section{Discussion and conclusion}
\paragraph{Model swarms vs. guided generation.}
A central question in this work is whether diversity in social discourse is better approximated through \textbf{model heterogeneity (model swarms)} or \textbf{guided generation (aspect conditioning)}. Our results show that both mechanisms contribute to improved diversity, and their combination consistently performs best, though still falling short of human-level diversity. Importantly, the two approaches offer complementary strengths: \textbf{aspect conditioning improves semantic and linguistic diversity} by capturing dominant discourse themes, while \textbf{multi-model generation better captures socio-pragmatic} variation and behavioral nuances (Figure \ref{fig_discussion_1}). 


\paragraph{ Can one model do it all? }
We examine whether diversity can be achieved by varying decoding strategies (e.g., temperature, sampling) within a single model (setting $S_D$). However, we observe only marginal and statistically insignificant gains over the baseline single-LLM setting ($S$), with performance remaining well below multi-LLM and aspect-conditioned approaches. This suggests that decoding-level randomness alone is insufficient to recover the diversity seen in human discourse.

\begin{figure}[h]
    \centering
     \begin{subfigure}{0.35\textwidth}
        \centering
        \includegraphics[width=\linewidth]{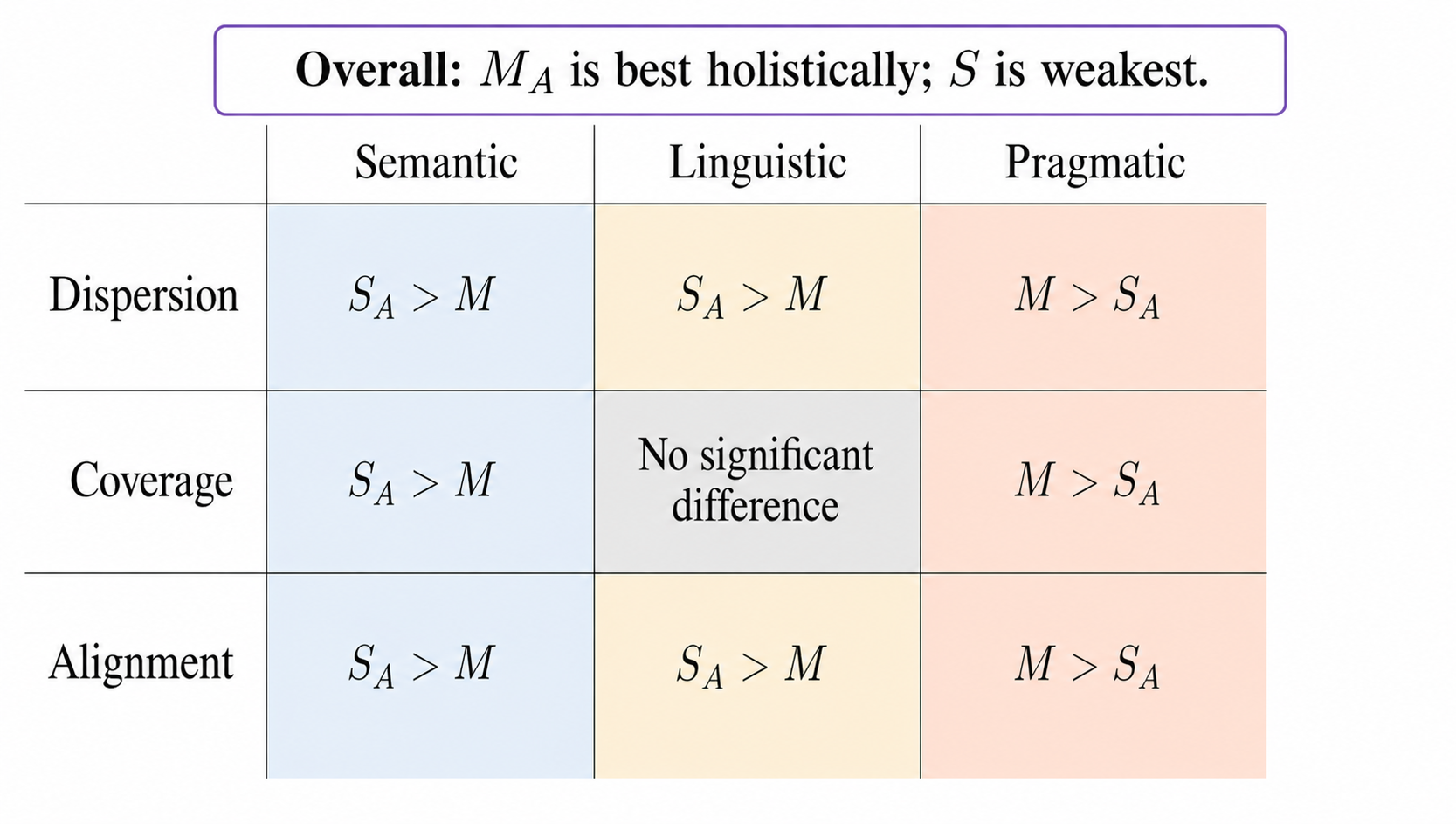}
    \end{subfigure}%
     \begin{subfigure}{0.28\textwidth}
        \centering
        \includegraphics[width=\linewidth]{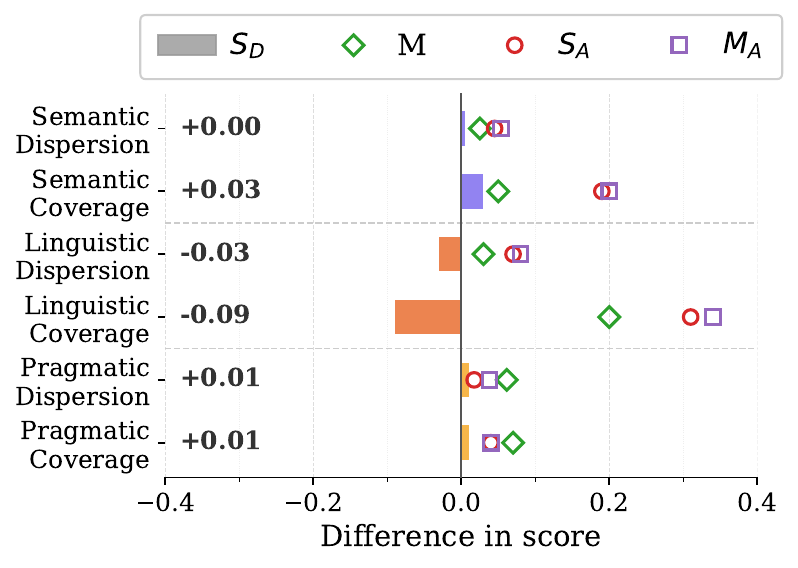}
    \end{subfigure}%
    \begin{subfigure}{0.28\textwidth}
        \centering
        \includegraphics[width=\linewidth]{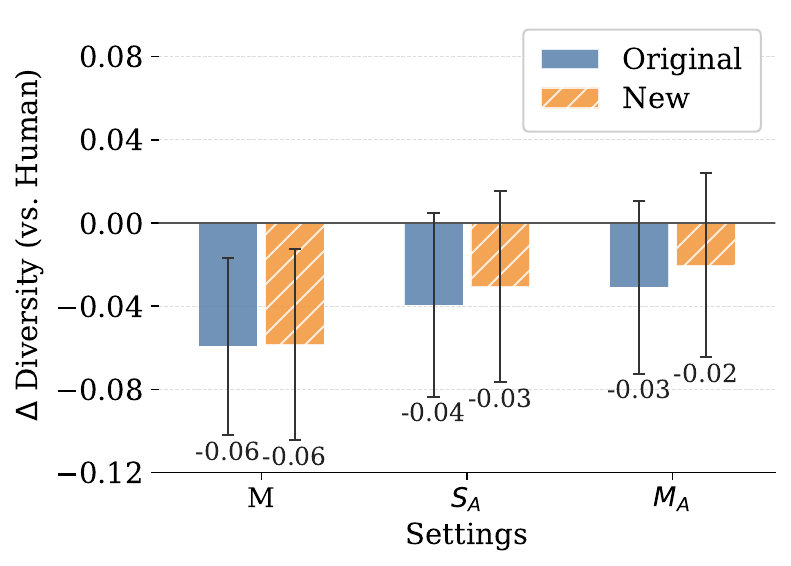}
    \end{subfigure}
    \caption{
\textbf{Left:} holistic comparison of aspect vs model swarms across different evaluations.
\textbf{Middle:} impact of decoding variation ($S_D$) relative to $S$ and other settings.
\textbf{Right:} performance differences on unseen content compared to original data across settings (normalized avg. evaluation score).
    }
    \label{fig_discussion_1}
\end{figure}

\paragraph{Generalization to unseen content}
We test generalization by moving to newer videos where no human comments are available and using an LLM as a planner to infer aspects directly from video context. This reflects a realistic deployment setting. Results (Figure \ref{fig_discussion_1}-right) show that while human comments remain more diverse, the gap between human and LLM is similar to the original setup, indicating stable performance. This suggests that LLMs can approximate discourse structure without human data, supporting scalability, though a more systematic evaluation is needed in future work.

\paragraph{Effect of comment set size on diversity}
We examine how diversity scales with the number of comments by progressively increasing the set size per video. As shown in Figure \ref{fig_discussion_2} (left), diversity decreases across all settings, including human comments, indicating a saturation effect. LLM-based settings (especially $M$, $S_A$, and $M_A$) achieve competitive or higher diversity for small sets but degrade more rapidly as size increases. This suggests that while LLMs can produce diverse samples in limited contexts, they struggle to sustain diversity at scale. In contrast, human discourse maintains diversity over larger collections, highlighting the challenge of replicating human-level pluralism.


\begin{figure}[h]
    \centering
     \begin{subfigure}{0.3\textwidth}
        \centering
        \includegraphics[width=\linewidth]{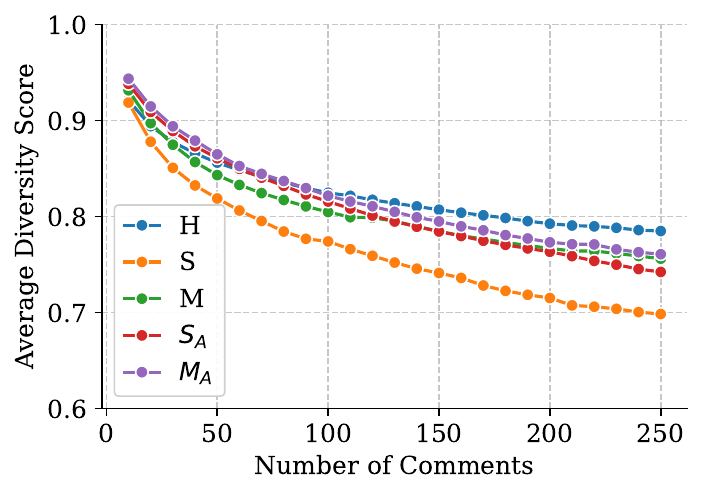}
    \end{subfigure}%
    \begin{subfigure}{0.3\textwidth}
        \centering
        \includegraphics[width=\linewidth]{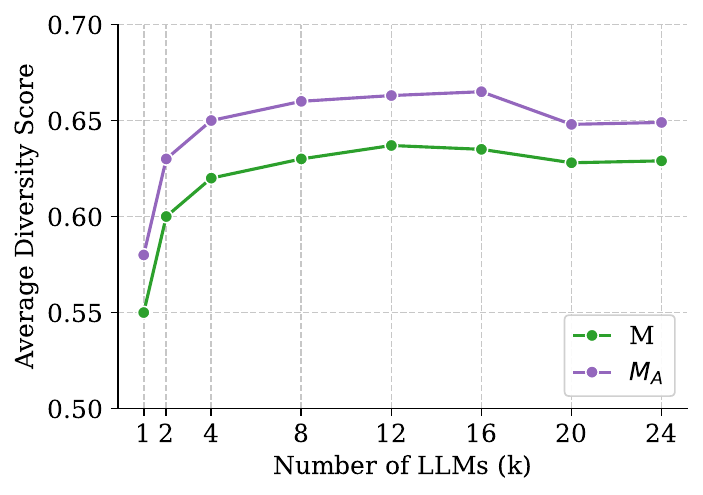}
    \end{subfigure}%
    \begin{subfigure}{0.25\textwidth}
        \centering
        \includegraphics[width=\linewidth]{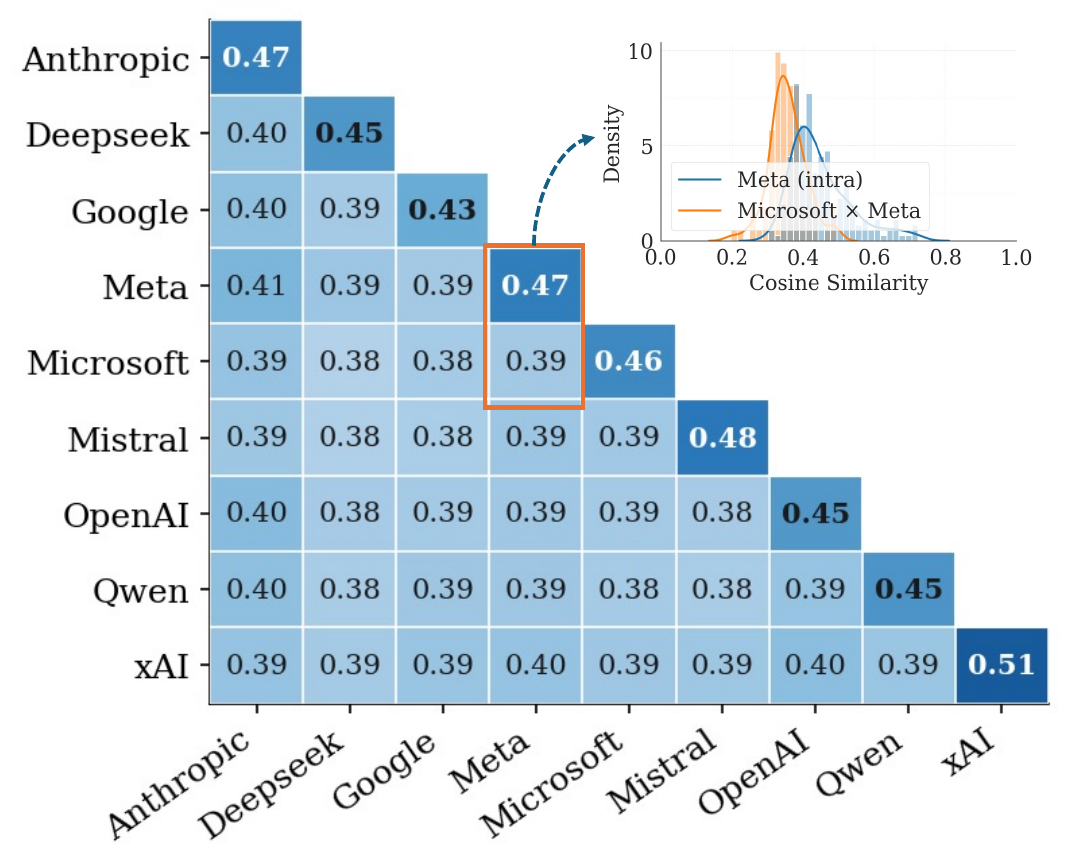}
    \end{subfigure}
    \caption{
    \textbf{Left:} diversity trends (avg. of the metrics) as the number of comments increases.
\textbf{Middle:} diversity trends with increasing number of LLMs per video ($M$, $M_A$).
\textbf{Right:} cross-provider similarity across models, with inset showing sample pairwise distribution (Meta–Microsoft).
    }
    \label{fig_discussion_2}
\end{figure}

\paragraph{Scaling model diversity: benefits and limits of model swarms.}

Finally, we analyze how diversity evolves as more models are added and how similarity varies across providers (Figure \ref{fig_discussion_2}). Results show a sharp initial increase in diversity with additional models, followed by diminishing returns, indicating that simply scaling model count is not sufficient beyond a point. Complementary analysis of inter and intra-provider similarity reveals that models from the same provider are consistently more similar than those across providers, suggesting that diversity gains derive from heterogeneity rather than quantity. 
Together, these findings suggest that diversity in LLM-generated discourse depends less on simply adding more models and more on combining models that bring genuinely different generative behaviors, offering practical guidance for designing model-swarm generation systems.



In conclusion, we study diversity in LLM-generated social media discourse, an increasingly important problem as synthetic content becomes part of the internet ecosystem. We introduce a framework that models diversity across dispersion, coverage, and alignment over semantic, linguistic, and socio-pragmatic spaces, and formalize generation strategies based on model heterogeneity and aspect conditioning. Our large-scale analysis shows that these approaches improve diversity, while still leaving measurable gaps compared to human comment distributions.
We further demonstrate that synthetic comments are useful for pretraining data suitability,  and downstream tasks, especially in low-resource settings. Overall, our findings show that diversity is driven by structural design rather than sampling alone, providing guidance for generating more realistic and diverse online discourse.


\section*{Acknowledgment}
This work was supported in part by U.S. NSF
awards no. 2438810 and 2555559, 2026 Amazon
Nova AI challenge award, and PSU Frymoyer chairship. Some experimental results were obtained
using computational resources provided by CloudBank, supported through U.S. NAIRR award no.
240336. In addition, we acknowledge the support
from the Linguistic Diversity Across the Lifespan Graduate Research Traineeship Program (NSF
grant no. 2125865).

\bibliographystyle{plain}
\bibliography{main}

@book{mcluhan1994understanding,
  title={Understanding media: The extensions of man},
  author={McLuhan, Marshall},
  year={1994},
  publisher={MIT press}
}

@misc{arnd2015sherry,
  title={Sherry Turkle: Alone Together: Why We Expect More from Technology and Less from Each Other: Basic Books, New York, 2011, 348 pp, ISBN 978-0465031467 (pbk)},
  author={Arnd-Caddigan, Margaret},
  year={2015},
  publisher={Springer}
}

@article{hong2018will,
  title={Will comments change your opinion? The persuasion effects of online comments and heuristic cues in crisis communication},
  author={Hong, Seoyeon and Cameron, Glen T},
  journal={Journal of Contingencies and Crisis Management},
  volume={26},
  number={1},
  pages={173--182},
  year={2018},
  publisher={Wiley Online Library}
}

@article{gao2025does,
  title={Does social bot help socialize? Evidence from a microblogging platform},
  author={Gao, Yang and Zhang, Maggie Mengqing and Lysyakov, Mikhail},
  journal={Information Systems Research},
  year={2025},
  publisher={INFORMS}
}

@article{wilson2002anthropology,
  title={The anthropology of online communities},
  author={Wilson, Samuel M and Peterson, Leighton C},
  journal={Annual review of anthropology},
  volume={31},
  number={1},
  pages={449--467},
  year={2002},
  publisher={Annual Reviews 4139 El Camino Way, PO Box 10139, Palo Alto, CA 94303-0139, USA}
}

@book{littlejohn2010theories,
  title={Theories of human communication},
  author={Littlejohn, Stephen W and Foss, Karen A},
  year={2010},
  publisher={Waveland press}
}

@article{van2008tsne,
  title={Visualizing data using t-SNE.},
  author={Van der Maaten, Laurens and Hinton, Geoffrey},
  journal={Journal of machine learning research},
  volume={9},
  number={11},
  year={2008}
}

@inproceedings{yang2024oasis,
  title={OASIS: Open Agents Social Interaction Simulations on One Million Agents},
  author={Yang, Ziyi and Zhang, Zaibin and Zheng, Zirui and Jiang, Yuxian and Gan, Ziyue and Wang, Zhiyu and Ling, Zijian and Ma, Martz and Dong, Bowen and Gupta, Prateek and others},
  booktitle={NeurIPS 2024 Workshop on Open-World Agents},
    year={2024}
}

@article{zhang2025socioverse,
  title={Socioverse: A world model for social simulation powered by llm agents and a pool of 10 million real-world users},
  author={Zhang, Xinnong and Lin, Jiayu and Mou, Xinyi and Yang, Shiyue and Liu, Xiawei and Sun, Libo and Lyu, Hanjia and Yang, Yihang and Qi, Weihong and Chen, Yue and others},
  journal={arXiv preprint arXiv:2504.10157},
  year={2025}
}

@misc{moltbook2026,
  author       = {{Moltbook}},
  title        = {Moltbook: The Online Community for AI Agents},
  year         = {2026},
  howpublished = {\url{https://www.moltbook.com/}},
  note         = {Accessed: 2026-04-30}
}

@article{holtz2026anatomy,
  title={The anatomy of the Moltbook social graph},
  author={Holtz, David},
  journal={arXiv preprint arXiv:2602.10131},
  year={2026}
}

@misc{rollingstone_meta_ai_users_2023,
  author       = {{Rolling Stone}},
  title        = {Meta Plans to Introduce AI Users on Facebook and Instagram},
  howpublished = {\url{https://www.rollingstone.com/culture/culture-news/meta-ai-users-facebook-instagram-1235221430/}},
  year         = {2023},
  note         = {Accessed: 2026-04-06}
}

@article{cau2025language,
  title={Language-driven opinion dynamics in agent-based simulations with llms},
  author={Cau, Erica and Pansanella, Valentina and Pedreschi, Dino and Rossetti, Giulio},
  journal={arXiv preprint arXiv:2502.19098},
  year={2025}
}

@article{park2024generative,
  title={Generative agent simulations of 1,000 people},
  author={Park, Joon Sung and Zou, Carolyn Q and Shaw, Aaron and Hill, Benjamin Mako and Cai, Carrie and Morris, Meredith Ringel and Willer, Robb and Liang, Percy and Bernstein, Michael S},
  journal={arXiv preprint arXiv:2411.10109},
  year={2024}
}

@article{ge2024scaling,
  title={Scaling synthetic data creation with 1,000,000,000 personas},
  author={Ge, Tao and Chan, Xin and Wang, Xiaoyang and Yu, Dian and Mi, Haitao and Yu, Dong},
  journal={arXiv preprint arXiv:2406.20094},
  year={2024}
}

@article{chen2024persona,
  title={From persona to personalization: A survey on role-playing language agents},
  author={Chen, Jiangjie and Wang, Xintao and Xu, Rui and Yuan, Siyu and Zhang, Yikai and Shi, Wei and Xie, Jian and Li, Shuang and Yang, Ruihan and Zhu, Tinghui and others},
  journal={arXiv preprint arXiv:2404.18231},
  year={2024}
}

@inproceedings{tseng2024two,
  title={Two tales of persona in llms: A survey of role-playing and personalization},
  author={Tseng, Yu-Min and Huang, Yu-Chao and Hsiao, Teng-Yun and Chen, Wei-Lin and Huang, Chao-Wei and Meng, Yu and Chen, Yun-Nung},
  booktitle={Findings of the Association for Computational Linguistics: EMNLP 2024},
  pages={16612--16631},
  year={2024}
}

@inproceedings{
zhou2025personaeval,
title={PersonaEval: Are {LLM} Evaluators Human Enough to Judge Role-Play?},
author={Lingfeng Zhou and Jialing Zhang and Jin Gao and Mohan Jiang and Dequan Wang},
booktitle={Second Conference on Language Modeling},
year={2025},
url={https://openreview.net/forum?id=drdrFhKYjP}
}

@incollection{jill2025ways,
  title={Ways to React on Social Media: Expressing the Self},
  author={Jill Jill, Mercy},
  booktitle={Digital Landscape: Communication in the 21st Century},
  pages={61--77},
  year={2025},
  publisher={Springer}
}

@article{kramer2021feel,
  title={I feel what they say: The effect of social media comments on viewers’ affective reactions toward elevating online videos},
  author={Kr{\"a}mer, Nicole C and Neubaum, German and Winter, Stephan and Schaewitz, Leonie and Eimler, Sabrina and Oliver, Mary Beth},
  journal={Media Psychology},
  volume={24},
  number={3},
  pages={332--358},
  year={2021},
  publisher={Taylor \& Francis}
}

@article{matook2022user,
  title={User comments in social media firestorms: A mixed-method study of purpose, tone, and motivation},
  author={Matook, Sabine and Dennis, Alan R and Wang, Yazhu Maggie},
  journal={Journal of Management Information Systems},
  volume={39},
  number={3},
  pages={673--705},
  year={2022},
  publisher={Taylor \& Francis}
}

@article{northup2022personality,
  title={Personality traits, personal motivations, and online news and social media commenting},
  author={Northup, Temple and Santana, Arthur D and Choi, HoJoon and Puspita, Ratna},
  journal={Journal of Media and Communication Studies},
  volume={14},
  number={3},
  pages={68--78},
  year={2022},
  publisher={Academic Journals}
}

@article{abdalla2025future,
  title={The Future of Artificial Intelligence in the Face of Data Scarcity.},
  author={Abdalla, Hemn Barzan and Kumar, Yulia and Marchena, Jose and Guzman, Stephany and Awlla, Ardalan and Gheisari, Mehdi and Cheraghy, Maryam},
  journal={Computers, Materials \& Continua},
  volume={84},
  number={1},
  year={2025}
}

@article{liu2025datasets,
  title={Datasets for large language models: A comprehensive survey},
  author={Liu, Yang and Cao, Jiahuan and Liu, Chongyu and Ding, Kai and Jin, Lianwen},
  journal={Artificial Intelligence Review},
  volume={58},
  number={12},
  pages={403},
  year={2025},
  publisher={Springer}
}

@inproceedings{villalobos2024position,
  title={Position: Will we run out of data? Limits of LLM scaling based on human-generated data},
  author={Villalobos, Pablo and Ho, Anson and Sevilla, Jaime and Besiroglu, Tamay and Heim, Lennart and Hobbhahn, Marius},
  booktitle={Forty-first International Conference on Machine Learning},
  year={2024}
}

@inproceedings{
longpre2025bridging,
title={Bridging the Data Provenance Gap Across Text, Speech, and Video},
author={Shayne Longpre and Nikhil Singh and Manuel Cherep and Kushagra Tiwary and Joanna Materzynska and William Brannon and Robert Mahari and Naana Obeng-Marnu and Manan Dey and Mohammed Hamdy and Nayan Saxena and Ahmad Mustafa Anis and Emad A. Alghamdi and Vu Minh Chien and Da Yin and Kun Qian and Yizhi LI and Minnie Liang and An Dinh and Shrestha Mohanty and Deividas Mataciunas and Tobin South and Jianguo Zhang and Ariel N. Lee and Campbell S. Lund and Christopher Klamm and Damien Sileo and Diganta Misra and Enrico Shippole and Kevin Klyman and Lester James Validad Miranda and Niklas Muennighoff and Seonghyeon Ye and Seungone Kim and Vipul Gupta and Vivek Sharma and Xuhui Zhou and Caiming Xiong and Luis Villa and Stella Biderman and Alex Pentland and Sara Hooker and Jad Kabbara},
booktitle={The Thirteenth International Conference on Learning Representations},
year={2025},
url={https://openreview.net/forum?id=G5DziesYxL}
}

@inproceedings{kearney2025echoes,
  title={Echoes amplified: a study of AI-generated content and digital echo chambers},
  author={Kearney, Ashley and Poredi, Nihal and Shelton, Joseph A and Akcinaroglu, Seden and Karakoc, Ekrem and Tran, Thi and Chen, Yu},
  booktitle={Disruptive Technologies in Information Sciences IX},
  volume={13480},
  pages={164--202},
  year={2025},
  organization={SPIE}
}

@article{fang2024bias,
  title={Bias of AI-generated content: an examination of news produced by large language models},
  author={Fang, Xiao and Che, Shangkun and Mao, Minjia and Zhang, Hongzhe and Zhao, Ming and Zhao, Xiaohang},
  journal={Scientific Reports},
  volume={14},
  number={1},
  pages={5224},
  year={2024},
  publisher={Nature Publishing Group UK London}
}

@inproceedings{anderson2024homogenization,
  title={Homogenization effects of large language models on human creative ideation},
  author={Anderson, Barrett R and Shah, Jash Hemant and Kreminski, Max},
  booktitle={Proceedings of the 16th conference on creativity \& cognition},
  pages={413--425},
  year={2024}
}

@inproceedings{
seddik2024how,
title={How bad is training on synthetic data? A statistical analysis of language model collapse},
author={Mohamed El Amine Seddik and Suei-Wen Chen and Soufiane Hayou and Pierre Youssef and Merouane Abdelkader DEBBAH},
booktitle={First Conference on Language Modeling},
year={2024},
url={https://openreview.net/forum?id=t3z6UlV09o}
}

@article{shumailov2023curse,
  title={The curse of recursion: Training on generated data makes models forget},
  author={Shumailov, Ilia and Shumaylov, Zakhar and Zhao, Yiren and Gal, Yarin and Papernot, Nicolas and Anderson, Ross},
  journal={arXiv preprint arXiv:2305.17493},
  year={2023}
}

@article{shumailov2024ai,
  title={AI models collapse when trained on recursively generated data},
  author={Shumailov, Ilia and Shumaylov, Zakhar and Zhao, Yiren and Papernot, Nicolas and Anderson, Ross and Gal, Yarin},
  journal={Nature},
  volume={631},
  number={8022},
  pages={755--759},
  year={2024},
  publisher={Nature Publishing Group UK London}
}

@inproceedings{gerstgrasser2024model,
  title={Is Model Collapse Inevitable? Breaking the Curse of Recursion by Accumulating Real and Synthetic Data},
  author={Gerstgrasser, M and Schaeffer, R and Dey, A and Rafailov, R and Korbak, T and Sleight, H and Agrawal, R and Hughes, J and Pai, DB and Gromov, A and others},
  year={2024},
    booktitle={First Conference on Language Modeling}
}

@article{feng2025one,
  title={When one llm drools, multi-llm collaboration rules},
  author={Feng, Shangbin and Ding, Wenxuan and Liu, Alisa and Wang, Zifeng and Shi, Weijia and Wang, Yike and Shen, Zejiang and Han, Xiaochuang and Lang, Hunter and Lee, Chen-Yu and others},
  journal={arXiv preprint arXiv:2502.04506},
  year={2025}
}

@inproceedings{feng2025model,
  title={Model Swarms: Collaborative Search to Adapt LLM Experts via Swarm Intelligence},
  author={Feng, Shangbin and Wang, Zifeng and Wang, Yike and Ebrahimi, Sayna and Palangi, Hamid and Miculicich, Lesly and Kulshrestha, Achin and Rauschmayr, Nathalie and Choi, Yejin and Tsvetkov, Yulia and others},
  booktitle={International Conference on Machine Learning},
  pages={16904--16930},
  year={2025},
  organization={PMLR}
}

@article{zhang2025verbalized,
  title={Verbalized sampling: How to mitigate mode collapse and unlock llm diversity},
  author={Zhang, Jiayi and Yu, Simon and Chong, Derek and Sicilia, Anthony and Tomz, Michael R and Manning, Christopher D and Shi, Weiyan},
  journal={arXiv preprint arXiv:2510.01171},
  year={2025}
}

@inproceedings{jiangartificial,
  title={Artificial Hivemind: The Open-Ended Homogeneity of Language Models (and Beyond)},
  author={Jiang, Liwei and Chai, Yuanjun and Li, Margaret and Liu, Mickel and Fok, Raymond and Dziri, Nouha and Tsvetkov, Yulia and Sap, Maarten and Choi, Yejin},
  booktitle={The Thirty-ninth Annual Conference on Neural Information Processing Systems Datasets and Benchmarks Track},
    year={2025}
}

@article{zhang2025noveltybench,
  title={NoveltyBench: Evaluating Language Models for Humanlike Diversity},
  author={Zhang, Yiming and Diddee, Harshita and Holm, Susan and Liu, Hanchen and Liu, Xinyue and Samuel, Vinay and Wang, Barry and Ippolito, Daphne},
  journal={arXiv preprint arXiv:2504.05228},
  year={2025}
}

@inproceedings{lu2024creativity_index_paper,
  title={AI as Humanity's Salieri: Quantifying Linguistic Creativity of Language Models via Systematic Attribution of Machine Text against Web Text},
  author={Lu, Ximing and Sclar, Melanie and Hallinan, Skyler and Mireshghallah, Niloofar and Liu, Jiacheng and Han, Seungju and Ettinger, Allyson and Jiang, Liwei and Chandu, Khyathi and Dziri, Nouha and others},
   booktitle={The Thirteenth International Conference on Learning Representations},
  year={2024}
}

@article{guo2025benchmarking,
  title={Benchmarking linguistic diversity of large language models},
  author={Guo, Yanzhu and Shang, Guokan and Clavel, Chlo{\'e}},
  journal={Transactions of the Association for Computational Linguistics},
  volume={13},
  pages={1507--1526},
  year={2025},
  publisher={MIT Press 255 Main Street, 9th Floor, Cambridge, Massachusetts 02142, USA~…}
}

@article{cann2023using,
  title={Using semantic similarity and text embedding to measure the social media echo of strategic communications},
  author={Cann, Tristan JB and Dennes, Ben and Coan, Travis and O'Neill, Saffron and Williams, Hywel TP},
  journal={arXiv preprint arXiv:2303.16694},
  year={2023}
}

@inproceedings{Chakrabarty2024art,
author = {Chakrabarty, Tuhin and Laban, Philippe and Agarwal, Divyansh and Muresan, Smaranda and Wu, Chien-Sheng},
title = {Art or Artifice? Large Language Models and the False Promise of Creativity},
year = {2024},
isbn = {9798400703300},
publisher = {Association for Computing Machinery},
address = {New York, NY, USA},
url = {https://doi.org/10.1145/3613904.3642731},
doi = {10.1145/3613904.3642731},
booktitle = {Proceedings of the 2024 CHI Conference on Human Factors in Computing Systems},
articleno = {30},
numpages = {34},
location = {Honolulu, HI, USA},
series = {CHI '24}
}

@inproceedings{
zhang2024forcing,
title={Forcing Diffuse Distributions out of Language Models},
author={Yiming Zhang and Avi Schwarzschild and Nicholas Carlini and J Zico Kolter and Daphne Ippolito},
booktitle={First Conference on Language Modeling},
year={2024},
url={https://openreview.net/forum?id=9JY1QLVFPZ}
}

@inproceedings{
west2025base,
title={Base Models Beat Aligned Models at Randomness and Creativity},
author={Peter West and Christopher Potts},
booktitle={Second Conference on Language Modeling},
year={2025},
url={https://openreview.net/forum?id=vqN8uom4A1}
}

@inproceedings{
zhou2024sotopia,
title={{SOTOPIA}: Interactive Evaluation for Social Intelligence in Language Agents},
author={Xuhui Zhou and Hao Zhu and Leena Mathur and Ruohong Zhang and Haofei Yu and Zhengyang Qi and Louis-Philippe Morency and Yonatan Bisk and Daniel Fried and Graham Neubig and Maarten Sap},
booktitle={The Twelfth International Conference on Learning Representations},
year={2024},
url={https://openreview.net/forum?id=mM7VurbA4r}
}

@inproceedings{
chung2025modifying,
title={Modifying Large Language Model Post-Training for Diverse Creative Writing},
author={John Joon Young Chung and Vishakh Padmakumar and Melissa Roemmele and Yuqian Sun and Max Kreminski},
booktitle={Second Conference on Language Modeling},
year={2025},
url={https://openreview.net/forum?id=1Pmuw08LoM}
}

@misc{lanchantin2025diversepreferenceoptimization,
      title={Diverse Preference Optimization}, 
      author={Jack Lanchantin and Angelica Chen and Shehzaad Dhuliawala and Ping Yu and Jason Weston and Sainbayar Sukhbaatar and Ilia Kulikov},
      year={2025},
      eprint={2501.18101},
      archivePrefix={arXiv},
      primaryClass={cs.CL},
      url={https://arxiv.org/abs/2501.18101}, 
}

@inproceedings{
Holtzman2020The,
title={The Curious Case of Neural Text Degeneration},
author={Ari Holtzman and Jan Buys and Li Du and Maxwell Forbes and Yejin Choi},
booktitle={International Conference on Learning Representations},
year={2020},
url={https://openreview.net/forum?id=rygGQyrFvH}
}

@inproceedings{yang-etal-2022-re3,
    title = "Re3: Generating Longer Stories With Recursive Reprompting and Revision",
    author = "Yang, Kevin  and
      Tian, Yuandong  and
      Peng, Nanyun  and
      Klein, Dan",
    editor = "Goldberg, Yoav  and
      Kozareva, Zornitsa  and
      Zhang, Yue",
    booktitle = "Proceedings of the 2022 Conference on Empirical Methods in Natural Language Processing",
    month = dec,
    year = "2022",
    address = "Abu Dhabi, United Arab Emirates",
    publisher = "Association for Computational Linguistics",
    url = "https://aclanthology.org/2022.emnlp-main.296/",
    doi = "10.18653/v1/2022.emnlp-main.296",
    pages = "4393--4479",
}

@inproceedings{ismayilzada-etal-2025-creative,
    title = "Creative Preference Optimization",
    author = "Ismayilzada, Mete  and
      Laverghetta Jr., Antonio  and
      Luchini, Simone A.  and
      Patel, Reet  and
      Bosselut, Antoine  and
      Plas, Lonneke Van Der  and
      Beaty, Roger E.",
    editor = "Christodoulopoulos, Christos  and
      Chakraborty, Tanmoy  and
      Rose, Carolyn  and
      Peng, Violet",
    booktitle = "Findings of the Association for Computational Linguistics: EMNLP 2025",
    month = nov,
    year = "2025",
    address = "Suzhou, China",
    publisher = "Association for Computational Linguistics",
    url = "https://aclanthology.org/2025.findings-emnlp.509/",
    doi = "10.18653/v1/2025.findings-emnlp.509",
    pages = "9580--9609",
    ISBN = "979-8-89176-335-7",
  
}

@inproceedings{
minh2025turning,
title={Turning Up the Heat: Min-p Sampling for Creative and Coherent {LLM} Outputs},
author={Nguyen Nhat Minh and Andrew Baker and Clement Neo and Allen G Roush and Andreas Kirsch and Ravid Shwartz-Ziv},
booktitle={The Thirteenth International Conference on Learning Representations},
year={2025},
url={https://openreview.net/forum?id=FBkpCyujtS}
}

@inproceedings{omahony2024attributing,
title={Attributing Mode Collapse in the fine-tuning of Large Language Models},
author={Laura O'Mahony and Leo Grinsztajn and Hailey Schoelkopf and Stella Biderman},
booktitle={ICLR 2024 Workshop on Mathematical and Empirical Understanding of Foundation Models},
year={2024},
url={https://openreview.net/forum?id=3pDMYjpOxk}
}

@inproceedings{kirk2024understanding,
title={Understanding the Effects of {RLHF} on {LLM} Generalisation and Diversity},
author={Robert Kirk and Ishita Mediratta and Christoforos Nalmpantis and Jelena Luketina and Eric Hambro and Edward Grefenstette and Roberta Raileanu},
booktitle={The Twelfth International Conference on Learning Representations},
year={2024},
url={https://openreview.net/forum?id=PXD3FAVHJT}
}

@inproceedings{guo-etal-2024-curious,
    title = "The Curious Decline of Linguistic Diversity: Training Language Models on Synthetic Text",
    author = "Guo, Yanzhu  and
      Shang, Guokan  and
      Vazirgiannis, Michalis  and
      Clavel, Chlo{\'e}",
    editor = "Duh, Kevin  and
      Gomez, Helena  and
      Bethard, Steven",
    booktitle = "Findings of the Association for Computational Linguistics: NAACL 2024",
    month = jun,
    year = "2024",
    address = "Mexico City, Mexico",
    publisher = "Association for Computational Linguistics",
    url = "https://aclanthology.org/2024.findings-naacl.228/",
    doi = "10.18653/v1/2024.findings-naacl.228",
    pages = "3589--3604",
}

@article{chen2024diversity,
  title={On the diversity of synthetic data and its impact on training large language models},
  author={Chen, Hao and Waheed, Abdul and Li, Xiang and Wang, Yidong and Wang, Jindong and Raj, Bhiksha and Abdin, Marah I},
  journal={arXiv preprint arXiv:2410.15226},
  year={2024}
}

@inproceedings{kang-etal-2025-demystifying,
    title = "Demystifying Synthetic Data in {LLM} Pre-training: A Systematic Study of Scaling Laws, Benefits, and Pitfalls",
    author = "Kang, Feiyang  and
      Ardalani, Newsha  and
      Kuchnik, Michael  and
      Emad, Youssef  and
      Elhoushi, Mostafa  and
      Sengupta, Shubhabrata  and
      Li, Shang-Wen  and
      Raghavendra, Ramya  and
      Jia, Ruoxi  and
      Wu, Carole-Jean",
    editor = "Christodoulopoulos, Christos  and
      Chakraborty, Tanmoy  and
      Rose, Carolyn  and
      Peng, Violet",
    booktitle = "Proceedings of the 2025 Conference on Empirical Methods in Natural Language Processing",
    month = nov,
    year = "2025",
    address = "Suzhou, China",
    publisher = "Association for Computational Linguistics",
    url = "https://aclanthology.org/2025.emnlp-main.544/",
    doi = "10.18653/v1/2025.emnlp-main.544",
    pages = "10739--10758",
    ISBN = "979-8-89176-332-6",
}

@article{qin2025scaling,
  title={Scaling laws of synthetic data for language models},
  author={Qin, Zeyu and Dong, Qingxiu and Zhang, Xingxing and Dong, Li and Huang, Xiaolong and Yang, Ziyi and Khademi, Mahmoud and Zhang, Dongdong and Awadalla, Hany Hassan and Fung, Yi R and others},
  journal={arXiv preprint arXiv:2503.19551},
  year={2025}
}

@ARTICLE{11080380,
  author={Nadǎş, Mihai and Dioşan, Laura and Tomescu, Andreea},
  journal={IEEE Access}, 
  title={Synthetic Data Generation Using Large Language Models: Advances in Text and Code}, 
  year={2025},
  volume={13},
  number={},
  pages={134615-134633},
  doi={10.1109/ACCESS.2025.3589503}}

@software{benallal2024cosmopedia,
  author = {Ben Allal, Loubna and Lozhkov, Anton and Penedo, Guilherme and Wolf, Thomas and von Werra, Leandro},
  title = {Cosmopedia},
  month = February,
  year = 2024,
  url = {https://huggingface.co/datasets/HuggingFaceTB/cosmopedia}
}

@misc{finephrase,
  title={The Synthetic Data Playbook: Generating Trillions of the Finest Tokens},
  author={Joel Niklaus and Guilherme Penedo and Hynek Kydlicek and Elie Bakouch and Lewis Tunstall and Ed Beeching and Thibaud Frere and Colin Raffel and Leandro von Werra and Thomas Wolf},
  year={2026},
}

@incollection{hoeffding1992class,
  title={A class of statistics with asymptotically normal distribution},
  author={Hoeffding, Wassily},
  booktitle={Breakthroughs in statistics: Foundations and basic theory},
  pages={308--334},
  year={1992},
  publisher={Springer}
}

@article{heaton2025chatgpt,
  title={“ChatGPT says no”: agency, trust, and blame in Twitter discourses after the launch of ChatGPT},
  author={Heaton, Dan and Nichele, Elena and Clos, Jeremie and Fischer, Joel E},
  journal={AI and Ethics},
  volume={5},
  number={1},
  pages={653--675},
  year={2025},
  publisher={Springer}
}

@inproceedings{sun2025we,
  title={Are we in the AI-generated text world already? Quantifying and monitoring AIGT on social media},
  author={Sun, Zhen and Zhang, Zongmin and Shen, Xinyue and Zhang, Ziyi and Liu, Yule and Backes, Michael and Zhang, Yang and He, Xinlei},
  booktitle={Proceedings of the 63rd Annual Meeting of the Association for Computational Linguistics (Volume 1: Long Papers)},
  pages={22975--23005},
  year={2025}
}

@article{goldstein2023coming,
  title={The coming age of AI-powered propaganda},
  author={Goldstein, Josh A and Sastry, Girish},
  journal={Foreign Affairs},
  volume={7},
  year={2023}
}

@inproceedings{thukral2018analyzing,
  title={Analyzing behavioral trends in community driven discussion platforms like reddit},
  author={Thukral, Sachin and Meisheri, Hardik and Kataria, Tushar and Agarwal, Aman and Verma, Ishan and Chatterjee, Arnab and Dey, Lipika},
  booktitle={2018 IEEE/ACM International Conference on Advances in Social Networks Analysis and Mining (ASONAM)},
  pages={662--669},
  year={2018},
  organization={IEEE}
}

@article{yu2024characterizing,
  title={Characterizing the structure of online conversations across Reddit},
  author={Yu, Yulin and Jiang, Julie and Dhillon, Paramveer S},
  journal={Proceedings of the ACM on Human-Computer Interaction},
  volume={8},
  number={CSCW2},
  pages={1--23},
  year={2024},
  publisher={ACM New York, NY, USA}
}

@article{ocal2021reasoning,
  title={Reasoning in social media: insights from Reddit “Change My View” submissions},
  author={{\"O}cal, Ay{\c{s}}e and Xiao, Lu and Park, Jaihyun},
  journal={Online Information Review},
  volume={45},
  number={7},
  pages={1208--1226},
  year={2021},
  publisher={Emerald Publishing Limited}
}

@article{choi2016social,
  title={What social media data should i use in my research?: A comparative analysis of twitter, youtube, reddit, and the new york times comments},
  author={Choi, Dongho and Matni, Ziad and Shah, Chirag},
  journal={Proceedings of the Association for Information Science and Technology},
  volume={53},
  number={1},
  pages={1--6},
  year={2016},
  publisher={Wiley Online Library}
}

@article{li2022youtube,
  title={YouTube as a source of misinformation on COVID-19 vaccination: a systematic analysis},
  author={Li, Heidi Oi-Yee and Pastukhova, Elena and Brandts-Longtin, Olivier and Tan, Marcus G and Kirchhof, Mark G},
  journal={BMJ global health},
  volume={7},
  number={3},
  year={2022},
  publisher={BMJ Publishing Group Ltd}
}

@inproceedings{nasim2013commenting,
  title={On commenting behavior of Facebook users},
  author={Nasim, Mehwish and Ilyas, Muhammad U and Rextin, Aimal and Nasim, Nazish},
  booktitle={Proceedings of the 24th ACM Conference on Hypertext and Social Media},
  pages={179--183},
  year={2013}
}

@article{stavros2014understanding,
  title={Understanding fan motivation for interacting on social media},
  author={Stavros, Constantino and Meng, Matthew D and Westberg, Kate and Farrelly, Francis},
  journal={Sport management review},
  volume={17},
  number={4},
  pages={455--469},
  year={2014},
  publisher={Elsevier}
}

@inproceedings{lapides2015news,
  title={News Feed: What's in it for Me?},
  author={Lapides, Paul and Chokshi, Apoorve and Carpendale, Sheelagh and Greenberg, Saul},
  booktitle={Proceedings of the 33rd annual ACM conference on human factors in computing systems},
  pages={163--172},
  year={2015}
}

@article{maiz2016factors,
  title={Factors affecting social interaction on social network sites: the Facebook case},
  author={Maiz, Ander and Arranz, Nieves and Fdez. de Arroyabe, Juan Carlos},
  journal={Journal of Enterprise Information Management},
  volume={29},
  number={5},
  pages={630--649},
  year={2016},
  publisher={Emerald Group Publishing Limited}
}

@article{garcia2017understanding,
  title={Understanding popularity, reputation, and social influence in the twitter society},
  author={Garcia, David and Mavrodiev, Pavlin and Casati, Daniele and Schweitzer, Frank},
  journal={Policy \& Internet},
  volume={9},
  number={3},
  pages={343--364},
  year={2017},
  publisher={Wiley Online Library}
}

@article{keller2020political,
  title={Political astroturfing on Twitter: How to coordinate a disinformation campaign},
  author={Keller, Franziska B and Schoch, David and Stier, Sebastian and Yang, JungHwan},
  journal={Political communication},
  volume={37},
  number={2},
  pages={256--280},
  year={2020},
  publisher={Taylor \& Francis}
}

@article{khaund2021social,
  title={Social bots and their coordination during online campaigns: a survey},
  author={Khaund, Tuja and Kirdemir, Baris and Agarwal, Nitin and Liu, Huan and Morstatter, Fred},
  journal={IEEE Transactions on Computational Social Systems},
  volume={9},
  number={2},
  pages={530--545},
  year={2021},
  publisher={IEEE}
}

@article{naaman2011hip,
  title={Hip and trendy: Characterizing emerging trends on Twitter},
  author={Naaman, Mor and Becker, Hila and Gravano, Luis},
  journal={Journal of the American Society for Information Science and Technology},
  volume={62},
  number={5},
  pages={902--918},
  year={2011},
  publisher={Wiley Online Library}
}

@article{nutakki2025there,
  title={Is there anything left?: A global analysis on changes in engagement with political content on twitter in the musk era},
  author={Nutakki, Brahmani and Navarrete, Rosa M and Carteny, Giuseppe and Weber, Ingmar},
  journal={Journal of Quantitative Description: Digital Media},
  volume={5},
  year={2025}
}

@article{ma2022m,
  title={" I'm not sure what difference is between their content and mine, other than the person itself" A Study of Fairness Perception of Content Moderation on YouTube},
  author={Ma, Renkai and Kou, Yubo},
  journal={Proceedings of the ACM on Human-Computer Interaction},
  volume={6},
  number={CSCW2},
  pages={1--28},
  year={2022},
  publisher={ACM New York, NY, USA}
}

@inproceedings{schultes2013leave,
  author    = {Schultes, Peter and Dorner, Verena and Lehner, Franz},
  title     = {Leave a Comment! An In-Depth Analysis of User Comments on {YouTube}},
  booktitle = {Proceedings of the 11th International Conference on Wirtschaftsinformatik (WI)},
  year      = {2013},
  pages     = {42},
  url       = {https://aisel.aisnet.org/wi2013/42},
  address   = {Leipzig, Germany}
}

@misc{arthurs2018researching,
  title={Researching youtube},
  author={Arthurs, Jane and Drakopoulou, Sophia and Gandini, Alessandro},
  journal={Convergence},
  volume={24},
  number={1},
  pages={3--15},
  year={2018},
  publisher={SAGE Publications Sage UK: London, England}
}

@article{adcock2026llama4,
  title={The Llama 4 Herd: Architecture, Training, Evaluation, and Deployment Notes},
  author={Adcock, Aaron and Srivastava, Aayushi and Dubey, Abhimanyu and Jauhri, Abhinav and Pande, Abhinav and Pandey, Abhinav and Sharma, Abhinav and Kadian, Abhishek and Kumawat, Abhishek and Kelsey, Adam and others},
  journal={arXiv preprint arXiv:2601.11659},
  year={2026}
}

@article{jiang2024mixtral,
  title={Mixtral of experts},
  author={Jiang, Albert Q and Sablayrolles, Alexandre and Roux, Antoine and Mensch, Arthur and Savary, Blanche and Bamford, Chris and Chaplot, Devendra Singh and Casas, Diego de las and Hanna, Emma Bou and Bressand, Florian and others},
  journal={arXiv preprint arXiv:2401.04088},
  year = 2024,
  }

@article{yang2025qwen3,
  title={Qwen3 technical report},
  author={Yang, An and Li, Anfeng and Yang, Baosong and Zhang, Beichen and Hui, Binyuan and Zheng, Bo and Yu, Bowen and Gao, Chang and Huang, Chengen and Lv, Chenxu and others},
  journal={arXiv preprint arXiv:2505.09388},
  year={2025}
}

@article{qwen2024qwen2.5,
  title={Qwen2. 5 technical report},
  author={Qwen, A Yang and Yang, Baosong and Zhang, Beichen and Hui, Binyuan and Zheng, Bo and Yu, Bowen and Li, Chengpeng and Liu, Dayiheng and Huang, Fei and Wei, Haoran and others},
  journal={arXiv preprint},
  year={2024}
}

@article{agarwal2025gptoss,
  title={gpt-oss-120b \& gpt-oss-20b model card},
  author={Agarwal, Sandhini and Ahmad, Lama and Ai, Jason and Altman, Sam and Applebaum, Andy and Arbus, Edwin and Arora, Rahul K and Bai, Yu and Baker, Bowen and Bao, Haiming and others},
  journal={arXiv preprint arXiv:2508.10925},
  year={2025}
}

@article{achiam2023gpt4,
  title={Gpt-4 technical report},
  author={Achiam, Josh and Adler, Steven and Agarwal, Sandhini and Ahmad, Lama and Akkaya, Ilge and Aleman, Florencia Leoni and Almeida, Diogo and Altenschmidt, Janko and Altman, Sam and Anadkat, Shyamal and others},
  journal={arXiv preprint arXiv:2303.08774},
  year={2023}
}

@article{singh2025gpt5,
  title={Openai gpt-5 system card},
  author={Singh, Aaditya and Fry, Adam and Perelman, Adam and Tart, Adam and Ganesh, Adi and El-Kishky, Ahmed and McLaughlin, Aidan and Low, Aiden and Ostrow, AJ and Ananthram, Akhila and others},
  journal={arXiv preprint arXiv:2601.03267},
  year={2025}
}

@misc{anthropic2025claude37,
  author       = {Anthropic},
  title        = {Claude 3.7 Sonnet and the first hybrid reasoning model},
  year         = {2025},
  howpublished = {\url{https://www.anthropic.com/news/claude-3-7-sonnet}},
  note         = {Accessed: 2026-04-08}
}

@misc{anthropic2025claudesonnet45,
  author       = {Anthropic},
  title        = {Introducing Claude Sonnet 4.5},
  year         = {2025},
  month        = {September},
  howpublished = {\url{https://www.anthropic.com/news/claude-sonnet-4-5}},
  note         = {Accessed: 2026-04-08}
}

@misc{anthropic2025claudehaiku45,
  author       = {Anthropic},
  title        = {Introducing Claude Haiku 4.5},
  year         = {2025},
  month        = {October},
  howpublished = {\url{https://www.anthropic.com/news/claude-haiku-4-5}},
  note         = {Accessed: 2026-04-08}
}

@article{comanici2025gemini2.5,
  title={Gemini 2.5: Pushing the frontier with advanced reasoning, multimodality, long context, and next generation agentic capabilities},
  author={Comanici, Gheorghe and Bieber, Eric and Schaekermann, Mike and Pasupat, Ice and Sachdeva, Noveen and Dhillon, Inderjit and Blistein, Marcel and Ram, Ori and Zhang, Dan and Rosen, Evan and others},
  journal={arXiv preprint arXiv:2507.06261},
  year={2025}
}

@misc{gemmateam2025gemma3technicalreport,
      title={Gemma 3 Technical Report}, 
      author={Gemma Team and Aishwarya Kamath and Johan Ferret and Shreya Pathak and Nino Vieillard and Ramona Merhej and Sarah Perrin and Tatiana Matejovicova and Alexandre Ramé and Morgane Rivière and Louis Rouillard and Thomas Mesnard and Geoffrey Cideron and Jean-bastien Grill and Sabela Ramos and Edouard Yvinec and Michelle Casbon and Etienne Pot and Ivo Penchev and Gaël Liu and Francesco Visin and Kathleen Kenealy and Lucas Beyer and Xiaohai Zhai and Anton Tsitsulin and Robert Busa-Fekete and Alex Feng and Noveen Sachdeva and Benjamin Coleman and Yi Gao and Basil Mustafa and Iain Barr and Emilio Parisotto and David Tian and Matan Eyal and Colin Cherry and Jan-Thorsten Peter and Danila Sinopalnikov and Surya Bhupatiraju and Rishabh Agarwal and Mehran Kazemi and Dan Malkin and Ravin Kumar and David Vilar and Idan Brusilovsky and Jiaming Luo and Andreas Steiner and Abe Friesen and Abhanshu Sharma and Abheesht Sharma and Adi Mayrav Gilady and Adrian Goedeckemeyer and Alaa Saade and Alex Feng and Alexander Kolesnikov and Alexei Bendebury and Alvin Abdagic and Amit Vadi and András György and André Susano Pinto and Anil Das and Ankur Bapna and Antoine Miech and Antoine Yang and Antonia Paterson and Ashish Shenoy and Ayan Chakrabarti and Bilal Piot and Bo Wu and Bobak Shahriari and Bryce Petrini and Charlie Chen and Charline Le Lan and Christopher A. Choquette-Choo and CJ Carey and Cormac Brick and Daniel Deutsch and Danielle Eisenbud and Dee Cattle and Derek Cheng and Dimitris Paparas and Divyashree Shivakumar Sreepathihalli and Doug Reid and Dustin Tran and Dustin Zelle and Eric Noland and Erwin Huizenga and Eugene Kharitonov and Frederick Liu and Gagik Amirkhanyan and Glenn Cameron and Hadi Hashemi and Hanna Klimczak-Plucińska and Harman Singh and Harsh Mehta and Harshal Tushar Lehri and Hussein Hazimeh and Ian Ballantyne and Idan Szpektor and Ivan Nardini and Jean Pouget-Abadie and Jetha Chan and Joe Stanton and John Wieting and Jonathan Lai and Jordi Orbay and Joseph Fernandez and Josh Newlan and Ju-yeong Ji and Jyotinder Singh and Kat Black and Kathy Yu and Kevin Hui and Kiran Vodrahalli and Klaus Greff and Linhai Qiu and Marcella Valentine and Marina Coelho and Marvin Ritter and Matt Hoffman and Matthew Watson and Mayank Chaturvedi and Michael Moynihan and Min Ma and Nabila Babar and Natasha Noy and Nathan Byrd and Nick Roy and Nikola Momchev and Nilay Chauhan and Noveen Sachdeva and Oskar Bunyan and Pankil Botarda and Paul Caron and Paul Kishan Rubenstein and Phil Culliton and Philipp Schmid and Pier Giuseppe Sessa and Pingmei Xu and Piotr Stanczyk and Pouya Tafti and Rakesh Shivanna and Renjie Wu and Renke Pan and Reza Rokni and Rob Willoughby and Rohith Vallu and Ryan Mullins and Sammy Jerome and Sara Smoot and Sertan Girgin and Shariq Iqbal and Shashir Reddy and Shruti Sheth and Siim Põder and Sijal Bhatnagar and Sindhu Raghuram Panyam and Sivan Eiger and Susan Zhang and Tianqi Liu and Trevor Yacovone and Tyler Liechty and Uday Kalra and Utku Evci and Vedant Misra and Vincent Roseberry and Vlad Feinberg and Vlad Kolesnikov and Woohyun Han and Woosuk Kwon and Xi Chen and Yinlam Chow and Yuvein Zhu and Zichuan Wei and Zoltan Egyed and Victor Cotruta and Minh Giang and Phoebe Kirk and Anand Rao and Kat Black and Nabila Babar and Jessica Lo and Erica Moreira and Luiz Gustavo Martins and Omar Sanseviero and Lucas Gonzalez and Zach Gleicher and Tris Warkentin and Vahab Mirrokni and Evan Senter and Eli Collins and Joelle Barral and Zoubin Ghahramani and Raia Hadsell and Yossi Matias and D. Sculley and Slav Petrov and Noah Fiedel and Noam Shazeer and Oriol Vinyals and Jeff Dean and Demis Hassabis and Koray Kavukcuoglu and Clement Farabet and Elena Buchatskaya and Jean-Baptiste Alayrac and Rohan Anil and Dmitry and Lepikhin and Sebastian Borgeaud and Olivier Bachem and Armand Joulin and Alek Andreev and Cassidy Hardin and Robert Dadashi and Léonard Hussenot},
      year={2025},
      eprint={2503.19786},
      archivePrefix={arXiv},
      primaryClass={cs.CL},
      url={https://arxiv.org/abs/2503.19786}, 
}

@article{liu2024deepseek,
  title={Deepseek-v3 technical report},
  author={Liu, Aixin and Feng, Bei and Xue, Bing and Wang, Bingxuan and Wu, Bochao and Lu, Chengda and Zhao, Chenggang and Deng, Chengqi and Zhang, Chenyu and Ruan, Chong and others},
  journal={arXiv preprint arXiv:2412.19437},
  year={2024}
}

@article{abdin2024phi4,
  title={Phi-4 technical report},
  author={Abdin, Marah and Aneja, Jyoti and Behl, Harkirat and Bubeck, S{\'e}bastien and Eldan, Ronen and Gunasekar, Suriya and Harrison, Michael and Hewett, Russell J and Javaheripi, Mojan and Kauffmann, Piero and others},
  journal={arXiv preprint arXiv:2412.08905},
  year={2024}
}

@article{abouelenin2025phi4-mini,
  title={Phi-4-mini technical report: Compact yet powerful multimodal language models via mixture-of-loras},
  author={Abouelenin, Abdelrahman and Ashfaq, Atabak and Atkinson, Adam and Awadalla, Hany and Bach, Nguyen and Bao, Jianmin and Benhaim, Alon and Cai, Martin and Chaudhary, Vishrav and Chen, Congcong and others},
  journal={arXiv preprint arXiv:2503.01743},
  year={2025}
}

@misc{abdin2024phi3,
      title={Phi-3 Technical Report: A Highly Capable Language Model Locally on Your Phone}, 
      author={Marah Abdin and Jyoti Aneja and Hany Awadalla and Ahmed Awadallah and Ammar Ahmad Awan and Nguyen Bach and Amit Bahree and Arash Bakhtiari and Jianmin Bao and Harkirat Behl and Alon Benhaim and Misha Bilenko and Johan Bjorck and Sébastien Bubeck and Martin Cai and Qin Cai and Vishrav Chaudhary and Dong Chen and Dongdong Chen and Weizhu Chen and Yen-Chun Chen and Yi-Ling Chen and Hao Cheng and Parul Chopra and Xiyang Dai and Matthew Dixon and Ronen Eldan and Victor Fragoso and Jianfeng Gao and Mei Gao and Min Gao and Amit Garg and Allie Del Giorno and Abhishek Goswami and Suriya Gunasekar and Emman Haider and Junheng Hao and Russell J. Hewett and Wenxiang Hu and Jamie Huynh and Dan Iter and Sam Ade Jacobs and Mojan Javaheripi and Xin Jin and Nikos Karampatziakis and Piero Kauffmann and Mahoud Khademi and Dongwoo Kim and Young Jin Kim and Lev Kurilenko and James R. Lee and Yin Tat Lee and Yuanzhi Li and Yunsheng Li and Chen Liang and Lars Liden and Xihui Lin and Zeqi Lin and Ce Liu and Liyuan Liu and Mengchen Liu and Weishung Liu and Xiaodong Liu and Chong Luo and Piyush Madan and Ali Mahmoudzadeh and David Majercak and Matt Mazzola and Caio César Teodoro Mendes and Arindam Mitra and Hardik Modi and Anh Nguyen and Brandon Norick and Barun Patra and Daniel Perez-Becker and Thomas Portet and Reid Pryzant and Heyang Qin and Marko Radmilac and Liliang Ren and Gustavo de Rosa and Corby Rosset and Sambudha Roy and Olatunji Ruwase and Olli Saarikivi and Amin Saied and Adil Salim and Michael Santacroce and Shital Shah and Ning Shang and Hiteshi Sharma and Yelong Shen and Swadheen Shukla and Xia Song and Masahiro Tanaka and Andrea Tupini and Praneetha Vaddamanu and Chunyu Wang and Guanhua Wang and Lijuan Wang and Shuohang Wang and Xin Wang and Yu Wang and Rachel Ward and Wen Wen and Philipp Witte and Haiping Wu and Xiaoxia Wu and Michael Wyatt and Bin Xiao and Can Xu and Jiahang Xu and Weijian Xu and Jilong Xue and Sonali Yadav and Fan Yang and Jianwei Yang and Yifan Yang and Ziyi Yang and Donghan Yu and Lu Yuan and Chenruidong Zhang and Cyril Zhang and Jianwen Zhang and Li Lyna Zhang and Yi Zhang and Yue Zhang and Yunan Zhang and Xiren Zhou},
      year={2024},
      eprint={2404.14219},
      archivePrefix={arXiv},
      primaryClass={cs.CL},
      url={https://arxiv.org/abs/2404.14219}, 
}

@techreport{xai2025grok4,
  author      = {{xAI}},
  title       = {Grok-4 Model Card},
  institution = {xAI},
  year        = {2025},
  month       = {August},
  url         = {https://data.x.ai/2025-08-20-grok-4-model-card.pdf},
  note        = {Technical Report}
}

@techreport{xai2025grok4-1,
  author      = {{xAI}},
  title       = {Grok-4.1 Model Card},
  institution = {xAI},
  year        = {2025},
  month       = {November},
  url         = {https://data.x.ai/2025-11-17-grok-4-1-model-card.pdf},
  note        = {Technical Report}
}

@misc{xai2025grok3,
  author       = {{xAI}},
  title        = {Grok 3 Beta — The Age of Reasoning Agents},
  year         = {2025},
  month        = {February},
  howpublished = {\url{https://x.ai/news/grok-3}},
  note         = {Accessed: 2026-04-08}
}

@misc{shenoda2024robertaspam,
  author       = {Shenoda, Michael },
  title        = {RoBERTa-based Spam Message Detection},
  year         = {2024},
  publisher    = {Hugging Face},
  howpublished = {\url{https://huggingface.co/mshenoda/roberta-spam}},
  note         = {Fine-tuned RoBERTa-base model for spam detection in SMS, Telegram, and Enron email datasets}
}

@misc{hartmann2022emotionenglish,
  author={Hartmann, Jochen},
  title={Emotion English DistilRoBERTa-base},
  year={2022},
  howpublished = {\url{https://huggingface.co/j-hartmann/emotion-english-distilroberta-base/}},
}

@inproceedings{babakov2023formality,
  title={Don’t lose the message while paraphrasing: A study on content preserving style transfer},
  author={Babakov, Nikolay and Dale, David and Gusev, Ilya and Krotova, Irina and Panchenko, Alexander},
  booktitle={International Conference on Applications of Natural Language to Information Systems},
  pages={47--61},
  year={2023},
  organization={Springer}
}

@misc{wu2023stereotype,
  author    = {Wu, Zekun},
  title     = {Sentence-Level Stereotype Detector},
  year      = {2023},
  publisher = {Hugging Face},
  journal   = {Hugging Face Repository},
  howpublished = {\url{https://huggingface.co/wu981526092/Sentence-Level-Stereotype-Detector}},
  note      = {Fine-tuned DistilBERT model for multi-class stereotype classification}
}

@inproceedings{vidgen2021ltoxicity,
  title={Learning from the Worst: Dynamically Generated Datasets to Improve Online Hate Detection},
  author={Bertie Vidgen and Tristan Thrush and Zeerak Waseem and Douwe Kiela},
  booktitle={ACL},
  year={2021}
}

@article{gehman2020realtoxicityprompts,
  title={Realtoxicityprompts: Evaluating neural toxic degeneration in language models},
  author={Gehman, Samuel and Gururangan, Suchin and Sap, Maarten and Choi, Yejin and Smith, Noah A},
  journal={arXiv preprint arXiv:2009.11462},
  year={2020}
}

@inproceedings{singh2021efficient,
  title={An efficient method for aspect based sentiment analysis using spacy and vader},
  author={Singh, Akhilesh Kumar and Verma, Ananya},
  booktitle={2021 10th IEEE International Conference on Communication Systems and Network Technologies (CSNT)},
  pages={130--135},
  year={2021},
  organization={IEEE}
}

@article{gaspar2016beyond,
  title={Beyond positive or negative: Qualitative sentiment analysis of social media reactions to unexpected stressful events},
  author={Gaspar, Rui and Pedro, Cl{\'a}udia and Panagiotopoulos, Panos and Seibt, Beate},
  journal={Computers in Human Behavior},
  volume={56},
  pages={179--191},
  year={2016},
  publisher={Elsevier}
}

@inproceedings{baranov-etal-2023-humor,
    title = "You Told Me That Joke Twice: A Systematic Investigation of Transferability and Robustness of Humor Detection Models",
    author = "Baranov, Alexander  and
      Kniazhevsky, Vladimir  and
      Braslavski, Pavel",
    editor = "Bouamor, Houda  and
      Pino, Juan  and
      Bali, Kalika",
    booktitle = "Proceedings of the 2023 Conference on Empirical Methods in Natural Language Processing",
    month = dec,
    year = "2023",
    address = "Singapore",
    publisher = "Association for Computational Linguistics",
    url = "https://aclanthology.org/2023.emnlp-main.845",
    doi = "10.18653/v1/2023.emnlp-main.845",
    pages = "13701--13715",
}

@inproceedings{wachowiak2022metaphor,
 title={Drum Up SUPPORT: Systematic Analysis of Image-Schematic Conceptual Metaphors},
 author={Wachowiak, Lennart and Gromann, Dagmar and Xu, Chao},
 booktitle={Proceedings of the 3rd Workshop on Figurative Language Processing (FLP)},
 pages={44--53},
 year={2022}
}

@misc{romero2020t5sarcasm,
  author       = {Romero, Manuel},
  title        = {T5-base fine-tuned for Sarcasm Detection on Twitter},
  year         = {2020},
  publisher    = {Hugging Face},
  howpublished = {\url{https://huggingface.co/mrm8488/t5-base-finetuned-sarcasm-twitter}},
  note         = {Modelo T5-base ajustado para la detección de sarcasmo utilizando el dataset de Twitter}
}

@article{grootendorst2022bertopic,
  title={BERTopic: Neural topic modeling with a class-based TF-IDF procedure},
  author={Grootendorst, Maarten},
  journal={arXiv preprint arXiv:2203.05794},
  year={2022}
}

@inproceedings{campello2013hdbscan,
  title={Density-based clustering based on hierarchical density estimates},
  author={Campello, Ricardo JGB and Moulavi, Davoud and Sander, J{\"o}rg},
  booktitle={Pacific-Asia conference on knowledge discovery and data mining},
  pages={160--172},
  year={2013},
  organization={Springer}
}

@article{chaturvedi2001k,
  title={K-modes clustering},
  author={Chaturvedi, Anil and Green, Paul E and Caroll, J Douglas},
  journal={Journal of classification},
  volume={18},
  number={1},
  pages={35--55},
  year={2001},
  publisher={Springer}
}

@article{vera2025embeddinggemma,
  title={Embeddinggemma: Powerful and lightweight text representations},
  author={Vera, Henrique Schechter and Dua, Sahil and Zhang, Biao and Salz, Daniel and Mullins, Ryan and Panyam, Sindhu Raghuram and Smoot, Sara and Naim, Iftekhar and Zou, Joe and Chen, Feiyang and others},
  journal={arXiv preprint arXiv:2509.20354},
  year={2025}
}

@inproceedings{
enevoldsen2025mmteb,
title={{MMTEB}: Massive Multilingual Text Embedding Benchmark},
author={Kenneth Enevoldsen and Isaac Chung and Imene Kerboua and M{\'a}rton Kardos and Ashwin Mathur and David Stap and Jay Gala and Wissam Siblini and Dominik Krzemi{\'n}ski and Genta Indra Winata and Saba Sturua and Saiteja Utpala and Mathieu Ciancone and Marion Schaeffer and Diganta Misra and Shreeya Dhakal and Jonathan Rystr{\o}m and Roman Solomatin and {\"O}mer Veysel {\c{C}}a{\u{g}}atan and Akash Kundu and Martin Bernstorff and Shitao Xiao and Akshita Sukhlecha and Bhavish Pahwa and Rafa{\l} Po{\'s}wiata and Kranthi Kiran GV and Shawon Ashraf and Daniel Auras and Bj{\"o}rn Pl{\"u}ster and Jan Philipp Harries and Lo{\"\i}c Magne and Isabelle Mohr and Dawei Zhu and Hippolyte Gisserot-Boukhlef and Tom Aarsen and Jan Kostkan and Konrad Wojtasik and Taemin Lee and Marek Suppa and Crystina Zhang and Roberta Rocca and Mohammed Hamdy and Andrianos Michail and John Yang and Manuel Faysse and Aleksei Vatolin and Nandan Thakur and Manan Dey and Dipam Vasani and Pranjal A Chitale and Simone Tedeschi and Nguyen Tai and Artem Snegirev and Mariya Hendriksen and Michael G{\"u}nther and Mengzhou Xia and Weijia Shi and Xing Han L{\`u} and Jordan Clive and Gayatri K and Maksimova Anna and Silvan Wehrli and Maria Tikhonova and Henil Shalin Panchal and Aleksandr Abramov and Malte Ostendorff and Zheng Liu and Simon Clematide and Lester James Validad Miranda and Alena Fenogenova and Guangyu Song and Ruqiya Bin Safi and Wen-Ding Li and Alessia Borghini and Federico Cassano and Lasse Hansen and Sara Hooker and Chenghao Xiao and Vaibhav Adlakha and Orion Weller and Siva Reddy and Niklas Muennighoff},
booktitle={The Thirteenth International Conference on Learning Representations},
year={2025},
url={https://openreview.net/forum?id=zl3pfz4VCV}
}

@article{kraskov2004estimating,
  title={Estimating mutual information},
  author={Kraskov, Alexander and St{\"o}gbauer, Harald and Grassberger, Peter},
  journal={Physical Review E—Statistical, Nonlinear, and Soft Matter Physics},
  volume={69},
  number={6},
  pages={066138},
  year={2004},
  publisher={APS}
}

@inproceedings{tripto2025beyond,
  title={Beyond Checkmate: Exploring the Creative Choke Points for AI Generated Texts},
  author={Tripto, Nafis Irtiza and Venkatraman, Saranya and Nahar, Mahjabin and Lee, Dongwon},
  booktitle={Proceedings of the 2025 Conference on Empirical Methods in Natural Language Processing},
  pages={11964--11981},
  year={2025}
}

@article{munoz2024contrasting,
  title={Contrasting linguistic patterns in human and LLM-generated news text},
  author={Mu{\~n}oz-Ortiz, Alberto and G{\'o}mez-Rodr{\'\i}guez, Carlos and Vilares, David},
  journal={Artificial Intelligence Review},
  volume={57},
  number={10},
  pages={265},
  year={2024},
  publisher={Springer}
}

@article{hu2024explaining,
  title={Explaining length bias in llm-based preference evaluations},
  author={Hu, Zhengyu and Song, Linxin and Zhang, Jieyu and Xiao, Zheyuan and Wang, Tianfu and Chen, Zhengyu and Yuan, Nicholas Jing and Lian, Jianxun and Ding, Kaize and Xiong, Hui},
  journal={arXiv preprint arXiv:2407.01085},
  year={2024}
}

@article{alafwan2023comments,
  title={Comments Analysis on Social Media: A Review.},
  author={Alafwan, Brian and Siallagan, Manahan and Putro, Utomo Sarjono},
  journal={EAI Endorsed Transactions on Scalable Information Systems},
  volume={10},
  number={6},
  year={2023}
}

@inproceedings{alsayat2016social,
  title={Social media analysis using optimized K-Means clustering},
  author={Alsayat, Ahmed and El-Sayed, Hoda},
  booktitle={2016 IEEE 14th international conference on software engineering research, management and applications (SERA)},
  pages={61--66},
  year={2016},
  organization={IEEE}
}

@inproceedings{sari2018topic,
  title={Topic or style? exploring the most useful features for authorship attribution},
  author={Sari, Yunita and Stevenson, Mark and Vlachos, Andreas},
  booktitle={Proceedings of the 27th international conference on computational linguistics},
  pages={343--353},
  year={2018}
}

@article{tai2020online,
  title={Online social networks and writing styles--a review of the multidisciplinary literature},
  author={Tai, Kah Yee and Dhaliwal, Jasbir and Shariff, Shafiza Mohd},
  journal={Ieee Access},
  volume={8},
  pages={67024--67046},
  year={2020},
  publisher={IEEE}
}

@article{singh2021pragmatics,
  title={Pragmatics to reveal intent in social media peer interactions: mixed methods study},
  author={Singh, Tavleen and Olivares, Sofia and Cohen, Trevor and Cobb, Nathan and Wang, Jing and Franklin, Amy and Myneni, Sahiti},
  journal={Journal of medical Internet research},
  volume={23},
  number={11},
  pages={e32167},
  year={2021},
  publisher={JMIR Publications Toronto, Canada}
}

@article{hasan2020socio,
  title={A Socio-Pragmatic analysis of social media comments on online learning at the time of COVID 19},
  author={Hasan, Atyaf and Idrees, Fatima Abdul Ghany},
  journal={ELS Journal on Interdisciplinary Studies in Humanities},
  volume={3},
  number={4},
  pages={638--650},
  year={2020}
}

@inproceedings{
padmakumar2024does,
title={Does Writing with Language Models Reduce Content Diversity?},
author={Vishakh Padmakumar and He He},
booktitle={The Twelfth International Conference on Learning Representations},
year={2024},
url={https://openreview.net/forum?id=Feiz5HtCD0}
}

@inproceedings{shaib-etal-2025-standardizing,
    title = "Standardizing the Measurement of Text Diversity: A Tool and Comparative Analysis",
    author = "Shaib, Chantal  and
      Govindarajan, Venkata S  and
      Barrow, Joe  and
      Sun, Jiuding  and
      Siu, Alexa  and
      Wallace, Byron C  and
      Nenkova, Ani",
    editor = "Liu, Xuebo  and
      Purwarianti, Ayu",
    booktitle = "Proceedings of The 14th International Joint Conference on Natural Language Processing and The 4th Conference of the Asia-Pacific Chapter of the Association for Computational Linguistics: System Demonstrations",
    month = dec,
    year = "2025",
    address = "Mumbai, India",
    publisher = "Association for Computational Linguistics",
    url = "https://aclanthology.org/2025.ijcnlp-demo.5/",
    pages = "36--46",
    ISBN = "979-8-89176-301-2",
}

@article{honnibal2020spacy,
  title={spaCy: Industrial-strength natural language processing in python},
  author={Honnibal, Matthew and Montani, Ines and Van Landeghem, Sofie and Boyd, Adriane and others},
  year={2020},
  publisher={Zenodo, Honolulu, HI, USA}
}

@article{hansen2023textdescriptives,
  title={TextDescriptives: A Python package for calculating a large variety of metrics from text},
  author={Hansen, Lasse and Olsen, Ludvig Renbo and Enevoldsen, Kenneth},
  journal={Journal of Open Source Software},
  volume={8},
  number={84},
  pages={5153},
  year={2023}
}

@article{zamzmi2025scorecard_llm,
  title={Scorecard for synthetic medical data evaluation},
  author={Zamzmi, Ghada and Subbaswamy, Adarsh and Sizikova, Elena and Margerrison, Edward and Delfino, Jana G and Badano, Aldo},
  journal={Communications Engineering},
  volume={4},
  number={1},
  pages={130},
  year={2025},
  publisher={Nature Publishing Group UK London}
}

@article{kynkaanniemi2019manifold_precision,
  title={Improved precision and recall metric for assessing generative models},
  author={Kynk{\"a}{\"a}nniemi, Tuomas and Karras, Tero and Laine, Samuli and Lehtinen, Jaakko and Aila, Timo},
  journal={Advances in neural information processing systems},
  volume={32},
  year={2019}
}

@article{simpson1949measurement,
  title={Measurement of diversity},
  author={Simpson, Edward H},
  journal={nature},
  volume={163},
  number={4148},
  pages={688--688},
  year={1949},
  publisher={Nature Publishing Group UK London}
}

@inproceedings{zhu2018texygen,
  title={Texygen: A benchmarking platform for text generation models},
  author={Zhu, Yaoming and Lu, Sidi and Zheng, Lei and Guo, Jiaxian and Zhang, Weinan and Wang, Jun and Yu, Yong},
  booktitle={The 41st international ACM SIGIR conference on research \& development in information retrieval},
  pages={1097--1100},
  year={2018}
}

@inproceedings{salkar2022self_repitiion_score,
  title={Self-repetition in abstractive neural summarizers},
  author={Salkar, Nikita and Trikalinos, Thomas and Wallace, Byron C and Nenkova, Ani},
  booktitle={Proceedings of the conference. Association for Computational Linguistics. Meeting},
  volume={2022},
  pages={341},
  year={2022}
}

@inproceedings{lin2004rouge,
  title={Rouge: A package for automatic evaluation of summaries},
  author={Lin, Chin-Yew},
  booktitle={Text summarization branches out},
  pages={74--81},
  year={2004}
}

@article{herdan1960type,
  title={Type-token mathematics: A textbook of mathematical linguistics},
  author={Herdan, Gustav},
  journal={(No Title)},
  year={1960}
}

@article{kincaid1975derivation,
  title={Derivation of new readability formulas (automated readability index, fog count and flesch reading ease formula) for navy enlisted personnel},
  author={Kincaid, J Peter and Fishburne Jr, Robert P and Rogers, Richard L and Chissom, Brad S},
  year={1975},
  publisher={Institute for Simulation and Training, University of Central Florida}
}

@inproceedings{longpre2024pretrainer,
  title={A pretrainer’s guide to training data: Measuring the effects of data age, domain coverage, quality, \& toxicity},
  author={Longpre, Shayne and Yauney, Gregory and Reif, Emily and Lee, Katherine and Roberts, Adam and Zoph, Barret and Zhou, Denny and Wei, Jason and Robinson, Kevin and Mimno, David and others},
  booktitle={Proceedings of the 2024 Conference of the North American Chapter of the Association for Computational Linguistics: Human Language Technologies (Volume 1: Long Papers)},
  pages={3245--3276},
  year={2024}
}

@inproceedings{
anonymous2024beyond,
title={Beyond Scale: The Diversity Coefficient as a Data Quality Metric for Variability in Natural Language Data},
author={Anonymous},
booktitle={ICLR 2024 Workshop on Data-centric Machine Learning Research (DMLR): Harnessing Momentum for Science},
year={2024},
url={https://openreview.net/forum?id=tgkWxsOapD}
}

@article{tirumala2023d4,
  title={D4: Improving llm pretraining via document de-duplication and diversification},
  author={Tirumala, Kushal and Simig, Daniel and Aghajanyan, Armen and Morcos, Ari},
  journal={Advances in Neural Information Processing Systems},
  volume={36},
  pages={53983--53995},
  year={2023}
}

@article{hoffmann2022training,
  title={Training compute-optimal large language models},
  author={Hoffmann, Jordan and Borgeaud, Sebastian and Mensch, Arthur and Buchatskaya, Elena and Cai, Trevor and Rutherford, Eliza and Casas, DDL and Hendricks, Lisa Anne and Welbl, Johannes and Clark, Aidan and others},
  journal={arXiv preprint arXiv:2203.15556},
  volume={10},
  year={2022}
}

@article{gohari2025gneissweb,
  title={Gneissweb: Preparing high quality data for llms at scale},
  author={Gohari, Hajar Emami and Kadhe, Swanand Ravindra and Shah, Syed Yousaf and Adam, Constantin and Adebayo, Abdulhamid and Adusumilli, Praneet and Ahmed, Farhan and Angel, Nathalie Baracaldo and Borse, Santosh Subhashrao and Chang, Yuan-Chi and others},
  journal={arXiv preprint arXiv:2502.14907},
  year={2025}
}

@article{li2024datacomp,
  title={Datacomp-lm: In search of the next generation of training sets for language models},
  author={Li, Jeffrey and Fang, Alex and Smyrnis, Georgios and Ivgi, Maor and Jordan, Matt and Gadre, Samir and Bansal, Hritik and Guha, Etash and Keh, Sedrick and Arora, Kushal and others},
  journal={Advances in Neural Information Processing Systems},
  volume={37},
  pages={14200--14282},
  year={2024}
}

@article{wang2025ultra,
  title={Ultra-fineweb: Efficient data filtering and verification for high-quality llm training data},
  author={Wang, Yudong and Fu, Zixuan and Cai, Jie and Tang, Peijun and Lyu, Hongya and Fang, Yewei and Zheng, Zhi and Zhou, Jie and Zeng, Guoyang and Xiao, Chaojun and others},
  journal={arXiv preprint arXiv:2505.05427},
  year={2025}
}

@inproceedings{su2025nemotron,
  title={Nemotron-cc: Transforming common crawl into a refined long-horizon pretraining dataset},
  author={Su, Dan and Kong, Kezhi and Lin, Ying and Jennings, Joseph and Norick, Brandon and Kliegl, Markus and Patwary, Mostofa and Shoeybi, Mohammad and Catanzaro, Bryan},
  booktitle={Proceedings of the 63rd Annual Meeting of the Association for Computational Linguistics (Volume 1: Long Papers)},
  pages={2459--2475},
  year={2025}
}

@article{penedo2024fineweb,
  title={The fineweb datasets: Decanting the web for the finest text data at scale},
  author={Penedo, Guilherme and Kydl{\'\i}{\v{c}}ek, Hynek and Lozhkov, Anton and Mitchell, Margaret and Raffel, Colin A and Von Werra, Leandro and Wolf, Thomas and others},
  journal={Advances in Neural Information Processing Systems},
  volume={37},
  pages={30811--30849},
  year={2024}
}

@inproceedings{abbas2023semdedup,
  title={SemDeDup: Data-efficient learning at web-scale through semantic deduplication},
  author={Abbas, Amro Kamal Mohamed and Tirumala, Kushal and Simig, Daniel and Ganguli, Surya and Morcos, Ari S},
  booktitle={ICLR 2023 Workshop on Mathematical and Empirical Understanding of Foundation Models},
    year={2023}
}

@inproceedings{lee-etal-2022-deduplicating,
    title = "Deduplicating Training Data Makes Language Models Better",
    author = "Lee, Katherine  and
      Ippolito, Daphne  and
      Nystrom, Andrew  and
      Zhang, Chiyuan  and
      Eck, Douglas  and
      Callison-Burch, Chris  and
      Carlini, Nicholas",
    editor = "Muresan, Smaranda  and
      Nakov, Preslav  and
      Villavicencio, Aline",
    booktitle = "Proceedings of the 60th Annual Meeting of the Association for Computational Linguistics (Volume 1: Long Papers)",
    month = may,
    year = "2022",
    address = "Dublin, Ireland",
    publisher = "Association for Computational Linguistics",
    url = "https://aclanthology.org/2022.acl-long.577/",
    doi = "10.18653/v1/2022.acl-long.577",
    pages = "8424--8445",
   
}

@inproceedings{anknerperplexed,
  title={Perplexed by Perplexity: Perplexity-Based Data Pruning With Small Reference Models},
  author={Ankner, Zachary and Blakeney, Cody and Sreenivasan, Kartik and Marion, Max and Leavitt, Matthew L and Paul, Mansheej},
  booktitle={The Thirteenth International Conference on Learning Representations},
  year = {2024}
}

@article{ekman2014expression,
  title={Expression and the nature of emotion},
  author={Ekman, Paul},
  journal={Approaches to emotion},
  pages={319--343},
  year={2014},
  publisher={Psychology Press}
}

@inproceedings{sun2019fine,
  title={How to fine-tune bert for text classification?},
  author={Sun, Chi and Qiu, Xipeng and Xu, Yige and Huang, Xuanjing},
  booktitle={China national conference on Chinese computational linguistics},
  pages={194--206},
  year={2019},
  organization={Springer}
}

@article{hosseinmardi2025unpacking,
  title={Unpacking media bias in the growing divide between cable and network news},
  author={Hosseinmardi, Homa and Wolken, Samuel and Rothschild, David M and Watts, Duncan J},
  journal={Scientific Reports},
  volume={15},
  number={1},
  pages={17607},
  year={2025},
  publisher={Nature Publishing Group UK London}
}

@book{mays2020social,
  title={Social media comments to YouTube videos by cnn and fox news viewers on 2017 legislative efforts to repeal and replace the affordable care act},
  author={Mays, Genesa L},
  year={2020},
  publisher={Central Michigan University}
}

@article{martin2014bias,
  title={Bias in cable news: Real effects and polarization},
  author={Martin, Gregory J and Yurukoglu, Ali},
  year={2014},
  publisher={National Bureau of Economic Research Cambridge, MA}
}

@inproceedings{tan2024large,
  title={Large language models for data annotation and synthesis: A survey},
  author={Tan, Zhen and Li, Dawei and Wang, Song and Beigi, Alimohammad and Jiang, Bohan and Bhattacharjee, Amrita and Karami, Mansooreh and Li, Jundong and Cheng, Lu and Liu, Huan},
  booktitle={Proceedings of the 2024 Conference on Empirical Methods in Natural Language Processing},
  pages={930--957},
  year={2024}
}

@inproceedings{li2023synthetic,
  title={Synthetic data generation with large language models for text classification: Potential and limitations},
  author={Li, Zhuoyan and Zhu, Hangxiao and Lu, Zhuoran and Yin, Ming},
  booktitle={Proceedings of the 2023 conference on empirical methods in natural language processing},
  pages={10443--10461},
  year={2023}
}

@article{kumpel2015user,
  title={HOW USER COMMENTS ON A NEWS SITE AFFECT PERCEPTIONS OF JOURNALISTIC QUALITY. AN EXPERIMENTAL STUDY USING STRUCTURAL EQUATION MODELING},
  author={Kumpel, Anna Sophie and Springer, Nina},
  journal={AoIR Selected Papers of Internet Research},
  year={2015}
}

@article{fleeson2001toward,
  title={Toward a structure-and process-integrated view of personality: Traits as density distributions of states.},
  author={Fleeson, William},
  journal={Journal of personality and social psychology},
  volume={80},
  number={6},
  pages={1011},
  year={2001},
  publisher={American Psychological Association}
}

@article{roberts2004traits,
  title={On traits, situations, and their integration: A developmental perspective},
  author={Roberts, Brent W and Pomerantz, Eva M},
  journal={Personality and Social Psychology Review},
  volume={8},
  number={4},
  pages={402--416},
  year={2004},
  publisher={Sage Publications Sage CA: Los Angeles, CA}
}

@article{marwick2011tweet,
  title={I tweet honestly, I tweet passionately: Twitter users, context collapse, and the imagined audience},
  author={Marwick, Alice E and Boyd, Danah},
  journal={New media \& society},
  volume={13},
  number={1},
  pages={114--133},
  year={2011},
  publisher={Sage Publications Sage UK: London, England}
}

@inproceedings{hu2024quantifying,
  title={Quantifying the persona effect in LLM simulations},
  author={Hu, Tiancheng and Collier, Nigel},
  booktitle={Proceedings of the 62nd Annual Meeting of the Association for Computational Linguistics (Volume 1: Long Papers)},
  pages={10289--10307},
  year={2024}
}

@inproceedings{
li2025llm,
title={{LLM} Generated Persona is a Promise with a Catch},
author={Ang Li and Haozhe Chen and Hongseok Namkoong and Tianyi Peng},
booktitle={First Workshop on Social Simulation with LLMs},
year={2025},
url={https://openreview.net/forum?id=mHn4n9YnkA}
}

@inproceedings{joulin-etal-2017-bag,
    title = "Bag of Tricks for Efficient Text Classification",
    author = "Joulin, Armand  and
      Grave, Edouard  and
      Bojanowski, Piotr  and
      Mikolov, Tomas",
    editor = "Lapata, Mirella  and
      Blunsom, Phil  and
      Koller, Alexander",
    booktitle = "Proceedings of the 15th Conference of the {E}uropean Chapter of the Association for Computational Linguistics: Volume 2, Short Papers",
    month = apr,
    year = "2017",
    address = "Valencia, Spain",
    publisher = "Association for Computational Linguistics",
    url = "https://aclanthology.org/E17-2068/",
    pages = "427--431"
}

@article{miller1956magical,
 title={The magical number seven, plus or minus two: Some limits on our capacity for processing information.},
 author={Miller, George A},
 journal={Psychological review},
 volume={63},
 number={2},
 pages={81},
 year={1956},
 publisher={American Psychological Association}
}

@inproceedings{khashabi-etal-2022-genie,
    title = "{GENIE}: Toward Reproducible and Standardized Human Evaluation for Text Generation",
    author = "Khashabi, Daniel  and
      Stanovsky, Gabriel  and
      Bragg, Jonathan  and
      Lourie, Nicholas  and
      Kasai, Jungo  and
      Choi, Yejin  and
      Smith, Noah A.  and
      Weld, Daniel",
    editor = "Goldberg, Yoav  and
      Kozareva, Zornitsa  and
      Zhang, Yue",
    booktitle = "Proceedings of the 2022 Conference on Empirical Methods in Natural Language Processing",
    month = dec,
    year = "2022",
    address = "Abu Dhabi, United Arab Emirates",
    publisher = "Association for Computational Linguistics",
    url = "https://aclanthology.org/2022.emnlp-main.787/",
    doi = "10.18653/v1/2022.emnlp-main.787",
    pages = "11444--11458"
}

@article{van2021human,
  title={Human evaluation of automatically generated text: Current trends and best practice guidelines},
  author={Van der Lee, Chris and Gatt, Albert and Van Miltenburg, Emiel and Krahmer, Emiel},
  journal={Computer Speech \& Language},
  volume={67},
  pages={101151},
  year={2021},
  publisher={Elsevier}
}

@inproceedings{han-etal-2022-measuring,
    title = "Measuring and Improving Semantic Diversity of Dialogue Generation",
    author = "Han, Seungju  and
      Kim, Beomsu  and
      Chang, Buru",
    editor = "Goldberg, Yoav  and
      Kozareva, Zornitsa  and
      Zhang, Yue",
    booktitle = "Findings of the Association for Computational Linguistics: EMNLP 2022",
    month = dec,
    year = "2022",
    address = "Abu Dhabi, United Arab Emirates",
    publisher = "Association for Computational Linguistics",
    url = "https://aclanthology.org/2022.findings-emnlp.66/",
    doi = "10.18653/v1/2022.findings-emnlp.66",
    pages = "934--950"
}

\newpage
\appendix
\startcontents[apx]
\hypersetup{linkcolor=blue}
\begin{center}
  {\Large\bfseries Appendix Contents}
  \vspace{0.5cm}
\end{center}
\setcounter{tocdepth}{2}
\printcontents[apx]{}{1}{}  

\newpage
\section{Broader statements}
\subsection{Limitations}
\label{subsec_limitation}

While our approach aims to approximate the latent factors that shape human comments through aspect-based generation, capturing both \textbf{topical perspectives} and \textbf{socio-pragmatic characteristics}, it does not fully account for the many subtle \& implicit factors that influence how people respond online. Human commenting behavior is shaped by complex, context-dependent cues, including personal experiences, social relationships, and evolving intentions \citep{nasim2013commenting, stavros2014understanding, garcia2017understanding, schultes2013leave, marwick2011tweet}. While persona-based generation offers an alternative, modeling human personas is challenging because human behavior is shaped not only by stable traits but also by situational context \& audience effects. Prior work in personality psychology shows substantial within-person variability across situations \citep{fleeson2001toward, roberts2004traits, marwick2011tweet}, and recent studies of LLM persona simulation further highlight limitations and biases in persona-based prompting \citep{hu2024quantifying, li2025llm}. We therefore use aspect-based generation as a more tractable abstraction. We interpret a combination of multiple LLMs (in their base form, without explicit persona conditioning) together with aspect guidance as an approximation of a group of individuals, each contributing specific perspectives or communicative styles to a shared discourse. This allows us to capture diverse perspectives without relying on potentially unrealistic assumptions about user identity.

Our goal is not to fully replicate social systems, but to provide a controlled and scalable approximation of discourse diversity. Thus, our study makes specific design choices to isolate and analyze comment generation, which introduces some natural limitations. \textbf{First}, we chose YouTube, where comments are primarily driven by shared content rather than pre-existing social relationships or network effects. While this allows us to model $P(\text{comment} \mid \text{context})$ in a controlled way, it abstracts away important factors present in other platforms, such as social ties, community dynamics, and coordinated behavior. Although our framework can be extended to other online comment spaces, the form of human diversity may differ substantially across platforms with stronger community norms, social graphs, or political dynamics. Moreover, although YouTube is inherently a video-based platform, our study focuses on the textual comment space, treating comments as responses to the underlying content rather than the full multimodal experience. We mitigate this by selecting domains that are less dependent on visual cues, restricting the time frame to avoid contamination from recent events or AI-generated content, and filtering comments that rely heavily on video-specific references. However, fully capturing the multimodal and socially embedded nature of real-world interactions remains beyond the scope of a single study. \textbf{Second}, our analysis depends on approximating the human comment distribution, which is inherently noisy and influenced by factors such as temporal dynamics, long-tail engagement, and the presence of toxic or extreme content. In addition, our use of top-ranked comments captures the most visible and high-engagement portion of discourse, but it may not reflect the full long-tail spectrum of human comments. While we apply filtering to reduce noise and ensure comparability with possible LLM generations, these steps may shift the distribution away from its raw form. At the same time, due to alignment constraints, LLMs are less likely to generate toxic comments, introducing an additional mismatch.

Our study is also subject to natural \textbf{scale–depth trade-offs}. To enable a large-scale analysis, we focus on first-level comments and do not model reply chains or nested interactions, which would require more targeted, domain-specific investigation and potentially human subject studies. We also restrict our analysis to English comments and emphasize broadly applicable feature spaces: semantic, linguistic, and socio-pragmatic, while acknowledging that additional dynamic or video-specific factors may further influence discourse. Exploring these dimensions in depth is important but challenging to address within a single study at this scale. We also rely on a set of automatic metrics as proxies for discourse diversity. Although our evaluation spans multiple axes and feature spaces, no metric suite can fully capture the richness of human interpretation, humor, irony, or context-specific meaning. Despite these limitations, our framework offers a \textbf{controlled, scalable, and comprehensive} foundation for studying social discourse.

Finally, our study is subject to practical limitations innate to working with modern LLMs and large-scale evaluation. Although we include a diverse set of 35 models across nine providers, model outputs are inherently non-deterministic, and the landscape evolves rapidly as newer models emerge. Even with fixed prompts and decoding parameters, repeated generations may vary, which is a standard but important limitation of LLM evaluation. Results may also be sensitive to prompt design, although we use consistent templates across settings to control this factor. We also do not perform full pre-training experiments due to the prohibitive scale required. Instead, we adopt data curation pipelines as a proxy for assessing pre-training suitability. While this abstraction has limitations, we complement it with fine-tuning and instruction-tuning case studies, which are more realistic scenarios for utilizing such data. Our findings also highlight that social media content, by nature, does not align perfectly with traditional notions of “high-quality” training data, as indicated by lower scores from standard quality classifiers. Although additional downstream tasks may yield different outcomes, our study systematically explores a wide range of design choices and evaluation perspectives. \textbf{Overall, despite these inherent constraints, our work provides a comprehensive and well-grounded foundation for understanding, evaluating, and utilizing synthetic social discourse at scale.}

\subsection{Future directions}
\label{subsec_future_direction}
Our study opens several promising directions for advancing the understanding of diversity in social discourse. A natural next step is a depth-focused analysis on a smaller set of videos, where richer structures such as nested replies and conversational dynamics can be modeled. Incorporating multimodal signals through vision-language models would further align generation with the full video context. It will also enable the exploration of more dynamic, discourse-specific features (e.g., purchase intent in product review videos) and support human subject studies to evaluate engagement and nuanced perceptions of quality beyond surface-level metrics.

Another important direction is longitudinal analysis, where LLM-based agents are simulated over time using existing social agent frameworks. Such setups would allow us to study how diversity evolves in more realistic, temporally grounded interactions, including persona-driven behavior and community dynamics.

Finally, from a practical perspective, our framework can be extended to real-world integration, such as studying the effects of incorporating synthetic social media data in continual pre-training pipelines and evaluating potential model degradation in deployed systems. Together, these directions build on our work as a foundation for more realistic, scalable, and socially grounded studies of LLM-generated discourse.

\subsection{Broader implications}
\label{subsec_broader_implication}
\paragraph{Societal implications.} AI-generated content is becoming increasingly prevalent across online platforms. Also, with the emergence of fully AI-driven social environments \citep{zhang2025socioverse, yang2024oasis}, it is crucial to understand whether LLM-generated discourse can match the diversity of human communication. Our findings suggest that even with model swarms and aspect-conditioned generation, \textbf{human diversity remains difficult to replicate}. It has dual implications: on one hand, it reassures that human expression retains a unique richness. Alternatively, it raises concerns that widespread deployment of homogenized AI-generated content could gradually reduce diversity in online spaces, potentially leading to a less vibrant and more uniform digital ecosystem.

\paragraph{Methodological implications.} Our study contributes to the existing literature on LLM diversity evaluation by showing that outputs from the same model and even from models within the same provider tend to be more similar, highlighting the importance of cross-model and cross-provider heterogeneity. We introduce a comprehensive framework that characterizes diversity across dispersion, coverage, and alignment, providing a more nuanced lens than prior single-metric approaches. Our aspect-based formulation offers a practical abstraction for modeling discourse variation. Finally, our study also contributes to the broader landscape of human \& AI text distinction. Our large-scale dataset of human and LLM-generated comments, spanning multiple domains and styles, can serve as a valuable benchmark for future work, including research on AI text detection and human–AI distinction.

\paragraph{Data quality implications.} Our results also shed light on the evolving role of synthetic data in language model training. While multi-LLM and aspect-conditioned generation improve diversity and increase retention under standard data curation pipelines, this does not directly translate to higher document quality as defined by existing filters. In practice, these filters tend to favor structured, knowledge-rich text, whereas authentic social media discourse, human or synthetic, often appears less “high-quality” under such criteria. It reveals a fundamental tension: diversity helps data survive curation, yet social discourse is not aligned with traditional notions of quality. At the same time, LLM-generated comments often introduce more structured, knowledge-like patterns, distinguishing them from raw human data. Together, these findings highlight a mismatch between current data curation standards and the nature of social media content, suggesting the need to \textbf{revisit quality definitions when leveraging diverse, discourse-driven data for future language model training}.

\subsection{Ethical considerations}
\label{subsec_ethical}
Our study raises several ethical considerations associated with the generation and use of synthetic social discourse. We use publicly available YouTube comments as our human data source. We take care to work within platform-visible content without targeting or identifying individuals. At the same time, to realistically approximate social discourse, our generation framework allows for a range of socio-pragmatic expressions, which may include bias, stereotypes, or potentially harmful language. While it reflects the nature of real-world comment spaces, it also introduces the risk that such patterns could be amplified if such synthetic data is used without proper safeguards in future model training.

More broadly, our findings highlight a dual-use concern. The same techniques that improve diversity and realism, such as combining multiple models and aspect-guided generation, can also be misused to produce convincing, human-like comment threads at scale. This could enable malicious actors to artificially inflate engagement, shape public opinion, or support misinformation and disinformation campaigns. In particular, our results on unseen data suggest that such generation can generalize without access to existing human comments, further lowering the barrier for misuse.

Despite these risks, studying and understanding these capabilities is necessary. As AI-generated content becomes increasingly integrated into online platforms, it is important to proactively analyze both its benefits and potential harms. Our work aims to provide a transparent and systematic foundation for evaluating synthetic discourse, which can inform the development of safer generation practices, improved detection systems, and more responsible integration of AI in social environments.

\subsection{Use of Large Language Models}
\label{subsec_llm_use}
We disclose the use of large language models (LLMs) in supporting roles throughout this work. LLMs were used for writing assistance, including improving clarity, grammar, and overall presentation of the manuscript, as well as drafting portions of text. We also used LLM-based tools to assist with targeted literature search, understanding technical concepts, and generating initial versions of experimental code, visualizations, and \LaTeX-formatted derivations, including support in formalizing and writing proof steps. All such outputs were carefully reviewed, verified, and refined by the authors before use.

Importantly, all core ideas, experimental design, analysis, and conclusions originate from the authors. LLM-generated content was treated as an initial aid rather than a final output, requiring substantial human oversight and validation. The authors take full responsibility for the integrity, correctness, and originality of all aspects of this work, including any components that were initially supported by LLM assistance.


\section{Theoretical analysis details}
\label{appendix_sec_theory}
In this section, we provide the detailed proof originating from Assumption \ref{ass:separation}, regarding the Multi-LLM and Multi-LLM Multi-Aspect diversity gains (following from Section \ref{sec_mathematical_proof}). Table \ref{tab:notation} shows the notations used in the formulation, definition, and proof throughout the paper.

\textbf{Notation:}
For any two distributions $P,Q$, we write
$
    \delta_\kappa(P \| Q)
    \;\equiv\;
    \mathbb{E}_{C\sim P,\,C'\sim Q}\!\left[\kappa(C,C')\right]$
for the \emph{cross-distribution diversity}.
Note $\delta_\kappa(P\|P)=\delta_\kappa(P)$.

\newpage

\begin{center}
\renewcommand{\arraystretch}{1.5}
\begin{longtable}{p{0.26\textwidth} p{0.66\textwidth}}

\caption{Summary of notation used throughout the paper.}
\label{tab:notation} \\

\toprule
\textbf{Symbol} & \textbf{Description} \\
\midrule
\endfirsthead

\multicolumn{2}{l}{\small\itshape (Table~\ref{tab:notation} continued)} \\[2pt]
\toprule
\textbf{Symbol} & \textbf{Description} \\
\midrule
\endhead

\midrule
\multicolumn{2}{r}{\small\itshape Continued on next page} \\
\endfoot

\bottomrule
\endlastfoot

\multicolumn{2}{l}{\textit{Content and Comment Space}} \\[2pt]

$v$
    & A content item (video). \\
$\mathcal{C}$
    & The space of all possible comments. \\
$C \in \mathcal{C}$
    & A comment, modelled as a random variable. \\
$\phi: \mathcal{C} \to \mathbb{R}^d$
    & Fixed embedding function mapping comments into a
      $d$-dimensional semantic space. \\
$H = \{h_1, \dots, h_n\}$
    & A set of $n$ human-generated comments for video $v$. \\

\midrule
\multicolumn{2}{l}{\textit{Models and Aspects}} \\[2pt]

$M_k$
    & The $k$-th language model, $k \in \{1, \dots, K\}$. \\
$K$
    & Total number of language models. \\
$\mathcal{A} = \{a_1, \dots, a_A\}$
    & Finite set of $A$ discourse aspects
      (e.g., \emph{topic}, \emph{emotion}, \emph{humor}). \\
$A = |\mathcal{A}|$
    & Number of discourse aspects; $K > A$ in our setting. \\
$\sigma: \{1,\dots,K\} \to \mathcal{A}$
    & Surjective assignment mapping each model $M_k$ to exactly
      one aspect $\sigma(k)$. \\
$\mathcal{K}_a = \{k : \sigma(k) = a\}$
    & Set of models assigned to aspect $a$;
      $\{\mathcal{K}_a\}_{a \in \mathcal{A}}$ partitions
      $\{1,\dots,K\}$. \\

\midrule
\multicolumn{2}{l}{\textit{Underlying Model Distributions}} \\[2pt]

$P_H(\cdot \mid v)$
    & Unknown reference distribution of human comments for
      video $v$. \\
$P_{M_k}(\cdot \mid v)$
    & Comment distribution induced by model $M_k$ given video
      $v$, without aspect conditioning. \\
$P_{M_k}(\cdot \mid v, a)$
    & Comment distribution induced by model $M_k$ given video
      $v$ and aspect $a$. \\

\midrule
\multicolumn{2}{l}{\textit{Mixture Weights}} \\[2pt]

$\boldsymbol{\pi} = (\pi_1, \dots, \pi_K)$
    & Per-model mixture weights; $\pi_k \ge 0$,
      $\sum_{k}\pi_k = 1$. \\
$\boldsymbol{\lambda} = (\lambda_1, \dots, \lambda_A)$
    & Per-aspect coverage weights;
      $\lambda_a = \sum_{k \in \mathcal{K}_a}\omega_k \ge 0$,
      $\sum_a \lambda_a = 1$. \\
$\omega_k$
    & Per-model weight in the \textbf{M\textsubscript{A}}
      setting; $\omega_k \ge 0$, $\sum_k \omega_k = 1$.
      Uniform case: $\omega_k = 1/K$. \\
$\omega_{k,a} = \omega_k \cdot \mathbf{1}[\sigma(k)=a]$
    & Joint model--aspect weight; non-zero only for the aspect
      assigned to $M_k$. \\

\midrule
\multicolumn{2}{l}{\textit{Generation Settings}} \\[2pt]

$\mathcal{G}^{(\cdot)}(\cdot \mid v)$
    & Output distribution of a generation setting given
      video $v$. \\
$\mathcal{G}^{(\mathrm{S})}(\cdot \mid v)$
    & Single-LLM output:
      $P_{M_k}(\cdot\mid v)$. \\
$\mathcal{G}^{(\mathrm{S_A})}(\cdot \mid v)$
    & Single-LLM, aspect-conditioned output:
      $\sum_a \lambda_a\,P_{M_k}(\cdot\mid v,a)$. \\
$\mathcal{G}^{(\mathrm{M})}(\cdot \mid v)$
    & Multi-LLM output:
      $\sum_k \pi_k\, P_{M_k}(\cdot\mid v)$. \\
$\mathcal{G}^{(\mathrm{M_A})}(\cdot \mid v)$
    & Multi-LLM, aspect-conditioned output:
      $\sum_{a}\sum_{k\in\mathcal{K}_a}
       \omega_k\,P_{M_k}(\cdot\mid v,a)$. \\

\midrule
\multicolumn{2}{l}{\textit{Diversity Functional}} \\[2pt]

$\kappa: \mathcal{C}\times\mathcal{C} \to [0,1]$
    & Symmetric, bounded dissimilarity kernel. \\
$\kappa_{\cos}(c, c')$
    & Clipped cosine dissimilarity:
      $1 - \max(0, \cos(\phi(c), \phi(c')))$. \\
$\delta_\kappa(P)$
    & Expected diversity of distribution $P$ under kernel
      $\kappa$:
      $\mathbb{E}_{C,C' \overset{\mathrm{iid}}{\sim} P}
      [\kappa(C,C')]$. \\
$\delta(\cdot)$
    & Shorthand for $\delta_{\kappa_{\cos}}(\cdot)$. \\
$\delta_\kappa(P \| Q)$
    & Cross-distribution diversity:
      $\mathbb{E}_{C\sim P,\,C'\sim Q}[\kappa(C,C')]$. \\
$\widehat\delta_\kappa$
    & Empirical U-statistic estimator of $\delta_\kappa(P)$:
      $\frac{1}{n(n-1)}\sum_{i\neq j}\kappa(c_i,c_j)$. \\

\midrule
\multicolumn{2}{l}{\textit{Model Separation
    (Assumption~\ref{ass:separation})}} \\[2pt]

$\bar\rho$
    & Upper bound on cross-model expected cosine similarity. \\
$\epsilon$
    & Separation gap: within-model similarity exceeds
      cross-model similarity by at least $\epsilon > 0$. \\
$\bar\rho_\mathcal{A},\; \epsilon_\mathcal{A}$
    & Analogous constants for cross-aspect separation within
      a single model. \\

\end{longtable}
\end{center}

\begin{proposition}[Mixture Decomposition]
\label{prop:decomp}
Let $P = \sum_{i=1}^N w_i P_i$ with $w_i\ge 0$, $\sum_i w_i = 1$.  Then
\begin{equation}
    \delta_\kappa(P)
    \;=\;
    \sum_{i=1}^N\sum_{j=1}^N w_i w_j\,\delta_\kappa(P_i \| P_j).
    \label{eq:decomp}
\end{equation}
\end{proposition}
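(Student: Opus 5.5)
The identity follows from the bilinearity of the map $(P,Q)\mapsto \delta_\kappa(P\|Q)$ in its two arguments, so the plan is a direct expansion of the double expectation. First I will write
\[
\delta_\kappa(P)=\mathbb{E}_{C,C'\overset{\mathrm{iid}}{\sim}P}[\kappa(C,C')]=\int\!\!\int \kappa(c,c')\,dP(c)\,dP(c'),
\]
using that independence makes the joint law of $(C,C')$ the product measure $P\otimes P$. Since $\kappa$ is bounded in $[0,1]$ and measurable, this integral is finite and Fubini--Tonelli applies, so there are no integrability issues.

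Next I will substitute $P=\sum_i w_iP_i$ in both factors. The product measure then expands as $P\otimes P=\sum_{i,j}w_iw_j\,(P_i\otimes P_j)$. This is a finite sum of measures, and the expansion holds because product measures are bilinear on rectangles $E\times F$ and hence, by uniqueness of extension from the $\pi$-system of rectangles, on the full product $\sigma$-algebra.

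Integrating $\kappa$ against this finite sum and using linearity of the integral in the measure gives $\sum_{i,j}w_iw_j\int\!\!\int\kappa\,d(P_i\otimes P_j)$. Each summand equals $w_iw_j\,\delta_\kappa(P_i\|P_j)$ by the definition of cross-distribution diversity in the Notation paragraph, with $C\sim P_i$ and $C'\sim P_j$ independent.

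A cleaner, equivalent route that I may present instead uses a latent label. Draw $I,J$ i.i.d. from $w$, then draw $C\sim P_I$ and $C'\sim P_J$ independently. Then $C,C'$ are i.i.d. from $P$. Conditioning on $(I,J)$ and applying the tower property yields $\sum_{i,j}\Pr(I=i,J=j)\,\delta_\kappa(P_i\|P_j)$ with $\Pr(I=i,J=j)=w_iw_j$.

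The only step needing care is justifying that independence of the labels $I,J$, together with conditional independence of $C,C'$ given the labels, reproduces exactly the i.i.d. pair from $P$. It suffices to check that the joint law factorizes as $\big(\sum_i w_iP_i\big)\otimes\big(\sum_j w_jP_j\big)$, which is the same bilinearity fact as above. Nothing in the argument uses symmetry of $\kappa$, although symmetry does imply $\delta_\kappa(P_i\|P_j)=\delta_\kappa(P_j\|P_i)$, which will be convenient later.
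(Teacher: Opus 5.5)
Your proposal is correct, and your latent-label route is exactly the paper's proof: the paper introduces independent indices $Z,Z'$ with $\Pr(Z=i)=w_i$, samples $C\mid Z=i\sim P_i$ and $C'\mid Z'=j\sim P_j$, and conditions on $(Z,Z')$ to obtain $\sum_{i,j}w_iw_j\,\delta_\kappa(P_i\|P_j)$. Your primary route expands $P\otimes P=\sum_{i,j}w_iw_j\,P_i\otimes P_j$ by checking agreement on rectangles, which is the same bilinearity computation written at the level of measures; it also usefully justifies a step the paper asserts without proof, namely that the two-stage sampling produces an i.i.d.\ pair from $P$.
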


\begin{proof}
We introduce independent latent indices $Z, Z'$ with $\Pr(Z=i)=w_i$. Now, 
sampling $C\sim P$ is equivalent to sampling $Z$ then $C\mid Z=i\sim P_i$;
likewise for $C'\mid Z'$.  Since $Z\perp Z'$:
\[
    \delta_\kappa(P)
    \;=\; \mathbb{E}_{Z,Z'}\!\Bigl[\mathbb{E}_{C\sim P_Z,\,C'\sim P_{Z'}}
          \bigl[\kappa(C,C')\bigr]\Bigr]
    \;=\; \sum_{i,j} w_i w_j\,\delta_\kappa(P_i\|P_j). \qedhere
\]
\end{proof}

\begin{theorem}[Multi-LLM diversity gain over the mixture average]
\label{thm:multi_llm}
Under Assumption \ref{ass:separation} with constants \((\bar{\rho}, \epsilon)\), the diversity of the multi-LLM mixture satisfies
\[
\delta(P_{\mathrm{mix}})
\geq
\sum_{k=1}^{K} \pi_k \delta(P_{M_k})
+
\left(1-\sum_{k=1}^{K}\pi_k^2\right)\epsilon .
\tag{2}
\]
Thus, for any non-degenerate mixture, multi-LLM generation improves over the
mixture-weighted average diversity of its component models by an amount
controlled by the separation gap \(\epsilon\) and the effective diversity of the
mixture weights.
\end{theorem}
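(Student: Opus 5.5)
The plan is to apply Proposition~\ref{prop:decomp} to $P_{\mathrm{mix}}=\sum_k \pi_k P_{M_k}$ and then handle the diagonal and off-diagonal terms separately. Write $s_{k\ell}=\mathbb{E}_{C\sim P_{M_k},C'\sim P_{M_\ell}}[\max(0,\cos(\phi(C),\phi(C')))]$. Then $\delta(P_{M_k}\|P_{M_\ell})=1-s_{k\ell}$ and $\delta(P_{M_k})=1-s_{kk}$. The decomposition gives
\[
\delta(P_{\mathrm{mix}})=\sum_k \pi_k^2\,\delta(P_{M_k})+\sum_{k\neq \ell}\pi_k\pi_\ell\,(1-s_{k\ell}).
\]

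Next I would bound the cross terms from below using Assumption~\ref{ass:separation}(i), which says $1-s_{k\ell}\ge 1-\bar\rho$ for $k\neq\ell$. I would then convert $1-\bar\rho$ into within-model diversity plus a gap using part (ii). Part (ii) gives $s_{kk}\ge\bar\rho+\epsilon$, so $\delta(P_{M_k})\le 1-\bar\rho-\epsilon$, hence $1-\bar\rho\ge \delta(P_{M_k})+\epsilon$ for every $k$. Applying this to each off-diagonal pair $(k,\ell)$ with the bound indexed by $k$, and using $\sum_{\ell\ne k}\pi_\ell=1-\pi_k$, yields
\[
\sum_{k\neq\ell}\pi_k\pi_\ell(1-s_{k\ell})\ \ge\ \sum_k \pi_k(1-\pi_k)\bigl(\delta(P_{M_k})+\epsilon\bigr).
\]

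Adding the diagonal part $\sum_k\pi_k^2\delta(P_{M_k})$, the $\delta(P_{M_k})$ coefficients combine to $\pi_k^2+\pi_k(1-\pi_k)=\pi_k$. The $\epsilon$ coefficient is $\sum_k\pi_k(1-\pi_k)=1-\sum_k\pi_k^2$. This gives exactly inequality (2).

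The step needing the most care is the direction of the inequality from (ii). It provides an \emph{upper} bound on each component diversity, which is what we need, since it lets us lower-bound the constant $1-\bar\rho$ by the component-dependent quantity $\delta(P_{M_k})+\epsilon$. The pairing choice is the subtle point: each off-diagonal term $\pi_k\pi_\ell$ should be attributed to model $k$ (or symmetrically split half to $k$ and half to $\ell$), so that the row sums reproduce the weights $\pi_k$. Beyond that, the only checks are measurability and boundedness of $\kappa_{\cos}\in[0,1]$, which justify the exchange of expectations already used in Proposition~\ref{prop:decomp}. The non-degeneracy remark then follows because $1-\sum_k\pi_k^2>0$ whenever at least two weights are positive.
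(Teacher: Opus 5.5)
Your proposal is correct and follows essentially the same argument as the paper: expand $\delta(P_{\mathrm{mix}})$ via Proposition~\ref{prop:decomp}, lower-bound each cross term by $1-\bar\rho\ge\delta(P_{M_k})+\epsilon$ using both parts of Assumption~\ref{ass:separation}, and collect coefficients with $\sum_{\ell\neq k}\pi_\ell=1-\pi_k$. Your remark about attributing each off-diagonal pair to index $k$ makes explicit the bookkeeping that the paper does implicitly in its chain of equalities.
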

 \begin{proof}
By Proposition~\ref{prop:decomp},
\[
\delta(P_{\mathrm{mix}})
=
\sum_{k=1}^{K}\pi_k^2 \delta(P_{M_k})
+
\sum_{k\neq \ell}\pi_k\pi_\ell \delta(P_{M_k}\Vert P_{M_\ell}) .
\]
For cross-model terms, Assumption~1(i) gives
\[
\mathbb{E}_{C\sim P_{M_k}, C'\sim P_{M_\ell}}
\left[\max(0,\cos(\phi(C),\phi(C')))\right]
\leq \bar{\rho},
\quad k\neq \ell,
\]
and therefore
\[
\delta(P_{M_k}\Vert P_{M_\ell}) \geq 1-\bar{\rho}.
\]
Assumption~1(ii) gives, for each model \(M_k\),
\[
\mathbb{E}_{C,C'\sim P_{M_k}}
\left[\max(0,\cos(\phi(C),\phi(C')))\right]
\geq \bar{\rho}+\epsilon,
\]
which implies
\[
\delta(P_{M_k}) \leq 1-(\bar{\rho}+\epsilon).
\]
Hence,
\[
1-\bar{\rho} \geq \delta(P_{M_k})+\epsilon
\quad \text{for all } k.
\]
Using this lower bound for the cross-model terms, we obtain
\[
\begin{aligned}
\delta(P_{\mathrm{mix}})
&\geq
\sum_{k=1}^{K}\pi_k^2 \delta(P_{M_k})
+
\sum_{k\neq \ell}\pi_k\pi_\ell
\left(\delta(P_{M_k})+\epsilon\right) \\
&=
\sum_{k=1}^{K}\pi_k^2 \delta(P_{M_k})
+
\sum_{k\neq \ell}\pi_k\pi_\ell \delta(P_{M_k})
+
\epsilon\sum_{k\neq \ell}\pi_k\pi_\ell \\
&=
\sum_{k=1}^{K}\pi_k
\left(\pi_k+\sum_{\ell\neq k}\pi_\ell\right)
\delta(P_{M_k})
+
\epsilon\left(1-\sum_{k=1}^{K}\pi_k^2\right) \\
&=
\sum_{k=1}^{K}\pi_k \delta(P_{M_k})
+
\left(1-\sum_{k=1}^{K}\pi_k^2\right)\epsilon .
\end{aligned}
\]
This completes the proof. \qedhere
\end{proof}

\paragraph{Remark 1.}
The term \(1-\sum_k \pi_k^2\) is the complement of the Herfindahl--Hirschman
index of the mixture weights and reflects the effective number of models in
the mixture. Under uniform weights, it equals \(1-1/K\), increasing with the
number of models. The theorem therefore shows that diversity gains are larger
when generation is distributed across more distinct models, provided the model
separation assumption holds. Importantly, this is a gain over the
mixture-weighted average component diversity, not necessarily over the single
best component model.

\begin{theorem}[Diversity gain from model--aspect mixtures]
\label{thm:full}
Suppose Assumption~\ref{ass:separation} holds with constants $(\bar\rho,\epsilon)$.
Further assume:
\begin{enumerate}[label=(\roman*)]
    \item \emph{(Aspect separation)} For each model $M_k$ and all aspect pairs
    $a\neq a'$:
    \[
        \mathbb{E}_{C\sim P_{M_k,a},\,C'\sim P_{M_k,a'}}
        \!\bigl[\max(0,\cos(\phi(C),\phi(C')))\bigr]
        \;\le\; \bar\rho_\mathcal{A},
    \]
    with within-aspect similarity $\ge \bar\rho_\mathcal{A}+\epsilon_\mathcal{A}$
    for constants $\bar\rho_\mathcal{A}\in[0,1)$, $\epsilon_\mathcal{A}>0$.

    \item \emph{(Baseline dominance)} Each aspect-conditioned component satisfies
    $\delta(P_{M_k,a})\ge\max_k\delta(P_{M_k})$ for all $k,a$.
\end{enumerate}
Then:
\begin{equation}
    \delta(P_{\mathrm{multi}})
    \;\ge\;
    \max_k\,\delta(P_{M_k})
    \;+\;
    \left(1-\sum_{k,a}\omega_{k,a}^2\right)
    \min(\epsilon,\,\epsilon_\mathcal{A}),
    \label{eq:dominance}
\end{equation}
which implies $\delta(P_{\mathrm{multi}})\ge\delta(P_{\mathrm{mix}})\ge\max_k\delta(P_{M_k})$.
\end{theorem}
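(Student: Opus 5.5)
The plan is to repeat the argument of Theorem~\ref{thm:multi_llm}, now applied to the finer mixture over model--aspect pairs. Write $P_{\mathrm{multi}}=\sum_{(k,a)}\omega_{k,a}P_{M_k,a}$ and apply Proposition~\ref{prop:decomp} with components indexed by pairs $i=(k,a)$. This gives diagonal terms $\sum_{k,a}\omega_{k,a}^2\,\delta(P_{M_k,a})$ plus off-diagonal terms $\omega_{k,a}\omega_{\ell,a'}\,\delta(P_{M_k,a}\|P_{M_\ell,a'})$ for $(k,a)\neq(\ell,a')$. The off-diagonal pairs split into two types: \emph{same model, different aspect} ($k=\ell$, $a\neq a'$) and \emph{different model} ($k\neq\ell$, with $a,a'$ arbitrary).

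The key intermediate claim is a uniform cross-term lower bound: for every off-diagonal pair, $\delta(P_{M_k,a}\|P_{M_\ell,a'})\ge\delta(P_{M_k,a})+\min(\epsilon,\epsilon_{\mathcal A})$.

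For the same-model type, this follows from hypothesis (i) exactly as in Theorem~\ref{thm:multi_llm}. The cross-aspect similarity bound gives a cross-diversity of at least $1-\bar\rho_{\mathcal A}$. The within-aspect bound gives $\delta(P_{M_k,a})\le 1-\bar\rho_{\mathcal A}-\epsilon_{\mathcal A}$. Hence the cross term is at least $\delta(P_{M_k,a})+\epsilon_{\mathcal A}$.

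For the different-model type I want to use Assumption~\ref{ass:separation}. As stated, however, it concerns the unconditioned $P_{M_k}$, not $P_{M_k,a}$. This is the main obstacle. I would make explicit that Assumption~\ref{ass:separation} is taken to hold for the aspect-conditioned components as well: cross-model similarity is at most $\bar\rho$ for any aspects, and within-component similarity of $P_{M_k,a}$ is at least $\bar\rho+\epsilon$. This is the natural reading of ``model separation'' in the aspect-conditioned setting. With it, the same two-line argument gives a cross term of at least $\delta(P_{M_k,a})+\epsilon$. Taking the minimum of the two gaps yields the uniform bound.

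Substituting the uniform bound and regrouping as in the last display of Theorem~\ref{thm:multi_llm} gives
\[
\delta(P_{\mathrm{multi}})\ \ge\ \sum_{k,a}\omega_{k,a}\,\delta(P_{M_k,a})+\Bigl(1-\sum_{k,a}\omega_{k,a}^2\Bigr)\min(\epsilon,\epsilon_{\mathcal A}),
\]
using $\sum_{(\ell,a')}\omega_{\ell,a'}=1$. Hypothesis (ii), baseline dominance, lower-bounds every $\delta(P_{M_k,a})$ by $\max_k\delta(P_{M_k})$, and the weights sum to one, so \eqref{eq:dominance} follows. In the $M_A$ setting $\omega_{k,a}$ vanishes unless $a=\sigma(k)$, so the same-model cross-aspect terms simply drop out. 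This is harmless because the bound only uses a minimum over the gaps that actually occur.

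For the trailing chain of inequalities, $\delta(P_{\mathrm{multi}})\ge\max_k\delta(P_{M_k})$ is immediate from \eqref{eq:dominance}. The intermediate comparisons with $\delta(P_{\mathrm{mix}})$ need more care, and I expect this to be the second sticking point. Theorem~\ref{thm:multi_llm} only bounds $\delta(P_{\mathrm{mix}})$ below by the weighted average plus a gap, not by the maximum. Comparing $\delta(P_{\mathrm{multi}})$ with $\delta(P_{\mathrm{mix}})$ would also require an upper bound on the cross-model terms of $P_{\mathrm{mix}}$. I would first try bounding $\delta(P_{\mathrm{mix}})$ above by decomposing it with Proposition~\ref{prop:decomp} and using $\kappa\le 1$ together with (ii), and see whether matching weights $\pi_k=\sum_a\omega_{k,a}$ closes the gap. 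If that fails, I would present that part of the chain as holding under an additional condition, for example when $\min(\epsilon,\epsilon_{\mathcal A})$ is large enough to dominate the cross-model terms, rather than as a consequence of \eqref{eq:dominance} alone.
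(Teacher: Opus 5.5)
Your bound \eqref{eq:dominance} is correct under the hypotheses you adopt. The skeleton is the paper's: decomposition over pairs $(k,a)$, the two off-diagonal cases, and the minimum of the two gaps. The difference is where you anchor the bounds. Mirroring Theorem~\ref{thm:multi_llm}, you bound each cross term by the component's own diversity, $\delta(P_{M_k,a})+\min(\epsilon,\epsilon_\mathcal{A})$, and invoke (ii) only at the end.

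\emph{What your route buys.} You get the sharper intermediate bound $\sum_{k,a}\omega_{k,a}\delta(P_{M_k,a})+(1-\sum_{k,a}\omega_{k,a}^2)\min(\epsilon,\epsilon_\mathcal{A})$.

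\emph{What it costs.} You must transport Assumption~\ref{ass:separation}(ii) to the conditioned components, i.e.\ impose $\delta(P_{M_k,a})\le 1-\bar\rho-\epsilon$. That rules out conditioned components more diverse than $1-\bar\rho-\epsilon$, a regime the stated theorem allows.

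\emph{How the paper avoids this.} It anchors everything at the constant $\Delta^\star=\max_k\delta(P_{M_k})$:
\begin{itemize}
\item diagonal terms are at least $\Delta^\star$ by (ii);
\item cross-model terms are at least $1-\bar\rho\ge\Delta^\star+\epsilon$, which uses Assumption~\ref{ass:separation}(ii) only for the unconditioned models, exactly as stated;
\item same-model terms are at least $1-\bar\rho_\mathcal{A}\ge\delta(P_{M_k,a})+\epsilon_\mathcal{A}\ge\Delta^\star+\epsilon_\mathcal{A}$.
\end{itemize}
Summing, with the weights over all ordered pairs of components totalling $1$, gives \eqref{eq:dominance} without your cap. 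You should switch to this anchoring.

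The other half of the obstacle you flagged is genuine. Applying the cross-model bound $\bar\rho$ to $P_{M_k,a}$ versus $P_{M_\ell,a'}$ is not literally Assumption~\ref{ass:separation}, and the paper's proof uses it silently. Stating it explicitly, as you do, is the right call.

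On the trailing chain, your suspicion is correct, and the paper's proof does not address it. The proof stops at \eqref{eq:dominance}, which yields only $\delta(P_{\mathrm{multi}})\ge\max_k\delta(P_{M_k})$. Both intermediate comparisons fail in general.
\begin{itemize}
\item $\delta(P_{\mathrm{mix}})\ge\max_k\delta(P_{M_k})$ fails, as the remark after Theorem~\ref{thm:multi_llm} essentially concedes. Take uniform weights, $P_{M_2}$ uniform on orthonormal $e_1,e_2$, and $P_{M_1}$ a point mass at a unit vector with cosine $0.45$ to both. Assumption~\ref{ass:separation} holds with $\bar\rho=0.45$, $\epsilon=0.05$, yet $\delta(P_{\mathrm{mix}})=0.4<0.5=\delta(P_{M_2})$.
\item $\delta(P_{\mathrm{multi}})\ge\delta(P_{\mathrm{mix}})$ also fails. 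Take two models and one aspect. Let the unconditioned outputs be point masses at orthogonal vectors, so $\delta(P_{\mathrm{mix}})=1/2$. Let the conditioned outputs be point masses at unit vectors with cosine $0.3$. With $\bar\rho=0.3$, $\epsilon=0.5$, every hypothesis holds, including your extension (point masses make (i) and (ii) trivial). Yet $\delta(P_{\mathrm{multi}})=0.35$.
\end{itemize}
So your fallback, stating those comparisons only under additional conditions, is the correct treatment rather than a shortcoming of your argument.
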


\begin{proof}
$P_{\mathrm{multi}}$ in \eqref{eq:pmulti} is a mixture over $N=K\!\times\!A$
components $\{P_{M_k,a}\}$ with weights $\{\omega_{k,a}\}$.
Now, we apply Proposition~\ref{prop:decomp}:
\[
    \delta(P_{\mathrm{multi}})
    = \underbrace{\sum_{k,a}\omega_{k,a}^2\,\delta(P_{M_k,a})}_{\text{diagonal}}
    + \underbrace{\sum_{(k,a)\neq(k',a')}\omega_{k,a}\omega_{k',a'}\,
      \delta(P_{M_k,a}\|P_{M_{k'},a'})}_{\text{off-diagonal}}.
\]
\textbf{Diagonal terms.}
By assumption (ii), $\delta(P_{M_k,a})\ge\max_k\delta(P_{M_k})\equiv\Delta^\star$
for all $(k,a)$.  Thus:
\[
    \sum_{k,a}\omega_{k,a}^2\,\delta(P_{M_k,a})
    \;\ge\; \Delta^\star\sum_{k,a}\omega_{k,a}^2.
\]
\textbf{Off-diagonal terms.}
For off-diagonal pairs $(k,a)\neq(k',a')$, two cases arise:
\begin{itemize}
    \item \emph{Different models} ($k\neq k'$, any $a,a'$): by
    Assumption~\ref{ass:separation}, the cross-component similarity is
    $\le\bar\rho$, so $\delta(P_{M_k,a}\|P_{M_{k'},a'})\ge 1-\bar\rho\ge\Delta^\star+\epsilon$.
    \item \emph{Same model, different aspects} ($k=k'$, $a\neq a'$): by
    assumption (i), the cross-aspect similarity is $\le\bar\rho_\mathcal{A}$,
    so $\delta(P_{M_k,a}\|P_{M_k,a'})\ge\Delta^\star+\epsilon_\mathcal{A}$.
\end{itemize}
In both cases $\delta(P_{M_k,a}\|P_{M_{k'},a'})\ge\Delta^\star+\min(\epsilon,\epsilon_\mathcal{A})$.
Therefore:
\begin{align*}
    \delta(P_{\mathrm{multi}})
    &\;\ge\; \Delta^\star\sum_{k,a}\omega_{k,a}^2
         + \bigl(\Delta^\star + \min(\epsilon,\epsilon_\mathcal{A})\bigr)
           \!\sum_{(k,a)\neq(k',a')}\omega_{k,a}\omega_{k',a'} \\
    &\;=\; \Delta^\star\!\underbrace{\sum_{k,a}\sum_{k',a'}\omega_{k,a}\omega_{k',a'}}_{=1}
         + \min(\epsilon,\epsilon_\mathcal{A})
           \underbrace{\sum_{(k,a)\neq(k',a')}\omega_{k,a}\omega_{k',a'}}_{=1-\sum_{k,a}\omega_{k,a}^2} \\
    &\;=\; \Delta^\star
         + \left(1-\sum_{k,a}\omega_{k,a}^2\right)\min(\epsilon,\epsilon_\mathcal{A}). \qedhere
\end{align*}
\end{proof}


Together, Theorems~\ref{thm:multi_llm}–\ref{thm:full} show that mixture-based generation increases expected
diversity relative to the mixture-weighted average diversity of its component
distributions, with gains controlled by model and aspect separation. In this
sense, model swarms and aspect conditioning provide structural mechanisms for
increasing diversity beyond what is obtained by repeatedly sampling from a
single fixed component. These theoretical results motivate, rather than replace,
our empirical evaluation across semantic, linguistic, and socio-pragmatic
feature spaces.

\section{Data, generation, and feature extraction details}

\subsection{Platform and domain selection}
\label{subsec_domain_rationale}
In designing our data and experimental setup, we carefully consider the potential trade-offs involved in studying real-world social discourse. Our goal is to balance ecological validity with experimental control, prioritizing a setting that is both representative of real-world discourse and tractable for systematic evaluation.

\paragraph{Why YouTube as a platform?} 
Human comments are influenced by multiple factors, including both the underlying content and social relationships. In this work, our goal is to approximate the distribution of comments using LLMs while minimizing confounding variables that are difficult to model. Although recent advances in AI-based social simulators attempt to capture these dynamics, faithfully replicating human relationships requires longitudinal modeling and remains a separate and complex research problem. To isolate the effect of content, we focus on a setting where comments are primarily driven by the shared discourse itself.

This motivates our choice of platform. On Reddit, meaningful participation often depends on community knowledge and prior engagement \citep{thukral2018analyzing, yu2024characterizing, ocal2021reasoning}. Platforms such as Facebook are strongly shaped by social interactions and personal networks \citep{nasim2013commenting, stavros2014understanding, lapides2015news, maiz2016factors}, while Twitter is influenced by large-scale social dynamics, including coordinated behavior and follower relationships \citep{garcia2017understanding, keller2020political, khaund2021social, naaman2011hip, nutakki2025there}. In contrast, YouTube provides relatively more content-driven compared to other platforms where users can engage directly with a video without requiring prior social context  \citep{arthurs2018researching, ma2022m, schultes2013leave}. Additionally, YouTube offers large-scale publicly accessible data suitable for research. For this reason, we focus on first-level comments and do not consider reply chains, allowing us to approximate a setting where comments are primarily conditioned on content. While this abstraction does not fully capture all aspects of real-world social interaction, it provides a controlled and tractable setting for studying discourse that is more amenable to LLM-based generation.

\paragraph{Domain selection rationale.}
While YouTube is inherently a video-centric platform, human comments are often influenced by visual stimuli that LLMs cannot directly perceive. To mitigate this mismatch, we carefully select domains where the core content is information-rich and largely recoverable through textual sources, ensuring that LLMs can reasonably approximate the underlying context. Specifically, we avoid visually intensive domains such as lifestyle, beauty and fitness, health-related content, personal vlogs, and gaming streams, where interpretation heavily depends on visual cues or personal experiences. Instead, we focus on domains such as news, pop culture, and technology reviews. News videos are typically content-heavy and align well with existing textual corpora used in LLM training; pop culture content, such as movie or TV trailers, is accompanied by widely available plot summaries and discussions; and technology reviews emphasize detailed descriptions that are often documented in blogs, forums, and knowledge bases. This domain selection ensures a more balanced and controlled setting for comparing human and LLM-generated comments.

\paragraph{Time-frame selection rationale.}
Online platforms have long been subject to bot activity and evolving patterns of user interaction, and YouTube is no exception. However, the end of 2022 marks a clear shift with the widespread adoption of generative AI following the release of ChatGPT. Since then, LLM-based agents have increasingly appeared in online spaces \citep{sun2025we, goldstein2023coming}, and many human comments are influenced or rewritten using these systems. To ensure that our dataset reflects predominantly human-authored discourse, we select a time frame before this transition.

We also avoid the 2020–2022 period, which was heavily influenced by the COVID-19 pandemic. This period introduced unique global dynamics, including heightened misinformation and topic drift across unrelated discussions \citep{li2022youtube, hasan2020socio}, which can distort the natural structure of comment distributions. Additionally, we aim to ensure that the underlying content is within the knowledge scope of modern LLMs, enabling them to generate contextually grounded comments.

Considering these factors, we select the period from mid-2018 to mid-2019. This interval reflects a stable phase of online interaction, following the introduction of YouTube’s community features \footnote{\url{https://blog.youtube/news-and-events/youtube-community-goes-beyond-video/}} and the launch of the Video Reach platform \footnote{\url{https://blog.google/products-and-platforms/products/ads/full-funnel-video/}}, which increased user engagement, while remaining sufficiently distant from recent shifts driven by generative AI and global events.

\subsection{Dataset construction}
\label{subsec_human_dataset}
\paragraph{Video selection criteria.}
For each domain, we first select YouTube channels that align with the nature of the content and ensure consistency across videos. In the news domain, we consider official YouTube channels of established news agencies, both from the United States and internationally, that also publish content in English. For pop culture, we focus on trailer content from official channels of OTT platforms (such as Netflix and Amazon Prime), production studios (e.g., Marvel, DC, Universal, PlayStation, Nintendo), and widely recognized aggregators such as IGN. In the tech domain, we select well-known English-language technology review channels with substantial audience engagement. Table \ref{tab:dataset_channels_links} lists the channels and corresponding videos included in our study. All data are collected using the official YouTube API in compliance with usage policies and quota limits, and no scraping methods that violate platform terms are employed.

\begin{table*}[h]
\caption{Dataset composition by domain. Channel names are clickable links to their respective YouTube pages with (\texttt{number of videos}, \texttt{number of comments considered in total}). }
\label{tab:dataset_channels_links}
\centering
\scriptsize
\setlength{\tabcolsep}{4pt}
\renewcommand{\arraystretch}{1.1}
\begin{tabularx}{\textwidth}{lcccX}
\toprule
\textbf{Domain} & \textbf{\#Channels} & \textbf{\#Videos} & \textbf{\#Comments} & \textbf{Channels (clickable, ordered by \#videos)} \\
\midrule

\textbf{News} & 14 & 3,857 & 518,718 &
\makecell[l]{
\yt{UCXIJgqnII2ZOINSWNOGFThA}{Fox News} (663, 134,427); 
\yt{UCBi2mrWuNuyYy4gbM6fU18Q}{ABC News} (572, 78,937); 
\yt{UC8p1vwvWtl6T73JiExfWs1g}{CBS News} (457, 47,384); \\
\yt{UCaXkIU1QidjPwiAYu6GcHjg}{MSNBC} (379, 55,935); 
\yt{UC16niRr50-MSBwiO3YDb3RA}{BBC News} (379, 42,824); 
\yt{UCeY0bbntWzzVIaj2z3QigXg}{NBC News} (275, 25,079); \\
\yt{UCknLrEdhRCp1aegoMqRaCZg}{DW News} (248, 19,516); 
\yt{UCvJJ_dzjViJCoLf5uKUTwoA}{CNBC} (215, 29,415); 
\yt{UCoMdktPbSTixAyNGwb-UYkQ}{Sky News} (178, 15,488); \\
\yt{UCupvZG-5ko_eiXAupbDfxWw}{CNN} (169, 34,187); 
\yt{UCNye-wNBqNL5ZzHSJj3l8Bg}{Al Jazeera} (157, 15,315); 
\yt{UCK7tptUDHh-RYDsdxO1-5QQ}{Wall Street Journal} (143, 18,314); 
\\
\yt{UC86dbj-lbDks_hZ5gRKL49Q}{AFP News} (15, 1,124); 
\yt{UC52X5wxOL_s5yw0dQk7NtgA}{Associated Press} (7, 773)
} \\

\midrule

\textbf{Pop} & 22 & 2,203 & 326,542 &
\makecell[l]{
\yt{UCGIY_O-8vW4rfX98KlMkvRg}{Nintendo} (469, 78,785); 
\yt{UC-2Y8dQb0S6DtpxNgAKoJKA}{PlayStation} (405, 63,392); 
\yt{UCWOA1ZGywLbqmigxE4Qlvuw}{Netflix} (363, 58,753); \\
\yt{UCUnRn1f78foyP26XGkRfWsA}{GameSpot Trailers} (253, 32,632); 
\yt{UCjBp_7RuDBUYbd1LegWEJ8g}{Xbox} (189, 17,743); 
\yt{UCz97F7dMxBNOfGYu3rx8aCw}{Sony Pictures} (60, 8,754); \\
\yt{UCZSNzBgFub_WWil6TOTYwAg}{Netflix India} (53, 7,878); 
\yt{UCQJWtTnAHhEG5w4uN0udnUQ}{Prime Video} (51, 7,470); 
\yt{UCvC4D8onUfXzvjTOM-dBfEA}{Marvel} (48, 8,164); \\
\yt{UCjmJDM5pRKbUlVIzDYYWb6g}{Warner Bros.} (46, 5,779); 
\yt{UCq0OueAsdxH6b8nyAspwViw}{Universal} (41, 5,910); 
\yt{UCJx5KP-pCUmL9eZUv-mIcNw}{GameTrailers} (39, 4,967); \\
\yt{UC2-BeLxzUBSs0uSrmzWhJuQ}{20th Century} (31, 4,343); 
\yt{UCE5mQnNl8Q4H2qcv4ikaXeA}{Hulu} (31, 3,479); 
\yt{UCF9imwPMSGz4Vq1NiTWCC7g}{Paramount} (28, 4,171); \\
\yt{UCiifkYAs_bq1pt_zbNAzYGg}{DC} (24, 4,204); 
\yt{UCKy1dAqELo0zrOtPkf0eTMw}{IGN} (22, 3,915); 
\yt{UC4zWG9LccdWGUlF77LZ8toA}{Prime India} (20, 3,201); \\
\yt{UC5Qk8mWBwtMyEj7iQQYRk1A}{Epic Games} (13, 1,168); 
\yt{UCwSIJCMWZC5GDM59wj7pMsg}{Prime UK} (10, 1,129); 
\yt{UC0KU8F9jJqSLS11LRXvFWmg}{Ubisoft} (5, 540); 
\yt{UC1Myj674wRVXB9I4c6Hm5zA}{Apple TV} (2, 165)
} \\

\midrule

\textbf{Tech} & 10 & 1,514 & 268,954 &
\makecell[l]{
\yt{UCXuqSBlHAE6Xw-yeJA0Tunw}{Linus Tech Tips} (272, 45,354); 
\yt{UCsTcErHg8oDvUnTzoqsYeNw}{Unbox Therapy} (207, 30,910); 
\yt{UCOmcA3f_RrH6b9NmcNa4tdg}{CNET} (201, 25,949); \\
\yt{UCMiJRAwDNSNzuYeN2uWa0pA}{Mrwhosetheboss} (158, 36,402); 
\yt{UCgyqtNWZmIxTx3b6OxTSALw}{Android Authority} (135, 19,224); 
 \\
 \yt{UCXGgrKt94gR6lmN4aN3mYTg}{Austin Evans} (131, 31,866);
\yt{UCbLq9tsbo8peV22VxbDAfXA}{GSMArena} (112, 17,750); 
\yt{UCVYamHliCI9rw1tHR1xbkfw}{Dave2D} (109, 22,779); 
 \\
 \yt{UC9fSZHEh6XsRpX-xJc6lT3A}{UrAvgConsumer} (103, 28,077);
\yt{UCBJycsmduvYEL83R_U4JriQ}{MKBHD} (86, 10,643)
} \\

\bottomrule
\end{tabularx}

\end{table*}

From these channels, we select videos based on a set of criteria to ensure relevance and consistency. First, we control for video type and duration. In the news domain, we focus on individual news clips rather than aggregated programs, talk shows, or live broadcasts. In pop culture, we restrict our selection to official trailers and exclude reaction videos, gameplay footage, or clips. In tech reviews, we consider videos centered on specific products. Figure \ref{fig_appendix_video_duration} shows the distribution of video lengths across domains.

\begin{figure}[h]
    \centering
     \begin{subfigure}{0.33\textwidth}
        \centering
        \includegraphics[width=\linewidth]{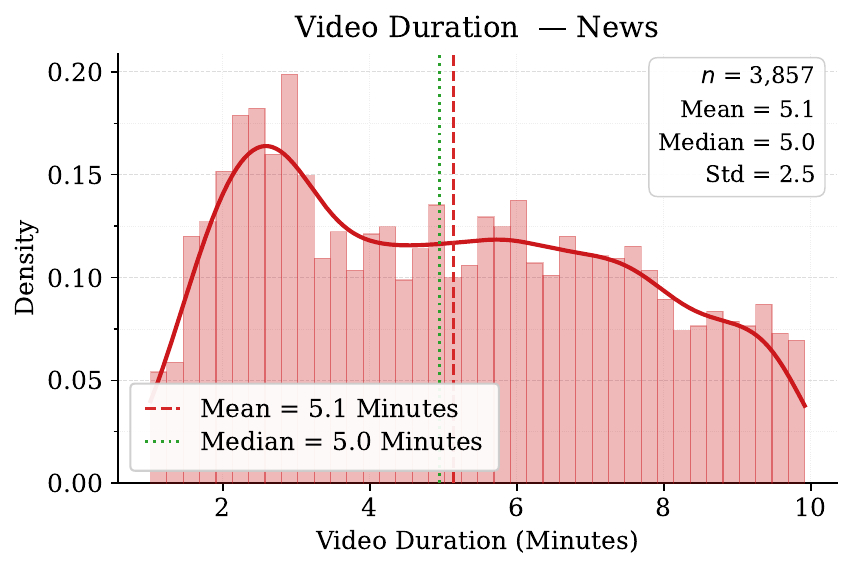}
    \end{subfigure}%
    \begin{subfigure}{0.33\textwidth}
        \centering
        \includegraphics[width=\linewidth]{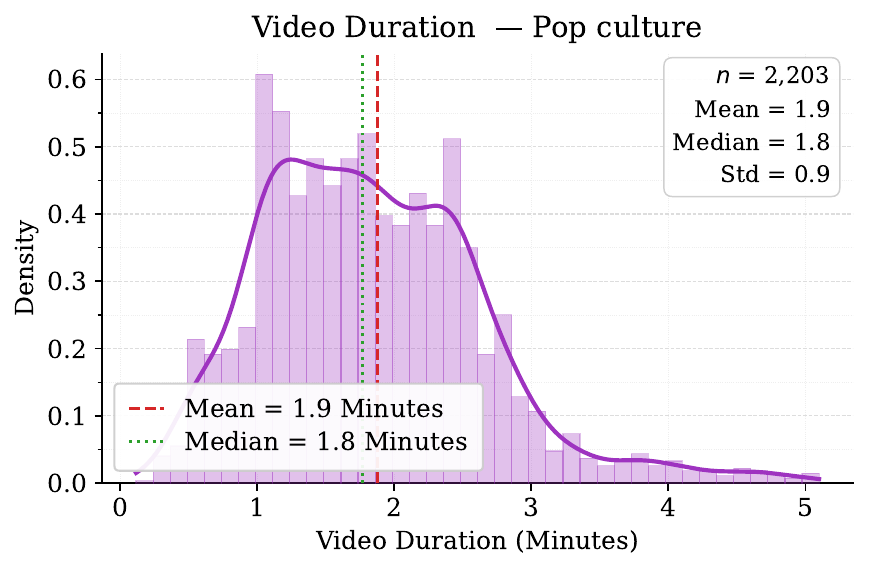}
    \end{subfigure}%
    \begin{subfigure}{0.33\textwidth}
        \centering
        \includegraphics[width=\linewidth]{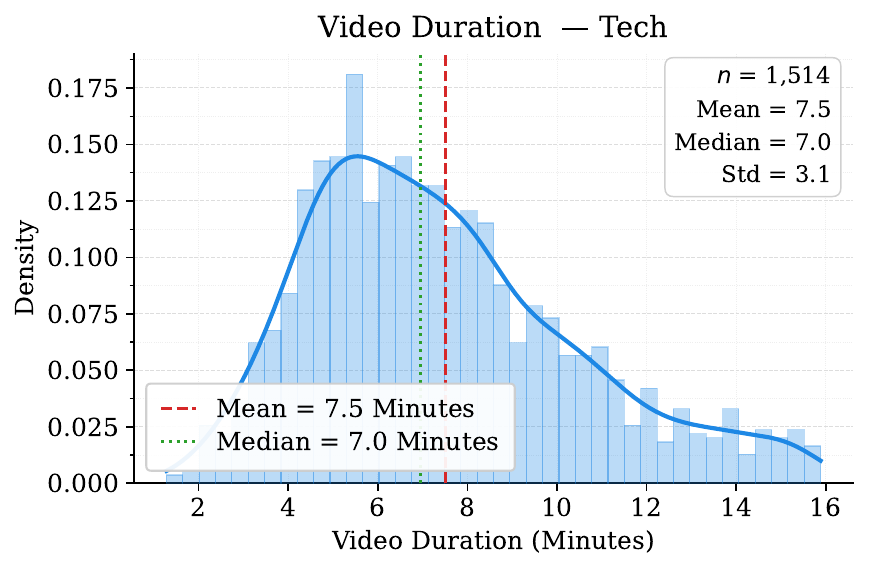}
    \end{subfigure}
    \caption{Distribution of video duration (minutes) in different domains in our dataset. We prefer shorter videos in specific channels to keep the context minimal.}
    \label{fig_appendix_video_duration}
\end{figure}

Second, we apply filters based on video characteristics and engagement. We include only English-language videos and prioritize those with available subtitles, particularly for news and tech content, to better capture the underlying discourse. We also focus on videos with meaningful engagement, selecting those within the top 75\% of view counts for each channel during the selected time frame, with at least 100 initial comments and a favorable like-to-dislike ratio. Figure \ref{fig_app_video_per_month} illustrates the distribution of selected videos over time.

\begin{figure}
    \centering
    \includegraphics[width=\linewidth]{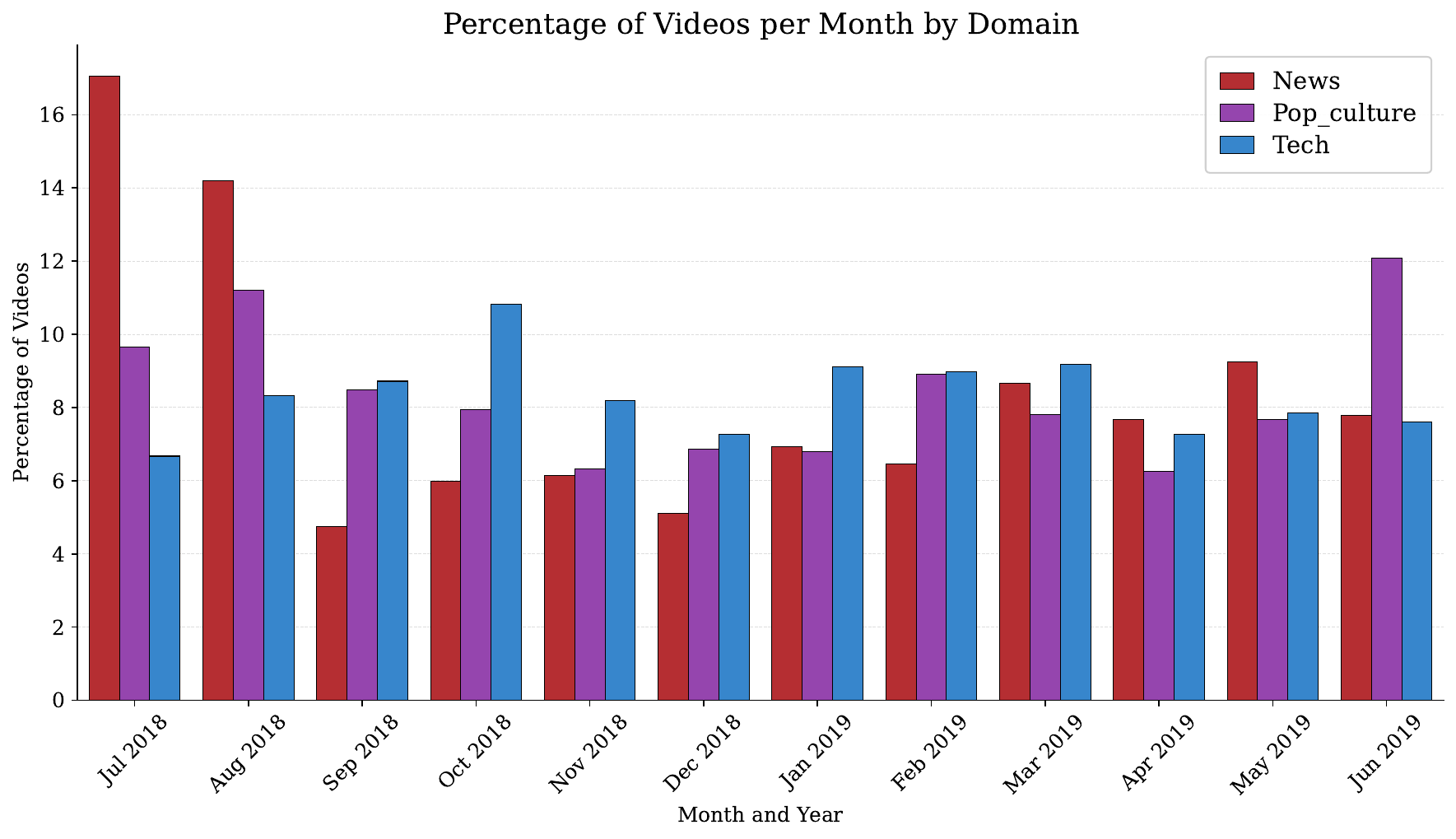}
    \caption{Distribution of selected videos per month in our time-frame.}
    \label{fig_app_video_per_month}
\end{figure}

The resulting dataset covers a diverse range of topics within each domain. Figure \ref{fig_appendix_wordcloud} presents word clouds derived from video titles, highlighting this diversity. For example, news videos span major global and political events and other specific news also, such as the Thai cave rescue (2018) and the Notre Dame fire (2019) during that time, while tech reviews cover a variety of consumer products, including smartphones and laptops released during the selected period.

For each selected video, we constructed a video-context record that could be reused by both aspect discovery and LLM comment generation. The record contains the original video metadata, including channel, domain, title, posting date, and transcript/subtitle availability, together with a compact summary of the video content. When subtitles or transcripts were available, the transcript text was cleaned and summarized with an LLM (Gemini-2.5) into a short description of the main claims, events, entities, and discussion points in the video. This summary was used instead of the full transcript in later prompts so that every generation call received consistent video context without exceeding model context limits.

\begin{figure}[h]
    \centering
     \begin{subfigure}{0.33\textwidth}
        \centering
        \includegraphics[width=\linewidth]{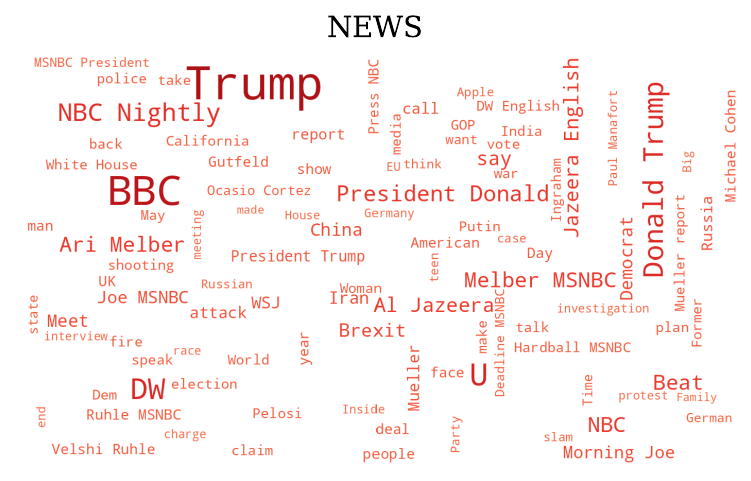}
    \end{subfigure}%
    \begin{subfigure}{0.33\textwidth}
        \centering
        \includegraphics[width=\linewidth]{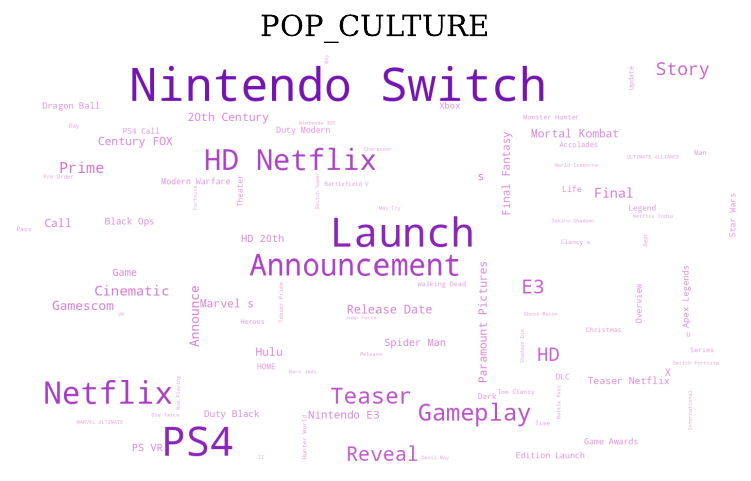}
    \end{subfigure}%
    \begin{subfigure}{0.33\textwidth}
        \centering
        \includegraphics[width=\linewidth]{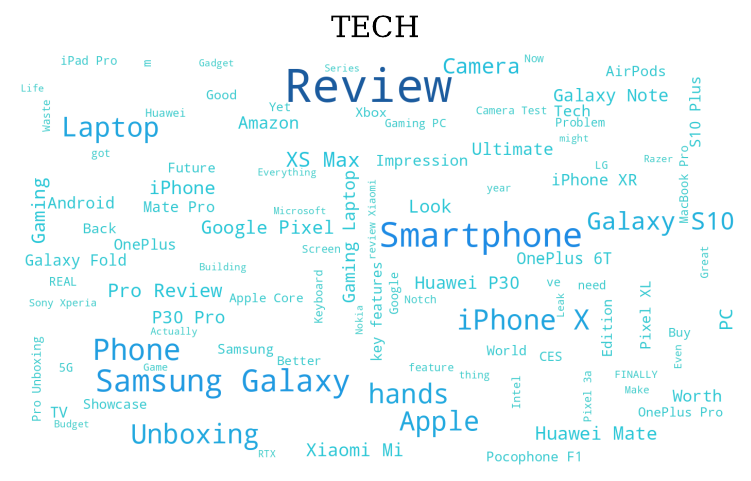}
    \end{subfigure}
    \caption{Word cloud distribution from the titles of the videos in our dataset.}
    \label{fig_appendix_wordcloud}
\end{figure}

\paragraph{Human comment collection and filtering.}
From the selected videos, we collect comments using YouTube’s “top comments” sorting to capture representative and highly visible responses. To reduce temporal effects, we restrict our analysis to comments posted within a fixed time window after the video is published. Based on the observed distribution of commenting activity, we primarily consider comments within the first one to three months, ensuring sufficient coverage while limiting long-term drift. For each video, we cap the number of comments at 400, as some videos attract extremely large volumes of comments. This cap helps maintain a balanced distribution across videos and ensures comparability with LLM-generated comments, for which generating very large quantities would be computationally prohibitive.

\begin{figure}[h]
    \centering
     \begin{subfigure}{0.33\textwidth}
        \centering
        \includegraphics[width=\linewidth]{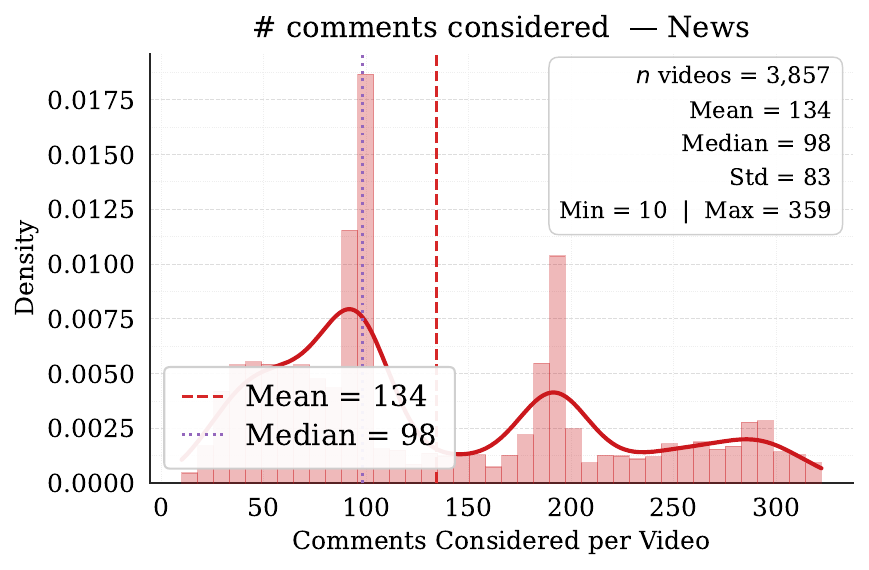}
    \end{subfigure}%
    \begin{subfigure}{0.33\textwidth}
        \centering
        \includegraphics[width=\linewidth]{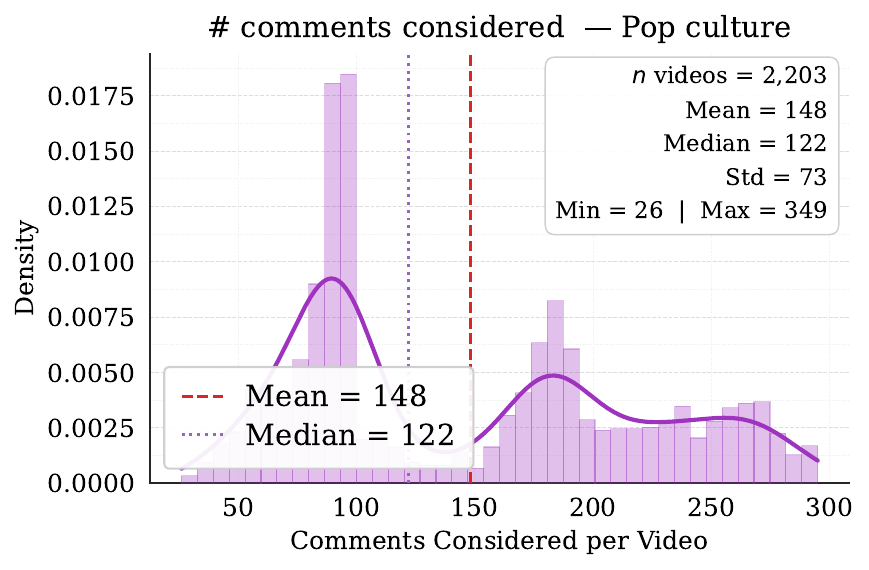}
    \end{subfigure}%
    \begin{subfigure}{0.33\textwidth}
        \centering
        \includegraphics[width=\linewidth]{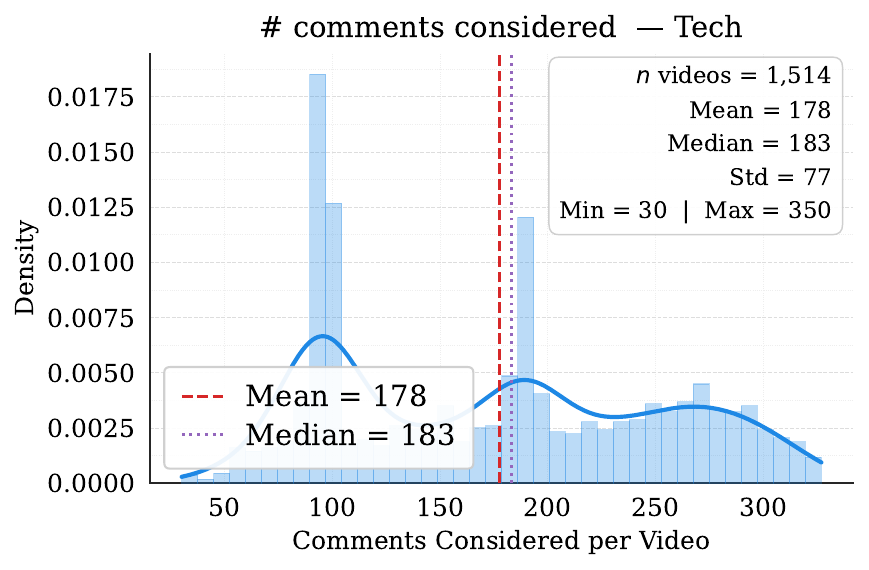}
    \end{subfigure}
    \caption{Distribution of the number of comments considered per video (after the filtering steps) in each domain.}
    \label{fig_appendix_comment_count}
\end{figure}

We apply a series of filtering steps to ensure comment quality, consistency, and comparability across videos and domains:

\begin{enumerate}

\item \textbf{Remove duplicate comments.} We drop duplicate rows using the combination of \texttt{Comment\_id}, \texttt{Video\_id}, and \texttt{Comment\_text}. This removes repeated entries from API pagination artifacts and ensures that duplicate comments from the same user within a video do not inflate frequency.

\item \textbf{Keep English comments.} We identify the language of each comment using a FastText-based classifier \citep{joulin-etal-2017-bag} and retain only comments labeled as English (\texttt{en}). This keeps the downstream feature extraction and evaluation pipeline consistent, since our lexical, syntactic, pragmatic, toxicity, and embedding models are primarily designed for English.

\item \textbf{Filter spam.} We remove comments identified as spam using a RoBERTa-based spam detector \citep{shenoda2024robertaspam}. This step is applied before ranking so that engagement signals from spam do not bias the selected human set.

\item \textbf{Filter toxic comments.} We remove highly toxic comments using Perspective-style toxicity models \citep{vidgen2021ltoxicity, gehman2020realtoxicityprompts}. Comments with toxicity scores above 0.95 are excluded, as such extreme content is unlikely to be produced by base LLMs and would introduce a mismatch in comparison.

\item \textbf{Apply domain-specific lag windows.} We filter comments based on the time lag between video publication and comment posting. The maximum lag is set to 7 days for news, 90 days for pop culture, and 30 days for tech. This accounts for domain differences, as news discussions are more sensitive to later events, while pop culture and technology videos support longer engagement windows.

\item \textbf{Compute a popularity-recency selection score.} For the remaining top-level comments, we rank comments within each video using a weighted score based on likes, replies, and recency:
\[
\text{selection\_score}
=
\log(1 + \text{likes})
+
\log(1 + \text{replies})
-
\log(1 + \text{Lag\_time\_days})
\]
We then select the top comments per video, with a target of 200-300 top-level comments per video in our configuration.

\item \textbf{Remove extreme length outliers.} We discard excessively long comments beyond the 95th percentile of the length distribution to avoid outliers, while retaining short comments since brevity is common in social media discourse.

\end{enumerate}


Figure \ref{fig_appendix_comment_count} shows the distribution of selected comments per video across domains. 
Overall, these filtering and selection steps ensure that the resulting dataset captures a clean, consistent, and representative subset of highly visible human comments, reflecting the dominant patterns of engagement and discourse that users are most likely to encounter online.

\paragraph{Human comment aspect extraction}
\label{appendix_aspect_generation}

For aspect-conditioned generation, we approximate discussion aspects directly from human comments. Our goal is to capture both what people talk about and how they express it. To this end, we design a two-stage, complementary extraction pipeline (Algorithm \ref{alg:aspect_extraction}), which jointly models semantic topics and socio-pragmatic styles.

\begin{algorithm}[h]
\caption{Aspect Extraction from Human Comments}
\label{alg:aspect_extraction}
\begin{algorithmic}

\State \textbf{Input:} Video $v$, human comment set $\mathcal{H}_v$
\State \textbf{Output:} Aspect set $\mathcal{A}_v = \{a_1, \dots, a_{A_v}\}$

\State $\mathcal{H}_v^{\text{long}} \gets \{h \in \mathcal{H}_v \mid n_{\text{sent}}(h) > 1\}$
\State $\mathcal{H}_v^{\text{short}} \gets \{h \in \mathcal{H}_v \mid n_{\text{sent}}(h) = 1\}$

\State $\mathcal{C}_v^{\text{long}} \gets \textsc{BERTTopic}(\{\boldsymbol{\theta}(h) \mid h \in \mathcal{H}_v^{\text{long}}\})$
\Comment{cluster long comments via topic distributions}

\State $\mathcal{C}_v^{\text{short}} \gets \textsc{KModes}(\{\mathbf{p}(h) \mid h \in \mathcal{H}_v^{\text{short}}\})$
\Comment{cluster short comments via pragmatic attributes}

\State $\mathcal{C}_v \gets \mathcal{C}_v^{\text{short}} \cup \mathcal{C}_v^{\text{long}}$
\State $\mathcal{A}_v \gets \emptyset$

\For{$C_j \in \mathcal{C}_v$}
    \State $\mathcal{R}_j \subseteq C_j$
    \Comment{select representative comments}
    \State $a_j \gets \textsc{LLM-Summarize}(\mathcal{R}_j)$
    \State $\mathcal{A}_v \gets \mathcal{A}_v \cup \{a_j\}$
\EndFor

\State \Return $\mathcal{A}_v$

\end{algorithmic}
\end{algorithm}

In the first stage, we extract \textbf{video-specific semantic aspects using topic modeling}. We focus on longer comments (with more than one sentence), as they tend to contain richer contextual signals. For each video, we represent comments using sentence embeddings from the EmbeddingGemma model \citep{vera2025embeddinggemma} and cluster them with BERTopic \citep{grootendorst2022bertopic} combined with HDBSCAN \citep{campello2013hdbscan}. We perform a grid search over key clustering parameters to balance granularity and robustness. Specifically, we vary $min\_samples$ (5, 10, 15, 20), which controls how aggressively outliers are filtered, and $min\_cluster\_size$ (15, 30, 45), which governs the minimum size of coherent discussion groups. Model selection is guided by topic coherence, diversity, and coverage to ensure that extracted aspects are meaningful, non-redundant, and collectively representative of the discussion space. We run this process independently for each video rather than globally, since our objective is to recover the local discourse structure specific to each video. The resulting clusters capture semantically coherent aspects such as reactions to specific claims, discussions about named entities or events, debates on technical points, or a focus on particular subtopics introduced in the content.

The second stage captures \textbf{socio-pragmatic aspects}, which are often missed by purely semantic clustering, especially for short or stylistically driven comments. For comments that are not confidently assigned to semantic clusters, as well as short comments, we construct feature vectors based on communicative attributes. These include sentiment, emotion, humor, sarcasm, toxicity or hate-related signals, metaphor usage, formality, bias, and related stylistic indicators. We cluster these representations using K-Modes clustering \citep{chaturvedi2001k} with Jaccard similarity, which is well-suited for categorical features. We explore cluster sizes from K=2 to 10 and select the optimal configuration using the silhouette coefficient, Dunn index, and intra/inter-cluster distance ratios. This stage ensures that stylistic variation, affective stance, and social signaling are explicitly modeled, complementing the semantic view.

\begin{figure}[h]
    \centering
     \begin{subfigure}{0.33\textwidth}
        \centering
        \includegraphics[width=\linewidth]{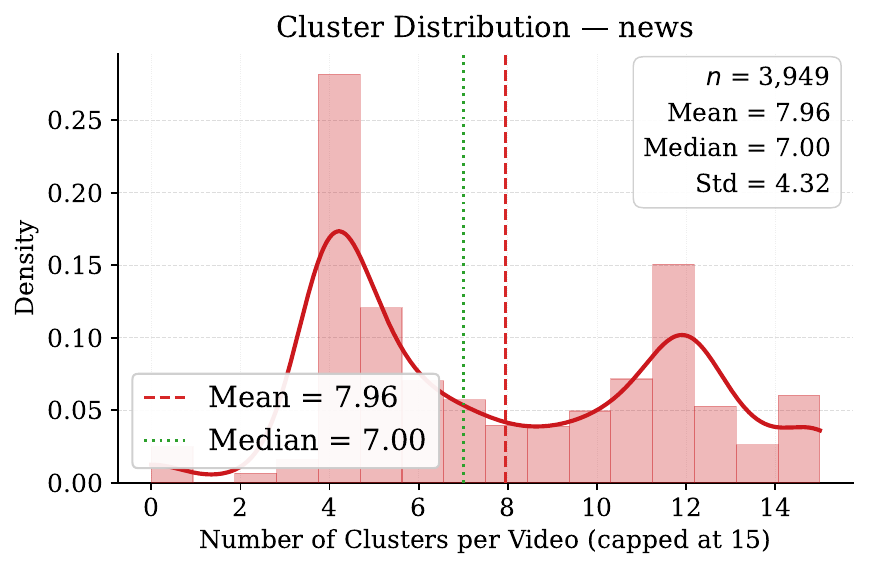}
    \end{subfigure}%
    \begin{subfigure}{0.33\textwidth}
        \centering
        \includegraphics[width=\linewidth]{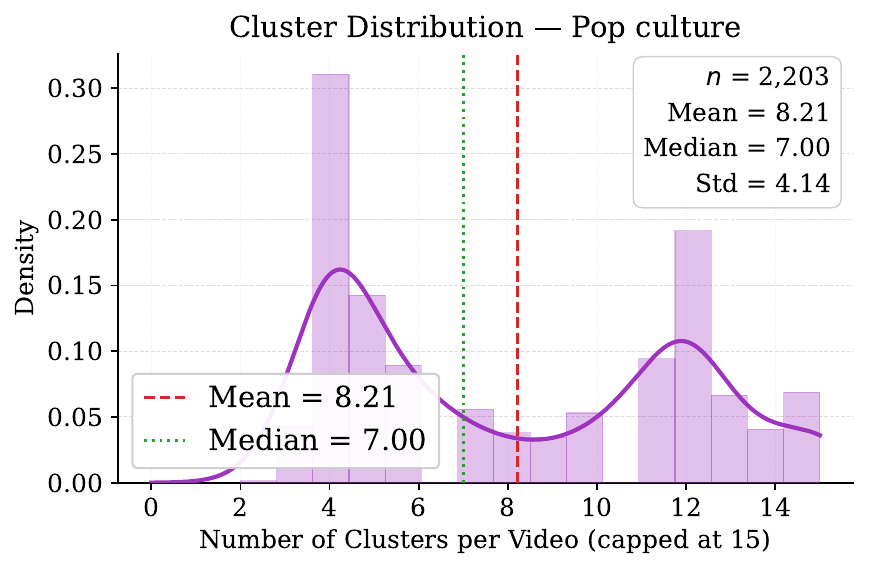}
    \end{subfigure}%
    \begin{subfigure}{0.33\textwidth}
        \centering
        \includegraphics[width=\linewidth]{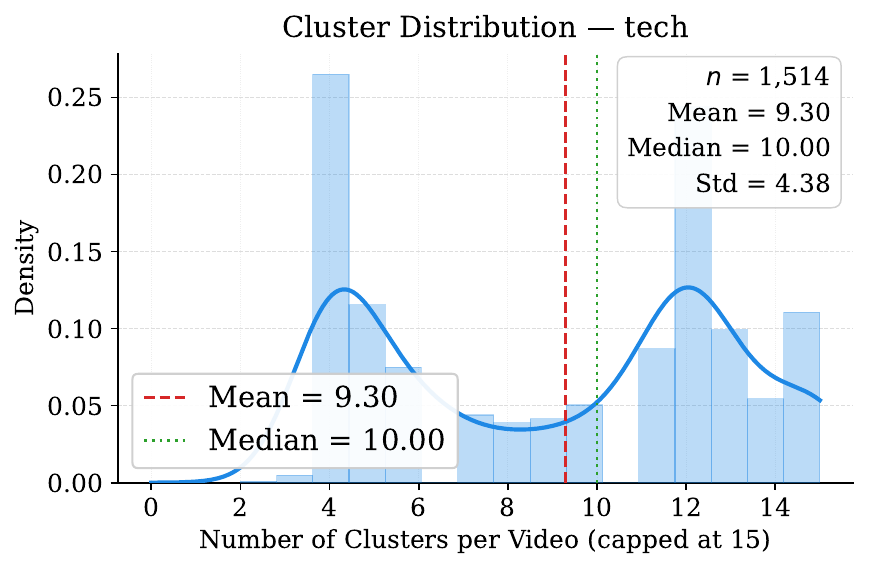}
    \end{subfigure}
    \caption{Distribution of aspects in different videos.}
    \label{fig_appendix_clustert_count}
\end{figure}

After clustering, we generate interpretable cluster descriptions to make the extracted aspects usable in downstream generation. We train a shallow decision tree to predict cluster labels from one-hot encoded features and compute mutual information \citep{kraskov2004estimating} between each feature and cluster assignments. For each cluster, we select features that are both frequent within the cluster and highly informative. These features are combined into concise descriptors that highlight the defining characteristics of the cluster, alongside representative comments for grounding. Figure \ref{fig_appendix_clustert_count} shows the distribution of aspects in different videos in our dataset.

\begin{tcolorbox}[
    colback=blue!10, 
    colframe=blue!40!black, 
    title=Prompt for comment summary for aspect definition, 
    rounded corners
]\footnotesize

\textbf{Task:} Define aspects from a given comment list.

\vspace{2pt}
\textbf{Video Title:} \texttt{\{video\_name\}} \\
\textbf{Channel Name:} \texttt{\{channel\_name\}} \\
\textbf{Post Date:} \texttt{\{post\_date\}}

\vspace{4pt}
Write a concise summary (3--4 sentences) describing what the comments discuss. 
Focus on:
\begin{itemize}
    \item Main topics and viewpoints
    \item Overall sentiment and emotional tone
    \item Stylistic elements (e.g., humor, sarcasm, bias, toxicity if present)
\end{itemize}

Your response should be a single cohesive paragraph without bullet points or special formatting.

\vspace{4pt}
\textbf{Comments:} \\
\texttt{\{comment\_list\}}

\end{tcolorbox}

Finally, we produce natural language summaries for each aspect by an LLM (\texttt{gpt-5}) using the above prompt with the representative comments for each aspect. For semantic clusters, the summaries emphasize shared discussion topics or content-specific themes. For pragmatic clusters, they focus on communicative style, tone, stance, or social intent. These summaries, along with cluster sizes and length statistics, are incorporated into the prompt construction pipeline. As a result, in aspect-conditioned settings, the model receives not only the video context but also a structured and comprehensive representation of human discussion aspects, enabling more faithful and diverse generation. Table \ref{tab:aspect_examples} shows example aspects extracted from a sample video.

\begin{table}[h]
\centering
\small
\setlength{\tabcolsep}{4pt}
\renewcommand{\arraystretch}{1.1}

\begin{tabular}{l c l p{6cm}}
\toprule
\textbf{Aspect} & \textbf{\%} & \textbf{Pragmatic} & \textbf{Aspect Description} \\
\midrule

\makecell[l]{\texttt{Enthusiastic Fan}\\\texttt{Reactions} \\ (Pragmatic)} 
& 65.07 
& \makecell[l]{informal \\ joy \\ non-sarcastic} 
& Strong excitement around the film and its Bruce Springsteen connection, with nostalgic references to the soundtrack and repeated praise for the trailer, alongside expressive and hyperbolic fan reactions. \\

\makecell[l]{\texttt{Music-driven Emotional}\\\texttt{Resonance} \\ (Topic)} 
& 13.97 
& -- 
& Viewers share emotional connections to Springsteen’s music and highlight how the film’s coming-of-age narrative resonates with personal experiences and nostalgia. \\

\makecell[l]{\texttt{Emotional \& Relatable}\\\texttt{Storytelling} \\ (Topic)} 
& 13.10 
& -- 
& Comments emphasize the film’s emotional depth, relatability, and themes of family, identity, and struggle, often describing cathartic and uplifting viewing experiences. \\

\makecell[l]{\texttt{Springsteen Legacy}\\\texttt{Appreciation} \\ (Topic)} 
& 4.80 
& -- 
& Discussion centers on admiration for Springsteen’s influence, with personal reflections on how his music shapes identity and enhances the film’s impact. \\

\makecell[l]{\texttt{Cultural Identity \&}\\\texttt{Representation} \\ (Topic)} 
& 2.18 
& -- 
& Comments focus on South Asian and working-class identity, highlighting cultural pressures and the empowering role of music in shaping self-expression. \\

\makecell[l]{\texttt{Playful Music Culture}\\\texttt{Observations} \\ (Pragmatic)} 
& 0.87 
& \makecell[l]{sentiment  \\ mixed \\ light humor} 
& Lighthearted remarks about music preferences and cultural trends, combining casual humor with appreciation for classic rock influences. \\

\bottomrule
\end{tabular}

\caption{Example of extracted aspects from a sample video (BLINDED BY THE LIGHT - Official Trailer, ID = \textit{DmmHvnS0IKM}).}
\label{tab:aspect_examples}
\end{table}

\subsection{LLM generation framework}
\label{subsec_llm_generation}
In this section, we present a detailed analysis of how we use our framework (Figure \ref{fig:methodology}) for generating comments from LLMs using multiple settings. 

\paragraph{Model selection.}
To enable a comprehensive evaluation of multi-LLM generation, we curate a diverse set of both proprietary and open-source models spanning multiple providers and architectural families. Our selection is designed to capture variation in training data, alignment strategies, model scale, and deployment paradigms, which are all known to influence generation behavior.
We include models from \textbf{OpenAI} (\texttt{GPT} \citep{achiam2023gpt4, singh2025gpt5} and \texttt{GPT-OSS} \citep{agarwal2025gptoss}), \textbf{Anthropic} (\texttt{Claude} \citep{anthropic2025claude37, anthropic2025claudehaiku45, anthropic2025claudesonnet45}), \textbf{Google} (\texttt{Gemini} \citep{comanici2025gemini2.5} and \texttt{Gemma} \citep{gemmateam2025gemma3technicalreport}), \textbf{Meta} (\texttt{Llama} \citep{adcock2026llama4}), \textbf{DeepSeek} \citep{liu2024deepseek}, \textbf{Qwen} \citep{qwen2024qwen2.5, yang2025qwen3}, \textbf{Mistral} \citep{jiang2024mixtral}, \textbf{xAI} (\texttt{Grok} \citep{xai2025grok3, xai2025grok4, xai2025grok4-1}), and \textbf{Microsoft} (\texttt{Phi} \citep{abdin2024phi3, abdin2024phi4, abouelenin2025phi4-mini}). Table \ref{tab:model_list} provides the complete list of models used in our study.
\begin{table}[h]
\centering
\small
\setlength{\tabcolsep}{4pt}
\begin{tabular}{l p{10cm}}
\toprule
\textbf{Provider} & \textbf{Models} \\
\midrule

\multirow{7}{*}{\textbf{OpenAI (7)}} 
& \ding{51} \texttt{gpt-oss-20b} \\
& \ding{51} \texttt{gpt-oss-120b} \\
& \ding{55} \texttt{gpt-5} \\
& \ding{55} \texttt{gpt-5-mini} \\
& \ding{55} \texttt{gpt-5-nano} \\
& \ding{55} \texttt{gpt-4.1} \\
& \ding{55} \texttt{gpt-4.1-mini} \\

\midrule

\multirow{4}{*}{\textbf{Anthropic (4)}}
& \ding{55} \texttt{claude-haiku-4-5-20251001} \\
& \ding{55} \texttt{claude-sonnet-4-5-20250929} \\
& \ding{55} \texttt{claude-3-5-haiku-20241022} \\
& \ding{55} \texttt{claude-3-7-sonnet-20250219} \\

\midrule

\multirow{7}{*}{\textbf{Google (7)}}
& \ding{55} \texttt{gemini-2.5-flash} \\
& \ding{55} \texttt{gemini-2.5-lite-latest} \\
& \ding{55} \texttt{gemini-2.5-pro} \\
& \ding{55} \texttt{gemini-2.0-flash} \\
& \ding{55} \texttt{gemini-2.0-flash-lite} \\
& \ding{51} \texttt{gemma-3-4b-it} \\
& \ding{51} \texttt{gemma-3-27b-it} \\

\midrule

\multirow{2}{*}{\textbf{DeepSeek (2)}}
& \ding{55} \texttt{deepseek-chat} \\
& \ding{55} \texttt{deepseek-reasoner} \\

\midrule

\multirow{2}{*}{\textbf{Mistral (2)}}
& \ding{51} \texttt{Mixtral-8x7B-Instruct-v0.1} \\
& \ding{51} \texttt{Mistral-Small-24B-Instruct-2501} \\

\midrule

\multirow{3}{*}{\textbf{Grok (3)}}
& \ding{55} \texttt{grok-4-1-fast-non-reasoning} \\
& \ding{55} \texttt{grok-4-fast-non-reasoning} \\
& \ding{55} \texttt{grok-3-mini} \\

\midrule

\multirow{4}{*}{\textbf{Microsoft (4)}}
& \ding{51} \texttt{phi-4} \\
& \ding{51} \texttt{phi-3.5-MoE-instruct} \\
& \ding{51} \texttt{phi-3-medium} \\
& \ding{51} \texttt{phi-4-mini-instruct} \\

\bottomrule
\end{tabular}

\vspace{2mm}
\caption{\small Models grouped by provider. \ding{51} denotes open-source models and \ding{55} denotes proprietary models.}
\label{tab:model_list}
\end{table}

Proprietary models are accessed through their official APIs, while open-source models are served via the Together.AI platform \footnote{\url{https://api.together.ai/}}. This unified access setup allows us to systematically compare models under consistent prompting and decoding conditions. We intentionally include models from a wide range of providers to avoid over-reliance on a single ecosystem and to better reflect the heterogeneity of real-world LLM deployments.
Our selection was constrained by model availability at the time of experimentation. As a result, some models have since been deprecated or updated by their providers. We retain them in our analysis to ensure reproducibility and to capture a representative snapshot of the LLM landscape during the study period.

\paragraph{Common generation guidelines}
For each generation setting, we first construct an assignment plan that specifies which model generates comments for each video and, when applicable, for each length bin, aspect, or style cluster. This design is motivated by the well-established role of length as a confounding factor in NLP generation and evaluation \citep{hu2024explaining, munoz2024contrasting, tripto2025beyond}. To control for this, we explicitly align generation with the observed length distribution of human comments, ensuring that comparisons across settings reflect differences in content and diversity rather than superficial length effects. Each assignment entry includes the video identifier, provider, model identifier, target number of comments, and the prompt-conditioning signals required for that setting. This structured plan decouples the generation pipeline from any single provider and enables consistent execution across both open-weight and proprietary models. Figure \ref{fig_appendix_comment_length} shows the distribution of comment lengths in different settings.

Prompt construction varies by setting but follows a shared backbone to maintain comparability. All prompts include core contextual information such as channel name, domain, video title, posting date, video summary or transcript summary, target number of comments, and a specified word-count range. Aspect-conditioned settings extend this with cluster-level descriptions, length guidelines tied to each aspect, and style-related cues when available. This unified structure ensures that all models operate under comparable informational constraints while allowing controlled variation across experimental conditions.

\begin{figure}[h]
    \centering
     \begin{subfigure}{0.33\textwidth}
        \centering
        \includegraphics[width=\linewidth]{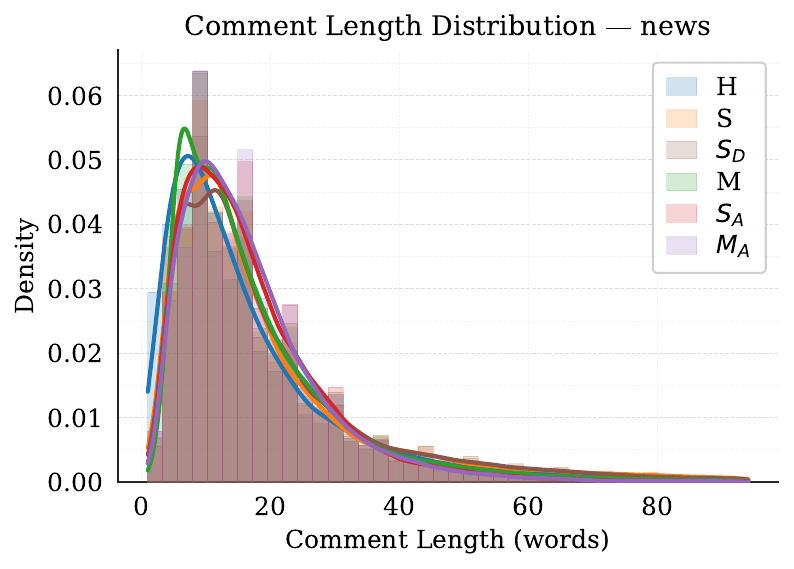}
    \end{subfigure}%
    \begin{subfigure}{0.33\textwidth}
        \centering
        \includegraphics[width=\linewidth]{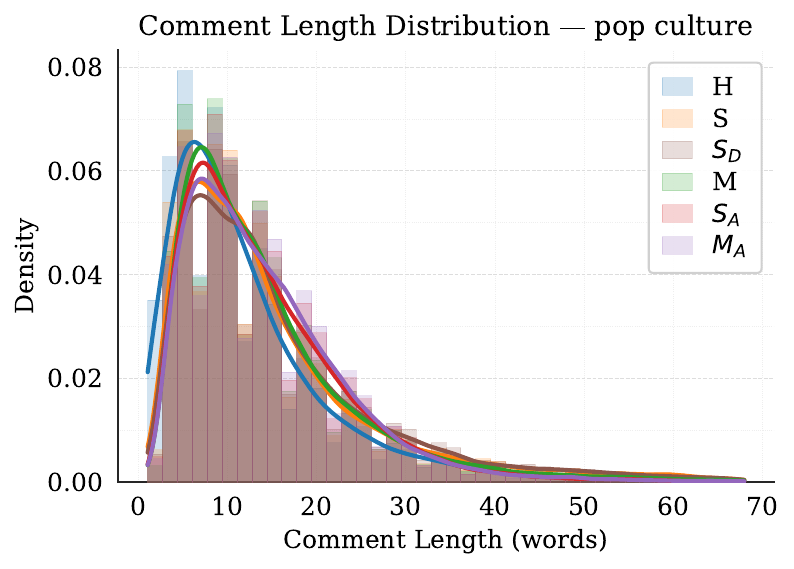}
    \end{subfigure}%
    \begin{subfigure}{0.33\textwidth}
        \centering
        \includegraphics[width=\linewidth]{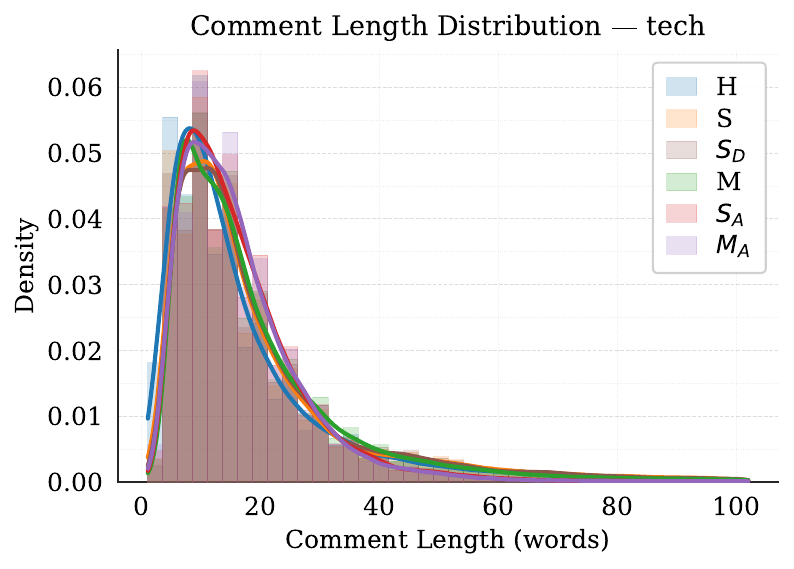}
    \end{subfigure}
    \caption{Comment length distribution in human comments and LLM-generated comments from different settings. No statistical significance difference was observed for each setting (wrt. human distribution) following a two-tailed t-test.}
    \label{fig_appendix_comment_length}
\end{figure}

To standardize outputs, we instruct models to return a numbered list of comments. This simple constraint enables reliable parsing of individual comments using a consistent regular expression, reducing post-processing variability across providers. Generation calls use provider-specific decoding parameters while preserving the same experimental intent. For the main non-ablation settings, we set the temperature to approximately 1.0, unless a provider enforces a different default or does not expose equivalent controls. We also allocate sufficiently large maximum output limits to avoid truncation by limiting how many comments to generate and allowing models to produce the full set of requested comments. In practice, most models are assigned output budgets of around 8192 tokens. Microsoft Phi models follow their model-specific constraints, with lower limits (2048 tokens) for smaller variants. This setup ensures both fairness and completeness across a heterogeneous set of generation interfaces. Also, in each generation call, we generate a small surplus of comments beyond the specified comment count as a practical safeguard against formatting or parsing failures.

\paragraph{Setting S: Single-LLM, No Aspects}
This setting serves as the simplest baseline, where all comments for a given video are generated using a single LLM without any explicit aspect or style conditioning (Algorithm \ref{alg:setting_s_generation}). For each video $v$, we first estimate the empirical length distribution $\mathcal{L}_v$ from the corresponding human comments $\mathcal{H}_v$. This step ensures that the generated comments follow realistic length patterns and controls for length as a confounding factor in downstream comparisons. We then uniformly sample one model $M_k$ from the model pool $\mathcal{M}$ and use it for all generations associated with that video. 

\begin{algorithm}[h]
\caption{Comment Generation for Setting S: Single-LLM, No Aspects}
\label{alg:setting_s_generation}
\begin{algorithmic}

\State \textbf{Input:} Video $v$, human comment set $\mathcal{H}_v$, model pool $\mathcal{M}=\{M_1,\dots,M_K\}$, target number of comments $n$
\State \textbf{Output:} Generated comment set $\mathcal{G}^{(\mathrm{S})}_v=\{g_1,\dots,g_n\}$

\State Estimate empirical length-bin distribution from $\mathcal{H}_v$:
\[
\mathcal{L}_v = \{(\ell_b, \pi_b)\}_{b=1}^{B},
\]
\Statex \hspace{\algorithmicindent} where $\ell_b$ denotes a word-length bin and $\pi_b$ is the proportion of comments.

\State Sample a single model $M_k \sim \mathrm{Uniform}(\mathcal{M})$
\Comment{single model for all generations}

\State $\mathcal{G}^{(\mathrm{S})}_v \gets \emptyset$

\For{$b \in \{1,\dots,B\}$}
    \State $n_b \gets \lfloor n \cdot \pi_b \rceil$
    \Comment{number of comments for bin $\ell_b$}

    \State $\mathcal{G}_b \sim P_{M_k}^{(n_b)}(\cdot \mid v, \ell_b)$
    \Comment{generates approx. $n_b$ comments conditioned on $v$ and $\ell_b$}

    \State $\mathcal{G}^{(\mathrm{S})}_v \gets \mathcal{G}^{(\mathrm{S})}_v \cup \mathcal{G}_b$
\EndFor

\State Adjust $\mathcal{G}^{(\mathrm{S})}_v$ to contain exactly $n$ comments if rounding affects total

\State \Return $\mathcal{G}^{(\mathrm{S})}_v$

\end{algorithmic}
\end{algorithm}

The total number of comments $n$ is allocated across length bins according to the observed proportions $\pi_b$, and the selected model generates comments conditioned only on the video context and the target length bin. The generated subsets from each bin are combined to form the final comment set, with minor adjustments applied if rounding affects the total count. This design isolates the behavior of a single model under realistic length constraints, providing a controlled reference point for understanding the contributions of multi-model and aspect-conditioned settings. 

\paragraph{Setting M: Multi-LLM, No Aspects}
Building on Setting S, this setting introduces model diversity while keeping the generation unconditioned on aspects. Instead of relying on a single model, we select a subset of models $\mathcal{M}_v \subseteq \mathcal{M}$ for each video and distribute the generation workload across them (Algorithm~\ref{alg:setting_m_generation}). The empirical length distribution $\mathcal{L}_v$ is estimated in the same way as in Setting S, ensuring that the overall output remains aligned with human length patterns.

The key difference lies in how generation is allocated. Rather than assigning all length bins to a single model, we construct an allocation plan $\mathcal{Q}_v$ that distributes length bins and corresponding comment counts across multiple models while preserving the aggregate length distribution. Each model $M_k$ generates a subset of comments conditioned on the video context and a specific length bin, and the outputs are combined to form the final comment set.  Box \ref{prompt_without_aspect} shows the prompt used for both S and M settings.

\begin{tcolorbox}[
    colback=blue!10, 
    colframe=blue!40!black, 
    title=Prompt for LLM comment generation (without aspect) : $S$ and $M$ settings, 
    label=prompt_without_aspect,
    rounded corners
]\footnotesize

\textbf{Task:} Generate \texttt{\{num\_comments\}} diverse and realistic YouTube comments for the given video.

\vspace{4pt}
\textbf{Video Details:}
\begin{itemize}
    \item Channel Name: \texttt{\{channel\_name\}}
    \item Domain: \texttt{\{domain\}}
    \item Video Title: \texttt{\{video\_title\}}
    \item Video Post Date: \texttt{\{video\_post\_date\}}
\end{itemize}

\vspace{4pt}
\textbf{Video Summary (for context):} \\
\texttt{\{video\_summary\}}

\vspace{4pt}
\textbf{Guidelines:}
\begin{itemize}
    \item Generate exactly \texttt{\{num\_comments\}} comments.
    \item Each comment should be between \texttt{\{min\_len\}} and \texttt{\{max\_len\}} words.
    \item Comments should sound natural and human-like, as if written by different YouTube users.
    \item Comments can take ideas from what the video mentions (based on the summary).
    \item Use a mix of sentence structures and varying levels of enthusiasm or critique.
    \item Comments can reflect opinions about the content, delivery, message, or production quality.
\end{itemize}

\vspace{4pt}
\textbf{Output Format:}
\begin{itemize}
    \item Return the comments \textbf{only} in the following numbered format:
\end{itemize}

\begin{itemize}
    \item 1. \texttt{<comment\_1>}
    \item 2. \texttt{<comment\_2>}
    \item $\dots$
    \item \texttt{\{num\_comments\}}. \texttt{<comment\_\{num\_comments\}>}
\end{itemize}

Do not include any text or explanation outside the numbered comments.

\end{tcolorbox}

\begin{algorithm}[h]
\caption{Comment Generation for Setting M: Multi-LLM, No Aspects}
\label{alg:setting_m_generation}
\begin{algorithmic}

\State \textbf{Input:} Video $v$, human comment set $\mathcal{H}_v$, model pool $\mathcal{M}=\{M_1,\dots,M_K\}$, target number of comments $n$
\State \textbf{Output:} Generated comment set $\mathcal{G}^{(\mathrm{M})}_v=\{g_1,\dots,g_n\}$

\State Select a subset of models $\mathcal{M}_v \subseteq \mathcal{M}$
\Comment{models used for video $v$}

\State Estimate empirical length-bin distribution from $\mathcal{H}_v$:
\[
\mathcal{L}_v = \{(\ell_b, \pi_b)\}_{b=1}^{B},
\]
\Statex \hspace{\algorithmicindent} where $\ell_b$ denotes a word-length bin and $\pi_b$ is its empirical proportion.

\State Allocate length bins and comment counts across models:
\[
\mathcal{Q}_v
=
\{(M_k, \ell_b, n_{k,b}) : M_k \in \mathcal{M}_v,\; \ell_b \in \mathcal{L}_v\},
\]
\Statex \hspace{\algorithmicindent} such that $\sum_{(k,b)} n_{k,b} = n$ and the aggregate length distribution follows $\mathcal{L}_v$.

\State $\mathcal{G}^{(\mathrm{M})}_v \gets \emptyset$

\For{$(M_k, \ell_b, n_{k,b}) \in \mathcal{Q}_v$}
    \State $\mathcal{G}_{k,b} \sim P_{M_k}^{(n_{k,b})}(\cdot \mid v, \ell_b)$
    \Comment{single prompt generates $n_{k,b}$ comments}
    \State $\mathcal{G}^{(\mathrm{M})}_v \gets \mathcal{G}^{(\mathrm{M})}_v \cup \mathcal{G}_{k,b}$
\EndFor

\State \Return $\mathcal{G}^{(\mathrm{M})}_v$

\end{algorithmic}
\end{algorithm}

\begin{algorithm}[h]
\caption{Comment Generation for Setting $S_A$: Single-LLM, With Aspects}
\label{alg:setting_sa_generation}
\begin{algorithmic}

\State \textbf{Input:} Video $v$, aspect set $\mathcal{A}_v=\{a_1,\dots,a_{A_v}\}$, human comment set $\mathcal{H}_v$, model pool $\mathcal{M}=\{M_1,\dots,M_K\}$, target number of comments $n$
\State \textbf{Output:} Generated comment set $\mathcal{G}^{(\mathrm{S_A})}_v=\{g_1,\dots,g_n\}$

\State Sample a single model $M_k \sim \mathrm{Uniform}(\mathcal{M})$
\Comment{single model for all aspect-conditioned generations}

\State Get empirical aspect distribution (from Algorithm \ref{alg:aspect_extraction}):
\[
\Lambda_v=\{(a_j,\lambda_j)\}_{j=1}^{A_v},
\]
\Statex \hspace{\algorithmicindent} where $\lambda_j$ is the proportion of human comments assigned to aspect $a_j$.

\State $\mathcal{G}^{(\mathrm{S_A})}_v \gets \emptyset$

\For{$(a_j,\lambda_j) \in \Lambda_v$}
    \State $n_j \gets \lfloor n \cdot \lambda_j \rceil$
    \Comment{comments allocated to aspect $a_j$}

    \State Estimate aspect-specific length-bin distribution:
    \[
    \mathcal{L}_{v,a_j} = \{(\ell_b,\pi_{j,b})\}_{b=1}^{B_j},
    \]
    \Statex \hspace{\algorithmicindent} where $\pi_{j,b}$ is the proportion of comments for aspect $a_j$ in length bin $\ell_b$.

    \For{$(\ell_b,\pi_{j,b}) \in \mathcal{L}_{v,a_j}$}
        \State $n_{j,b} \gets \lfloor n_j \cdot \pi_{j,b} \rceil$
        \Comment{comments allocated to aspect--length pair}

        \State $\mathcal{G}_{j,b} \sim P_{M_k}^{(n_{j,b})}(\cdot \mid v, a_j, \ell_b)$
        \Comment{single prompt conditioned on video, aspect, and length}

        \State $\mathcal{G}^{(\mathrm{S_A})}_v \gets \mathcal{G}^{(\mathrm{S_A})}_v \cup \mathcal{G}_{j,b}$
    \EndFor
\EndFor


\State \Return $\mathcal{G}^{(\mathrm{S_A})}_v$

\end{algorithmic}
\end{algorithm}

This design allows us to isolate the effect of cross-model heterogeneity while controlling for prompt structure and length. By aggregating outputs from multiple LLMs, the setting captures a broader range of generative behaviors compared to Setting $S$, providing a stronger baseline for evaluating the role of model diversity in comment generation.

\paragraph{Setting $S_A$: Single-LLM, With Aspects}
This setting extends Setting $S$ by introducing explicit aspect conditioning while retaining a single model for generation. For each video $v$, we sample one model $M_k$ and use it consistently across all generations (Algorithm~\ref{alg:setting_sa_generation}). In addition to the global length distribution, we incorporate the empirical aspect distribution $\Lambda_v$, obtained from the aspect extraction pipeline, to guide how comments are allocated across different discussion aspects.

The total number of comments $n$ is first distributed across aspects according to their observed proportions $\lambda_j$. Within each aspect $a_j$, we further estimate an aspect-specific length distribution $\mathcal{L}_{v,a_j}$, allowing us to capture how different types of discussions vary in length. The model then generates comments conditioned on the video context, the target aspect, and the corresponding length bin. All generated subsets are aggregated to form the final comment set.

Compared to Setting $S$ and $M$, this design introduces a structured constraint on content by explicitly modeling what is being discussed, while still isolating the behavior of a single model. It enables us to assess how well a single LLM can follow a diverse set of human-derived aspects under realistic length and distributional constraints.

\paragraph{Setting $M_A$: Multi-LLM, With Aspects}
This setting combines the two sources of diversity explored in earlier settings by introducing both aspect conditioning and multi-model generation. Building on Settings  $M$ and $S_A$, we select a subset of models $\mathcal{M}_v$ for each video and jointly allocate them across aspects and comment counts (Algorithm~\ref{alg:setting_ma_generation}). The empirical aspect distribution $\Lambda_v$ is used to determine how many comments should be generated for each aspect, preserving alignment with human discussion patterns. Box \ref{prompt_with_aspect} shows the prompt used for both $S_A$ amd $M_A$ settings.

A key design choice is to enforce structured specialization across models. Each model $M_k$ is assigned to \textbf{exactly one aspect} via a mapping $\sigma: \{1,\dots,K\} \to \mathcal{A}$, where we ensure $K > A$. As a result, each model generates comments only for its assigned aspect $\sigma(k)$. Let $\omega_k$ denote the per-model weights, where $\sum_{k=1}^K \omega_k = 1$ and $\omega_k \ge 0$. The induced aspect coverage is
\[
\lambda_a
\;=\;
\sum_{k \in \mathcal{K}_a} \omega_k,
\quad a \in \mathcal{A},
\]
where $\mathcal{K}_a$ is the set of models assigned to aspect $a$, ensuring $\sum_a \lambda_a = 1$. The overall generation process can then be expressed as
\[
\mathcal{G}^{(\mathrm{M_A})}(\cdot \mid v) =
\sum_{a \in \mathcal{A}}\; \sum_{k \in \mathcal{K}_a}
\omega_{k}\, P_{M_k}(\cdot \mid v, a).
\]

In practice, we operationalize this formulation by allocating comment counts across model–aspect pairs according to $\lambda_a$, and further distributing them across aspect-specific length bins as in Setting $S_A$. Each model generates comments conditioned on the video context, its assigned aspect, and the target length bin, and the outputs are aggregated to form the final comment set.

\begin{tcolorbox}[
    colback=blue!10, 
    colframe=blue!40!black, 
    title=Prompt for LLM comment generation (with aspect) : $S_A$ and $M_A$ settings, 
    label = prompt_with_aspect,
    rounded corners
]\footnotesize
\textbf{Task:} Generate \texttt{\{num\_comments\}} diverse, realistic YouTube comments for the following video. The comments must reflect how \textit{human users} are reacting to this topic, based on the provided comment summary.

\vspace{5pt}
\textbf{Video Details}
\begin{itemize}
    \item Channel Name: \texttt{\{channel\_name\}}
    \item Domain: \texttt{\{domain\}}
    \item Video Title: \texttt{\{video\_title\}}
    \item Video Post Date: \texttt{\{video\_post\_date\}}
\end{itemize}

\textbf{Video Summary (topic context):} \\
\texttt{\{video\_summary\}}

\vspace{5pt}
\textbf{Human Comment Summary (Primary Signal):} \\
The following summarizes how real users are reacting in the comment section.
Generated comments \textbf{must} be grounded in these themes, opinions, tones, and conflicts:

\vspace{2pt}
\texttt{\{comment\_summary\}}

\vspace{5pt}
\textbf{Length Guidelines:} \\
You must follow the length distribution below exactly:

\vspace{2pt}
\texttt{\{length\_text\}}

\vspace{5pt}
\textbf{Style / Aspect Guidance:} \\
\texttt{\{style\_block\}}

\vspace{5pt}
\textbf{Generation Rules:}
\begin{itemize}
    \item Generate exactly \texttt{\{num\_comments\}} comments---no more, no fewer.
    \item Comments \textbf{must} be topically aligned with the human comment summary.
    \item Maintain strong diversity:
    \begin{itemize}
        \item Different stances, emotional intensity, wording, and rhetorical styles
        \item Avoid repetitive phrasing or near-duplicates
    \end{itemize}
    \item Comments should sound like they come from different users.
    \item Tone may include emotion, sarcasm, humor, bias, or criticism \textbf{only if supported by the comment summary or aspect guidance}.
    \item Do \textbf{not} introduce facts or themes not implied by the video or comment summary.
    \item Strictly follow the length and style guidelines above.
\end{itemize}

\vspace{5pt}
\textbf{Output Format (Mandatory):} \\
Return the comments \textbf{only} in the following numbered format:

\begin{itemize}
    \item 1. \texttt{<comment\_1>}
    \item 2. \texttt{<comment\_2>}
    \item $\dots$
    \item \texttt{\{num\_comments\}}. \texttt{<comment\_\{num\_comments\}>}
\end{itemize}

Do \textbf{not} include explanations, headers, or any text outside the numbered comments.

\end{tcolorbox}

Compared to all previous settings, this design provides the most comprehensive control over both semantic coverage and generative diversity. It captures heterogeneity across models while enforcing structured alignment with human-derived aspects, allowing us to study the combined effect of model diversity and aspect conditioning in a unified and interpretable framework.

\begin{algorithm}[h]
\caption{Comment Generation for Setting $M_A$: Multi-LLM, With Aspects}
\label{alg:setting_ma_generation}
\begin{algorithmic}

\State \textbf{Input:} Video $v$, aspect set $\mathcal{A}_v=\{a_1,\dots,a_{A_v}\}$, human comment set $\mathcal{H}_v$, model pool $\mathcal{M}=\{M_1,\dots,M_K\}$, target number of comments $n$
\State \textbf{Output:} Generated comment set $\mathcal{G}^{(\mathrm{M_A})}_v=\{g_1,\dots,g_n\}$

\State Select a subset of models $\mathcal{M}_v \subseteq \mathcal{M}$
\Comment{models used for video $v$}

\State Estimate empirical aspect distribution:
\[
\Lambda_v=\{(a_j,\lambda_j)\}_{j=1}^{A_v},
\]
\Statex \hspace{\algorithmicindent} where $\lambda_j$ is the proportion of human comments assigned to aspect $a_j$.

\State Construct a surjective assignment $\sigma_v:\mathcal{M}_v \rightarrow \mathcal{A}_v$
\Comment{each model assigned to exactly one aspect}

\State For each aspect $a_j$, define assigned models:
\[
\mathcal{K}_{v,a_j}=\{M_k \in \mathcal{M}_v : \sigma_v(M_k)=a_j\}.
\]

\State Allocate comment counts across aspect--model pairs:
\[
\mathcal{Q}_v =
\{(M_k,a_j,n_{k,j}) : M_k \in \mathcal{K}_{v,a_j},\ a_j \in \mathcal{A}_v\},
\]
\Statex \hspace{\algorithmicindent} such that $\sum_{k,j} n_{k,j}=n$ and $\sum_{M_k \in \mathcal{K}_{v,a_j}} n_{k,j}\approx n\lambda_j$.

\State $\mathcal{G}^{(\mathrm{M_A})}_v \gets \emptyset$

\For{$(M_k,a_j,n_{k,j}) \in \mathcal{Q}_v$}

    \State Estimate aspect-specific length-bin distribution:
    \[
    \mathcal{L}_{v,a_j}=\{(\ell_b,\pi_{j,b})\}_{b=1}^{B_j},
    \]
    \Statex \hspace{\algorithmicindent} where $\pi_{j,b}$ is the proportion of comments for aspect $a_j$ in length bin $\ell_b$.

    \For{$(\ell_b,\pi_{j,b}) \in \mathcal{L}_{v,a_j}$}
        \State $n_{k,j,b} \gets \lfloor n_{k,j}\cdot \pi_{j,b} \rceil$
        \Comment{comments for model--aspect--length tuple}

        \State $\mathcal{G}_{k,j,b} \sim P_{M_k}^{(n_{k,j,b})}(\cdot \mid v,a_j,\ell_b)$
        \Comment{single prompt conditioned on video, aspect, and length}

        \State $\mathcal{G}^{(\mathrm{M_A})}_v \gets \mathcal{G}^{(\mathrm{M_A})}_v \cup \mathcal{G}_{k,j,b}$
    \EndFor

\EndFor


\State \Return $\mathcal{G}^{(\mathrm{M_A})}_v$

\end{algorithmic}
\end{algorithm}

\subsection{Feature extraction pipeline}
\label{subsec_feature_extractions}

\paragraph{Semantic features.}
Semantic features are designed to capture what comments discuss, independent of surface form or wording. For each human and generated comment, we compute a dense text embedding using \texttt{EmbeddingGemma} (\texttt{google/embeddinggemma-300m}) \citep{vera2025embeddinggemma}, using the default 768-dimensional representation. This choice provides a strong balance between expressiveness and computational efficiency, while remaining consistent across all datasets and generation settings.

These embeddings serve as a unified representation that supports multiple stages of our pipeline. First, they are used for BERTopic-based clustering during aspect extraction, enabling us to identify semantically coherent discussion groups. Second, they are used to measure semantic dispersion within a video through average pairwise dissimilarity among comment embeddings. Third, they support semantic coverage analysis by computing the maximum cosine similarity between each human comment and the generated set. Fourth, they enable distributional alignment comparisons between human and generated comments using metrics such as MMD-RBF. Finally, they are used in clustering and nearest-neighbor computations, including semantic deduplication in the RQ2 data quality pipeline.

By relying on a single, consistent embedding space across all these components, we ensure that semantic comparisons remain comparable and grounded, while providing a comprehensive view of content-level variation and alignment.

\paragraph{Linguistic features.}
Linguistic features capture how comments are written, focusing on form, structure, and stylistic patterns rather than topic or stance. We compute both per-comment descriptors and video-level diversity measures to obtain a comprehensive view of lexical and syntactic variation.

At the per-comment level, we extract a set of descriptive features, including word count, sentence count, mean sentence length, token-level statistics, and part-of-speech (POS) distributions. When available, we also include readability-oriented measures and sentiment scores produced by the text-descriptive pipeline. POS features are represented as distributions over tags for each comment, which are later aggregated at the video level. These aggregated representations allow us to compare human and generated comments in terms of syntactic usage and structural patterns, and are used to assess linguistic coverage and alignment.

At the video level, we compute diversity-oriented features inspired by the \texttt{diversity} package. These include \texttt{compression\_ratio}, \texttt{compression\_ratio\_pos}, \texttt{homogenization\_score}, \texttt{ttr\_1gram}, \texttt{ngram\_4\_diversity}, \texttt{self\_repetition\_score}, \texttt{template\_rate}, and \texttt{templates\_per\_token\_mean}. The compression metrics quantify redundancy in both raw text and POS sequences \citep{shaib-etal-2025-standardizing}, while homogenization \citep{anderson2024homogenization} and self-repetition scores \citep{salkar2022self_repitiion_score} capture the extent to which comments are overly similar. Type-token ratio \citep{herdan1960type} and n-gram diversity measure lexical richness, and template-based metrics estimate the prevalence of repeated syntactic patterns.

Together, these features provide a detailed characterization of linguistic variability. They allow us to assess whether generated comments exhibit natural variation in wording and structure or converge to repetitive or templated expressions, and they are used throughout the evaluation framework for analyzing diversity, coverage, and alignment.

\paragraph{Socio-pragmatic features.}
Socio-pragmatic features capture communicative function, affect, and social meaning in comments. These features are especially important because two comments can be semantically similar while differing in tone, emotion, humor, sarcasm, toxicity, or social framing. We extracted a set of classifier-based pragmatic features for both human and LLM comments. The feature pipeline stores both categorical outputs and continuous confidence/probability scores when the classifier provides them. For example, emotion is stored as a distribution over emotion labels, while humor, hate, sarcasm, and metaphor features are stored as a label plus a scalar score.

\begin{description}

\item[Sentiment]
Overall polarity of the comment derived using VADER sentiment analysis \citep{singh2021efficient} with rule-based thresholds. We assign one of four labels: \texttt{positive}, \texttt{negative}, \texttt{neutral}, or \texttt{mixed}. The \texttt{mixed} label captures comments that contain both positive and negative signals while remaining overall neutral \citep{gaspar2016beyond}.

\item[Emotion]
 Using HuggingFace \texttt{j-hartmann/emotion-english-distilroberta-base} model \citep{hartmann2022emotionenglish} for dominant emotional tone predicted. Each comment is assigned a primary emotion from \texttt{anger}, \texttt{disgust}, \texttt{fear}, \texttt{joy}, \texttt{neutral}, \texttt{sadness}, or \texttt{surprise} \citep{ekman2014expression}.

\item[Humor]
Binary indicator of whether the comment expresses humor, detected using \texttt{Humor-Research/humor-detection-comb-23} \citep{baranov-etal-2023-humor}. Comments are labeled as \texttt{humor} or \texttt{non\_humor}.

\item[Bias]
 Using \texttt{d4data/bias-detection-model} \citep{fang2024bias} for the detection of biased or non-biased language. The model provides a \texttt{bias\_label} and \texttt{bias\_score}, with model-native categories distinguishing biased vs.\ non-biased content. We use \texttt{Non-biased} as the default fallback category.

\item[Sarcasm]
 Using \texttt{mrm8488/t5-base-finetuned-sarcasm-twitter} \citep{romero2020t5sarcasm} for binary detection of sarcastic intent. Each comment is labeled as \texttt{sarcasm} or \texttt{not\_sarcasm}.

\item[Hate Speech]
Detection of toxic or hateful content using \texttt{facebook/roberta-hate-speech-dynabench-r4-target}. The model identifies whether the comment contains hate-related signals.

\item[Metaphor]
Figurative language detection using the token-classification model \texttt{lwachowiak/Metaphor-Detection-XLMR} \citep{wachowiak2022metaphor}. We compute the proportion of metaphor-labeled tokens and assign a binary label: \texttt{metaphor} if any token is metaphorical, otherwise \texttt{no\_metaphor}.

\item[Formality]
Degree of linguistic formality predicted using \texttt{s-nlp/roberta-base-formality-ranker} \citep{babakov2023formality}. Each comment is labeled as \texttt{formal} or \texttt{informal}, along with a corresponding score.

\item[Stereotype]
Detection of stereotypical or anti-stereotypical content using \texttt{wu981526092/Sentence-Level-Stereotype-Detector} \citep{wu2023stereotype}. Labels include \texttt{unrelated}, \texttt{stereotype}, and \texttt{anti\_stereotype} variants across gender, race, profession, and religion.

\end{description}

\subsection{Additional datasets and ablations}
\label{appendix_subsec_ablation}
\paragraph{Ablation study: Setting $S_D$.}
The $S_D$ setting is designed as an ablation of the single-model baseline to isolate the effect of decoding stochasticity. It follows the same generation setup as Setting S, where each video is paired with a single model and the prompt includes video metadata, a summary, the target number of comments, and length constraints derived from human data. Unlike aspect-conditioned settings, $S_D$ does not introduce aspect or style information. The only intervention is to vary the sampling parameters across generation calls.

For each domain dataset (\texttt{news}, \texttt{pop\_culture}, and \texttt{tech}), we use the same prompt and assignment structure as the baseline, but sample a new set of generation parameters for each $(\text{Video\_id}, \text{Model\_id}, \text{Provider}, \text{length bin})$ tuple. The generated outputs are parsed into numbered comments and stored along with metadata, length guidelines, an extraction-success flag, and the sampled \texttt{generation\_settings}. This design ensures that the process is resumable and auditable, since previously completed generations are skipped.

We vary the temperature over the set $\{0.3, 0.5, 0.7, 1.0\}$. When supported, we also sample nucleus parameters with $\texttt{top\_p} \in \{0.7, 0.8, 0.9, 0.95\}$, and for TogetherAI-style providers we optionally include $\texttt{top\_k} \in \{10, 20, 50, 100\}$. Parameter usage follows provider-specific constraints. OpenAI, DeepSeek, and xAI use both temperature and $\texttt{top\_p}$. Google uses temperature and \texttt{topP}. Microsoft/Phi uses temperature only. Anthropic uses either temperature or $\texttt{top\_p}$ when both are available, selecting one per request. Together-hosted models support all three parameters, with defaults applied when not explicitly sampled.

To improve robustness, we request a small surplus of comments in each call ($\texttt{EXTRA\_COMMENTS} = 5$) beyond the target, like previous settings. This helps mitigate formatting or parsing failures when extracting numbered outputs.

Overall, $S_D$ provides a controlled way to test whether increased diversity can be achieved through stochastic decoding alone, without introducing multiple models or aspect conditioning. This allows us to separate the contribution of sampling variability from the structural design choices explored in other settings.

\paragraph{Ablation study: newer videos and LLM as planner.}
Our primary setup derives aspects from human comments on historical videos, which may implicitly benefit LLMs through prior exposure. To evaluate generalization, we design an ablation on newer videos where such signals are not available. In this setting, aspects are not extracted from human comments but instead inferred using an LLM acting as a planner. This setup reflects a realistic deployment scenario where comments must be generated for newly published content without prior discussion data.

We consider videos published between \texttt{2025-11-01} and \texttt{2026-01-06}, which fall beyond the pretraining cutoff of the models used in our study. \textbf{We explicitly disable web search and tool use during both aspect planning and comment generation to prevent access to post-cutoff knowledge}. Videos are collected from the same set of channels as in the main study, using identical filtering criteria to ensure consistency. We select approximately 100 videos per domain and apply the same preprocessing pipeline to extract and clean comments. This results in 18,509, 15,788, and 7,771 comments for the news, pop culture, and tech domains, respectively.

\begin{tcolorbox}[
    colback=blue!10,
    colframe=blue!40!black,
    title=Prompt for aspect generation using LLM without human comments,
    label=prompt_aspect_generation,
    rounded corners
]\footnotesize

\textbf{Task:} You are an expert analyst of online comment behavior and discussion dynamics.

\vspace{4pt}
\textbf{Important Constraints:}
\begin{itemize}
    \item The video was published after your training cutoff.
    \item Do \textbf{not} use web search or assume real-world knowledge.
    \item Reason only from the provided metadata and general human behavior.
    \item Goal: simulate realistic, diverse comment discussions.
\end{itemize}

\vspace{4pt}
\textbf{Video Information:}
\begin{itemize}
    \item Title: \texttt{\{video\_title\}}
    \item Channel: \texttt{\{channel\_info\}}
    \item Summary: \texttt{\{video\_summary\}}
    \item Post Date: \texttt{\{video\_post\_date\}}
\end{itemize}

\vspace{4pt}
\textbf{Comment Target:}
\begin{itemize}
    \item Total comments (approx.): \texttt{\{approx\_num\_comments\}}
    \item Number of clusters: \texttt{\{num\_clusters\}}
    \item Length statistics: \texttt{\{length\_stats\}}
\end{itemize}

\vspace{4pt}
\textbf{Task Description:}
Infer \texttt{\{num\_clusters\}} realistic discussion clusters for first-time viewers.

\textbf{Zero-shot setting:}
\begin{itemize}
    \item No prior comments or labels
    \item No external knowledge
\end{itemize}

Clusters should:
\begin{itemize}
    \item Capture diverse perspectives and engagement styles
    \item Be imbalanced (2--3 dominant clusters)
    \item Include both shallow reactions and deeper discussions
\end{itemize}

\vspace{4pt}
\textbf{For Each Cluster, Provide:}
\begin{itemize}
    \item \textbf{cluster\_id:} short, human-readable title
    \item \textbf{cluster\_description:} main discussion themes
    \item \textbf{style\_description:} sentiment, tone, formality, stereotypes, mild toxicity (if any)
    \item \textbf{number\_of\_comments:}
    \begin{itemize}
        \item Sum across clusters must equal \texttt{\{approx\_num\_comments\}}
        \item Each cluster must have $\geq 5$ comments
    \end{itemize}
    \item \textbf{length\_guideline:} min, 25\%, 50\%, 75\%, max
\end{itemize}

\vspace{4pt}
\textbf{Global Consistency Requirements:}
\begin{itemize}
    \item Total comments must equal \texttt{\{approx\_num\_comments\}}
    \item Weighted length distributions should approximately match global statistics
    \item Do not copy global stats into every cluster
    \item Short clusters should be concise; analytical clusters longer
\end{itemize}

\vspace{4pt}
\textbf{Output Format:}
Return \textbf{only} a valid JSON array of length \texttt{\{num\_clusters\}}.  
Do not include explanations or extra text.

\end{tcolorbox}

To generate aspects in this zero-shot setting, we use \texttt{GPT-5.2} as a planner model, which is not included in the generator pool. The planner receives only video metadata, summary, and global length statistics derived from human comments. It produces a set of discussion clusters with associated descriptions, style characteristics, comment counts, and length guidelines. This design ensures that aspect inference is grounded in general patterns of human behavior rather than memorized content.

For generations, we have focused on the M, $S_A$, and $M_A$ settings, as Setting S showed limited diversity in earlier experiments. The M setting generates comments from multiple models using only video context, while $S_A$ and $M_A$ incorporate the planner-generated aspects and length guidelines. The prompt structure remains consistent with the original setup to isolate the effect of the aspect source. Box \ref{prompt_aspect_generation} presents the prompt used for the planner model.

This ablation provides a comprehensive test of whether our framework can generalize beyond historical data, and whether LLM-inferred aspects can effectively substitute for human-derived discussion structure in unseen scenarios.

\subsection{Evaluation metrics}
\label{app_evaluation_metric}
Evaluating diversity in LLM-generated social discourse requires moving beyond single-metric assessments and accounting for multiple facets of comment-set behavior. 
We compute all metrics at the video level, since a video provides a natural boundary for the comment space. Comments under the same video are grounded in the same context, which makes diversity, coverage, and alignment comparisons meaningful. We use $H_v$ to denote the set of human comments for video $v$, and $M_v$ to denote the generated comments from a given LLM setting for the same video. A comment is denoted by $c$, and $\phi(c)$ denotes its semantic embedding from \texttt{EmbeddingGemma} \citep{vera2025embeddinggemma}. Unless otherwise specified, higher values indicate better performance. For coverage and alignment, all comparisons are made with respect to the human comment set $H_v$, because our goal is to assess how well each LLM setting captures or matches the human discussion distribution.

\paragraph{Semantic Dispersion.}
Semantic dispersion measures how broadly comments within a video spread across the semantic space. We use embedding-based pairwise dissimilarity because it directly captures topical and meaning-level variation beyond surface word overlap. This is a common choice for measuring diversity in generated text \citep{cann2023using, guo2025benchmarking, zhang2025verbalized}, and it is especially suitable for comments where different wordings may express similar meanings. For a comment set $X_v = \{c_1,\dots,c_N\}$, we define semantic dispersion as

\[
\mathcal{D}_{\text{sem}}(X_v)
=
\frac{1}{N(N-1)}
\sum_{i \neq j}
\left(
1 -
\max\left(0, \cos\big(\phi(c_i), \phi(c_j)\big)\right)
\right).
\]

Here, $N$ is the number of comments in $X_v$, and $\cos(\cdot,\cdot)$ is cosine similarity. We clip negative cosine similarity to zero to avoid over-rewarding comments that are simply opposite in the embedding space rather than meaningfully diverse.

\paragraph{Linguistic Dispersion.}
Linguistic dispersion captures how varied the comments are in wording, lexical choice, and syntactic structure. Unlike semantic dispersion, which focuses on meaning, this group of metrics measures whether generated comments collapse into repeated phrases, templates, or POS patterns. We use several complementary metrics because no single linguistic statistic fully captures lexical and syntactic variety.

First, we compute n-gram diversity, following prior work on generation diversity \citep{padmakumar2024does}. Let $X_v^{\oplus}$ denote the concatenation of all comments in $X_v$, and let $\mathcal{N}_n(X_v^{\oplus})$ be the multiset of $n$-grams. We compute average n-gram diversity across $n=1$ to $4$ as

\[
\mathrm{NGD}(X_v)
=
\frac{1}{4}
\sum_{n=1}^{4}
\frac{
|\mathrm{unique}(\mathcal{N}_n(X_v^{\oplus}))|
}{
|\mathcal{N}_n(X_v^{\oplus})|
}.
\]

We also use the self-repetition score \citep{salkar2022self_repitiion_score}, which penalizes repeated expressions within the text. If $N_i$ denotes the repetition count for repeated unit $i$ across $K$ repeated units, then

\[
\mathrm{SRS}(X_v)
=
\log
\left(
\sum_{i=1}^{K} (N_i + 1)
\right).
\]

To capture redundancy at the sequence level, we compute the compression ratio over both raw text and POS-tag sequences \citep{shaib-etal-2025-standardizing}. Let $\mathrm{comp}(\cdot)$ denote a compression function. We define

\[
\mathrm{CR}(X_v)
=
\frac{
\mathrm{size}(X_v^{\oplus})
}{
\mathrm{size}(\mathrm{comp}(X_v^{\oplus}))
}.
\]

A higher compression ratio indicates more redundancy, since repetitive text is easier to compress. We also compute the same measure over POS-tag sequences to capture syntactic repetition.

Finally, we compute a homogenization score based on pairwise similarity between comments, inspired by ROUGE-style similarity measures \citep{lin2004rouge}. For a comment-level similarity function $\mathrm{sim}(\cdot,\cdot)$, we define

\[
\mathrm{Hom}(X_v)
=
\frac{1}{|X_v|(|X_v|-1)}
\sum_{\substack{c,c' \in X_v \\ c \neq c'}}
\mathrm{sim}(c,c').
\]

Together, these metrics allow us to distinguish genuinely varied language from outputs that are semantically different but linguistically repetitive, or linguistically diverse but semantically narrow.

\paragraph{Pragmatic Dispersion.}
Pragmatic dispersion measures variation in socio-pragmatic attributes such as sentiment, emotion, formality, humor, sarcasm, bias, stereotype, and metaphor. These features capture how comments communicate, not only what they discuss. We use the Simpson diversity index because it is widely used in ecology and linguistics to measure categorical evenness \citep{simpson1949measurement}. It can be interpreted as the probability that two randomly selected comments differ in their pragmatic category.

For a pragmatic feature $r$ with category set $\mathcal{Y}_r$, let $p_y$ be the proportion of comments in category $y$. The Simpson diversity for feature $r$ is

\[
\mathcal{D}_{\text{prag}}^{(r)}(X_v)
=
1 -
\sum_{y \in \mathcal{Y}_r} p_y^2.
\]

When multiple pragmatic features are used, we average across the feature set $\mathcal{R}$:

\[
\mathcal{D}_{\text{prag}}(X_v)
=
\frac{1}{|\mathcal{R}|}
\sum_{r \in \mathcal{R}}
\mathcal{D}_{\text{prag}}^{(r)}(X_v).
\]

This gives a compact measure of whether a comment set contains a range of communicative styles and social signals, rather than concentrating on a few dominant categories.

\paragraph{Semantic Coverage.}
Semantic coverage asks how much of the human discussion space is captured by an LLM setting. Unlike dispersion, which only measures variation within generated comments, coverage explicitly compares generated comments against the human distribution. Our primary metric is weighted manifold recall, adapted from manifold precision and recall \citep{kynkaanniemi2019manifold_precision}. The rationale is that human comments often form clusters corresponding to topical themes or discussion modes, and a good generated set should cover not only frequent themes but also less dominant ones.

We first cluster the human comments $H_v$ in the embedding space using K-means. Let $\mathcal{K}(H_v)$ be the set of human clusters, and let $w_k$ be the number of human comments in cluster $k$. Let $\mathcal{K}_k$ denote the region of semantic space represented by cluster $k$. The weighted semantic coverage of $M_v$ with respect to $H_v$ is

\[
\mathcal{C}_{\text{sem}}^{\text{manifold}}(M_v \mid H_v)
=
\frac{
\sum_{k \in \mathcal{K}(H_v)}
\mathbf{1}[\mathcal{K}_k \cap M_v \neq \emptyset] \cdot w_k
}{
\sum_{k \in \mathcal{K}(H_v)} w_k
}.
\]

Here, $\mathbf{1}[\cdot]$ is an indicator function. A cluster is considered covered if at least one generated comment falls into that cluster region. The cluster weights ensure that coverage reflects the empirical importance of each human discussion region.

We also compute MaxSim recall as a softer nearest-neighbor measure. For each human comment, we find the most similar generated comment in the same video and average these similarities:

\[
\mathcal{C}_{\text{sem}}^{\text{MaxSim}}(M_v \mid H_v)
=
\frac{1}{|H_v|}
\sum_{h \in H_v}
\max_{m \in M_v}
\cos\big(\phi(h), \phi(m)\big).
\]

Finally, we compute thresholded recall. For each video, we choose a video-specific threshold $\tau_v$ using an elbow method over the human-to-generated similarity distribution. This avoids imposing a single global threshold across videos with different semantic breadth. The metric is

\[
\mathcal{C}_{\text{sem}}^{\tau}(M_v \mid H_v)
=
\frac{1}{|H_v|}
\sum_{h \in H_v}
\mathbf{1}
\left[
\max_{m \in M_v}
\cos\big(\phi(h), \phi(m)\big)
\geq
\tau_v
\right].
\]

Together, these coverage metrics measure whether generated comments recover the semantic regions occupied by human comments, both at the cluster level and at the individual comment level.

\paragraph{Linguistic Coverage.}
Linguistic coverage measures whether generated comments reproduce the linguistic patterns observed in human comments. We focus on POS patterns and n-gram patterns because they capture complementary aspects of language use: POS reflects syntactic structure, while n-grams reflect lexical and phrase-level conventions. Our primary linguistic coverage score is the average of POS coverage and n-gram coverage.

Let $\mathcal{P}(H_v)$ and $\mathcal{P}(M_v)$ denote the sets of POS patterns observed in human and generated comments, respectively. Let $f_H(g)$ be the frequency of pattern $g$ in human comments. POS coverage is

\[
\mathcal{C}_{\text{ling}}^{\text{POS}}(M_v \mid H_v)
=
\frac{
\sum_{g \in \mathcal{P}(H_v)}
\mathbf{1}[g \in \mathcal{P}(M_v)] \cdot f_H(g)
}{
\sum_{g \in \mathcal{P}(H_v)} f_H(g)
}.
\]

Similarly, let $\mathcal{G}(H_v)$ and $\mathcal{G}(M_v)$ denote the sets of n-grams observed in human and generated comments. N-gram coverage is

\[
\mathcal{C}_{\text{ling}}^{\text{ngram}}(M_v \mid H_v)
=
\frac{
\sum_{g \in \mathcal{G}(H_v)}
\mathbf{1}[g \in \mathcal{G}(M_v)] \cdot f_H(g)
}{
\sum_{g \in \mathcal{G}(H_v)} f_H(g)
}.
\]

We combine the two as

\[
\mathcal{C}_{\text{ling}}(M_v \mid H_v)
=
\frac{1}{2}
\left(
\mathcal{C}_{\text{ling}}^{\text{POS}}(M_v \mid H_v)
+
\mathcal{C}_{\text{ling}}^{\text{ngram}}(M_v \mid H_v)
\right).
\]

This metric gives more weight to frequent human patterns, so it measures whether generated comments cover the linguistic structures that are actually common in human discussion.

\paragraph{Pragmatic Coverage.}
Pragmatic coverage measures whether generated comments capture the socio-pragmatic clusters present in human comments. We use a procedure analogous to semantic manifold recall, but instead of clustering embeddings, we cluster human comments using pragmatic feature vectors. Since these features are categorical, we use K-Modes clustering. This choice is appropriate because it respects categorical feature structure and avoids imposing continuous distance assumptions.

Let $\mathcal{Z}(H_v)$ be the set of pragmatic clusters obtained from human comments, and let $w_z$ be the number of human comments in pragmatic cluster $z$. Let $\mathcal{Z}_z$ denote the region or cluster assignment corresponding to $z$. Pragmatic coverage is defined as

\[
\mathcal{C}_{\text{prag}}(M_v \mid H_v)
=
\frac{
\sum_{z \in \mathcal{Z}(H_v)}
\mathbf{1}[\mathcal{Z}_z \cap M_v \neq \emptyset] \cdot w_z
}{
\sum_{z \in \mathcal{Z}(H_v)} w_z
}.
\]

This metric asks whether generated comments cover the human distribution of communicative styles, affective patterns, and social signals, rather than only matching topical content.

\paragraph{Semantic Alignment.}
Semantic alignment measures how closely the generated semantic distribution matches the human semantic distribution. We use \textbf{Maximum Mean Discrepancy (MMD) with an RBF kernel} as the primary metric because it is non-parametric, theoretically grounded, and does not require covariance estimation. This is important because the embedding space is high-dimensional, with 768 dimensions, while each video has a moderate number of comments. In this setting, metrics such as JSD are not appropriate without additional discretization or density estimation.

Let $h,h' \sim H_v$ be human comments and $m,m' \sim M_v$ be generated comments. Let $k(\cdot,\cdot)$ denote the RBF kernel applied to embeddings. The squared MMD is

\[
\begin{aligned}
\mathrm{MMD}^2(H_v,M_v)
&=
\mathbb{E}_{h,h' \sim H_v}
\left[
k\big(\phi(h),\phi(h')\big)
\right]
-
2\mathbb{E}_{h \sim H_v,\,m \sim M_v}
\left[
k\big(\phi(h),\phi(m)\big)
\right] \\
&\quad+
\mathbb{E}_{m,m' \sim M_v}
\left[
k\big(\phi(m),\phi(m')\big)
\right].
\end{aligned}
\]

Lower values indicate stronger semantic alignment between human and generated comments. 

\paragraph{Linguistic Alignment.}
Linguistic alignment measures whether generated comments match the distribution of human linguistic features. We focus on three compact feature distributions: POS categories, top frequent n-grams, and word-length distribution. These feature spaces are small enough to support stable distributional comparison, so we use Jensen-Shannon divergence (JSD).

Let $\mathrm{JSD}_{\text{POS}}$, $\mathrm{JSD}_{\text{ngram}}$, and $\mathrm{JSD}_{\text{length}}$ denote the Jensen-Shannon divergence between human and generated distributions for POS, n-gram, and length features, respectively. We define

\[
\mathcal{A}_{\text{ling}}(M_v \mid H_v)
=
\frac{1}{3}
\left(
\mathrm{JSD}_{\text{POS}}
+
\mathrm{JSD}_{\text{ngram}}
+
\mathrm{JSD}_{\text{length}}
\right).
\]

For each feature type, JSD is computed as

\[
\mathrm{JSD}(P \parallel Q)
=
H\left(\frac{P+Q}{2}\right)
-
\frac{1}{2}H(P)
-
\frac{1}{2}H(Q),
\]

where $P$ and $Q$ are the human and generated feature distributions, and $H(\cdot)$ is Shannon entropy. Lower values indicate better linguistic alignment.

\paragraph{Pragmatic Alignment.}
Pragmatic alignment measures how closely the generated comments match human comments across socio-pragmatic feature distributions. Since each pragmatic feature is represented as a categorical distribution, JSD is a natural choice: it has a closed form, is symmetric, and does not require binning or density estimation.

Let $\mathcal{R}$ be the set of pragmatic features. For each feature $r \in \mathcal{R}$, let $P_r^{H_v}$ and $P_r^{M_v}$ denote the human and generated categorical distributions. The JSD for feature $r$ is

\[
\mathrm{JSD}_r(P_r^{H_v} \parallel P_r^{M_v})
=
H\left(
\frac{P_r^{H_v}+P_r^{M_v}}{2}
\right)
-
\frac{1}{2}H(P_r^{H_v})
-
\frac{1}{2}H(P_r^{M_v}).
\]

We then average across all pragmatic dimensions:

\[
\mathcal{A}_{\text{prag}}(M_v \mid H_v)
=
\frac{1}{|\mathcal{R}|}
\sum_{r \in \mathcal{R}}
\mathrm{JSD}_r(P_r^{H_v} \parallel P_r^{M_v}).
\]

Lower values indicate that generated comments more closely match the human distribution of sentiment, emotion, style, and other socio-pragmatic signals. Together with semantic and linguistic alignment, this provides a comprehensive view of whether generated comments match human comments not only in topic, but also in form and communicative function.

\section{RQ1 detailed findings}

\subsection{Empirical validation of modeling assumptions}
\label{subsec_empirical_validation}

In this section, we empirically validate Assumption \ref{ass:separation}, which posits lower cross-model similarity and higher within-model similarity. This assumption forms the basis of our theoretical motivation for diversity gains in multi-LLM and aspect-conditioned generation settings.

To examine this, we conduct a proof-of-concept experiment using all $K$ models listed in Table \ref{tab:model_list}. Each model is prompted with identical, content-neutral instructions to generate text. We evaluate this behavior in two complementary settings: (i) our task-specific setting of comment generation conditioned on video context, and (ii) a generic text generation setting.

For the task-specific setup, we sample 10 videos from each domain and instruct each model to generate 10 diverse comments per video under controlled length constraints. For the generic setting, we use 50 topics from the \texttt{agentlans/wikipedia-paragraphs}\footnote{\url{https://huggingface.co/datasets/agentlans/wikipedia-paragraphs}} dataset and prompt each model to generate 5 paragraphs (each consisting of approximately 10 sentences) per topic. This dual setup allows us to assess whether the observed behavior generalizes across both structured and open-ended generation tasks. We quantify similarity using cosine similarity over text embeddings. As shown in Figure~\ref{fig_appendix_proof_concept_model}, within-model similarity is consistently higher than cross-model similarity, and the difference is statistically significant (Mann--Whitney U test, $p < 0.001$) across both settings. This result supports our assumption that different models tend to produce distinct distributions even under identical prompts.
We also observe that overall similarity scores are lower in the comment generation setting compared to the generic paragraph generation setting. This suggests that the comment space is inherently more diverse, which further motivates the use of multiple models to better capture this diversity.

To further validate this observation, we analyze similarity at the provider level, as shown in Figure~\ref{fig_appendix_proof_concept_provider}. The results indicate that cross-provider similarity is significantly lower than within-provider similarity, reinforcing the idea that diversity increases when combining models from different providers. This trend is also consistent with our large-scale experiments (see Discussion, Point 2).
Finally, we compare our findings with results reported in \citep{jiangartificial}, which uses the \texttt{InfinityChat} dataset to analyze cross-model similarity. The observed trends align closely, providing additional external validation for our assumptions.

Overall, this proof-of-concept study provides strong empirical support for our modeling assumptions and justifies the design of our multi-LLM and aspect-based generation framework.

\begin{figure}[h]
    \centering
    \begin{subfigure}[t]{0.48\textwidth}
        \centering
        \includegraphics[width=\linewidth]{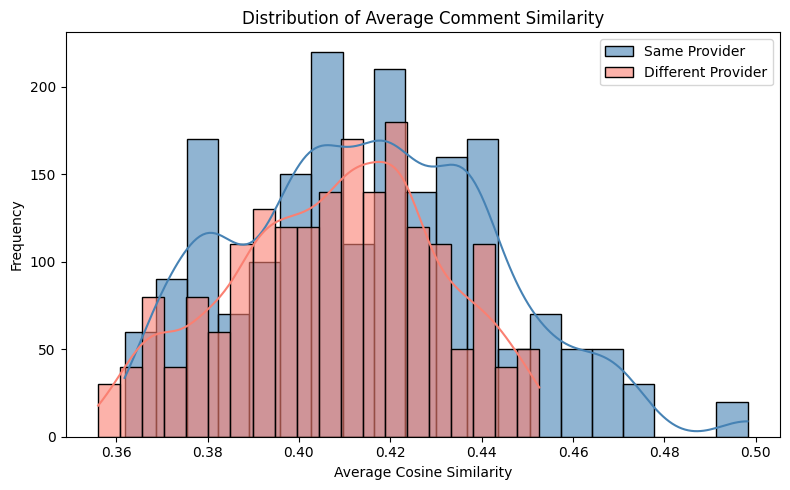}
    \end{subfigure}
    \hfill
    \begin{subfigure}[t]{0.48\textwidth}
        \centering
        \includegraphics[width=\linewidth]{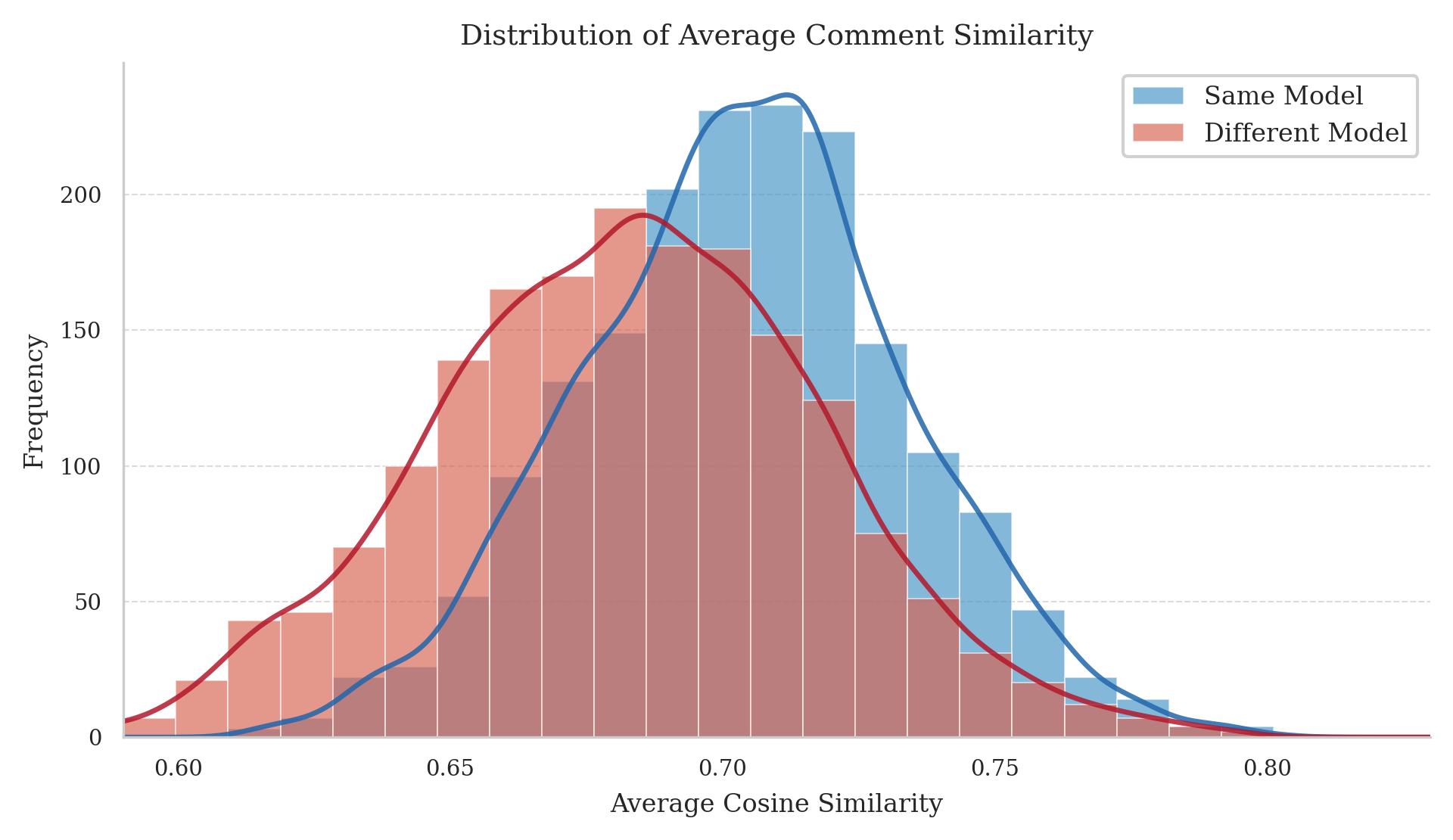}
    \end{subfigure}

    \caption{Distribution of text similarity in the proof-of-concept analysis. \textbf{Left:} Pairwise similarity of comments generated for the same video, comparing within-model and cross-model outputs. \textbf{Right:} Pairwise similarity of paragraphs generated on the same topic, comparing within-model and cross-model outputs.}
    \label{fig_appendix_proof_concept_model}
\end{figure}

\begin{figure}[h]
    \centering
    \begin{subfigure}[t]{0.48\textwidth}
        \centering
        \includegraphics[width=\linewidth]{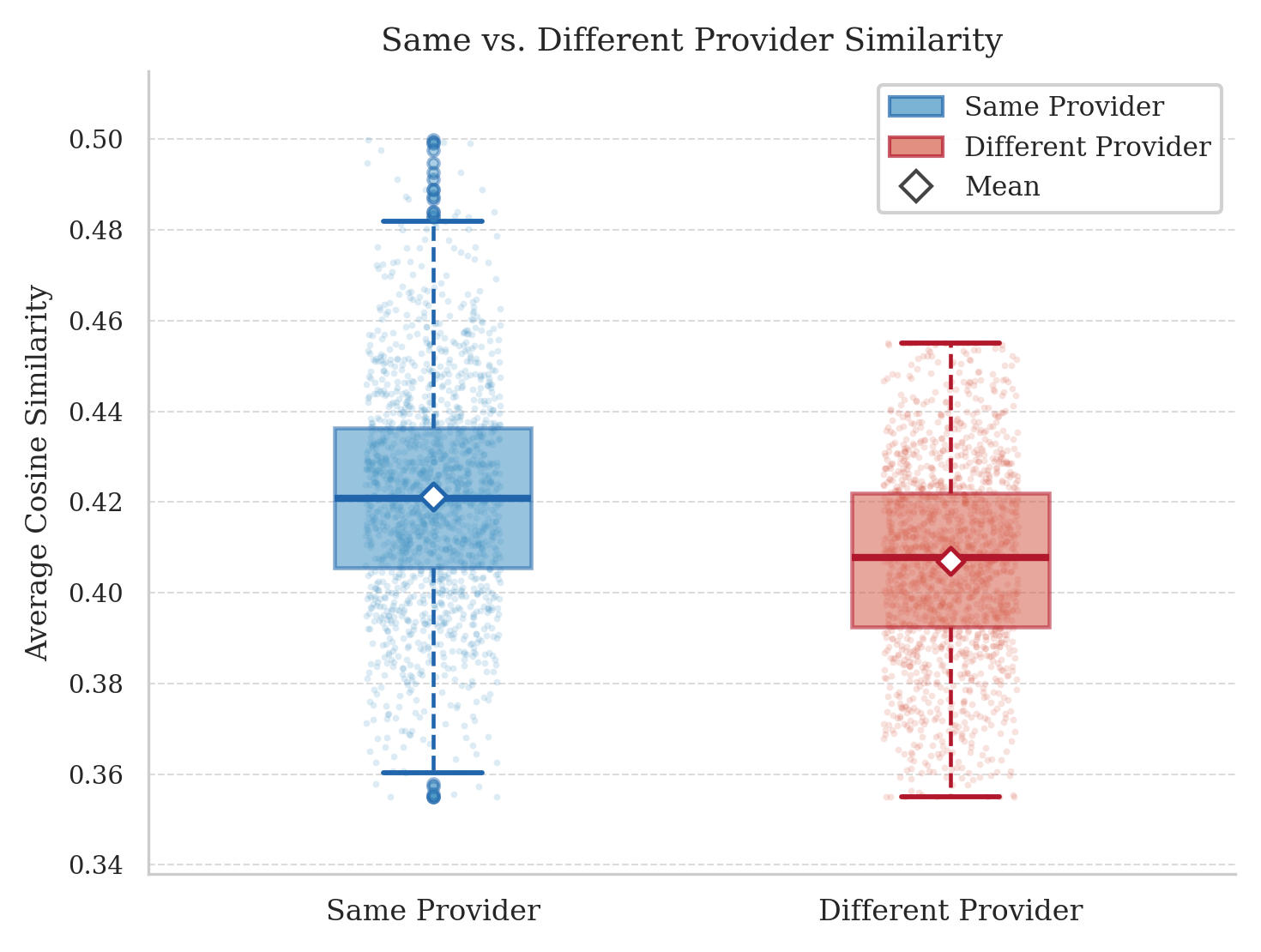}
    \end{subfigure}
    \hfill
    \begin{subfigure}[t]{0.48\textwidth}
        \centering
        \includegraphics[width=\linewidth]{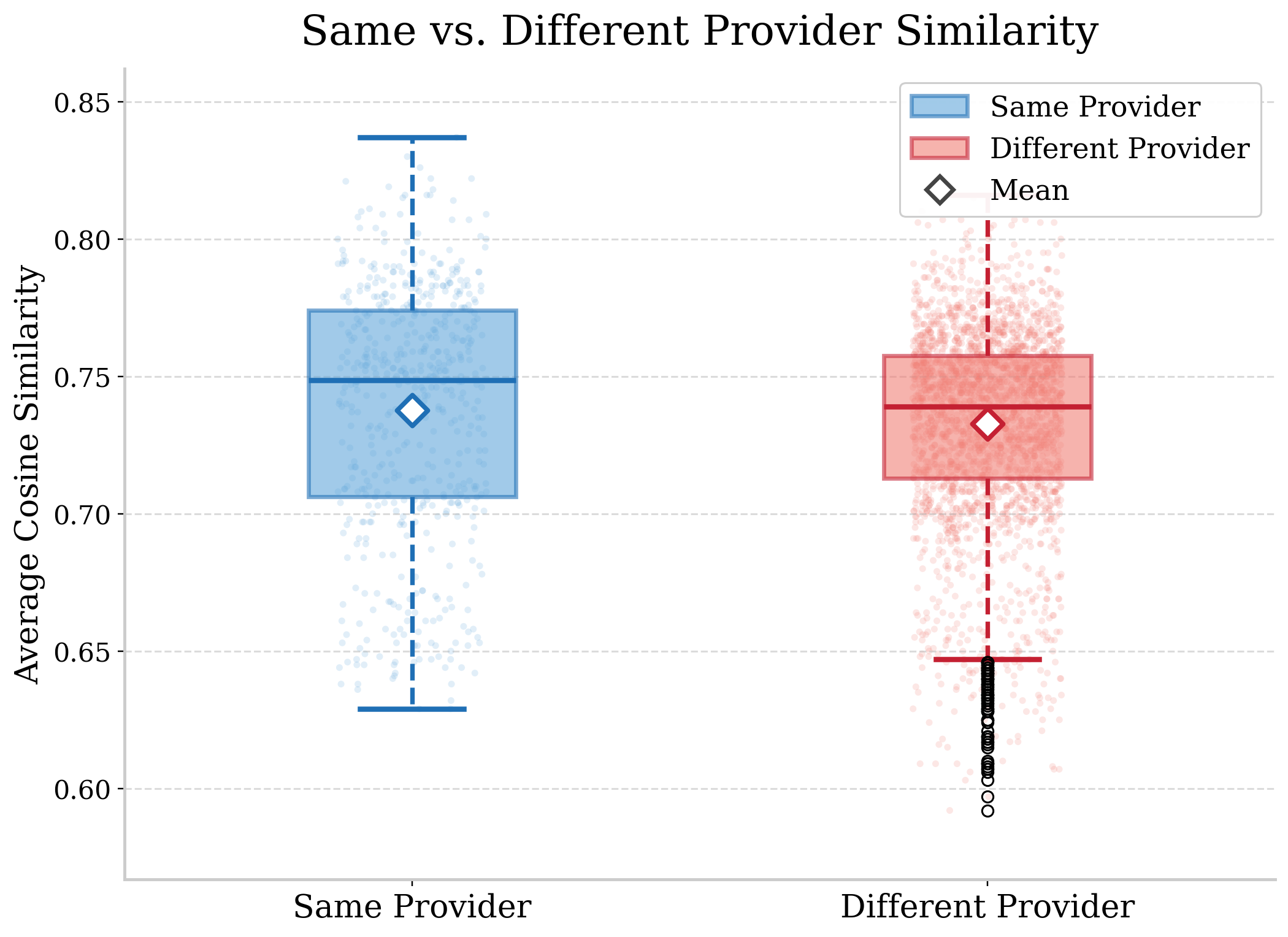}
    \end{subfigure}

\caption{Distribution of provider-level similarity in the proof-of-concept analysis. \textbf{Left:} Similarity between models grouped by provider, computed from comment generation on the same videos (derived from Figure~\ref{fig_appendix_proof_concept_model}). \textbf{Right:} Provider-level similarity (same vs.\ different) based on model similarity reported in~\citep{jiangartificial} (Table 7-11).}
    \label{fig_appendix_proof_concept_provider}
\end{figure}

\subsection{Detailed result analysis}
\label{appendix_sec_detailed_rq1}
In this subsection, we provide a detailed analysis of the generated comment distributions across the three feature spaces considered in our study: semantic, linguistic, and pragmatic. For each feature space, we evaluate the generated comments along the three complementary evaluation axes introduced in Section~\ref{app_evaluation_metric}: dispersion, coverage, and alignment. We further analyze the results separately for each domain (\textit{news}, \textit{pop\_culture}, and \textit{tech}) in addition to reporting aggregate results across all domains. This fine-grained breakdown allows us to examine not only the overall effectiveness of each generation strategy, but also how their behavior varies across different conversational environments and feature representations.

All metrics are computed independently for each video and then aggregated across videos to report the mean and standard deviation. To assess whether differences between two methods are statistically significant, we perform two-tailed paired $t$-tests on the per-video metric values, using a significance threshold of $p < 0.05$. In the result tables, ordering relationships between methods are reported only when the corresponding differences are statistically significant; methods connected by ``$\sim$'' indicate that no statistically significant difference was observed. This evaluation protocol enables a more reliable comparison of generation strategies by accounting for variability across videos rather than relying solely on aggregate scores. We report the results from our primary settings ($S, M, S_A, M_A$ and the ablation $S_D$ as well since they are computed on the same datasets).

\subsubsection{Semantic features.}

Across all three semantic evaluation axes, aspect conditioning emerges as the single most consequential factor in narrowing the gap between synthetic and human comment diversity (Tables~\ref{tab:semantic_dispersion}, \ref{tab:semantic_coverage}, and \ref{tab:semantic_alignment}). For semantic dispersion (Table~\ref{tab:semantic_dispersion}), human comments consistently achieve the highest scores ($0.65$ across all domains), while the $M_A$ setting comes closest at $0.62$, followed by $S_A$ at $0.61$. The gap narrows further in the \textit{pop\_culture} domain, where $M_A$ nearly matches human-level diversity ($0.63$ vs.\ $0.64$). In contrast, single-model generation without conditioning (S) produces the lowest dispersion scores, and the decoding-focused variant $S_D$ offers little improvement over S. This suggests that decoding specialization alone is insufficient to substantially widen the semantic spread of generated comments. Notably, the ordering $H > M_A > S_A > M > S_D > S$ holds consistently across all three domains, indicating that the hierarchy reflects a stable property of the generation strategies rather than a domain-specific artifact.

\begin{table*}[h]
\caption{Semantic dispersion results across datasets. Higher values indicate greater semantic diversity.}
\label{tab:semantic_dispersion}
\centering
\scriptsize
\setlength{\tabcolsep}{3pt}
    \resizebox{\columnwidth}{!}{
\begin{tabular}{lccccccp{4.2cm}}
\toprule
\textbf{Dataset} & \textbf{H} & \textbf{S} & \textbf{$S_D$} & \textbf{M} & \textbf{$S_A$} & \textbf{$M_A$} & \textbf{Comment} \\
\midrule
News & $0.65 \pm 0.03$ & $0.56 \pm 0.04$ & $0.57 \pm 0.04$ & $0.59 \pm 0.03$ & $0.59 \pm 0.03$ & $0.60 \pm 0.03$ & H $>$ $M_A$ $>$ $S_A$ $>$ M $>$ $S_D$ $>$ S \\
Pop & $0.64 \pm 0.03$ & $0.56 \pm 0.04$ & $0.56 \pm 0.04$ & $0.59 \pm 0.03$ & $0.62 \pm 0.03$ & $0.63 \pm 0.03$ & H $>$ $M_A$ $>$ $S_A$ $>$ M $>$ $S_D$ $>$ S \\
Tech & $0.64 \pm 0.03$ & $0.56 \pm 0.04$ & $0.56 \pm 0.04$ & $0.58 \pm 0.03$ & $0.60 \pm 0.03$ & $0.61 \pm 0.03$ & H $>$ $M_A$ $>$ $S_A$ $>$ M $>$ $S_D$ $>$ S \\
All & $0.65 \pm 0.03$ & $0.56 \pm 0.04$ & $0.57 \pm 0.04$ & $0.59 \pm 0.03$ & $0.61 \pm 0.03$ & $0.62 \pm 0.03$ & H $>$ $M_A$ $>$ $S_A$ $>$ M $>$ $S_D$ $>$ S \\
\bottomrule
\end{tabular}
}
\end{table*}

The semantic coverage and alignment metrics further reinforce this pattern while revealing an important distinction between the contributions of aspect conditioning and multi-model diversity. As shown in Table~\ref{tab:semantic_coverage}, the aspect-conditioned settings ($S_A$ and $M_A$) achieve manifold recall scores ranging from approximately $0.90$ to $0.97$ across domains, substantially outperforming the non-conditioned settings, which range from $0.53$ to $0.79$. Importantly, the gap between conditioned and non-conditioned settings is far larger than the relatively small differences between $S_A$ and $M_A$, suggesting that aspect conditioning is the primary driver of semantic coverage gains, while multi-model diversity contributes a smaller secondary improvement.

\begin{table*}[h]
\caption{Semantic coverage results across datasets. Higher values indicate better coverage of the human semantic space.}
\label{tab:semantic_coverage}
\centering
\scriptsize
\setlength{\tabcolsep}{4pt}
    \resizebox{\columnwidth}{!}{
\begin{tabular}{llcccccp{4cm}}
\toprule
\textbf{Dataset} & \textbf{Metric} & \textbf{S} & \textbf{$S_D$} & \textbf{M} & \textbf{$S_A$} & \textbf{$M_A$} & \textbf{Comment} \\
\midrule
News & Manifold recall $\uparrow$ & $0.69 \pm 0.33$ & $0.53 \pm 0.31$ & $0.76 \pm 0.54$ & $0.90 \pm 0.18$ & $0.90 \pm 0.17$ & $M_A \sim S_A > M > S \sim S_D$ \\
News & Semantic recall (MaxSim) $\uparrow$ & $0.54 \pm 0.04$ & $0.54 \pm 0.04$ & $0.55 \pm 0.04$ & $0.59 \pm 0.04$ & $0.59 \pm 0.04$ & $S_A \sim M_A > M > S_D > S$ \\
News & Semantic recall @ $\tau$ $\uparrow$ & $0.26 \pm 0.11$ & $0.27 \pm 0.11$ & $0.30 \pm 0.10$ & $0.44 \pm 0.14$ & $0.44 \pm 0.12$ & $S_A \sim M_A > M > S_D > S$ \\
\midrule
Pop\_culture & Manifold recall $\uparrow$ & $0.72 \pm 0.31$ & $0.73 \pm 0.30$ & $0.75 \pm 0.29$ & $0.93 \pm 0.14$ & $0.93 \pm 0.14$ & $M_A \sim S_A > M \sim S_D \sim S$ \\
Pop\_culture & Semantic recall (MaxSim) $\uparrow$ & $0.56 \pm 0.04$ & $0.57 \pm 0.04$ & $0.58 \pm 0.04$ & $0.62 \pm 0.04$ & $0.62 \pm 0.04$ & $S_A \sim M_A > M > S_D > S$ \\
Pop\_culture & Semantic recall @ $\tau$ $\uparrow$ & $0.21 \pm 0.10$ & $0.24 \pm 0.10$ & $0.25 \pm 0.09$ & $0.39 \pm 0.14$ & $0.37 \pm 0.12$ & $S_A > M_A > M > S_D > S$ \\
\midrule
Tech & Manifold recall $\uparrow$ & $0.78 \pm 0.28$ & $0.78 \pm 0.29$ & $0.79 \pm 0.28$ & $0.96 \pm 0.11$ & $0.97 \pm 0.09$ & $M_A \sim S_A > M \sim S \sim S_D$ \\
Tech & Semantic recall (MaxSim) $\uparrow$ & $0.57 \pm 0.03$ & $0.57 \pm 0.03$ & $0.58 \pm 0.03$ & $0.62 \pm 0.03$ & $0.62 \pm 0.03$ & $M_A \sim S_A > M > S_D > S$ \\
Tech & Semantic recall @ $\tau$ $\uparrow$ & $0.22 \pm 0.09$ & $0.24 \pm 0.10$ & $0.27 \pm 0.09$ & $0.38 \pm 0.12$ & $0.38 \pm 0.10$ & $M_A \sim S_A > M > S_D > S$ \\
\midrule
All & Manifold recall $\uparrow$ & $0.73 \pm 0.31$ & $0.76 \pm 0.30$ & $0.78 \pm 0.29$ & $0.92 \pm 0.16$ & $0.93 \pm 0.15$ & $M_A \sim S_A > M \sim S_D > S$ \\
All & Semantic recall (MaxSim) $\uparrow$ & $0.56 \pm 0.04$ & $0.56 \pm 0.04$ & $0.56 \pm 0.04$ & $0.60 \pm 0.04$ & $0.60 \pm 0.04$ & $M_A \sim S_A > M > S_D > S$ \\
All & Semantic recall @ $\tau$ $\uparrow$ & $0.23 \pm 0.10$ & $0.25 \pm 0.11$ & $0.29 \pm 0.10$ & $0.38 \pm 0.14$ & $0.38 \pm 0.12$ & $S_A \sim M_A > M > S_D > S$ \\
\bottomrule
\end{tabular}
}
\end{table*}

A similar trend appears in semantic alignment (Table~\ref{tab:semantic_alignment}), where lower values indicate better alignment with the human semantic distribution. Across all datasets, the consistent ordering $M_A < S_A < M < S_D < S$ is observed, with both $M_A$ and $S_A$ achieving near-identical alignment scores of approximately $0.02$. These values are roughly three times lower than the baseline S setting, demonstrating substantially tighter distributional alignment with human comments. Taken together, the semantic results indicate that aspect conditioning is essential for achieving both broad semantic coverage and strong alignment with human comment distributions, while multi-model generation provides an additional but comparatively smaller complementary gain.

\begin{table*}[h]
\caption{Semantic alignment results across datasets. Lower values indicate better alignment with the human semantic distribution.}
\label{tab:semantic_alignment}
\centering
\small
\setlength{\tabcolsep}{5pt}
    \resizebox{\columnwidth}{!}{
\begin{tabular}{lcccccp{4.2cm}}
\toprule
\textbf{Dataset} & \textbf{S} & \textbf{$S_D$} & \textbf{M} & \textbf{$S_A$} & \textbf{$M_A$} & \textbf{Comment} \\
\midrule
News & $0.06 \pm 0.02$ & $0.06 \pm 0.02$ & $0.05 \pm 0.01$ & $0.02 \pm 0.01$ & $0.02 \pm 0.01$ & $M_A < S_A < M < S_D < S$ \\
Pop\_culture & $0.05 \pm 0.02$ & $0.05 \pm 0.02$ & $0.04 \pm 0.02$ & $0.02 \pm 0.01$ & $0.02 \pm 0.01$ & $M_A < S_A < M < S_D \sim S$ \\
Tech & $0.05 \pm 0.02$ & $0.04 \pm 0.02$ & $0.04 \pm 0.01$ & $0.02 \pm 0.01$ & $0.02 \pm 0.01$ & $M_A < S_A < M < S_D \sim S$ \\
All & $0.05 \pm 0.02$ & $0.05 \pm 0.02$ & $0.04 \pm 0.01$ & $0.02 \pm 0.01$ & $0.02 \pm 0.01$ & $M_A < S_A < M < S_D < S$ \\
\bottomrule
\end{tabular}
}
\end{table*}

\subsubsection{Linguistic features}

The linguistic dispersion results present a more differentiated picture than the semantic analysis, with the quality--diversity tension appearing most clearly in this feature space (Tables~\ref{tab:linguistic_dispersion}, \ref{tab:linguistic_coverage}, and \ref{tab:linguistic_alignment}). Across nearly every linguistic metric and domain, $S_D$ emerges as the weakest setting, consistently performing worse than even the unconditioned single-model baseline S on measures such as n-gram diversity, TTR, and self-repetition. In particular, self-repetition values for $S_D$ range from $0.84$ to $1.00$, substantially higher than the human baseline of $0.06$ to $0.08$. This pattern suggests that domain-focused prompting encourages models to converge toward a narrower stylistic register composed of topically relevant yet highly repetitive language.

In contrast, $M_A$ consistently produces the closest approximation to human-level linguistic dispersion across most metrics. For example, its compression ratio and POS compression ratio values range from $2.37$ to $2.44$ and $4.03$ to $4.09$, respectively, compared to the corresponding human ranges of $2.12$ to $2.20$ and $3.81$ to $3.96$. The $S_A$ setting is generally close behind and is often statistically tied with $M_A$, indicating that aspect conditioning contributes substantially to linguistic diversity even without combining multiple models.

\begin{table*}[h]
\caption{Linguistic dispersion results across datasets. Arrows indicate metric directionality: higher is better for $\uparrow$ and lower is better for $\downarrow$.}
\label{tab:linguistic_dispersion}
\centering
\scriptsize
\setlength{\tabcolsep}{3pt}
    \resizebox{\columnwidth}{!}{
\begin{tabular}{llccccccp{3.5cm}}
\toprule
\textbf{Dataset} & \textbf{Metric} & \textbf{H} & \textbf{S} & \textbf{$S_D$} & \textbf{M} & \textbf{$S_A$} & \textbf{$M_A$} & \textbf{Comment} \\
\midrule
News & n-gram diversity $\uparrow$ & $3.35 \pm 0.15$ & $3.08 \pm 0.22$ & $2.89 \pm 0.25$ & $3.18 \pm 0.13$ & $3.26 \pm 0.22$ & $3.25 \pm 0.18$ & H $>$ $S_A \sim M_A$ $>$ M $>$ S $>$ $S_D$ \\
News & TTR $\uparrow$ & $0.47 \pm 0.09$ & $0.39 \pm 0.08$ & $0.34 \pm 0.07$ & $0.41 \pm 0.07$ & $0.46 \pm 0.10$ & $0.47 \pm 0.09$ & $M_A$ $>$ H $>$ $S_A$ $>$ M $>$ S $>$ $S_D$ \\
News & Self-repetition $\downarrow$ & $0.06 \pm 0.07$ & $0.52 \pm 0.34$ & $0.84 \pm 0.41$ & $0.28 \pm 0.16$ & $0.24 \pm 0.23$ & $0.26 \pm 0.23$ & H $<$ $S_A$ $<$ $M_A$ $<$ M $<$ S $<$ $S_D$ \\
News & Compression ratio $\downarrow$ & $2.12 \pm 0.17$ & $2.72 \pm 0.30$ & $2.96 \pm 0.36$ & $2.61 \pm 0.18$ & $2.40 \pm 0.29$ & $2.37 \pm 0.28$ & H $<$ $M_A$ $<$ $S_A$ $<$ M $<$ S $<$ $S_D$ \\
News & POS comp. ratio $\downarrow$ & $3.81 \pm 0.37$ & $4.31 \pm 0.43$ & $4.59 \pm 0.48$ & $4.16 \pm 0.34$ & $4.02 \pm 0.42$ & $4.03 \pm 0.37$ & H $<$ $S_A \sim M_A$ $<$ M $<$ S $<$ $S_D$ \\
\midrule
Pop\_culture & n-gram diversity $\uparrow$ & $3.34 \pm 0.14$ & $3.02 \pm 0.21$ & $2.81 \pm 0.25$ & $3.10 \pm 0.11$ & $3.23 \pm 0.20$ & $3.20 \pm 0.17$ & H $>$ $S_A$ $>$ $M_A$ $>$ M $>$ S $>$ $S_D$ \\
Pop\_culture & TTR $\uparrow$ & $0.48 \pm 0.09$ & $0.38 \pm 0.08$ & $0.32 \pm 0.07$ & $0.39 \pm 0.06$ & $0.45 \pm 0.09$ & $0.46 \pm 0.08$ & H $>$ $M_A$ $>$ $S_A$ $>$ M $>$ S $>$ $S_D$ \\
Pop\_culture & Self-repetition $\downarrow$ & $0.08 \pm 0.11$ & $0.52 \pm 0.35$ & $0.87 \pm 0.49$ & $0.34 \pm 0.18$ & $0.23 \pm 0.22$ & $0.28 \pm 0.19$ & H $<$ $S_A$ $<$ $M_A$ $<$ M $<$ S $<$ $S_D$ \\
Pop\_culture & Compression ratio $\downarrow$ & $2.10 \pm 0.17$ & $2.72 \pm 0.28$ & $2.98 \pm 0.36$ & $2.64 \pm 0.15$ & $2.39 \pm 0.26$ & $2.38 \pm 0.29$ & H $<$ $M_A \sim S_A$ $<$ M $<$ S $<$ $S_D$ \\
Pop\_culture & POS comp. ratio $\downarrow$ & $3.74 \pm 0.36$ & $4.34 \pm 0.45$ & $4.69 \pm 0.56$ & $4.23 \pm 0.32$ & $4.02 \pm 0.38$ & $4.05 \pm 0.37$ & H $<$ $S_A$ $<$ $M_A$ $<$ M $<$ S $<$ $S_D$ \\
\midrule
Tech & n-gram diversity $\uparrow$ & $3.26 \pm 0.12$ & $2.98 \pm 0.19$ & $2.77 \pm 0.26$ & $2.98 \pm 0.11$ & $3.15 \pm 0.20$ & $3.14 \pm 0.15$ & H $>$ $S_A \sim M_A$ $>$ M $\sim$ S $>$ $S_D$ \\
Tech & TTR $\uparrow$ & $0.41 \pm 0.07$ & $0.34 \pm 0.06$ & $0.30 \pm 0.06$ & $0.33 \pm 0.04$ & $0.41 \pm 0.08$ & $0.42 \pm 0.07$ & $M_A$ $>$ H $\sim$ $S_A$ $>$ S $>$ M $>$ $S_D$ \\
Tech & Self-repetition $\downarrow$ & $0.08 \pm 0.08$ & $0.62 \pm 0.37$ & $1.00 \pm 0.48$ & $0.50 \pm 0.20$ & $0.30 \pm 0.25$ & $0.32 \pm 0.18$ & H $<$ $S_A$ $<$ $M_A$ $<$ M $<$ S $<$ $S_D$ \\
Tech & Compression ratio $\downarrow$ & $2.20 \pm 0.13$ & $2.79 \pm 0.25$ & $3.05 \pm 0.39$ & $2.81 \pm 0.14$ & $2.48 \pm 0.25$ & $2.44 \pm 0.19$ & H $<$ $M_A$ $<$ $S_A$ $<$ S $<$ M $<$ $S_D$ \\
Tech & POS comp. ratio $\downarrow$ & $3.96 \pm 0.25$ & $4.48 \pm 0.33$ & $4.81 \pm 0.49$ & $4.48 \pm 0.21$ & $4.19 \pm 0.32$ & $4.17 \pm 0.27$ & H $<$ $M_A \sim S_A$ $<$ M $\sim$ S $<$ $S_D$ \\
\midrule
All & n-gram diversity $\uparrow$ & $3.33 \pm 0.14$ & $3.02 \pm 0.21$ & $2.82 \pm 0.26$ & $3.12 \pm 0.15$ & $3.22 \pm 0.20$ & $3.22 \pm 0.17$ & H $>$ $S_A \sim M_A$ $>$ M $>$ S $>$ $S_D$ \\
All & TTR $\uparrow$ & $0.46 \pm 0.09$ & $0.37 \pm 0.08$ & $0.32 \pm 0.07$ & $0.39 \pm 0.07$ & $0.44 \pm 0.09$ & $0.45 \pm 0.08$ & H $<$ $M_A$ $>$ $S_A$ $>$ M $>$ S $>$ $S_D$ \\
All & Self-repetition $\downarrow$ & $0.07 \pm 0.09$ & $0.55 \pm 0.36$ & $0.90 \pm 0.46$ & $0.35 \pm 0.20$ & $0.25 \pm 0.23$ & $0.26 \pm 0.20$ & H $<$ $S_A$ $<$ $M_A$ $<$ M $<$ S $<$ $S_D$ \\
All & Compression ratio $\downarrow$ & $2.13 \pm 0.17$ & $2.74 \pm 0.28$ & $3.00 \pm 0.37$ & $2.66 \pm 0.19$ & $2.43 \pm 0.26$ & $2.39 \pm 0.25$ & H $<$ $M_A$ $<$ $S_A$ $<$ M $<$ S $<$ $S_D$ \\
All & POS comp. ratio $\downarrow$ & $3.82 \pm 0.36$ & $4.38 \pm 0.41$ & $4.70 \pm 0.52$ & $4.25 \pm 0.34$ & $4.09 \pm 0.38$ & $4.09 \pm 0.34$ & H $<$ $M_A \sim S_A$ $<$ M $<$ S $<$ $S_D$ \\
\bottomrule
\end{tabular}
}
\end{table*}

The linguistic coverage and alignment results further support these observations. As shown in Table~\ref{tab:linguistic_coverage}, coverage scores are tightly clustered within the $0.91$ to $0.93$ range across all settings, suggesting that most generation strategies are able to capture the dominant lexical and POS-level structures present in human comments. The alignment metric in Table~\ref{tab:linguistic_alignment}, however, reveals a clearer separation between settings. Specifically, $M_A$ achieves the lowest divergence values ($0.02$ across all domains), followed closely by $S_A$ ($0.02$ to $0.03$), while S exhibits the highest divergence ($0.04$ to $0.05$). This indicates that aspect-conditioned generation more closely matches the underlying linguistic distribution of human comments.

\begin{table*}[h]
\caption{Linguistic coverage results across datasets. Higher values indicate better coverage of the human linguistic feature space.}
\label{tab:linguistic_coverage}
\centering
\small
\setlength{\tabcolsep}{5pt}
    \resizebox{\columnwidth}{!}{
\begin{tabular}{lcccccp{4.2cm}}
\toprule
\textbf{Dataset} & \textbf{S} & \textbf{$S_D$} & \textbf{M} & \textbf{$S_A$} & \textbf{$M_A$} & \textbf{Comment} \\
\midrule
News & $0.91 \pm 0.01$ & $0.91 \pm 0.01$ & $0.92 \pm 0.01$ & $0.92 \pm 0.01$ & $0.92 \pm 0.01$ & $M_A$ $>$ $S_A$ $>$ M $>$ $S_D$ $>$ S \\
Pop\_culture & $0.92 \pm 0.01$ & $0.92 \pm 0.01$ & $0.92 \pm 0.01$ & $0.93 \pm 0.01$ & $0.93 \pm 0.01$ & $M_A$ $>$ $S_A$ $>$ M $>$ $S_D$ $>$ S \\
Tech & $0.92 \pm 0.01$ & $0.92 \pm 0.01$ & $0.93 \pm 0.01$ & $0.93 \pm 0.01$ & $0.93 \pm 0.01$ & $M_A \sim$ M $>$ $S_A$ $>$ $S_D$ $>$ S \\
All & $0.92 \pm 0.01$ & $0.92 \pm 0.01$ & $0.92 \pm 0.01$ & $0.92 \pm 0.01$ & $0.93 \pm 0.01$ & $M_A$ $>$ $S_A$ $>$ M $>$ $S_D$ $>$ S \\
\bottomrule
\end{tabular}
}
\end{table*}

The TTR metric requires more careful interpretation because it is sensitive to comment length distributions. For example, in the \textit{tech} domain, $M_A$ slightly exceeds the human baseline ($0.42$ vs.\ $0.41$). However, this difference likely reflects variations in average comment length rather than genuinely greater vocabulary richness. Overall, the linguistic results reinforce the importance of aspect conditioning while also highlighting the limitations of domain-focused prompting as a mechanism for generating stylistically diverse comments.

\begin{table*}[h]
\caption{Linguistic alignment results across datasets. Lower values indicate better alignment with the human linguistic distribution.}
\label{tab:linguistic_alignment}
\centering
\small
\setlength{\tabcolsep}{5pt}
    \resizebox{\columnwidth}{!}{
\begin{tabular}{lcccccp{4.2cm}}
\toprule
\textbf{Dataset} & \textbf{S} & \textbf{$S_D$} & \textbf{M} & \textbf{$S_A$} & \textbf{$M_A$} & \textbf{Comment} \\
\midrule
News & $0.04 \pm 0.02$ & $0.04 \pm 0.02$ & $0.03 \pm 0.01$ & $0.03 \pm 0.02$ & $0.02 \pm 0.01$ & $M_A$ $<$ $S_A$ $<$ M $<$ $S_D$ $<$ S \\
Pop\_culture & $0.05 \pm 0.02$ & $0.04 \pm 0.02$ & $0.04 \pm 0.02$ & $0.03 \pm 0.02$ & $0.02 \pm 0.02$ & $M_A$ $<$ $S_A$ $<$ M $<$ $S_D$ $<$ S \\
Tech & $0.04 \pm 0.02$ & $0.03 \pm 0.02$ & $0.03 \pm 0.01$ & $0.02 \pm 0.01$ & $0.02 \pm 0.01$ & $M_A$ $<$ $S_A$ $<$ M $<$ $S_D \sim$ S \\
All & $0.04 \pm 0.02$ & $0.04 \pm 0.02$ & $0.03 \pm 0.01$ & $0.03 \pm 0.02$ & $0.02 \pm 0.01$ & $M_A$ $<$ $S_A$ $<$ M $<$ $S_D$ $<$ S \\
\bottomrule
\end{tabular}
}
\end{table*}

\subsubsection{Pragmatic features}

The pragmatic results reveal the most notable inversion across the entire evaluation framework (Tables~\ref{tab:pragmatic_dispersion}, \ref{tab:pragmatic_coverage}, and \ref{tab:pragmatic_alignment}). Unlike the semantic and linguistic feature spaces, the multi-model setting M achieves the highest pragmatic dispersion scores ($0.87$ across all domains), even slightly exceeding the human baseline of approximately $0.86$. This pattern suggests that combining outputs from multiple models without aspect conditioning naturally produces a broader range of pragmatic signals, including sentiment, stance, emotional tone, and discourse roles, than what is typically observed within a single human discussion thread.

In contrast, the aspect-conditioned settings $S_A$ and $M_A$ achieve pragmatic dispersion scores in the $0.85$ to $0.86$ range, closely tracking the behavior of S and $S_D$. This indicates that explicitly constraining generation through predefined aspects may slightly reduce the raw pragmatic variability of generated comments, even though aspect conditioning substantially improves diversity in the semantic and linguistic spaces.

\begin{table*}[h]
\caption{Pragmatic dispersion results across datasets. Higher values indicate greater diversity in pragmatic attributes.}
\label{tab:pragmatic_dispersion}
\centering
\small
\setlength{\tabcolsep}{5pt}
    \resizebox{\columnwidth}{!}{
\begin{tabular}{lccccccp{4.2cm}}
\toprule
\textbf{Dataset} & \textbf{H} & \textbf{S} & \textbf{$S_D$} & \textbf{M} & \textbf{$S_A$} & \textbf{$M_A$} & \textbf{Comment} \\
\midrule
News & $0.87 \pm 0.00$ & $0.85 \pm 0.01$ & $0.85 \pm 0.01$ & $0.87 \pm 0.00$ & $0.85 \pm 0.01$ & $0.86 \pm 0.01$ & M $>$ H $>$ $M_A$ $>$ $S_A$ $>$ S $>$ $S_D$ \\
Pop\_culture & $0.86 \pm 0.01$ & $0.85 \pm 0.01$ & $0.85 \pm 0.01$ & $0.86 \pm 0.00$ & $0.85 \pm 0.01$ & $0.85 \pm 0.01$ & M $>$ H $>$ $M_A$ $>$ $S_A$ $>$ S $>$ $S_D$ \\
Tech & $0.86 \pm 0.00$ & $0.80 \pm 0.03$ & $0.85 \pm 0.01$ & $0.87 \pm 0.00$ & $0.85 \pm 0.01$ & $0.85 \pm 0.01$ & M $>$ H $>$ $M_A$ $>$ $S_A$ $>$ $S_D$ $>$ S \\
All & $0.86 \pm 0.01$ & $0.83 \pm 0.03$ & $0.85 \pm 0.01$ & $0.87 \pm 0.00$ & $0.85 \pm 0.01$ & $0.85 \pm 0.01$ & M $>$ H $>$ $M_A$ $>$ $S_A$ $>$ $S_D$ $>$ S \\
\bottomrule
\end{tabular}%
}
\end{table*}

However, the pragmatic coverage and alignment results reveal a more nuanced pattern. As shown in Table~\ref{tab:pragmatic_coverage}, $M_A$ consistently achieves the highest pragmatic coverage across all domains, reaching values up to $0.90$ in the \textit{tech} and \textit{pop\_culture} domains. Interestingly, the unconstrained multi-model setting M performs nearly as strongly, substantially outperforming both S and $S_D$, and in the \textit{tech} domain achieving coverage identical to $M_A$ ($0.90$). In contrast, the aspect-conditioned single-model setting $S_A$ no longer consistently outperforms M, and in some domains exhibits noticeably lower coverage.

A similar trend appears in the pragmatic alignment results shown in Table~\ref{tab:pragmatic_alignment}. Across domains, both $M_A$ and M achieve the lowest divergence scores, typically ranging between $0.04$ and $0.06$, while S and $S_D$ consistently produce the weakest alignment. Unlike the semantic and linguistic spaces, aspect conditioning alone does not consistently improve pragmatic alignment over unconstrained multi-model generation. These findings suggest that combining diverse models naturally captures a broad and human-like range of pragmatic behaviors, whereas aspect conditioning contributes comparatively less additional benefit in the pragmatic feature space.

\begin{table*}[h]
\caption{Pragmatic coverage results across datasets. Higher values indicate broader coverage of the pragmatic characteristics observed in human discussions, including sentiment, stance, emotional tone, and discourse behavior.}
\label{tab:pragmatic_coverage}
\centering
\small
\setlength{\tabcolsep}{5pt}
\resizebox{\columnwidth}{!}{
\begin{tabular}{lcccccp{4.2cm}}
\toprule
\textbf{Dataset} & \textbf{S} & \textbf{$S_D$} & \textbf{M} & \textbf{$S_A$} & \textbf{$M_A$} & \textbf{Comment} \\
\midrule
News & $0.82 \pm 0.10$ & $0.83 \pm 0.10$ & $0.86 \pm 0.07$ & $0.83 \pm 0.09$ & $0.87 \pm 0.07$ & $M_A > M > S_A > S_D > S$ \\
Pop\_culture & $0.82 \pm 0.11$ & $0.82 \pm 0.11$ & $0.88 \pm 0.06$ & $0.85 \pm 0.08$ & $0.89 \pm 0.06$ & $M_A > M > S_A > S \sim S_D$ \\
Tech & $0.84 \pm 0.10$ & $0.85 \pm 0.09$ & $0.90 \pm 0.06$ & $0.82 \pm 0.09$ & $0.90 \pm 0.05$ & $M_A \sim M > S_D > S > S_A$ \\
All & $0.83 \pm 0.10$ & $0.83 \pm 0.10$ & $0.88 \pm 0.07$ & $0.83 \pm 0.09$ & $0.89 \pm 0.06$ & $M_A > M > S_A \sim S_D \sim S$ \\
\bottomrule
\end{tabular}%
}
\end{table*}

One interesting exception appears in the \textit{tech} domain, where the unconstrained multi-model setting M achieves pragmatic coverage comparable to $M_A$ ($0.90$ vs.\ $0.90$), while the aspect-conditioned setting $S_A$ drops to $0.82$. This suggests that, for highly technical discussions, unconstrained multi-model generation may naturally capture a broader range of pragmatic behaviors than aspect-conditioned prompting, potentially because technical conversations already exhibit narrower communicative patterns.

Overall, the pragmatic results suggest a clear trade-off between unconstrained pragmatic variability and human-aligned pragmatic structure. Multi-model generation without aspect conditioning consistently achieves strong pragmatic coverage and dispersion, whereas aspect-conditioned generation primarily improves alignment with the pragmatic profile of human discussions.

\begin{table*}[h]
\caption{Pragmatic alignment results across datasets. Lower values indicate closer alignment with the pragmatic distribution of human discussions, including sentiment balance, discourse roles, and interaction styles.}
\label{tab:pragmatic_alignment}
\centering
\small
\setlength{\tabcolsep}{5pt}
\resizebox{\columnwidth}{!}{
\begin{tabular}{lcccccp{4.2cm}}
\toprule
\textbf{Dataset} & \textbf{S} & \textbf{$S_D$} & \textbf{M} & \textbf{$S_A$} & \textbf{$M_A$} & \textbf{Comment} \\
\midrule
News & $0.09 \pm 0.04$ & $0.08 \pm 0.03$ & $0.07 \pm 0.03$ & $0.08 \pm 0.03$ & $0.06 \pm 0.03$ & $M_A < M < S_A \sim S_D < S$ \\
Pop\_culture & $0.07 \pm 0.03$ & $0.07 \pm 0.03$ & $0.05 \pm 0.02$ & $0.06 \pm 0.02$ & $0.05 \pm 0.02$ & $M_A \sim M < S_A < S_D \sim S$ \\
Tech & $0.06 \pm 0.02$ & $0.05 \pm 0.02$ & $0.04 \pm 0.01$ & $0.05 \pm 0.02$ & $0.04 \pm 0.01$ & $M_A \sim M < S_A \sim S_D < S$ \\
All & $0.07 \pm 0.03$ & $0.07 \pm 0.03$ & $0.05 \pm 0.03$ & $0.07 \pm 0.03$ & $0.05 \pm 0.02$ & $M_A \sim M < S_A \sim S_D \sim S$ \\
\bottomrule
\end{tabular}%
}
\end{table*}

\subsection{Domain-Specific results}
Examining results across the \textit{news}, \textit{pop\_culture}, and \textit{tech} domains reveals that the relative ordering of generation settings is remarkably stable across domains. As shown in Figure~\ref{fig:domain_semantic_dispersion}, the hierarchy $H > M_A > S_A > M > S_D > S$ for semantic dispersion is preserved consistently across all three domains, with similar stability observed for the corresponding semantic alignment orderings. This suggests that the primary findings of the paper are not artifacts of a specific topical context, but instead reflect broader properties of the underlying generation strategies.

At the same time, meaningful domain-level variation emerges in the magnitude of differences between settings rather than in their relative ordering. The semantic dispersion gap between human comments and $M_A$ is smallest in the \textit{pop\_culture} domain ($0.64$ vs.\ $0.63$), indicating that LLM-generated comments are most capable of approximating human semantic breadth in this setting. One likely explanation is that pop culture discussions naturally span a wide range of informal reactions, emotional expressions, humor, and stylistic variation that are particularly well captured through aspect conditioning. In contrast, the \textit{tech} domain consistently exhibits the lowest absolute semantic dispersion scores across all settings, which aligns with the intuition that technical discussions are inherently narrower in both vocabulary and topical scope.

\begin{figure}[h]
    \centering
    \includegraphics[width=\linewidth]{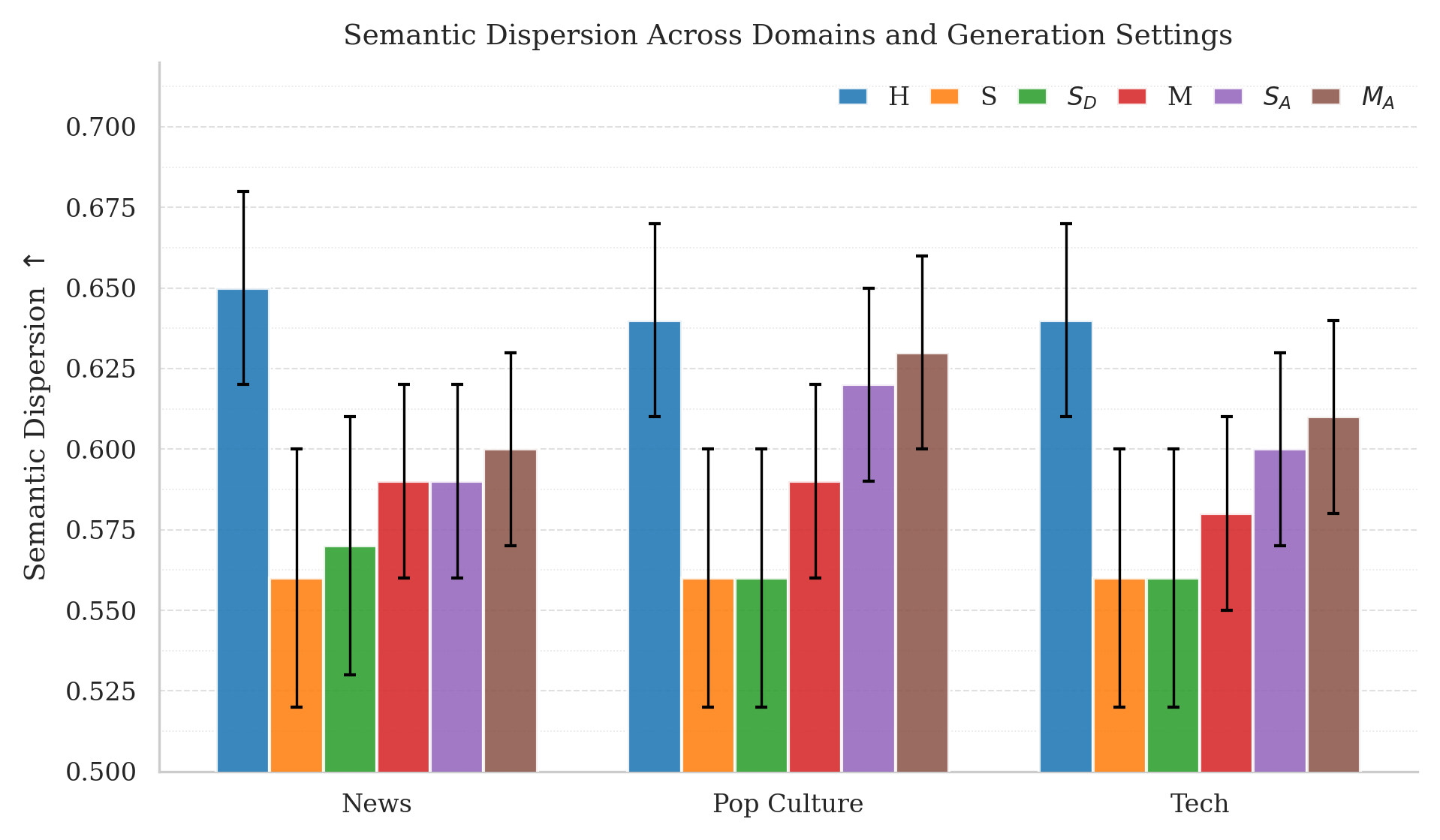}
    \caption{Semantic dispersion across domains and generation settings. The relative ordering of settings remains highly consistent across the \textit{news}, \textit{pop\_culture}, and \textit{tech} domains, with $M_A$ consistently producing the closest approximation to human semantic diversity. The smallest gap between human and synthetic comments is observed in the \textit{pop\_culture} domain.}
    \label{fig:domain_semantic_dispersion}
\end{figure}

Domain-specific effects are most pronounced in the pragmatic feature space. In the \textit{tech} domain, for example, $S_D$ unexpectedly outperforms M on pragmatic coverage ($0.85$ vs.\ $0.82$), reversing the more common ordering observed elsewhere. This suggests that domain-focused prompting may be especially effective for structured and information-dense conversations where pragmatic behaviors such as questioning, troubleshooting, and criticism are highly predictable from topical context. Similarly, self-repetition scores are noticeably higher across all synthetic settings in the \textit{tech} domain compared to \textit{news} and \textit{pop\_culture}. For instance, S reaches a self-repetition score of $0.62$ in \textit{tech} compared to $0.52$ in \textit{news}, indicating that technical content induces more formulaic generation patterns.

\begin{figure}[h]
    \centering
    \includegraphics[width=\linewidth]{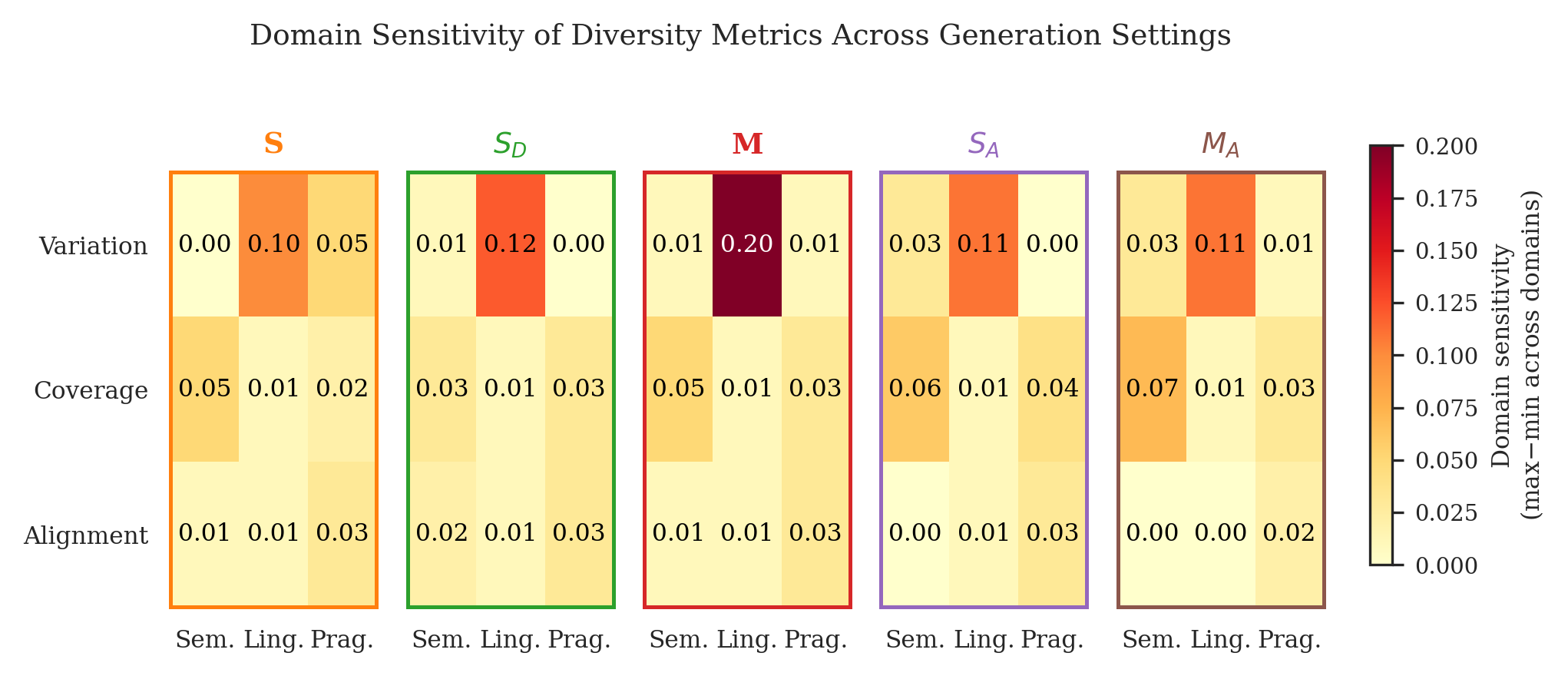}
    \caption{Domain-sensitivity heatmap across metrics and generation settings. Each cell represents the range (maximum - minimum) of a metric across domains for a given setting, where darker colors indicate greater sensitivity to domain variation. Pragmatic metrics exhibit the strongest domain sensitivity overall, particularly for the S and M settings, while aspect-conditioned settings ($S_A$ and $M_A$) remain substantially more stable across domains.}
    \label{fig:domain_sensitivity_heatmap}
\end{figure}

By contrast, the \textit{pop\_culture} domain demonstrates the largest semantic coverage gains from aspect conditioning, with both $S_A$ and $M_A$ achieving manifold recall scores of approximately $0.93$. This suggests that the highly heterogeneous nature of pop culture discussions particularly benefits from explicit aspect prescription. Overall, while domain identity influences the absolute performance levels and occasionally alters individual metric relationships, the dominant finding that aspect conditioning is the primary driver of diversity remains highly consistent across all domains.

Figure~\ref{fig:domain_sensitivity_heatmap} further summarizes these domain effects through a domain-sensitivity heatmap, where each cell represents the range (maximum minus minimum) of a metric across domains for a given setting. Darker cells correspond to higher domain sensitivity. Pragmatic alignment and pragmatic dispersion exhibit the highest sensitivity overall, particularly for the S and M settings. In contrast, the aspect-conditioned settings ($S_A$ and $M_A$) show consistently lighter patterns across most metrics, indicating that aspect conditioning improves not only diversity but also robustness across domains. Additionally, semantic coverage for $M_A$ appears notably more stable than for $S_A$, suggesting that the multi-model component contributes additional robustness on top of the gains introduced by aspect conditioning.

\subsection{Ablation analysis of decoding diversity $S_D$}

We further analyze the ablation setting $S_D$ to examine whether diversity can be improved by changing only the decoding strategy of a single model. This setting uses the same single-model design as S, but introduces diversity-oriented sampling parameters during generation. The goal is to separate the effect of stochastic decoding from the stronger structural interventions used in M, $S_A$, and $M_A$. The detailed metric-level results are reported in the preceding appendix tables.

Overall, $S_D$ provides only limited and inconsistent gains over the plain single-model setting S. In the semantic space, $S_D$ slightly improves semantic dispersion compared to S in the aggregate results, increasing from 0.56 to 0.57. It also improves semantic recall at threshold from 0.23 to 0.25. However, these gains remain small, and $S_D$ is still clearly behind M, $S_A$, and $M_A$ across semantic dispersion, coverage, and alignment. This suggests that decoding-level variation can introduce some local semantic variation, but it does not substantially expand the range of generated comments in the way model pluralism or aspect conditioning does. Semantic alignment shows a similar pattern: $S_D$ is close to S, while M, $S_A$, and especially $M_A$ consistently produce distributions closer to human comments.

The linguistic results make the limitation of $S_D$ even clearer. Across aggregate results, $S_D$ performs worse than S on several linguistic dispersion metrics. It has lower n-gram diversity, lower type-token ratio, higher self-repetition, and higher compression ratios. In other words, sampling variation may increase surface randomness, but it does not necessarily produce richer or more human-like lexical and syntactic variation. Instead, the outputs often remain repetitive or compressible, indicating that decoding alone cannot resolve the homogenization tendency of a single model.

The pragmatic results show a slightly more favorable but still limited picture. $S_D$ sometimes improves over S, particularly in pragmatic coverage and alignment, and it performs better than S on pragmatic dispersion in the tech domain. However, these improvements are small and do not approach the gains obtained by aspect-conditioned settings. Both $S_A$ and $M_A$ consistently provide stronger pragmatic coverage and alignment, suggesting that explicit aspect or style guidance is more effective than relying on sampling noise to induce communicative diversity.

Taken together, the $S_D$ ablation serves as a useful negative or weak baseline. It shows that simply making a single model more stochastic is not sufficient to recover human-like comment diversity. The main gains come from introducing structured sources of variation, either through multiple models or, more importantly, through explicit aspect conditioning. This supports our central design choice: diversity in synthetic comment generation is better achieved by modeling the structure of human discussion than by only adjusting decoding parameters.

\subsection{Ablation newer videos result analysis}

\begin{table*}[h]
\centering
\footnotesize
\setlength{\tabcolsep}{4.5pt}
\renewcommand{\arraystretch}{1}
\caption{
Summary of the newer-video ablation results across domains. 
We report representative metrics from the semantic and linguistic evaluation axes. 
For dispersion metrics, higher values are better; for alignment metrics, lower values are better. 
Human results are included where applicable.
}
\label{tab:newer_video_ablation_summary}
\begin{tabular}{llccccc}
\toprule
\textbf{Feature} & \textbf{Axis/Metric} & \textbf{Dataset} & \textbf{H} & \textbf{M} & \(\boldsymbol{S_A}\) & \(\boldsymbol{M_A}\)  \\
\midrule

\multirow{3}{*}{Semantic}
& \multirow{3}{*}{Dispersion \(\uparrow\)}
& News-new        & \(0.65 \pm 0.02\) & \(0.58 \pm 0.03\) & \(0.61 \pm 0.03\) & \(0.62 \pm 0.02\) \\
& 
& Pop-culture-new & \(0.64 \pm 0.03\) & \(0.60 \pm 0.03\) & \(0.63 \pm 0.03\) & \(0.64 \pm 0.02\) \\
& 
& Tech-new        & \(0.63 \pm 0.04\) & \(0.57 \pm 0.04\) & \(0.59 \pm 0.04\) & \(0.60 \pm 0.04\) \\

\midrule

\multirow{3}{*}{Semantic}
& \multirow{3}{*}{Manifold recall \(\uparrow\)}
& News-new        & -- & \(0.77 \pm 0.36\) & \(0.90 \pm 0.24\) & \(0.94 \pm 0.17\) \\
& 
& Pop-culture-new & -- & \(0.78 \pm 0.33\) & \(0.84 \pm 0.28\) & \(0.87 \pm 0.25\) \\
& 
& Tech-new        & -- & \(0.91 \pm 0.20\) & \(0.92 \pm 0.17\) & \(0.92 \pm 0.16\) \\

\midrule

\multirow{3}{*}{Semantic}
& \multirow{3}{*}{Alignment \(\downarrow\)}
& News-new        & -- & \(0.06 \pm 0.02\) & \(0.03 \pm 0.01\) & \(0.03 \pm 0.01\) \\
& 
& Pop-culture-new & -- & \(0.05 \pm 0.02\) & \(0.03 \pm 0.02\) & \(0.03 \pm 0.01\) \\
& 
& Tech-new        & -- & \(0.05 \pm 0.01\) & \(0.03 \pm 0.01\) & \(0.03 \pm 0.01\) \\

\midrule

\multirow{3}{*}{Linguistic}
& \multirow{3}{*}{N-gram diversity \(\uparrow\)}
& News-new        & \(3.26 \pm 0.13\) & \(3.06 \pm 0.12\) & \(3.22 \pm 0.15\) & \(3.32 \pm 0.09\) \\
& 
& Pop-culture-new & \(3.34 \pm 0.17\) & \(3.09 \pm 0.14\) & \(3.29 \pm 0.17\) & \(3.34 \pm 0.12\) \\
& 
& Tech-new        & \(3.32 \pm 0.16\) & \(3.08 \pm 0.13\) & \(3.25 \pm 0.13\) & \(3.30 \pm 0.10\) \\

\midrule

\multirow{3}{*}{Linguistic}
& \multirow{3}{*}{Coverage \(\uparrow\)}
& News-new        & -- & \(0.92 \pm 0.01\) & \(0.92 \pm 0.01\) & \(0.92 \pm 0.01\) \\
& 
& Pop-culture-new & -- & \(0.92 \pm 0.01\) & \(0.92 \pm 0.01\) & \(0.92 \pm 0.01\) \\
& 
& Tech-new        & -- & \(0.92 \pm 0.01\) & \(0.92 \pm 0.01\) & \(0.92 \pm 0.01\) \\

\midrule

\multirow{3}{*}{Linguistic}
& \multirow{3}{*}{Alignment \(\downarrow\)}
& News-new        & -- & \(0.02 \pm 0.01\) & \(0.03 \pm 0.01\) & \(0.02 \pm 0.01\) \\
& 
& Pop-culture-new & -- & \(0.03 \pm 0.01\) & \(0.03 \pm 0.02\) & \(0.03 \pm 0.01\) \\
& 
& Tech-new        & -- & \(0.03 \pm 0.01\) & \(0.03 \pm 0.01\) & \(0.02 \pm 0.01\) \\

\bottomrule
\end{tabular}
\end{table*}

We further evaluate our framework on newer videos where human-derived aspects are not available during generation. In this ablation, we use an LLM planner to infer possible discussion aspects directly from the video context. Since S consistently under performs in the original setting, we focus on M, $S_A$, and $M_A$. This setup allows us to test whether the main findings hold under a more realistic temporal shift, where the model must generate comments for newer content without relying on aspect information extracted from human comment threads.

The main takeaway from (Table \ref{tab:newer_video_ablation_summary}) is that the core ordering from the primary dataset is largely preserved. Across all three newer datasets, semantic dispersion follows the same hierarchy: H $>$ $M_A$ $>$ $S_A$ $>$ M. The strongest result appears in pop culture, where $M_A$ essentially reaches human semantic dispersion, with $0.64 \pm 0.02$ for $M_A$ compared to $0.64 \pm 0.03$ for H. This is an important robustness finding because it shows that the benefit of aspect conditioning and multi-model generation is not limited to the original historical benchmark. Instead, the same pattern transfers to newer videos where aspects are inferred rather than extracted from human comments.

Semantic coverage shows a similar trend. In news and pop culture, $M_A$ and $S_A$ remain clearly stronger than plain multi-model generation. In tech, the gap is smaller because M already achieves high manifold recall, suggesting that technical videos may contain more constrained discussion spaces where generic multi-model generation can already cover many dominant themes. Semantic alignment is also consistent with the main results: $M_A$ and $S_A$ remain closer to the human distribution than M across domains. The absolute alignment scores are slightly weaker in the newer-video ablation, moving from around $0.02$ in the primary dataset to around $0.03$ for the aspect-conditioned settings, but the relative ordering remains stable. Overall, these results suggest that aspect conditioning is the more stable contributor to semantic quality, while multi-model generation provides an additional but smaller gain.

The linguistic results are also supportive, although they require a more cautious interpretation. For n-gram diversity, $M_A$ performs strongly on newer videos, matching or exceeding human values in all three domains. In news, $M_A$ reaches $3.32$ compared to $3.26$ for H; in pop culture, both $M_A$ and H reach $3.34$; and in tech, $M_A$ reaches $3.30$ compared to $3.32$ for H. This strengthens the claim that the combined setting can approximate human-like surface-level variation even for newer content. However, linguistic coverage is nearly saturated across M, $S_A$, and $M_A$, with all methods around $0.92$. Therefore, this metric should not be overinterpreted as evidence of meaningful separation among the methods in the newer-video setting. Linguistic alignment also generally favors $M_A$, but the gaps are small, and in news M is tied with $M_A$.

Overall, this ablation should be interpreted as a robustness check rather than a separate new result. It confirms that the main conclusions are stable under temporal shift: aspect-conditioned generation, especially when combined with multiple models, better preserves semantic diversity, improves coverage, and remains closer to the human distribution. At the same time, the ablation shows that some linguistic metrics become less discriminative once all methods reach high coverage. This reinforces the value of our multi-axis evaluation framework, where semantic, linguistic, and alignment-based metrics provide complementary views of generation quality.

\subsection{Overall comment quality evaluation}
\label{appendix_subsec_overall_comment_quality}

We evaluate the overall quality of generated comment sets through human judgment, complemented with LLM-based evaluation for validation and scalability. Our goal is to assess whether generated comments not only capture diversity and structure but also resemble realistic online discussions in a holistic sense. To this end, we design the evaluation at the level of comment sets rather than individual comments, since properties such as diversity and representativeness emerge at the group level.

\paragraph{Human evaluation setup.}
We conduct a human evaluation with two annotators on 120 sampled items drawn from four settings: Human (H), $S$, $M$, and $M_A$. Each item corresponds to a single video paired with 10 comments. We choose 10 comments per item to balance two competing goals: preserving enough variation to capture discussion diversity, while keeping the set small enough for reliable human judgment. This choice is motivated by cognitive constraints in processing multiple information units \cite{miller1956magical}. The sampled items cover different domains and settings to ensure broad coverage.

Annotators are asked to provide holistic ratings on a 5-point Likert scale for the entire set of comments. The items are shuffled and presented without revealing their source to avoid bias. We consider the following evaluation rubrics:

\begin{enumerate}
\item \textbf{Diversity:} How diverse are the comments in terms of viewpoints and expressions \cite{khashabi-etal-2022-genie}.
\item \textbf{Context:} How well the comments reflect a shared context or conversation \cite{van2021human}.
\item \textbf{Naturalness:} How natural the comments read as human language \cite{van2021human}.
\item \textbf{Authenticity:} How authentic the comments feel as real online responses \cite{van2021human}.
\item \textbf{Representativeness:} How well the comments reflect typical online discourse \cite{han-etal-2022-measuring}.
\item \textbf{Appropriateness:} How appropriate the comments are with respect to the video \cite{van2021human}.
\item \textbf{Overall quality:} The overall perceived quality of the comment set \cite{khashabi-etal-2022-genie, van2021human}.
\end{enumerate}

\begin{table}[h]
\caption{Inter-annotator agreement across evaluation metrics using Krippendorff's $\alpha$ and weighted Cohen's $\kappa$. Agreement levels follow standard interpretation thresholds.}
\label{tab:iaa_metrics}
\centering
\small
\setlength{\tabcolsep}{6pt}
\begin{tabular}{lccc}
\toprule
\textbf{Metric} & \textbf{Krippendorff's $\alpha$} & \textbf{Weighted Cohen's $\kappa$} & \textbf{Agreement Level} \\
\midrule
\textbf{Naturalness}           & 0.601 & 0.591 & Moderate \\
\textbf{Authenticity}          & 0.463 & 0.471 & Moderate \\
\textbf{Overall Quality}       & 0.447 & 0.446 & Moderate \\
\textbf{Representativeness}    & 0.413 & 0.360 & Fair \\
\textbf{Diversity}             & 0.369 & 0.397 & Fair \\
\textbf{Appropriateness}       & 0.298 & 0.312 & Fair \\
\textbf{Context}               & 0.144 & 0.179 & Slight \\
\bottomrule
\end{tabular}

\end{table}

\paragraph{Agreement and validity.}
Inter-annotator agreement is measured using Krippendorff’s $\alpha$ and weighted Cohen’s $\kappa$. Agreement ranges from slight to moderate across metrics, with five of seven rubrics achieving moderate agreement. Naturalness, authenticity, and overall quality show the strongest agreement, indicating that annotators consistently recognize linguistic fluency and realism. Diversity and representativeness show fair agreement, which is expected given their more subjective nature.

Context and appropriateness exhibit lower agreement, with $\kappa$ around 0.2. This reflects the inherent ambiguity in judging how well comments align with a video, especially when annotators may interpret the video context differently or focus on different aspects of the discussion. Rather than indicating unreliability, this lower agreement highlights the nuanced and subjective nature of discourse-level evaluation, which aligns with observations in prior work on human judgments of language quality.

\begin{table}[h]
\caption{Human evaluation scores (mean $\pm$ std) across different settings. S: single LLM, M: multi-LLM, $M_A$: aspect-guided multi-LLM, H: human comments. $\sim$ means the difference between two settings is not statistically significant. }
\label{tab:human_eval_scores}
\centering
\small
\setlength{\tabcolsep}{5pt}
\begin{tabular}{lccccc}
\toprule
\textbf{Metric} & \textbf{S} & \textbf{M} & \textbf{$M_A$} & \textbf{H} & \textbf{Comment} \\
\midrule

\textbf{Diversity} 
& $3.47 \pm 0.52$ & $3.58 \pm 0.63$ & $3.98 \pm 0.70$ & \textbf{$4.12 \pm 0.72$} 
& H $\sim$ $M_A$ $>$ M $\sim$ S \\

\textbf{Context} 
& $3.90 \pm 0.52$ & $4.05 \pm 0.51$ & $4.03 \pm 0.47$ & \textbf{$4.05 \pm 0.56$} 
& H $\sim$ $M_A$ $\sim$ M $\sim$ S \\

\textbf{Naturalness} 
& $2.74 \pm 0.73$ & $3.12 \pm 0.57$ & $3.60 \pm 0.79$ & \textbf{$4.28 \pm 0.65$} 
& H $>$ $M_A$ $>$ M $>$ S \\

\textbf{Authenticity} 
& $3.07 \pm 0.56$ & $3.35 \pm 0.53$ & $3.60 \pm 0.71$ & \textbf{$3.93 \pm 0.58$} 
& H $\sim$ $M_A$ $\sim$ M $\sim$ S \\

\textbf{Representativeness} 
& $3.17 \pm 0.57$ & $3.20 \pm 0.41$ & $3.57 \pm 0.58$ & \textbf{$3.70 \pm 0.64$} 
& H $\sim$ $M_A$ $>$ M $\sim$ S \\

\textbf{Appropriateness} 
& $3.81 \pm 0.62$ & \textbf{$3.97 \pm 0.60$} & $3.93 \pm 0.64$ & $3.77 \pm 0.58$ 
& H $\sim$ $M_A$ $\sim$ M $\sim$ S \\

\textbf{Overall Quality} 
& $3.16 \pm 0.54$ & $3.43 \pm 0.43$ & $3.83 \pm 0.67$ & \textbf{$3.97 \pm 0.64$} 
& H $\sim$ $M_A$ $>$ M $>$ S \\

\bottomrule
\end{tabular}

\end{table}

\paragraph{Human evaluation results.}
The results reveal clear and consistent trends across settings. Human comments achieve the highest scores overall, with $M_A$ closely matching them on several dimensions. In terms of diversity, $M_A$ approaches human performance and significantly outperforms S and M, suggesting that aspect conditioning effectively captures varied discussion patterns. Context scores remain similar across all settings, indicating that even simpler models can maintain topical coherence when conditioned on the same video.
Naturalness shows the largest gap, with human comments clearly outperforming all generated settings. However, $M_A$ substantially improves over S and M, demonstrating that structured generation leads to more fluent outputs. Authenticity and appropriateness show relatively small differences across settings, suggesting that even baseline models produce plausible responses at a surface level.

Overall quality follows a similar pattern, where $M_A$ again approaches human performance and consistently outperforms S and M. Statistical testing using two-tailed t-tests confirms that many of these differences are significant, particularly for diversity, naturalness, and overall quality. These findings indicate that combining multiple models with aspect conditioning leads to meaningful improvements in perceived comment quality.

\begin{table}[h]
\caption{Comparison between human and LLM judgments across evaluation metrics. Spearman correlation ($\rho$) is reported at both the setting level and sample level.}
\label{tab:human_llm_alignment}
\centering
\small
\setlength{\tabcolsep}{5pt}
\begin{tabular}{lccccc}
\toprule
\textbf{Metric} & \textbf{Human Judgment} & \textbf{LLM Judgment} & \textbf{Setting-level $\rho$} & \textbf{Sample-level $\rho$} \\
\midrule

\textbf{Diversity} 
& H $\sim$ $M_A$ $>$ M $\sim$ S 
& H $\sim$ $M_A$ $>$ M $\sim$ S 
& 1.00 & 0.16 \\

\textbf{Context} 
& H $\sim$ M $\sim$ $M_A$ $\sim$ S 
& S $\sim$ M $>$ $M_A$ $\sim$ H 
& -0.63 & 0.04 \\

\textbf{Naturalness} 
& H $>$ $M_A$ $>$ M $>$ S 
& H $\sim$ $M_A$ $\sim$ M $\sim$ S 
& 1.00 & 0.07 \\

\textbf{Authenticity} 
& H $\sim$ $M_A$ $\sim$ M $\sim$ S 
& H $>$ $M_A$ $\sim$ M $>$ S 
& 1.00 & 0.21 \\

\textbf{Representativeness} 
& H $\sim$ $M_A$ $>$ M $\sim$ S 
& H $>$ $M_A$ $>$ M $\sim$ S 
& 1.00 & 0.43 \\

\textbf{Appropriateness} 
& M $\sim$ $M_A$ $\sim$ S $\sim$ H 
& S $\sim$ M $>$ $M_A$ $\sim$ H 
& 0.63 & 0.05 \\

\textbf{Overall Quality} 
& H $\sim$ $M_A$ $>$ M $>$ S 
& H $\sim$ $M_A$ $\sim$ M $>$ S 
& 1.00 & 0.20 \\

\bottomrule
\end{tabular}

\end{table}

\paragraph{LLM-based validation and large-scale evaluation.}
To validate the human annotations and extend evaluation to the full dataset, we employ an LLM-as-a-judge approach using three models from different providers (GPT-5.2, Gemini-3, and Claude-4.5-Sonnet). We first evaluate the same sampled items used in human annotation and compare LLM judgments with human scores. While sample-level correlations are weak, setting-level correlations are consistently high across most metrics. This means that although LLMs may differ in absolute scoring, they preserve the relative ranking between settings. This provides confidence that LLM-based evaluation can be used to study large-scale trends. We use the prompt in Box \ref{prompt_llm_as_judge} with temperature = 0 for reproducibility.

\begin{tcolorbox}[
    colback=blue!10,
    colframe=blue!40!black,
    title=Prompt for LLM as judge comment evaluation,
    label=prompt_llm_as_judge,
    rounded corners
]\footnotesize

\textbf{Task:} You are an expert evaluator of online comment sections.

Your task is to assess the overall quality of a set of comments posted in response to a video. Base your judgment \textbf{only} on the provided video information and comments.

\vspace{5pt}
\textbf{Important Instructions:}
\begin{itemize}
    \item Use a Likert scale from \textbf{1 to 5} for each category:
    \begin{itemize}
        \item 1 = very low \quad 3 = moderate \quad 5 = very high
    \end{itemize}
    \item Do \textbf{not} provide explanations, only numeric scores.
    \item Do \textbf{not} add any extra text, headings, or commentary.
    \item Follow the output format exactly.
\end{itemize}

\vspace{5pt}
\textbf{Video Information:} \\
\texttt{\{video\_information\}}

\vspace{5pt}
\textbf{Comments:} \\
\texttt{\{comments\}}

\vspace{5pt}
\textbf{Evaluation Questions:}

Evaluate the comments as a whole based on the following criteria:

\begin{enumerate}
    \item \textbf{Diversity:} To what extent do the comments reflect a variety of viewpoints or stances toward the video?
    
    \item \textbf{Context:} How much do the comments feel like they are responding to the same shared context or conversation about the video?
    
    \item \textbf{Naturalness:} How natural do the comments feel in terms of language and tone?
    
    \item \textbf{Authenticity:} How authentic do these comments feel as responses people would actually post online?
    
    \item \textbf{Representativeness:} To what extent does this comment section feel representative of how people typically respond online to similar videos?
    
    \item \textbf{Appropriateness:} How appropriate are the comments with respect to the video content?
    
    \item \textbf{Overall Quality:} Overall, how would you rate the quality of this comment section?
\end{enumerate}

\vspace{5pt}
\textbf{Output Format (Strict):}
\begin{itemize}
    \item diversity: \texttt{<score>}
    \item context: \texttt{<score>}
    \item naturalness: \texttt{<score>}
    \item authenticity: \texttt{<score>}
    \item representativeness: \texttt{<score>}
    \item appropriateness: \texttt{<score>}
    \item overall\_quality: \texttt{<score>}
\end{itemize}

\end{tcolorbox}

We then apply LLM-based evaluation to the full dataset. The aggregated results (Figure \ref{fig_llm_judgement_score}) follow the same patterns observed in human evaluation, with $M_A$ consistently performing close to human comments and outperforming S and M across most metrics. This consistency across human and automated evaluation supports the validity of our assumptions and demonstrates that our framework captures meaningful aspects of comment quality at scale.

Overall, this evaluation provides a comprehensive assessment of comment quality. It combines human judgment for depth and reliability with LLM-based evaluation for scalability, and shows that structured multi-model generation can produce comments that are both diverse and realistic.

\begin{figure}[h]
    \centering
    \includegraphics[width=\linewidth]{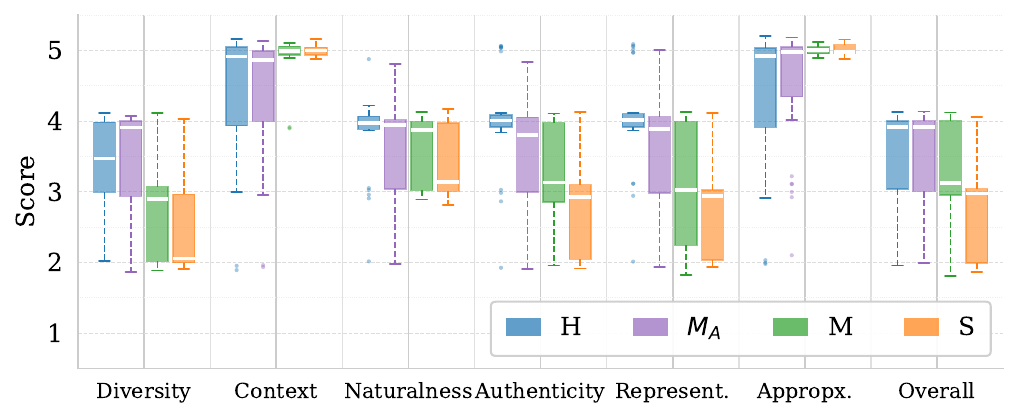}
    \caption{Overall comment quality results on whole dataset using LLM-as-judge.}
    \label{fig_llm_judgement_score}
\end{figure}

\section{RQ2 detailed findings}

Prior works have shown that high-quality synthetic datasets \citep{benallal2024cosmopedia, finephrase} can improve model performance. Despite being noisier, social media data is expanding rapidly and increasingly contributing to the internet data ecosystem \citep{longpre2025bridging, villalobos2024position}, making its quality and impact crucial.
Thus, \textbf{RQ2} evaluates whether the LLM-generated comments are useful as downstream data, beyond matching human comment diversity. Since full-scale pre-training is infeasible at our scale \citep{hoffmann2022training, li2024datacomp}, we use standard data curation and selection pipelines as a proxy to assess whether LLM-generated comments would be retained as training data relative to human counterparts. In addition, we evaluate their effectiveness in fine-tuning and instruction-tuning settings, where such data is more likely to be directly utilized in practice.

\subsection{Pre-training data utility}
\label{appendix_pretraining_data}

\paragraph{Representative document construction.}
For pre-training-style quality evaluation, individual YouTube comments are often too short to resemble standard pre-training documents. To address this, we convert comment threads into representative document-like samples while preserving the RQ1 source metadata. The resulting document table retains video, domain, setting, provider or model, and comment identifiers, allowing each constructed document to be traced back to its origin.

For human comments, we group comments by \texttt{Video\_id} and sort them chronologically using \texttt{Comment\_posting\_date}. We then construct documents based on comment length. Short comments with fewer than 200 words are greedily accumulated into a buffer until the combined length approaches 200 words. Medium-length comments between 200 and 400 words are kept as standalone documents, which aligns with the typical 256--512 token range used in document pre-selection pipelines \citep{su2025nemotron}. Long comments above 400 words are truncated to their first 400 words to avoid outliers. At the end of each video, any remaining short-comment buffer is merged into the nearest previous document; if no such document exists, the buffer itself is retained as a document. This procedure preserves the temporal flow of discussions while producing text units that are suitable for document-level evaluation.

For LLM-generated comments, we adopt a length-aware packing strategy that reflects how generated outputs are structured across models and settings. Comments are grouped by \texttt{Video\_id} and \texttt{model\_id}, and for aspect-conditioned settings also by \texttt{cluster\_id}. Within each group, comments are sorted by length in descending order. Long comments above 400 words are truncated, medium-length comments are kept as standalone documents, and shorter comments are greedily combined. For non-aspect settings, we use a target length of 200 words for packed documents to match the human construction. For aspect-conditioned settings, we use a smaller target of 40 words, since comments are already organized into finer-grained clusters and tend to be shorter and more homogeneous. 

Overall, this construction strategy ensures that both human and generated texts are transformed into comparable document units. It balances fidelity to the original comment structure with the need for sufficiently long inputs for downstream quality evaluation.

\paragraph{Pre-Training Data Quality Filtering.}
After constructing representative documents, we apply a staged filtering pipeline that approximates common pre-training data selection procedures. The filters are ordered to prioritize inexpensive lexical and metadata-based checks before more computationally intensive semantic operations. This design allows us to scale efficiently while still capturing multiple dimensions of data quality.

\begin{enumerate}

\item \textbf{Exact substring deduplication.} 
We first perform substring-based deduplication to identify near-duplicate documents. For each document, we extract hashed fixed-length token substrings using 40-token windows and compute overlap using Jaccard similarity. A document is flagged if its overlap with a previously retained document exceeds 0.5, with at least one shared substring. In practice, this step does not remove documents in our setting, as the generation pipeline already introduces sufficient diversity. We retain it to remain consistent with standard pre-training pipelines.

\item \textbf{Low-readability filtering.}
We compute a readability profile using standard metrics. We use Flesch reading ease \citep{kincaid1975derivation} as the primary filter and flag documents that fall in the bottom 1 percentile of the active document set following \citep{gohari2025gneissweb}. This step removes poorly formed or unintelligible text while preserving the natural variability of online comments.

\item \textbf{Tokenization anomaly filtering.}
We identify documents with unusual tokenization patterns by computing statistics such as character length, byte length, token count, tokens per character, and tokens per byte. The main filter uses tokens per character and flags documents that lie more than three standard deviations away from the mean of the active document set following \citep{gohari2025gneissweb}. This helps remove corrupted or irregular text that may arise from encoding issues or malformed content.

\item \textbf{Perplexity-based filtering.}
We compute document perplexity using a small language model, \texttt{GPT-2} in our implementation, and flag documents whose perplexity deviates by more than two standard deviations from the mean following the mid-selection criteria in \citep{anknerperplexed}. This step captures text that is either highly predictable or highly irregular, both of which are undesirable for pre-training data.

\item \textbf{Semantic deduplication.}
Finally, we perform semantic deduplication  \citep{abbas2023semdedup} using embedding-based similarity. Documents are embedded and compared using nearest-neighbor search, and a document is flagged if it is a later occurrence of another document with cosine similarity greater than 0.95. This step removes redundant content that may not be captured by lexical overlap alone.

\end{enumerate}

At each stage, we track retention relative to the initial document set across \texttt{setting} and \texttt{domain}. This analysis is important for RQ2, as it reveals whether human and different LLM-generated data sources are affected differently by quality filtering.

\paragraph{Educational value and document quality evaluation.}
We evaluate document quality using complementary pretrained models that capture different aspects of pre-training utility. Rather than treating these scores as strict filters, we use them as descriptive signals to compare human and LLM-generated documents. This allows us to assess not only whether generated data resembles human text, but also whether it meets the quality standards expected in pre-training pipelines.

\begin{itemize}

\item \textbf{General document quality.}
We use NVIDIA's \texttt{quality-classifier-deberta} \citep{su2025nemotron} to assess overall document quality. The model assigns each document to a category such as low, medium, or high quality and provides a probability distribution over these categories. In our document table, we store both the predicted label (\texttt{doc\_quality}) and the associated probability vector (\texttt{doc\_quality\_prob}). This provides a coarse but informative signal of how well a document aligns with general quality expectations.

\item \textbf{Educational value.}
We measure educational usefulness using NVIDIA's \texttt{nemocurator-fineweb-nemotron-4-edu-classifier}, which produces a continuous score on an approximate 0 to 5 scale. We retain the raw score (\texttt{educational\_score}) and also map it to a binary label for interpretability. Documents with scores of at least 2.5 are labeled as high quality, while the rest are labeled as low quality. This threshold provides a simple way to distinguish content that may be useful for learning or knowledge transfer.

\end{itemize}

Together, these two signals capture complementary views of document quality. The general quality classifier reflects overall readability and coherence, while the educational score captures potential usefulness for downstream learning tasks. We compare their distributions across human comments and each LLM generation setting, and we also analyze how these distributions change after the staged filtering pipeline. This allows us to examine whether generated documents not only resemble human comments, but also exhibit properties that would make them suitable for inclusion in pre-training corpora.

As shown in Figure \ref{fig_rq2_pretraining_results}, our results reveal a key tension between \textbf{diversity and conventional notions of data quality} in social media contexts. While multi-LLM and aspect-conditioned generation improve diversity and lead to higher retention under standard pre-training curation pipelines, this does not directly translate to higher document quality as defined by existing filters. In fact, these filters tend to favor structured, knowledge-rich text, whereas authentic social media discourse, human or synthetic, often appears less “high-quality” under such criteria. This highlights 
a fundamental mismatch between current data curation standards and the nature of social media data, suggesting 
the need for \textbf{revisiting quality definitions when leveraging diverse, discourse-driven data} for future language model training.

\subsection{Fine-tuning data utility}
\label{appendix_finetuning_data}

\paragraph{Data preparation.}
To evaluate whether generated comments are useful as supervised fine-tuning data, we formulate RQ2 as a downstream multi-class emotion classification task based on Ekman’s six emotions plus neutral \citep{ekman2014expression}. This task is well aligned with the expressive nature of social media comments. We use the labels provided by \citep{hartmann2022emotionenglish} as ground truth, which are also used in our socio-pragmatic feature extraction pipeline.

We first construct human train, validation, and test splits using a 70/15/15 ratio, resulting in 37,941 training, 8,131 validation, and 8,131 test samples. The label distribution is naturally imbalanced, with neutral (51.91\%), joy (18.20\%), and surprise (15.29\%) dominating, followed by sadness (4.55\%), anger (3.98\%), disgust (3.80\%), and fear (2.26\%). 

For each LLM setting (S, M, $S_A$, $M_A$), we construct aligned datasets by sampling one generated comment for each human comment. The sampled comment matches the same \texttt{Video\_id}, the same label when label-aware sampling is enabled, and a similar length within a fixed tolerance when applicable. Each generated comment is used at most once to avoid duplication. After sampling, we balance the combined human and LLM data by label to ensure comparable class distributions across settings.

We also consider an augmentation setting (H\_aug), where we specifically target under-represented classes. In this setup, we replace a portion of over-represented classes (neutral, joy, surprise) with generated examples so that each under-represented class (sadness, disgust, anger, fear) accounts for at most 10\% of the total dataset. The overall dataset size remains fixed. After augmentation, the label distribution becomes more balanced, with each minority class close to 10\% and reduced dominance of neutral and joy.

In addition, we perform a controlled ablation by constructing mixed training and validation sets. For each class label, we retain a fraction of human examples and replace the remainder with generated examples from a given setting. We vary the human fraction over $\{10, 25, 50\}$ while preserving the original human class distribution. The held-out human test set remains fixed across all experiments to ensure consistent evaluation on real-world data.

\paragraph{Model training.}
We fine-tune a \texttt{bert-base-uncased} model for multi-class text classification \citep{sun2019fine}. Training is performed on an A100 40GB GPU using standard hyperparameters: maximum sequence length of 512 tokens, training batch size of 64, evaluation batch size of 128, learning rate of $2\times10^{-5}$, weight decay of 0.01, and mixed-precision training enabled. The model is trained for 3 epochs using categorical cross-entropy loss. These settings follow common practice and ensure that performance differences are primarily driven by the data rather than model or optimization choices.

\begin{table}[h]
\centering
\caption{Full per-class classification report for 7-class emotion fine-tuning across all training settings. P = precision, R = recall, F1 = per-class F1. Mac.F1 = macro F1. All models evaluated on the held-out human test split.}
\label{tab:emotion_full_report}
\scriptsize
\setlength{\tabcolsep}{3pt}
    \resizebox{\columnwidth}{!}{
\begin{tabular}{l ccc ccc ccc ccc ccc ccc ccc c}
\toprule
\textbf{Setting} & \multicolumn{3}{c}{\textbf{Anger}} & \multicolumn{3}{c}{\textbf{Disgust}} & \multicolumn{3}{c}{\textbf{Fear}} & \multicolumn{3}{c}{\textbf{Joy}} & \multicolumn{3}{c}{\textbf{Neutral}} & \multicolumn{3}{c}{\textbf{Sadness}} & \multicolumn{3}{c}{\textbf{Surprise}} & \textbf{Mac.F1} \\
\cmidrule(lr){2-4} \cmidrule(lr){5-7} \cmidrule(lr){8-10} \cmidrule(lr){11-13} \cmidrule(lr){14-16} \cmidrule(lr){17-19} \cmidrule(lr){20-22}
& P & R & F1 & P & R & F1 & P & R & F1 & P & R & F1 & P & R & F1 & P & R & F1 & P & R & F1 & \\
\midrule
H & 0.55 & 0.56 & 0.55 & 0.57 & 0.53 & 0.55 & 0.73 & 0.34 & 0.46 & 0.77 & 0.80 & 0.79 & 0.84 & 0.86 & 0.85 & 0.57 & 0.56 & 0.56 & 0.72 & 0.71 & 0.71 & 0.64 \\
S & 0.44 & 0.56 & 0.49 & 0.45 & 0.60 & 0.52 & 0.68 & 0.46 & 0.55 & 0.79 & 0.73 & 0.76 & 0.82 & 0.83 & 0.82 & 0.62 & 0.57 & 0.59 & 0.68 & 0.64 & 0.66 & 0.63 \\
M & 0.42 & 0.64 & 0.51 & 0.51 & 0.43 & 0.47 & 0.73 & 0.39 & 0.51 & 0.78 & 0.74 & 0.76 & 0.82 & 0.84 & 0.83 & 0.61 & 0.54 & 0.58 & 0.71 & 0.67 & 0.69 & 0.62 \\
$S_A$ & 0.53 & 0.55 & 0.54 & 0.54 & 0.52 & 0.53 & 0.75 & 0.41 & 0.54 & 0.74 & 0.81 & 0.77 & 0.84 & 0.83 & 0.84 & 0.63 & 0.53 & 0.57 & 0.68 & 0.71 & 0.70 & 0.64 \\
$M_A$ & 0.57 & 0.56 & 0.57 & 0.56 & 0.50 & 0.53 & 0.81 & 0.43 & 0.56 & 0.75 & 0.77 & 0.76 & 0.82 & 0.85 & 0.83 & 0.62 & 0.49 & 0.54 & 0.68 & 0.71 & 0.69 & 0.64 \\
$H_\mathrm{aug}$ & 0.48 & 0.65 & 0.55 & 0.51 & 0.61 & 0.55 & 0.57 & 0.65 & 0.61 & 0.77 & 0.77 & 0.77 & 0.84 & 0.83 & 0.83 & 0.56 & 0.67 & 0.61 & 0.76 & 0.62 & 0.69 & 0.66 \\
\bottomrule
\end{tabular}
}
\end{table}

\paragraph{Results and analysis.}
We summarize the main findings in Section 4.2 of the paper. Table~\ref{tab:emotion_full_report} provides detailed classification reports, and Figure~\ref{fig_finetuning_heatmap} visualizes  F1 score heatmap across settings. The results reveal consistent patterns across classes and settings.

Fear is a structurally difficult class across all configurations. Precision is relatively high, ranging from 0.57 to 0.81, but recall remains low, typically between 0.34 and 0.46 for both human-only and purely synthetic training. This indicates that models learn a conservative decision boundary for fear and fail to capture a large portion of true instances. The H\_aug setting substantially improves recall to 0.65, although precision drops to 0.57. This reflects a trade-off enabled by exposing the model to a broader range of fear-related expressions.

Disgust follows a similar but less pronounced pattern, with recall consistently trailing precision by 7 to 13 points across most settings. In contrast, joy and neutral achieve strong and balanced performance, with both precision and recall above 0.73. This reflects their higher frequency and clearer linguistic signals in the training data.

\begin{figure}
    \centering
    \includegraphics[width=0.5\linewidth]{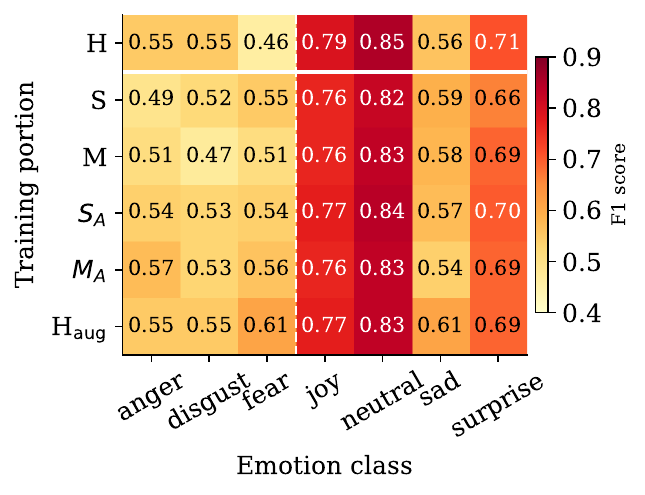}
    \caption{Heatmap of per-class F1 score while using different training portion in emotion classification task (to show utility as finetuning data).}
    \label{fig_finetuning_heatmap}
\end{figure}

Across synthetic settings, aspect-conditioned approaches ($S_A$, $M_A$) improve performance for several classes compared to non-conditioned settings (S, M). For example, in the anger class, $S_A$ maintains similar precision while improving recall, and $M_A$ achieves the highest F1 score among purely synthetic settings at 0.57, matching human-only training. This suggests that aspect conditioning increases the diversity of affective expressions and provides better coverage of class-specific patterns.

The sadness class shows a different behavior. The H\_aug setting achieves the highest recall at 0.67 but with reduced precision at 0.56, indicating that while generated examples improve coverage, they may also introduce some ambiguity that leads to false positives. Human-only training produces more balanced precision and recall for this class.

Overall, this evaluation provides a comprehensive view of fine-tuning utility. It shows that generated comments can serve as effective training data, especially when combined with aspect conditioning or targeted augmentation. At the same time, it highlights limitations in capturing rare or nuanced emotional expressions and the importance of carefully balancing diversity and label fidelity.

\subsection{Instruction-tuning data utility}
\label{appendix_instruction_data}

\paragraph{Data preparation.}
To evaluate whether generated comments are useful as instruction-style data, we design a political stance classification task. The goal is to classify a comment, given the corresponding video context, into one of three labels: \textit{conservative}, \textit{liberal}, or \textit{neutral}. This task requires contextual reasoning beyond standard text classification, since the stance of a comment often depends on the framing and subject of the video. We focus on the news domain, where stance is more likely to be expressed and polarized. In particular, we collect comments from three major news channels, Fox News, MSNBC, and CNBC, which are known to exhibit differing political orientations \citep{hosseinmardi2025unpacking, mays2020social, martin2014bias}.

We consider comments with a minimum length of 10 tokens from both human data and the $M_A$ setting, as the latter shows strong performance in both diversity and utility. To obtain labels, we use an LLM-as-judge approach with three annotators from different providers: GPT-5.2, Gemini-3, and Claude-4.5-Sonnet. We use the prompt in Box \ref{prompt_political_stance} for this task with temperature=0 for deterministic behavior.
Also, the final label is determined by majority voting following the standard practices in LLM-as-judge literature \citep{tan2024large, li2023synthetic}. To ensure meaningful evaluation, we select only videos that contain sufficient examples from each class. This results in a dataset of 2,003 comments from 21 videos. We split this into 878 test samples and use the remaining comments from both human and LLM sources as candidate few-shot examples.

\begin{tcolorbox}[
    colback=blue!10,
    colframe=blue!40!black,
    title=\textit{Prompt for Political Stance Detection},
    label=prompt_political_stance,
    rounded corners
]\footnotesize

\textbf{Task:} You are a political content analyst.

Your task is to classify the political orientation expressed in a single
online comment, based \textbf{only} on the provided video information and comment text.

\vspace{4pt}
\textbf{Important Rules:}
\begin{itemize}
    \item Choose exactly \textbf{one} label from the allowed list.
    \item Do \textbf{not} explain your reasoning.
    \item Do \textbf{not} add any extra text.
    \item If the comment does not clearly express a political stance, label it as \texttt{neutral}.
    \item Base the judgment on tone, framing, ideology, and implied alignment.
\end{itemize}

\vspace{4pt}
\textbf{Allowed Labels:}
\begin{itemize}
    \item \texttt{liberal}
    \item \texttt{conservative}
    \item \texttt{neutral}
\end{itemize}

\vspace{4pt}
\textbf{Video Information:} \\
\texttt{\{video\_information\}}

\vspace{4pt}
\textbf{Comment:} \\
\texttt{\{comment\_text\}}

\vspace{4pt}
\textbf{Output Format (Strict):} \\
\texttt{label: <liberal | conservative | neutral>}

\end{tcolorbox}

\paragraph{Task formulation and prompt design.}
We evaluate the utility of instruction under both zero-shot and few-shot prompting. In the zero-shot setting, the prompt includes task instructions, video information, the target comment, and the allowed label set. The model is instructed to output exactly one label without explanation. This setting tests whether the model can infer stance using only contextual understanding. We use the same prompt in Box \ref{prompt_political_stance} for zero-shot setting.

In the few-shot setting, we augment the prompt with available labeled examples from the same video. For each target comment, we sample up to two examples per class from comments with the same \texttt{Video\_id}. This ensures that the examples are contextually aligned with the target. The prompt, therefore, includes video information, a small set of labeled comments, and the target comment (see the prompt in Box \ref{prompt_for_fewshot_political_stance}). Using the same video examples is important because stance interpretation is often context-dependent. 

\begin{tcolorbox}[
    colback=blue!10,
    colframe=blue!40!black,
    title=\textit{Prompt for Few-Shot Political Stance Detection},
    label=prompt_for_fewshot_political_stance,
    rounded corners
]\footnotesize

\textbf{Task:} You are a political content analyst.

Your task is to classify the political orientation expressed in an online
comment, based \textbf{only} on the provided video information and examples given.

\vspace{4pt}
\textbf{Important Rules:}
\begin{itemize}
    \item Choose exactly \textbf{one} label from the allowed list.
    \item Do \textbf{not} explain your reasoning.
    \item Do \textbf{not} add any extra text.
    \item If the comment does not clearly express a political stance, label it as \texttt{neutral}.
    \item  Base the judgment on tone, framing, ideology, and implied alignment.
\end{itemize}

\vspace{4pt}
\textbf{Allowed Labels:}
\begin{itemize}
    \item \texttt{liberal}
    \item \texttt{conservative}
    \item \texttt{neutral}
\end{itemize}

\vspace{4pt}
\textbf{Video Information:} \\
\texttt{\{video\_information\}}

\vspace{4pt}
\textbf{Labeled Examples:} \\
\texttt{\{examples\_text\}}

\vspace{4pt}
\textbf{Comment to Classify:} \\
\texttt{\{comment\_text\}}

\vspace{4pt}
\textbf{Output Format (Strict):} \\
\texttt{label: <conservative | neutral | liberal>}

\end{tcolorbox}

We evaluate instruction-tuned models of varying sizes, including \texttt{Qwen2.5-0.5B-Instruct}, \texttt{Mistral-7B-Instruct-v0.2}, and \texttt{Gemma-3-12B-it}. These models are obtained from the official Hugging Face repositories and loaded in 8-bit precision to enable efficient inference. All experiments are run on an A100 40GB GPU.  We use deterministic decoding (temperature=0, $top\_p$=1.0,
        $top\_k=0$) with greedy inference, limiting the output to a small number of tokens (5) to enforce label-only responses. This setup ensures that performance differences reflect the usefulness of the provided examples rather than sampling variability.

\begin{figure}[h]
    \centering
    \includegraphics[width=\linewidth]{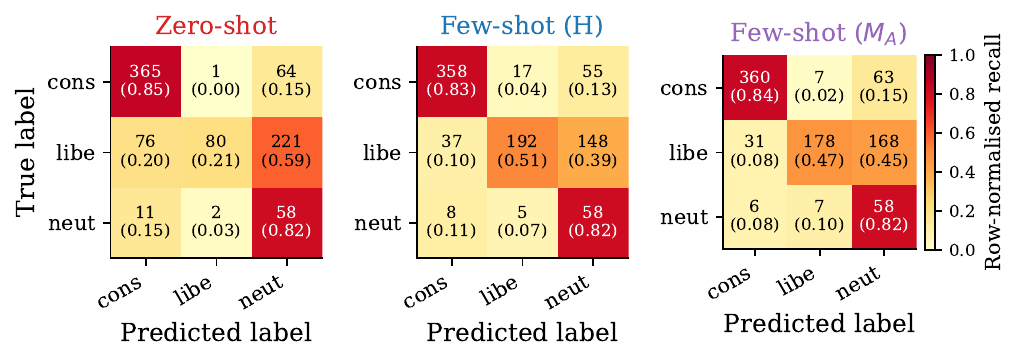}
    \caption{Confusion matrix in different settings using \texttt{Mistral-7B-Instruct-v0.2} model.  Providing few-shot examples improve the performance of separating neutral class from liberal class and we observe the similar effect for both using human comments and LLM comments as examples.}
    \label{fig_mistral_instruction_confusion_matrix}
\end{figure}

\paragraph{Results and analysis.}
We present the main findings in the paper and provide detailed results in Table~\ref{tab:stance_full_report} and Figure~\ref{fig_mistral_instruction_confusion_matrix}. The results show a consistent asymmetry across models and settings. Conservative and liberal stances are detected with relatively high recall, while the neutral class exhibits lower precision. This indicates that models often default to neutral predictions when uncertain.

\begin{table}[h]
\centering
\caption{Full per-class classification report for stance detection under few-shot prompting with human (H) and LLM ($M_A$) examples across three backbone models. Support counts: conservative 430, liberal 377, neutral 71 (total 878).}
\label{tab:stance_full_report}
\small
\setlength{\tabcolsep}{4pt}
\begin{tabular}{ll ccc ccc ccc cc}
\toprule
& & \multicolumn{3}{c}{\textbf{Conservative}} & \multicolumn{3}{c}{\textbf{Liberal}} & \multicolumn{3}{c}{\textbf{Neutral}} & \multicolumn{2}{c}{\textbf{Overall}} \\
\cmidrule(lr){3-5}\cmidrule(lr){6-8}\cmidrule(lr){9-11}\cmidrule(lr){12-13}
\textbf{Model} & \textbf{Condition} & P & R & F1 & P & R & F1 & P & R & F1 & Acc & Mac.F1 \\
\midrule
  \multirow{2}{*}{Qwen-0.5B} & Few-shot (H) & 0.48 & 0.36 & 0.41 & 0.41 & 0.59 & 0.49 & 0.20 & 0.06 & 0.09 & 0.43 & 0.33 \\
   & Few-shot ($M_A$) & 0.51 & 0.19 & 0.28 & 0.42 & 0.72 & 0.53 & 0.22 & 0.23 & 0.22 & 0.42 & 0.34 \\
\midrule
  \multirow{2}{*}{Mistral-7B} & Few-shot (H) & 0.89 & 0.83 & 0.86 & 0.90 & 0.51 & 0.65 & 0.22 & 0.82 & 0.35 & 0.69 & 0.62 \\
   & Few-shot ($M_A$) & 0.91 & 0.84 & 0.87 & 0.93 & 0.47 & 0.63 & 0.20 & 0.82 & 0.32 & 0.68 & 0.61 \\
\midrule
  \multirow{2}{*}{Gemma-12B} & Few-shot (H) & 0.74 & 0.95 & 0.83 & 0.96 & 0.54 & 0.69 & 0.41 & 0.66 & 0.51 & 0.75 & 0.68 \\
   & Few-shot ($M_A$) & 0.70 & 0.95 & 0.80 & 0.97 & 0.44 & 0.61 & 0.42 & 0.73 & 0.53 & 0.71 & 0.65 \\
\bottomrule
\end{tabular}
\end{table}

For \texttt{Gemma-3-12B-it}, conservative recall reaches 0.95 and liberal recall 0.54 when using human examples, while neutral recall improves to 0.66 at the cost of lower precision. \texttt{Mistral-7B-Instruct} shows a similar pattern, with neutral recall as high as 0.82 but precision dropping to around 0.20, suggesting a strong bias toward neutral predictions under uncertainty. \texttt{Qwen2.5-0.5B-Instruct} performs significantly worse, with macro F1 near 0.33 to 0.34 and near-random confusion between conservative and liberal classes, highlighting the limitations of smaller models in leveraging in-context examples.

Comparing human and LLM-generated few-shot examples, the differences in macro F1 are relatively small for larger models, but the class-wise behavior reveals important differences. Human examples consistently improve liberal recall, while LLM-generated examples tend to increase neutral recall at the expense of liberal recall. This suggests that synthetic examples, while competitive in overall performance, encode a slightly different distribution of stance signals that leads to more conservative predictions.

Overall, this evaluation provides a complementary view of data utility. It shows that generated comments can serve as effective instruction-style examples, particularly for medium and large models, while also highlighting subtle distributional differences that influence model behavior in context-dependent tasks.



\end{document}